\pdfoutput=1
\documentclass[11pt]{article}
\usepackage{fullpage}
\usepackage[margin=1in]{geometry}
\usepackage{epsf}
\usepackage{lastpage,algorithm,etoolbox}
% Macros

\usepackage{fancyheadings,hyperref}
\usepackage{graphics}
\usepackage{graphicx}
\usepackage{psfrag}

\usepackage{amsmath, amsfonts, amsthm, amssymb, algorithm, graphicx}
\usepackage{mathtools}
\usepackage{comment}
\usepackage{enumitem}
\usepackage{pdfpages}
\usepackage[noend]{algpseudocode}
\usepackage{fancyhdr}
\begin{comment}
\setlength{\textwidth}{\paperwidth}
\addtolength{\textwidth}{-6cm}
\setlength{\textheight}{\paperheight}
\addtolength{\textheight}{-4cm}
\addtolength{\textheight}{-1.1\headheight}
\addtolength{\textheight}{-\headsep}
\addtolength{\textheight}{-\footskip}
\setlength{\oddsidemargin}{0.5cm}
\setlength{\evensidemargin}{0.5cm}
\end{comment}

%%% New version of \caption puts things in smaller type, single-spaced 
%%% and indents them to set them off more from the text.

%\usepackage[vmargin=1.1in,hmargin=1.1in,centering,letterpaper]{geometry}
%\setlength{\headsep}{.10in}
%\setlength{\headheight}{15pt}
\usepackage{listings}
\usepackage{mathtools}
\newcommand{\C}{\mathbb{C}}
\newtoggle{acm}
\togglefalse{acm}

\makeatletter
\makeatother

% Formatting Macros
 
%\pagestyle{fancy}
%\chead{\sc Problem \hmwkAssignmentNum.\hmwkProblemNum}
%\chead{}
\rhead{\em Max Simchowitz}
\cfoot{}
\lfoot{}
\rfoot{\sc Page\ \thepage\ of\ \protect\pageref{LastPage}}

%%% magic code starts
%\mathcode`*=\string"8000
%\begingroup
%\catcode`*=\active
%\xdef*{\noexpand\textup{\string*}}
%\endgroup
%%% magic code ends

%!TEX root = LWM.tex
\DeclareMathOperator{\BigOm}{\mathcal{O}}
\newcommand{\BigOh}[1]{\BigOm\left({#1}\right)}

\DeclareMathOperator{\BigWm}{\Omega}
\newcommand{\BigOmega}[1]{\BigWm\left({#1}\right)}

\algnewcommand\algorithmicinput{\textbf{INPUT:}}
\algnewcommand\INPUT{\item[\algorithmicinput]}
\algnewcommand\algorithmicoutput{\textbf{OUTPUT:}}
\algnewcommand\OUTPUT{\item[\algorithmicoutput]}
 
\newcommand{\calO}{\mathcal{O}}

\newcommand{\calV}{\mathcal{V}}

\newcommand{\calM}{\mathcal{M}}
\newcommand{\calD}{\mathcal{D}}

\newcommand{\calE}{\mathcal{E}}
\newcommand{\calF}{\mathcal{F}}
\newcommand{\fraka}{\mathfrak{a}}

\newcommand{\frakq}{\mathfrak{q}}

\newcommand{\frakA}{\mathfrak{A}}

\newcommand{\itSigma}{\mathsf{\Sigma}}
\newcommand{\itP}{\mathsf{P}}
\newcommand{\itT}{\mathsf{T}}
\newcommand{\acc}{\mathsf{\epsilon}}

\newcommand{\itO}{\mathsf{O}}
\newcommand{\itv}{\mathsf{v}}

\newcommand{\itV}{\mathsf{V}}
\newcommand{\itVbar}{\overline{\mathsf{V}}}
\newcommand{\itOtil}{\widetilde{\mathsf{O}}}
\newcommand{\itVtil}{\widetilde{\mathsf{V}}}

\newcommand{\gapexp}{\frac{1 + \sqrt{\boldgap(2-\boldgap)}}{1-\boldgap}}
\newcommand{\matZ}{\mathbf{Z}}
\newcommand{\matz}{\mathbf{z}}
\newcommand{\matx}{\mathbf{x}}
\newcommand{\matLambda}{\mathbf{\Lambda}}

\newcommand{\matU}{\mathbf{U}}
\newcommand{\matu}{\mathbf{u}}
\newcommand{\matW}{\mathbf{W}}
\newcommand{\matM}{\mathbf{M}}
\newcommand{\matMabs}{\mathrm{abs}(\matM)}
\newcommand{\matWabs}{\mathrm{abs}(\matW)}

\newcommand{\matX}{\mathbf{X}}
\newcommand{\matWtil}{\widetilde{\mathbf{W}}}
\newcommand{\matO}{\mathbf{O}}

\newcommand{\calY}{\mathcal{Y}}
\newcommand{\calS}{\mathcal{S}}

\newcommand{\calA}{\mathcal{A}}
\newcommand{\calB}{\mathcal{B}}
\newcommand{\calI}{\mathcal{I}}

\newcommand{\calW}{\mathcal{W}}

\newcommand{\calP}{\mathcal{P}}

\newcommand{\calX}{\mathcal{X}}
\newcommand{\calG}{\mathcal{G}}

\newcommand{\Lip}{\mathrm{Lip}}

\newcommand{\boldeps}{\overline{\epsilon}}
\newcommand{\boldDelta}{\mathtt{\Delta}}
\newcommand{\boldlam}{\mathtt{\lambda}}

\newcommand{\Stief}{\mathrm{Stief}}

\newcommand{\calN}{\mathcal{N}}
\newcommand{\gap}{\mathrm{gap}}
\newcommand{\boldgap}{\mathtt{gap}}

\newcommand{\gaptil}{\widetilde{\mathrm{gap}}}

\DeclareMathAlphabet{\mathbfsf}{\encodingdefault}{\sfdefault}{bx}{n}
\newcommand{\Prit}{\mathbfsf{P}}
\newcommand{\Expit}{\mathsf{E}}

\newcommand{\Qit}{\mathsf{\mathbf{Q}}}

\newcommand{\KL}{\mathrm{KL}}
\newcommand{\Sym}{\mathbb{S}}
\newcommand{\Symd}{\mathbb{S}^{d}}
\newcommand{\Alg}{\mathsf{Alg}}

\newcommand{\rmd}{\mathrm{d}}
\newcommand{\tr}{\mathrm{tr}}

\newcommand{\median}{\mathrm{Median}}
\newcommand{\op}{\mathrm{op}}
\newcommand{\F}{\mathrm{F}}

\newcommand{\spec}{\mathrm{spec}}

\newcommand{\diag}{\mathrm{diag}}

\newcommand{\sphere}{\calS^{d-1}}

\newcommand{\calVhead}{\mathcal{V}_{\mathrm{head}}}
\newcommand{\calVtail}{\mathcal{V}_{\mathrm{tail}}}

\newcommand{\GOE}{\mathrm{GOE}}
\newcommand{\SG}{\mathrm{StdG}}
\newcommand{\boldtheta}{\boldsymbol{\theta}}

\newcommand{\vtil}{\widetilde{\mathsf{v}}}
\newcommand{\wtil}{\widetilde{\mathsf{v}}}
\newcommand{\lamtil}{\widetilde{\lambda}}
\newcommand{\lambdainv}{\boldlam^{-1}}

\newcommand{\itZ}{\mathsf{Z}}

\newcommand{\vj}{\mathsf{v}^{(j)}}
\newcommand{\vi}{\mathsf{v}^{(i)}}
\newcommand{\vk}{\mathsf{v}^{(k)}}
\newcommand{\vkT}{\mathsf{v}^{(k)\top}}
\newcommand{\vone}{\mathsf{v}^{(1)}}
\newcommand{\vkplus}{\mathsf{v}^{(k+1)}}

\newcommand{\unifsim}{\overset{\mathrm{unif}}{\sim}}

\newcommand{\wk}{\mathsf{w}^{(k)}}

\newcommand{\wone}{\mathsf{w}^{(1)}}
\newcommand{\wiminus}{\mathsf{w}^{(i-1)}}
\newcommand{\viminus}{\mathsf{w}^{(i-1)}}
\newcommand{\wi}{\mathsf{w}^{(i)}}

\newcommand{\Vhat}{\widehat{\mathsf{V}}}
\newcommand{\vhat}{\widehat{\mathsf{v}}}
\newcommand{\vtplus}{{\mathsf{v}^{(\itT+1)}}}
\newcommand{\vt}{{\mathsf{v}^{(\itT)}}}

\newcommand{\vtplusr}{{\mathsf{v}^{(\itT+r)}}}

\newcommand{\vitil}{\vtil^{(i)}}
\newcommand{\vktil}{\vtil^{(k)}}

\newcommand{\wjtil}{\wtil^{(j)}}

\newcommand{\vjplustil}{\vtil^{(j+1)}}
\newcommand{\viT}{\mathsf{v}^{(i)\top}}
\newcommand{\vtwo}{\mathsf{v}^{(2)}}

\newcommand{\wonetil}{\wtil^{(1)}}
\newcommand{\wiminustil}{\wtil^{(i-1)}}
\newcommand{\viminustil}{\vtil^{(i-1)}}
\newcommand{\witil}{\wtil^{(i)}}
\newcommand{\vonetil}{\vtil^{(1)}}

\newcommand{\opt}{\mathrm{opt}}

\newcommand{\Law}{\mathrm{Law}}
\newcommand{\Leb}{\mathrm{Lebesgue}}
\newcommand{\stielt}{\mathfrak{s}}

\newcommand{\alow}{a_{\mathrm{low}}}
\newcommand{\aup}{a_{\mathrm{up}}}
\newcommand{\Eup}{\calE_{\mathrm{up}}}
\newcommand{\Egood}{\calE_{\mathrm{good}}}

\newcommand{\Elow}{\calE_{\mathrm{low}}}

\renewcommand{\implies}{\text{ implies }}
\renewcommand{\iff}{\text{ iff }}

\newcommand{\err}{\mathrm{Err}}
\renewcommand{\Im}{\mathfrak{Im }}
\renewcommand{\Re}{\mathfrak{Re }}
\newcommand{\fraki}{\mathfrak{i}}

\newcommand{\im}{\mathrm{im }}

\renewcommand{\C}{\mathbb{C}}
\newcommand{\R}{\mathbb{R}}
\newcommand{\I}{\mathbb{I}}
\newcommand{\Exp}{\mathbb{E}}
\newcommand{\Q}{\mathbb{Q}}
\newcommand{\sign}{\mathrm{sign\ }}

\newcommand{\Var}{\mathrm{Var}}

\renewcommand{\Pr}{\mathbb{P}}

{
 \theoremstyle{plain}
      
}
\newtheorem{nono-theorem}{Theorem}[]
\theoremstyle{plain}
\newtheorem{thm}{Theorem}[section]
\newtheorem{claim}[thm]{Claim}
\newtheorem{lem}[thm]{Lemma}
\newtheorem{cor}[thm]{Corollary}

\DeclarePairedDelimiter\ceil{\lceil}{\rceil}

\newtheorem{prop}[thm]{Proposition}

\theoremstyle{definition}
\newtheorem{defn}{Definition}[section]

\newtheorem{exmp}{Example}[section]

\newtheorem{obs}{Observation}[section]
%Ahmed's comments

\begin{comment}
\makeatletter
\long\def\@makecaption#1#2{
        \vskip 0.8ex
        \setbox\@tempboxa\hbox{\small {\bf #1:} #2}
        \parindent 1.5em  %% How can we use the global value of this???
        \dimen0=\hsize
        \advance\dimen0 by -3em
        \ifdim \wd\@tempboxa >\dimen0
                \hbox to \hsize{
                        \parindent 0em
                        \hfil 
                        \parbox{\dimen0}{\def\baselinestretch{0.96}\small
                                {\bf #1.} #2
                                %%\unhbox\@tempboxa
                                } 
                        \hfil}
        \else \hbox to \hsize{\hfil \box\@tempboxa \hfil}
        \fi
        }
\makeatother

%%%%%%%%%%%%%%%%%%%%%%%%%%%%%%%%%%%%%%%%%%%%%%%%%%%%%%%%%%%%%%%%%%%%%%%
% WIDEBAR COMMAND
\newlength{\widebarargwidth}
\newlength{\widebarargheight}
\newlength{\widebarargdepth}

\end{comment}

\title{\Large \bf Tight Query Complexity Lower Bounds for PCA via Finite Sample Deformed Wigner Law}

\author{ Max Simchowitz\thanks{UC Berkeley, CA. msimchow@berkeley.edu. } \and
Ahmed El Alaoui\thanks{UC Berkeley, CA. elalaoui@berkeley.edu. }
\and
Benjamin Recht\thanks{UC Berkeley, CA.  brecht@berkeley.edu.} }

\date{}
\begin{document}
\begin{titlepage}
\maketitle{}
\begin{abstract}
   %!TEX root = main.tex

We prove a \emph{query complexity} lower bound for approximating the top $r$ dimensional eigenspace of a matrix. We consider an oracle model where, given a symmetric matrix $\mathbf{M} \in \mathbb{R}^{d \times d}$, an algorithm $\mathsf{Alg}$ is allowed to make $\mathsf{T}$ exact queries of the form $\mathsf{w}^{(i)} = \mathbf{M} \mathsf{v}^{(i)}$ for $i$ in $\{1,...,\mathsf{T}\}$, where $\mathsf{v}^{(i)}$ is drawn from a distribution which depends arbitrarily on the past queries and measurements $\{\mathsf{v}^{(j)},\mathsf{w}^{(i)}\}_{1 \le j \le i-1}$. We show that for every $\mathtt{gap} \in (0,1/2]$, there exists a distribution over matrices $\mathbf{M}$ for which 1) $\mathrm{gap}_r(\mathbf{M}) = \Omega(\mathtt{gap})$ (where $\mathrm{gap}_r(\mathbf{M})$ is the normalized gap between the $r$ and $r+1$-st largest-magnitude eigenvector of $\mathbf{M}$), and 2) any algorithm $\mathsf{Alg}$ which takes fewer than $\mathrm{const} \times \frac{r \log d}{\sqrt{\mathtt{gap}}}$ queries fails (with overwhelming probability) to identity a matrix $\widehat{\mathsf{V}} \in \mathbb{R}^{d \times r}$ with orthonormal columns for which $\langle \widehat{\mathsf{V}}, \mathbf{M} \widehat{\mathsf{V}}\rangle \ge (1 - \mathrm{const}  \times \mathtt{gap})\sum_{i=1}^r \lambda_i(\mathbf{M})$. Our bound requires only that $d$ is a small polynomial in $1/\mathtt{gap}$ and $r$, and matches the upper bounds of Musco and Musco '15. Moreover, it establishes a strict separation between convex optimization and \emph{randomized}, ``strict-saddle'' non-convex optimization of which PCA is a canonical example: in the former, first-order methods can have dimension-free iteration complexity, whereas in PCA, the iteration complexity of gradient-based methods must necessarily grow with the dimension.
\vspace{.1cm}

Our argument proceeds via a reduction to estimating a rank-$r$ spike in a deformed Wigner model $\mathbf{M} =\mathbf{W} + \mathtt{\lambda} \mathbf{U} \mathbf{U}^\top$, where $\mathbf{W}$ is from the Gaussian Orthogonal Ensemble, $\mathbf{U}$ is uniform on the $d \times r$-Stieffel manifold and $\mathtt{\lambda} > 1$ governs the size of the perturbation. Surprisingly, this ubiquitous random matrix model witnesses the worst-case rate for eigenspace approximation, and the `accelerated' inverse square-root dependence on the gap in the rate follows as a consequence of the correspendence between the asymptotic eigengap and the size of the perturbation $\mathtt{\lambda}$, when $\mathtt{\lambda}$ is near the ``phase transition'' $\mathtt{\lambda} = 1$. To verify that $d$ need only be polynomial in $\mathtt{gap}^{-1}$ and $r$, we prove a finite sample convergence theorem for top eigenvalues of a deformed Wigner matrix, which may be of independent interest. We then lower bound the above estimation problem with a novel technique based on Fano-style data-processing inequalities with truncated likelihoods; the technique generalizes the Bayes-risk lower bound of Chen et al.\ '16, and we believe it is particularly suited to lower bounds in adaptive settings like the one considered in this paper.

\end{abstract}
\end{titlepage}

%!TEX root = main.tex
\section{Introduction}
Eigenvector approximation is widely regarded as a fundamental problem in machine learning~\cite{jolliffe2002principal}, numerical linear algebra~\cite{demmel1997applied}, optimization, and numerous graph-related learning problems~\cite{spielman2007spectral,page1999pagerank,ng2001spectral}. Interest in PCA has been driven further by the rush to understand non-convex optimization, as PCA has become the cannonical example of a benign, but not-quite-convex objective. For one, there is a striking resemblence between eigenvector approximation algorithms and first-order convex optimization procedures~\cite{shamir2015fast,garber2016faster,allen2016first}. Moreover, PCA is one of the simplest  `strict saddle' objectives: that is, a function whose first-order stationary points are either local minima, or saddle points at which the Hessian has a strictly negative eigenvalue. The strict saddle property extends to many popular nonconvex objectives, and enables efficient optimization by first order algorithms. Notably, Jin et al.~\cite{jin2017escape} proposed a gradient algorithm which finds an approximate local minimum of a strict saddle objective in a number of iterations which matches first-order methods for comparable convex problems, up to poly-logarithmic factors in the dimension. 

The aim of this paper is to understand the fundamental limits of \emph{randomized} first-order methods for such benign non-convex problems by establishing sharp query-complexity lower bounds for approximating the top eigenspace of a symmetric matrix. Specifically, we consider randomized, adaptive algorithms $\Alg$ which access an unknown symmetric matrix $\matM \in \R^{d\times d}$ via $\itT$ queries of the form $\{\vi = \matM\wi\}_{i \in [\itT]}$. Letting $\gap_r(\matM)$ denotes the (normalized) eigengap between the $r$- and $r+1$-st singular value (or eigenvalue-magnitude) of $\matM$, let $\matMabs := (\matM^{2})^{1/2}$, we prove the following:
\begin{nono-theorem}[Main Theorem]\label{mainthm} There are universal constants $c_1,c_2 > 0$ such that for every $r \ge 1$ and $\boldgap \in (0,1/2]$, then there exist $d_0 = \mathrm{poly}(\frac{1}{\boldgap},r)$ such that, for all $d \ge d_0$, there exists a distribution over symmetric matrices $\matM \in \R^{d \times d}$ for which $\gap_r(\matM) \ge \frac{\boldgap}{3}$, and if $\Alg$ makes $\itT \le \frac{c_1r \log d}{\sqrt{\boldgap}}$ queries, then with probability at least $1 - \exp( - d^{c_2})$, $\Alg$ cannot identify a matrix $\Vhat \in \R^{d \times r}$ with orthonormal columns for which $\langle \Vhat, \matMabs \Vhat \rangle \ge \left(1 - \frac{\boldgap}{12}\right)\sum_{i=1}^r \lambda_i(\matMabs)$.\footnote{ Note that $\matM$ and $\matMabs$ have the same singular values. We choose  to consider $\matMabs$ since $\matM$ will not be PSD in our construction, and we do not wish to penalize the learner for the negative eigendirections of $\matM$. Moreover, the ``hard distribution'' over $\matM$ in our lower bounds will always be conditioned on the event that $\lambda_{\ell}(\matM) = \sigma_{\ell}(\matM) = \lambda_{\ell}(\matMabs)$. If the reader's taste prefers, one can readily establish qualitatively similar results in terms of $\langle \vhat, \matM^2 \vhat\rangle$.}
\end{nono-theorem}
We emphasize that our lower bounds are information-theoretic, and do not place any computational or Krylov restrictions on how $\Alg$ generates its queries. Our bounds are tight, and are matched by the Block-Lanczos algorithm~\cite{musco2015randomized}. Finally, the presence of a logarithmic factor in the dimension establishes a strict separation between truly-convex and strict saddle objectives: whereas convex objectives admit first order algorithms whose query complexity is independent of the ambient dimension, strict saddle-objectives necessarily incur dimension-dependent terms, even for \emph{randomized} algorithms. 

%we note that our techniques can be applied to proving related lower bounds for a) the number of queries required to test between between $\matM = \matW$ and $\matM = \matW + \lambda \matU \matU^\top$ b) the number of rounds of adaptivity required if $\Alg$ can make $k$-batch queries; c) lower bound of $T \gtrsim -\log( 1- \boldgap) \cdot r \log d$ in the setting that $\boldgap$ approaches $1$. However, the proof of Theorem~\ref{mainthm} is long and quite intricate, and in the interest of brevity, we defer the proof of these ancillary results to a future publication. 

%!TEX root = main.tex
\paragraph{New Techniques}
Our lower bound proceeds by a reduction from eigenspace computation to estimating a planted rank-$r$ component in a deformed Wigner model $\matM = \matW + \boldlam \matU\matU^\top$, where $\matW$ is from the Gaussian Orthogonal Ensemble (GOE), $\matU$ is uniform on the $d \times r$ Stieffel manifold, and $\boldlam = \gapexp$ is a parameter ensuring that $\gap_r(\matM)$ concentrates around $\boldgap$. Note that as $\boldgap \to 0$, $\boldlam \to 1$ placing us near the ``phase transition'' $\boldlam = 1$~\cite{feral2007largest}.  To ensure that we can take $d = \mathrm{poly}(\frac{1}{\boldgap},r)$, we prove the first (to our knowledge) finite-sample convergence result for the top $r$ eigenvalues of a deformed Wigner matrix in the regime where $d$ is polynomial $r$ and $\boldgap^{-1}$. Along the way, we prove a variant of the Hanson-Wright inequality for the Stieffel manfiold, and a pointwise convergence result for the Stieltjes transform; these results are outlined in Section~\ref{sec:Wig_Spec}.

After formalizing the reduction, our proof hinges on showing that when $r = 1$ and $\matU = \matu \in \R^d$, then our ``information'' about $\matu$, quantified by the squared-norm of the projection of $\matu$ onto the span of the first $k$ query vectors, can grow at a rate of at most $\boldlam^{\BigOh{1}} = 1 + \BigOh{\boldgap^{1/2}}$ per round. We generalize to the rank-$r$ case by leveraging the information-theoretic arguments from the rank-one case, but with a far more careful recursion to obtain the right dependence on $r$ (see Section~\ref{sec:body_rk_r_thm} for details). For $r=1$, our basic strategy mirrors Price and Woodruff's \cite{price2013lower} sparse recovery lower bound, which sequentially controls the mutual information bewteen measurements of a sparse vector and a planted solution. However, since $\boldlam = 1 + \BigOh{\boldgap^{1/2}}$, we require novel techniques in order to not overshoot the slow growth rate of $\lambda^{\BigOh{1}}$ per round. Specifically, at each round $k \in [\itT]$, we apply a generalization of Fano's inequality which replaces the $\KL$-divergence with the expected $1+\eta$-powers of appropriate likelihood ratios. The inequality is based on the Bayes risk lower bounds of Chen et al.\ \cite{chen2016bayes}, who generalize Fano's inequality to arbitrary $f$-divergences, and show that their $\chi^2$-variant of Fano's inequality (i.e. $\eta = 1$) yields sharper lower bounds in many non-adaptive problems. In our case, we tune $\eta$ as a function of $\boldlam$ to get the correct rate.

Unfortunately, we cannot apply the bounds from Chen et al.\ \cite{chen2016bayes} out of the box. This is because if there is even a small probability that $\Alg$ takes one highly informative measurement, then the expected likelihood ratios will overestimate the average information gain. This is an artifact of the the fact that tails of likelihood ratios (unlike log-likelihoods) are ill-behaved. But this only becomes a problem in adaptive settings where measurements grow more informative over time. We circumvent this by proving a ``truncated'' variant of the bound in Chen et al.\ \cite{chen2016bayes}, which replaces the expected likelihood moments with an expectation restricted to the ``good event'' where $\Alg$ has yet to take an improbably-informative measurement. We prove this bound by generalizing $f$-divergences to arbitrary finite, non-normalized measures (e.g., measures obtained by restricting probability distributions to a given event), and establish that the data-processing inequality still holds in this general setting. Our information-theoretic tools are explained at length in Section~\ref{sec:info_th_rkone}. %
%!TEX root = main.tex

\paragraph{Related Work} It is hard to do justice to the vast body of work on eigenvector computation, matrix approximation, and first order methods for convex and strict saddle objectives. We shall instead focus on situating our work in the lower bounds literature. As described above, our proof casts eigenvector computation as a sequential estimation problem. These have been studied at length in the context of sparse recovery and active adaptive compressed sensing~\cite{arias2013fundamental,price2013lower,castro2017adaptive,castro2014adaptive}. Due to the noiseless oracle model, our setting is most similar to that of Price and Woodruff~\cite{price2013lower}, whereas other works~\cite{arias2013fundamental,castro2017adaptive,castro2014adaptive} study measurements contaminated with noise. 
%Our setting also exhibits similarities to the stochastic linear bandit problem~\cite{soare2014best}. 
More broadly, query complexity has received much recent attention in the context of communication-complexity~\cite{anshu2017lifting,nelson2017optimal}, in which lower bounds on query complexity imply corresponding bounds against communication via lifting theorems. Similar ideas also arise the study of learning under memory constraints~\cite{steinhardt2015memory,steinhardt2015minimax,shamir2014fundamental}. 

From an optimization perspective, our lower bound can be cast as a non-convex analogue of the seminal convex-optimization oracle lower bounds of Nemirovskii and Yudin~\cite{nemirovskii1983problem}. But whereas the latter  bounds match known upper bounds in terms of dependence on relevant parameters (accuracy, condition number, Lipschitz constant), Nemirovskii and Yudin consider worst-case initializations, and impose a strong Krylov space assumption. In the context of finite sums, Agarwal et al.\ \cite{agarwal2014lower} show that the Krylov space assumption can be removed, and Woodworth et al.\ \cite{woodworth2016tight} prove truly information-theoretic lower bounds by considering randomized algorithms as we do in this work (albeit with different techniques). Lower bounds have also been established in the stochastic convex optimization~\cite{agarwal2009information,jamieson2012query} where each gradient- or function-value oracle query is corrupted with i.i.d.\ noise, and Allen-Zhu et al.\ \cite{allen2016first} prove analogues of these bounds for streaming PCA. While these lower bounds are information-theoretic, and thus unconditional, they are incomparable to  the setting considered in this work, where we are allowed to make exact, noiseless queries. 
%\maxs{Ahmed - write a sentence or two about related random matrix theory}

 %Moving beyond the classical power method and Lanczos algorithm~\cite{demmel1997applied}, numerous works have studied the eigenvector computation  and in the case where the matrix $M$ can be factored as the outer product of matrices $AA^{\top}$, here $A$ only few non-zero entries~\cite{garber2016faster}. 

%!TEX root = main.tex

\section{Statement of Main Results}

Let $\|\cdot\|$ denote the $\ell_2$ norm on $\R^{d}$, and let $\calS^{d-1} := \{x\in \R^{d}:\|x\| = 1\}$ denote the unit sphere and $\Stief(d,r)$ denote the Stieffel manifold consisting of matrices $V \in \R^{d \times r}$ such that $V^\top V = I$.  Let $\Sym^{d\times d}$ denote the set of symmetric $d\times d$ matrices, and for $M \in \Sym^{d\times d}$, we let $\lambda_1(M) \ge \lambda_2(M) \ge \dots \ge \lambda_d(M)$ denote its eigenvalues in decreasing order, $v_1(M),v_2(M), \dots, v_{d}(M)$ denote the corresponding eigenvectors, let $\|M\|_\op$ and $\|M\|_F$ denote the operator and Frobenious norms, and $\mathrm{abs}(M) := (M^2)^{1/2}$. Finally, we define the eigengap of $M \in \Sym^{d\times d}$ as $\mathrm{gap}_r(M) := \frac{\sigma_r(M) - \sigma_{r+1}(M)}{\sigma_{r}(M)}$, where $\sigma_i(M) = \lambda_i(M^2)^{1/2}$  is the  $i$-th singular value of $M$. We will also use the notation $\gap(M) := \gap_1(M)$.  We now introduce a definition of our query model:
\begin{defn}[Query Model]\label{def:Query_def} An \emph{randomized adaptive query algorithm} $\Alg$ with \emph{query complexity} $\itT \in \mathbb{N}$ and accuracy $\acc \in (0,1)$ is an algorithm which, for rounds $i \in [\itT]$, queries an oracle with a vector $\vi$, and receives a noiseless response  $\wi = \matM \vi$. At the end $\itT$ rounds, the algorithm returns a matrix $\Vhat \in \Stief(d,r)$. The queries $\vi$ and output $\Vhat$ are allowed to be randomized and adaptive, in that $\vi$ is a function of $\{(\vone,\wone),\dots,(\viminus,\wiminus)\}$, as well as some random seed. %In the rank one case, our goal is to ensure  that $\langle \Vhat, \matMabs\Vhat \rangle  \ge (1 -  \acc \gap(\matM))\sum_{\ell=1}^r \sigma_{\ell}(\matM)$.

 %We say that $\Alg$ is \emph{deterministic} if, for all $i \in [T+1]$, $v^{(i)}$ is a deterministic function of $\{(v^{(1)},w^{(1)}),\dots,(v^{(i-1)},w^{(i-1)})\}$. We say that $\Alg$ is \emph{non-adaptive} if, for all $i \in [T]$ the distribution of $v^{(i)}$ is independent of the observations $\{(w^{(1)}),\dots,w^{(i-1)})\}$, (but $\widehat{v}$ may dependent on past observations.) 
\end{defn}
The goal of $\Alg$ is to return a $\Vhat$ satisfying
\begin{eqnarray*}
\langle \Vhat, \matMabs\Vhat \rangle  \ge (1 -  \acc \gap(\matM))\sum_{\ell=1}^r \sigma_{\ell}(\matM)
\end{eqnarray*}
for some small $\acc \in (0,1)$. In the rank-one case, $\Vhat$ is a vector $\vhat \in \sphere$, and the above condition reduces to  $\langle \vhat, \matMabs \vhat \rangle  \ge (1 - \acc \gap(\matM)))\|\matM\|_{\op}$. 
\begin{exmp}[Examples of Randomized Query Algorithms]
In the rank-one case, the Power Method and Lanczos algorithms~\cite{demmel1997applied} are both randomized, adaptive query methods. Even though the iterates $\vi$ of the Lanczos and power methods converge to the top eigenvector at different rates, they make identical queries: namely, they both identify $\matM$ on the Krylov space spanned by $\vone,\matM v^{(1)},\dots, \matM^{T-1}\vone$. Lanczos differs from the Power Method by choosing $\vhat$ to be the optimal vector in this Krylov space, rather than the last iterate. Observe that even in the rank-$r$ case, our query model still permits each single vector-query to be chosen adaptively. Hence, our lower bound applies to subspace iterations (e.g., the block Krylov method of Musco and Musco~\cite{musco2015randomized}), and to algorithms which use deflation~\cite{allen2016lazysvd}).
\end{exmp}%The only difference is that the Lanczos algorithm selects $\vhat$ in a more intelligent manner than the power method. Running the power method from a deterministic initialization would be a \emph{non-randomized} algorithm. Any non-randomized algorithm, even an adaptive one, must take $d$ queries in the worse case, since $d-1$ queries can only identify a matrix up to a $d-1$ dimensional subspace. Randomized, but non-adaptive algorithms need to take $\Omega(d)$ queries as well, as established formally in Li et al.~\cite{li2014sketching}. 
\begin{comment}
\begin{defn}[Eigenratio]For $\gamma \in [0,1)$, we define the set of matrices with positive leading eigenvector and bounded \emph{eigenratio} between its first and second eigenvalues:
\begin{eqnarray}
\calM_{\gamma}:= \left\{M \in \Sym^{d\times d} :  \lambda_1(M) = \|M\| >0,  \frac{|\lambda_j(M)|}{\lambda_1(M)} \le \gamma \quad \forall j\ge 2\right\}.
\end{eqnarray}
\end{defn}
\end{comment}
To state our results, we construct for every $\boldgap \in (0,1)$ a distribution over matrices $\matM$ under which have a $\gap(\matM) \gtrsim \boldgap$. To do so, we introduce the classical $\GOE$ or Wigner law \cite{anderson2010introduction}:
\begin{defn}[Gaussian Orthogonal Ensemble (GOE)] We say that $\matW \sim \GOE(d)$ if the entries $\{\matW_{ij}\}_{1 \le i \le j \le d}$ are independent, for $1 \le i < j \le d$, $\matW_{ij} \sim \calN(0,1/d)$, for $i \in [d]$, $\matW_{ii} \sim \calN(0,2/d)$, and for $1 \le j < i \le d$, $\matW_{ij} = \matW_{ji}$. 
\end{defn}
In the rank one case, we will then take our matrix to be $\matM  := \matW + \boldlam\matu\matu^\top$, where $\matW \sim \GOE(d)$, $\matu \unifsim \sphere$, and $\boldlam  > 1$ is a parameter to be chosen. A critical result gives a finite-sample analogue of a classical result in random-matrix theory, which states that $\lambda_{\max}(\matM) \approx \boldlam + \lambdainv$ with high probability. On the other hand, $\|\matW\|_{\op}$ concentrates around $2$, and thus by eigenvalue interlacing $\lambda_{\max}(\matM) - \lambda_2(\matM) \gtrapprox \boldlam + \lambdainv - 2$. Motivated by this, we define the asymptotic gap of $\matM$:
\begin{eqnarray}\label{eq:boldgapeq}
\boldgap = \boldgap(\boldlam) := \frac{\boldlam + \lambdainv - 2}{\boldlam + \lambdainv} = \frac{(\boldlam-1)^2}{\boldlam^2 + 1}~.
\end{eqnarray}
It is well know that, for a fixed $\boldlam$, $\gap_r(\matM) \overset{\mathrm{prob}}{\to} \boldgap$ as $d \to \infty$. We give a finite sample analogue:
\begin{prop}[Finite Sample Eigengap of Deformed Wigner]\label{prop:egood_rkr} Let $\matM = \matW + \lambda \matU \matU^\top$, where $\matW \sim \GOE(d)$and $\matU \sim \Stief(d,r)$ are independent. For $\gamma \in (0,1)$, define the event
\begin{eqnarray*}
\Egood(\gamma) &:=& \left\{\|\matW\|_{\op} + (1-\gamma)(\boldlam + \lambdainv - 2) \le \lambda_{r}(\matM)\right\} \\
&\bigcap&  \left\{\lambda_{1}(\matM) \le (1+\gamma)(\boldlam + \lambdainv) \right\}. 
\end{eqnarray*}
There exists exists a polynomially bounded function $\frakq(\boldgap^{-1},(1-\boldgap)^{-1},\gamma^{-1},r,\log(1/\delta))$ such that for $\gamma, \delta \in (0,1/10)$, and $d \ge \frakq(\boldgap^{-1},(1-\boldgap)^{-1},\gamma^{-1},r,\log(1/\delta))$, $\Pr[\Egood(\gamma)] \ge 1 - \delta$. Moreover, on $\Egood(\boldlam,\gamma)$, $\gap_r(\matM) = \gap_r(\matMabs) \ge \frac{1-\gamma}{1+\gamma} \cdot \boldgap$. 
\end{prop}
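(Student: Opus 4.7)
My plan is to reduce both eigenvalue bounds to the rank-$r$ BBP secular equation, then control the resulting random $r\times r$ matrix at two carefully chosen reference points. For $z > \|\matW\|_{\op}$, the identity $zI - \matM = (zI - \matW)\bigl(I - \boldlam (zI-\matW)^{-1}\matU\matU^\top\bigr)$ and the matrix determinant lemma give
\begin{equation*}
\det(zI - \matM) \;=\; \det(zI - \matW)\cdot\det\!\bigl(I_r - \boldlam\, G(z)\bigr), \qquad G(z) \;:=\; \matU^\top (zI - \matW)^{-1}\matU,
\end{equation*}
so $z > \|\matW\|_{\op}$ is an eigenvalue of $\matM$ iff $1/\boldlam \in \spec(G(z))$. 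The eigenvalues $\mu_1(z) \geq \dots \geq \mu_r(z)$ of $G(z)$ are smooth and strictly decreasing on $(\|\matW\|_{\op},\infty)$, tending to $+\infty$ at the left endpoint and $0$ at $+\infty$, so the $r$ largest eigenvalues of $\matM$ above $\|\matW\|_{\op}$ are the ordered roots of $\mu_i(z) = 1/\boldlam$. It thus suffices to establish, with probability $1-\delta$, that $G(z_-) \succeq (1/\boldlam)\, I_r$ and $\|G(z_+)\|_\op \leq 1/\boldlam$ at the reference points $z_- := \|\matW\|_{\op} + (1-\gamma)(\boldlam+\lambdainv-2)$ and $z_+ := (1+\gamma)(\boldlam+\lambdainv)$, since these respectively imply $\lambda_r(\matM) \geq z_-$ and $\lambda_1(\matM) \leq z_+$.

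Both inequalities follow from tight concentration of $G(z)$ around $\stielt(z) I_r$, where $\stielt(z) := \tfrac{1}{2}(z - \sqrt{z^2-4})$ is the semicircle Stieltjes transform. I condition on $\matW$ and apply the Stiefel manifold Hanson--Wright inequality announced in Section~\ref{sec:Wig_Spec} to $f(\matW) = (zI - \matW)^{-1}$, obtaining $\|G(z) - \tfrac{1}{d}\tr((zI-\matW)^{-1}) I_r\|_\op \leq \eta$ with probability $1-\delta/3$ provided $d \gtrsim \mathrm{poly}(r, \eta^{-1}, (z-2)^{-1}, \log(1/\delta))$; combined with pointwise Stieltjes convergence $|\tfrac{1}{d}\tr((zI-\matW)^{-1}) - \stielt(z)| \leq \eta$ and with $\|\matW\|_{\op} \leq 2 + \eta$ (both from Section~\ref{sec:Wig_Spec}), a union bound yields $\|G(z) - \stielt(z) I_r\|_\op \leq C\eta$ at any fixed $z > 2$. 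A direct calculation from $\stielt(\boldlam+\lambdainv) = 1/\boldlam$ and the strict monotonicity of $\stielt$ shows that $\stielt(z_-) - 1/\boldlam$ and $1/\boldlam - \stielt(z_+)$ are each lower-bounded by a positive polynomial in $\gamma$, $\boldgap$, $(1-\boldgap)$; choosing $\eta$ a constant factor smaller than this separation pins down the two sandwiching relations $G(z_-) \succeq (1/\boldlam) I_r$ and $\|G(z_+)\|_\op \leq 1/\boldlam$ on an event of probability $\geq 1-\delta$.

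To translate into the gap statement, Weyl's inequality gives $\lambda_{r+1}(\matM) \leq \lambda_1(\matW) + \lambda_{r+1}(\boldlam\matU\matU^\top) \leq \|\matW\|_{\op}$ and $\lambda_d(\matM) \geq -\|\matW\|_{\op}$, so $\max_{i>r}|\lambda_i(\matM)| \leq \|\matW\|_{\op} < \lambda_r(\matM)$ on $\Egood(\gamma)$. Hence the top $r$ eigenvalues of $\matM$ coincide with its top $r$ singular values, i.e.\ $\lambda_i(\matM) = \lambda_i(\matMabs)$ for $i \leq r$, and $\sigma_{r+1}(\matM) = \max_{i>r}|\lambda_i(\matM)| \leq \|\matW\|_{\op}$; therefore
\begin{equation*}
\gap_r(\matMabs) \;=\; \frac{\lambda_r(\matM) - \sigma_{r+1}(\matM)}{\lambda_r(\matM)} \;\geq\; \frac{\lambda_r(\matM) - \|\matW\|_{\op}}{\lambda_1(\matM)} \;\geq\; \frac{(1-\gamma)(\boldlam+\lambdainv-2)}{(1+\gamma)(\boldlam+\lambdainv)} \;=\; \tfrac{1-\gamma}{1+\gamma}\,\boldgap.
\end{equation*}
The principal obstacle is quantitative: as $\boldgap \to 0$ the perturbation strength $\boldlam \to 1$ and the target separation of $G(z_\pm)$ from $(1/\boldlam)I_r$ collapses like $\gamma\boldgap$, which forces $\eta$ to be polynomially small in $\boldgap$, $(1-\boldgap)$, and $\gamma$. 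Propagating this level of precision through the resolvent — in particular, ensuring the Stiefel Hanson--Wright step, the Stieltjes convergence step, and the operator norm concentration step each hold at scale $\eta$ — is precisely what dictates the polynomial growth of $\frakq$ in the stated parameters.
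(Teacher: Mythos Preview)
Your approach is essentially the paper's: both reduce to the secular equation $\det(I_r-\boldlam G(z))=0$, sandwich $G(z)$ against $(1/\boldlam)I_r$ at two reference points via Stiefel Hanson--Wright plus Stieltjes-transform convergence, and deduce the gap bound by Weyl/interlacing (the paper packages this as Theorem~\ref{thm:main_spec_thm} with deterministic reference points $a_{\mathrm{low}},a_{\mathrm{up}}=(\boldlam+\lambdainv)(1\mp\epsilon)$ and Proposition~\ref{Distinct_Eig_Prop} in place of your direct monotonicity argument). Two small wrinkles to clean up: your $z_-$ depends on $\|\matW\|_{\op}$, so the pointwise Stieltjes convergence step (stated for a fixed $a$) should be applied at a nearby deterministic point and transferred by monotonicity of $G$; and the claim that \emph{all} eigenvalues $\mu_i(z)\to+\infty$ as $z\downarrow\|\matW\|_{\op}$ is false for $r\ge 2$ (only $\mu_1$ diverges, since the blow-up of $(zI-\matW)^{-1}$ is rank one), but this is harmless because your argument never actually uses it---the condition $G(z_-)\succeq(1/\boldlam)I_r$ already forces each $\mu_i$ to cross $1/\boldlam$ somewhere in $[z_-,\infty)$.
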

The explicit polynomial can be derived from a more precise statement, Theorem~\ref{thm:main_spec_thm}. We now state more precise version of Theorem~\ref{mainthm}: 
\begin{thm}\label{thm:main_tech_thm} Fix a $\boldgap \in (0,1)$ and any $d \ge \frakq(\boldgap^{-1},(1-\boldgap)^{-1},2,r,\log(2))$ where $\frakq$ is as in Proposition~\ref{prop:egood_rkr}, and  let $\boldlam = \gapexp$ be the solution to Equation~\ref{eq:boldgapeq}. Let $\matM = \matW + \lambda \matU\matU^\top$ where $\matU \unifsim \Stief(d,r)$ and $\matW \sim \GOE(d)$.  Then for any $\Alg$ satisfying Definition~\ref{def:Query_def}, we have
\begin{multline}
\Exp_{\matM}\left[\Prit_{\Alg}\left[ \langle \Vhat, \matMabs \Vhat \rangle \ge \left(1 - \frac{\boldgap}{12}\right)\sum_{\ell =1}^{r} \sigma_{\ell}(\matM)  \right] \big{|} \Egood(1/2) \right] \\ 
\le 2e \cdot\exp\left( - \frac{d}{78  \log(d)\boldgap^{3}} \cdot \left(\gapexp\right)^{-18(\frac{\itT}{r}+2)} \right). 
\end{multline}
Where $\Prit_{\Alg}$ is the probability taken with respect to the randomness of the algorithm. Note that on $\Egood(1/2)$, $\gap_r(\matM) \ge \boldgap/3$.
\end{thm}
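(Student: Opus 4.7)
My approach combines (i) a reduction from eigenspace quality to subspace recovery of the planted factor $\matU$, (ii) a round-by-round computation of the conditional measurement distribution given the history and $\matU$, (iii) the truncated $(1+\eta)$-Fano bound outlined in Section~\ref{sec:info_th_rkone}, and (iv) a rank-$r$ recursion along the lines of Section~\ref{sec:body_rk_r_thm}. For the reduction, note that on $\Egood(1/2)$ the top-$r$ singular values of $\matM$ exceed $\|\matW\|_{\op} + \tfrac{1}{2}(\boldlam+\lambdainv-2)$ while $\sigma_{r+1}(\matM) \le \|\matW\|_{\op}$; so by Davis-Kahan applied to the rank-$r$ deformation $\matM-\matW = \boldlam\matU\matU^\top$, the top-$r$ eigenspace of $\matMabs$ lies within a small principal-angle neighborhood of $\col(\matU)$. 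Plugging this into the variational characterization of $\langle \Vhat,\matMabs \Vhat\rangle$, achieving the target quality $(1-\boldgap/12)\sum_\ell \sigma_\ell(\matM)$ forces $\|\matU^\top \Vhat\|_{\F}^2 \ge r(1-c\boldgap)$ for an absolute $c$; it thus suffices to upper bound the probability of such alignment from the transcript.

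Next, conditioning on $\matU$ and on $\{(\vj,\wj)\}_{j<i}$, the GOE law of $\matW$ reduces by Gaussian conditioning to a GOE shifted to match $\matW \vj = \wj - \boldlam \matU\matU^\top \vj$ for $j < i$. A direct calculation then writes
\[
\wi \;=\; (\text{deterministic in past}) \;+\; \boldlam\, \matU\matU^\top \vi \;+\; \xi_i,
\]
with $\xi_i$ conditionally Gaussian in the orthogonal complement of $\spn\{\vone,\dots,\viminus\}$. In particular, the $\matU$-dependent signal in round $i$ lives in the component of $\vi$ transverse to the past queries and is scaled by $\boldlam$. Combined with Stiefel concentration for $\matU \unifsim \Stief(d,r)$, this pins $\|\matU^\top \vi\|$ near its typical scale $\sqrt{r/d}$ up to polylogarithmic factors, which defines the ``good'' truncation event $\mathcal{E}_{\mathrm{trunc}}$ used in Step~(iii); the complement has probability $d^{-\Omega(1)}$.

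For the information step, I invoke the truncated Bayes-risk inequality from Section~\ref{sec:info_th_rkone}, which controls the posterior alignment using an $(1+\eta)$-divergence restricted to $\mathcal{E}_{\mathrm{trunc}}$. On that event, the Gaussian increment computed in Step~(ii) yields a round-$i$ $(1+\eta)$-likelihood ratio bounded by $\boldlam^{c\eta}$, so chaining over $\itT$ rounds and tuning $\eta \asymp \sqrt{\boldgap}$ gives posterior contraction of the form $\boldlam^{\Theta(\itT)}/d^{\Omega(1)}$. This rank-one chaining loses a factor of $r$ relative to the theorem, which I recover via the recursion of Section~\ref{sec:body_rk_r_thm}: decompose the transcript's information about $\matU$ into per-column contributions, and observe that any single column can be meaningfully aligned only after roughly $\itT/r$ queries concentrate on its direction. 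Applying the truncated $(1+\eta)$-bound recursively column-by-column then yields an amortized posterior contraction of $\boldlam^{\Theta(\itT/r)}/d^{\Omega(1)}$, which, combined with the Step~(i) reduction, produces the stated bound $2e\cdot\exp(-\tfrac{d}{78 \log d\,\boldgap^3}\boldlam^{-18(\itT/r+2)})$.

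The principal obstacle is Step~(iii). A naive KL-Fano bound pays $\Theta(\log d)$ per round from the heavy-tailed likelihood ratios that the adaptivity of $\Alg$ induces, whereas I need per-round blow-up $\boldlam^{O(1)} = 1+O(\sqrt{\boldgap})$. Achieving this requires a three-way balancing act: choosing $\eta\asymp\sqrt{\boldgap}$ small enough that the $(1+\eta)$-moment is close to unity but large enough that the resulting bound has teeth; selecting the truncation level $\mathcal{E}_{\mathrm{trunc}}$ so that the likelihood-ratio tails are suppressed yet $\Pr[\mathcal{E}_{\mathrm{trunc}}]$ remains $1-d^{-\Omega(1)}$; and aligning this truncation with the per-column decomposition of Step~(iv) so that the $\itT/r$ scaling survives the recursion. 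It is this interplay, rather than any single estimate, that is the technical heart of the argument.
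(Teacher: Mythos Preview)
Your proposal has two genuine gaps, both in the structural steps that carry the $\itT/r$ scaling.

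\textbf{Step (i) is wrong as stated.} Davis--Kahan does \emph{not} place the top-$r$ eigenspace of $\matMabs$ close to $\col(\matU)$ when $\boldgap$ is small. In the deformed Wigner model near the phase transition ($\boldlam\to 1^+$), the squared overlap between $v_1(\matM)$ and $\matu$ is asymptotically $1-\boldlam^{-2}=O(\boldgap)$, not $1-O(\boldgap)$. So your conclusion $\|\matU^\top\Vhat\|_\F^2\ge r(1-c\boldgap)$ cannot hold, and in any case a Frobenius bound would not force alignment in many independent directions, which is what the $\itT/r$ rate requires. The paper's reduction (Lemma~\ref{lem:estimation_reduction_good}) avoids Davis--Kahan entirely: it uses the crude Loewner bound $\Vhat^\top\matMabs\Vhat\preceq \|\matW\|_{\op}I_r+\boldlam\,\Vhat^\top\matU\matU^\top\Vhat$ together with $\Egood(1/2)$ to conclude that the target quality forces $\lambda_{r'}(\Vhat^\top\matU\matU^\top\Vhat)\ge\boldgap/4$ for $r'=\lceil r/2\rceil$. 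It is this \emph{eigenvalue} lower bound---not a Frobenius bound---that says $\Vhat$ must align with $\matU$ in at least $r/2$ directions, and that is the seed of the $\itT/r$ exponent.

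\textbf{Step (iv) is too vague to deliver the rate.} ``Per-column'' decomposition of $\matU$ is not how the paper extracts the $1/r'$ exponent, and it is unclear your sketch can be made to work: the columns of $\matU$ are not privileged directions for $\Alg$, and queries do not ``concentrate on a column'' in any canonical sense. The paper instead tracks the scalar potential $\det(d\,\matU^\top\itV_k\itV_k^\top\matU+\Delta I_r)$. Each query $\vk$ is a rank-one update, and the matrix-determinant lemma gives a one-step multiplicative growth factor of $1+(\matU^\top\vk)^\top(d\,\matU^\top\itV_{k-1}\itV_{k-1}^\top\matU+\Delta I_r)^{-1}(\matU^\top\vk)$. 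The rank-one information bound (your Steps~(ii)--(iii), carried out uniformly over directions $e\in\calS^{r-1}$ via a covering argument) caps this factor by $\widetilde\lambda\approx\boldlam^{O(1)}$ per round with high probability (Proposition~\ref{prop:CalE_prob}). After $k$ rounds the determinant is at most $\widetilde\lambda^{\,k}\Delta^r$, whence $\lambda_{r'}(d\,\matU^\top\itV_k\itV_k^\top\matU)\le\Delta\cdot\widetilde\lambda^{\,k/r'}$. The $k/r'$ exponent comes mechanically from the $r'$-th root of the determinant, not from an amortization over columns. Plugging $k=\itT+r$, $r'=\lceil r/2\rceil$ and solving $\Delta\cdot\widetilde\lambda^{\,k/r'}=\boldgap/4$ for the failure probability gives exactly the bound in the theorem.

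Your Steps~(ii)--(iii) are in the right spirit and match the paper's rank-one machinery (Propositions~\ref{prop:info_th_rkone} and~\ref{prop:likelihood_info}), but they feed into the determinant recursion, not a column-by-column one.
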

Observe $\gapexp \lesssim 1 + \BigOh{\sqrt{\boldgap}}$ as $\boldgap$ is bounded away from $1$. Hence, if $\boldgap \le 1/2$, $d$ is a large enough polynomial in $\boldgap$,  then if $(1 + \BigOh{\sqrt{\boldgap}})^{\itT/r} \le d^{\BigOmega{1}} $, or equivalently, $ \itT \ll \sqrt{\boldgap}^{-1/2} r \log d$, we see that the probability that $\langle \Vhat, \matMabs \Vhat \rangle \ge \left(1 - \frac{\boldgap}{12}\right)\sum_{\ell =1}^{r} \sigma_{\ell}(\matM) $ is at most $e^{-d^{\BigOmega{1}}}$, proving Theorem~\ref{mainthm}. 

\iftoggle{acm}{In Appendix~\ref{sec:further_results} of the full paper~\cite{simchowitz18}}{In Appendix~\ref{sec:further_results},} we present two additional results that follow as easy modifications of our proofs: Theorem~\ref{thm:main_tech_thm_rk1} presents an improved $\boldgap$-dependence for $r = 1$, and generalizes to the setting where $\Alg$ is allowed $\itT$ rounds of adaptivity, and makes a batch of $B$ queries per round; Theorem~\ref{thm:main_tech_thm_biggap} presents a modification of Theorem~\ref{thm:main_tech_thm} which establishes a sharp lower bound of $\BigOmega{\frac{r\log d}{-\log ( 1 - \boldgap)}}$ in the ``easy'' regime where $\boldgap$ approaches one. Our techniques can be adapted to show sharp lower bounds for adaptively testing between $H_1: \matM = \matW + \boldlam \matU \matU^\top$ against $H_0: \matM = \matW$; we omit these arguments in the interest of brevity.

%!TEX root =main.tex
\section{Proof Roadmap}
\subsection{Notation}
In what follows, we shall use bold letters $\matu$, $\matU$, $\matM$ and $\matW$ to denote the random vectors and matrices which arise from the deformed Wigner law; blackboard font $\Pr$ and $\Exp$ will be used to denote laws governing these quantities. We will use standard typesetting (e.g. $u,M$) to denote fixed (non-random) quantities vectors, as well as problem dimension $d$ and rank $r$ of the plant $\matU$.

Quantities relating to $\Alg$ will be in serif font; these include the queries $\vi$, responses $\wi$, and outputs $\vhat$ and $\Vhat$. The law of these quantities under $\Alg$ will be denoted $\Prit$ in bold serif.

Mathematical operators like $\gap(\matM)$ are $\lambda_1(\matM)$ are denoted in Roman or standard font, and asymptotic quantities like $\boldgap$ in Courier.

\subsection{Reduction from Eigenvector Computation to Estimating $\matU$\label{main_redux_sec}}
In this section, we show that an algorithm which adaptively finds a near-optimal $\Vhat$ implies the existence of a \emph{deterministic} algorithm which plays a sequence of orthonormal queries $\vone,\dots,\vtplusr$ for which $\sum_{i=1}^{\itT + r} \|\matU\vi\|^2$ is large.  Our first step is to show that if $\Vhat$ is near-optimal, then $\Vhat$ has a large overlap with $\matU$, in the following sense:
\begin{lem}\label{lem:estimation_reduction_good} Given any $\Vhat \in \Stief(d,r)$, any $r' \in [r]$, and under the event $\Egood(\boldlam,1/2)$, if $\langle \Vhat,\matMabs \Vhat \rangle \ge \left(1 - \frac{(r + 1 - r') \boldgap}{6r}   \right) \cdot \sum_{\ell=1}^r \sigma_\ell(\matM)$, then $\lambda_{r'}(\Vhat^\top \matU\matU^\top \Vhat) \ge \frac{ \boldgap}{4}$.
\end{lem}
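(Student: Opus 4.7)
My approach is a three-step argument: (i) use the near-optimality condition together with the eigengap of $\matM$ on $\Egood(1/2)$ to show that $\Vhat$ is close to the top-$r$ eigenspace $V_*$ of $\matMabs$; (ii) use the identity $\boldlam P_{\matU} = \matM - \matW$ to show that every unit vector of $V_*$ has non-trivial overlap with $\range(\matU)$; (iii) combine these two spectral facts via block-matrix manipulations to conclude the bound on $\lambda_{r'}(\Vhat^\top\matU\matU^\top\Vhat)$.

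\textbf{Step 1 (alignment with $V_*$).} On $\Egood(1/2)$, the bound $\lambda_r(\matM) \ge \|\matW\|_\op + \tfrac12(\boldlam+\lambdainv-2) > \|\matW\|_\op$ forces the top-$r$ eigenvalues of $\matM$ to be positive and to coincide with those of $\matMabs$; the common eigenspace is $V_*$. Expanding $\Vhat$ in the eigenbasis $\{u_i\}$ of $\matMabs$ and setting $a_i := \|u_i^\top \Vhat\|^2$, one has $a_i \in [0,1]$, $\sum_i a_i = r$, and the hypothesis becomes $\sum_i |\lambda_i(\matM)|a_i \ge (1-\epsilon)\sum_{\ell\le r}\sigma_\ell$ with $\epsilon = (r+1-r')\boldgap/(6r)$. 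A short rearrangement using the optimality of the top-$r$ index set for the trace maximization, plus $\sigma_r - \sigma_{r+1} \ge \tfrac12(\boldlam+\lambdainv-2)$ and $\sigma_1 \le \tfrac32(\boldlam+\lambdainv)$ from $\Egood(1/2)$, yields
\[
s \;:=\; \sum_{i>r} a_i \;\le\; \frac{\epsilon\,r\,\sigma_1}{\sigma_r - \sigma_{r+1}} \;\le\; \frac{r-r'+1}{2}.
\]
Since $\|V_*^\top \Vhat\|_F^2 = r-s \ge (r+r'-1)/2$ and $\sigma_i(V_*^\top\Vhat)^2 \le 1$, pigeonhole on the squared singular values of $R := V_*^\top \Vhat$ gives $\lambda_{r'}(R^\top R) \ge 1/2$.

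\textbf{Step 2 (overlap of $V_*$ with $\matU$).} From $\boldlam P_{\matU} = \matM - \matW$, for any unit $v \in V_*$:
\[
\boldlam\, v^\top P_{\matU} v \;=\; v^\top \matM v \,-\, v^\top \matW v \;\ge\; \lambda_r(\matM) - \|\matW\|_\op \;\ge\; \tfrac12(\boldlam + \lambdainv - 2),
\]
by $\Egood(1/2)$. Dividing by $\boldlam$ and using $\boldgap = (\boldlam+\lambdainv-2)/(\boldlam+\lambdainv)$ together with $\boldlam \le \boldlam+\lambdainv$ gives $v^\top P_{\matU}v \ge \boldgap/2$. Thus $V_*^\top P_{\matU} V_* \succeq (\boldgap/2)\,I_r$.

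\textbf{Step 3 (combining).} Write $\Vhat = V_*R + Q$ with $Q := P_{V_*^\perp}\Vhat$, so $\|Q\|_F^2 \le (r-r'+1)/2$ and $R^\top R + Q^\top Q = I_r$. Expand
\[
\Vhat^\top P_{\matU}\Vhat \;=\; R^\top (V_*^\top P_{\matU} V_*)R \;+\; \bigl(R^\top V_*^\top P_{\matU} Q + Q^\top P_{\matU} V_* R\bigr) \;+\; Q^\top P_{\matU} Q.
\]
The first term is PSD and satisfies $\lambda_{r'}(R^\top (V_*^\top P_{\matU} V_*)R) \ge \lambda_r(V_*^\top P_{\matU}V_*)\,\lambda_{r'}(R^\top R) \ge (\boldgap/2)(1/2) = \boldgap/4$ by Steps 1 and 2, and the last term is PSD and only helps. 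The goal is to absorb the cross term using the algebraic identity $\|\matU^\top Q\|_\op^2 = \|Q^\top P_{\matU} Q\|_\op$, which ties the magnitude of the cross term to that of the third (beneficial) PSD term via a Young-type inequality, allowing a small slack in the constants of Steps~1 and~2 to make up the difference.

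\textbf{Main obstacle.} The delicate part is Step 3: the crude bound $\|Q\|_\op \le \|Q\|_F \le \sqrt{(r-r'+1)/2}$ is not small enough to control the cross term by a naive Weyl-style perturbation argument, so the proof must leverage the structural factorization $R^\top V_*^\top P_{\matU}Q = R^\top T\,(\matU^\top Q)$ with $T = V_*^\top\matU$ satisfying $\|T\|_\op \le 1$, and the identification of $\|\matU^\top Q\|_\op^2$ with the operator norm of the PSD ``good'' piece $Q^\top P_{\matU} Q$. The precise constants $\tfrac16$ in the hypothesis and $\tfrac14$ in the conclusion are tuned to accommodate this absorption.
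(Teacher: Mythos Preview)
Your Steps 1 and 2 are fine, but Step 3 has a genuine gap that is not merely a matter of constants. Writing $A := \matU^\top \Vhat = T^\top R + S$ with $T = V_*^\top \matU$ and $S = \matU^\top Q$, the three pieces you identify are exactly
\[
\Vhat^\top P_{\matU}\Vhat \;=\; A^\top A \;=\; R^\top T T^\top R \;+\; \bigl(R^\top T S + S^\top T^\top R\bigr) \;+\; S^\top S .
\]
Any Young-type bound on the cross term of the form $R^\top T S + S^\top T^\top R \succeq -\alpha\, R^\top T T^\top R - \alpha^{-1} S^\top S$ therefore yields $\Vhat^\top P_{\matU}\Vhat \succeq (1-\alpha)R^\top TT^\top R + (1-\alpha^{-1})S^\top S$, which is $\preceq 0$ unless $\alpha=1$, in which case it is identically $0$. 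This is not an accident: you are trying to lower bound $\|a+b\|^2$ using only $\|a\|$ and $\|b\|$, and the best such bound is zero. More concretely, from Steps 1 and 2 you have only extracted $\sigma_r(T)^2 \ge \boldgap/2$, $\sigma_{r'}(R)^2 \ge 1/2$, and $\|Q\|_F^2 \le (r-r'+1)/2$; these alone do not prevent $S$ from satisfying $\|S\|_\op \approx \|Q\|_\op \approx 1 \gg \sqrt{\boldgap}$, and hence from wiping out $\sigma_{r'}(T^\top R)$ in a Weyl bound. The obstruction is that the only information you have retained from the near-optimality hypothesis is the Frobenius-norm bound on $Q$, which is too weak to control the cross term when $\boldgap$ is small.

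The paper's argument avoids this detour through $V_*$ entirely. It splits $\Vhat$ not along the eigenvectors of $\matMabs$ but along those of $\Vhat^\top P_{\matU}\Vhat$ itself: let $\itVtil = \Vhat\itOtil$ where $\itOtil$ spans the bottom $(r{+}1{-}r')$ eigendirections of $\Vhat^\top P_{\matU}\Vhat$, and let $\itVbar$ span the rest. Then one uses the Loewner bound $\matM \preceq \|\matW\|_\op I + \boldlam P_{\matU}$ directly:
\[
\langle \Vhat, \matM \Vhat\rangle \;=\; \tr(\matM\,\itVbar\itVbar^\top) + \tr(\matM\,\itVtil\itVtil^\top)
\;\le\; \sum_{i\le r-(r{+}1{-}r')}\lambda_i(\matM) \;+\; (r{+}1{-}r')\|\matW\|_\op \;+\; \boldlam\,\Phi(\itVtil;\matU),
\]
and $\Phi(\itVtil;\matU) \le (r{+}1{-}r')\,\lambda_{r'}(\Vhat^\top P_{\matU}\Vhat)$ by construction of $\itOtil$. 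Rearranging with the hypothesis gives the claim. The point is that by choosing the split according to $P_{\matU}$ rather than $V_*$, the cross term never appears.
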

In the rank one case, with $r = r' = 1$, $\Vhat = \vhat$ and $\matU = \matu$, the above lemma just implies that a near optimal $\vhat$ satisfies $\langle \vhat, \matu \rangle^2  \gtrsim \boldgap$. In the more general case, we have that $\lambda_{r'}(\Vhat^\top \matU\matU^\top \Vhat) \gtrsim \boldgap$ means that the image of $\Vhat$ needs to have ``uniformly good" coverage of the planted matrix $\matU\matU^\top$. The proof of Lemma~\ref{lem:estimation_reduction_good}  begins with the Lowner-order inequality
\begin{eqnarray*}
\Vhat^{\top}\matMabs \Vhat  = \Vhat^{\top}\matW\Vhat + \lambda \Vhat \preceq \|\matW\| I_r + \lambda \Vhat^\top \matU\matU^\top \Vhat
\end{eqnarray*}
In the rank one-case, this reduces to 
\begin{eqnarray*}
\vhat^{\top}\matMabs \vhat \le \|\matW\| + \lambda \langle \vhat, \matu \rangle^2. 
\end{eqnarray*}
Hence, if we want $\vhat^{\top}\matMabs \vhat \ge \lambda_{\max}(\matM) - \boldgap/2$, we must have that, since $\lambda_{\max}(\matM) - \|\matW\|$ concentrates around $\boldgap$,
\begin{eqnarray*}
\langle \vhat, \matu \rangle^2 \ge \frac{1}{\lambda}(\vhat^{\top}\matMabs\vhat  - \|\matW\|) \ge \frac{1}{\lambda}(\lambda_{\max}(\matM)  - \|\matW\| - \boldgap/2) \approx \boldgap/2\lambda~.
\end{eqnarray*}
which gives the lower bound. For $r > 1$, the proof becomes more technical, and is deferred to Appendix~\ref{sec:estimation_red_proof}.

Next, we argue that the performance of the optimal $\Vhat$ is bounded by a quantity depending only on the query vectors. As a first simplification, we argue that we may assume without loss of generality that $\vone,\vtwo,\dots$ are orthonormal. 
\begin{obs}\label{Observation_2}
We may assume that the queries are orthonormal, so that:
\begin{eqnarray*}
\itV_k := \begin{bmatrix} \vone | \mathsf{v}^{(2)} | \dots | \vk \end{bmatrix} \in \Stief(d,k)~,
\end{eqnarray*}
and that, rather that returning responses $\wk = \matM \vk$, the oracle returns responses $\wk = (I-\itV_{k-1}\itV_{k-1}^\top) \matM\vk$, where we note that $\itV_{k-1}\itV_{k-1}^\top $ is the projection onto $\mathrm{span}(\vone,\mathsf{v}^{(2)},\dots,\mathsf{v}^{(k-1)})$. 
\end{obs}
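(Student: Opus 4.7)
The plan is a deterministic simulation argument: I would exhibit a wrapper that converts any $\Alg$ satisfying Definition~\ref{def:Query_def} into one whose queries form an orthonormal sequence $\vone,\vtwo,\dots$ and whose responses are the residualized $\wk = (I - \itV_{k-1}\itV_{k-1}^\top)\matM\vk$, with the same output distribution and no larger query budget. This simulation has two independent pieces, which I would carry out in order.

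First I would handle orthonormalization on the query side. Given the original (unnormalized) query $u^{(k)}$ at round $k$, decompose $u^{(k)} = \itV_{k-1}\alpha + \beta$ along and orthogonal to the span of past queries, with $\alpha = \itV_{k-1}^\top u^{(k)}$ and $\beta = (I - \itV_{k-1}\itV_{k-1}^\top)u^{(k)}$. When $\beta \ne 0$, the wrapper queries the unit vector $\vk := \beta/\|\beta\|$ and reconstructs $\matM u^{(k)} = \sum_{j<k}\alpha_j \wj + \|\beta\|\,\matM\vk$ from the new response and the stored past responses. When $\beta = 0$, $\matM u^{(k)}$ is already determined by past data, so no oracle call is needed; we may pad the round with any unit vector orthogonal to $\itV_{k-1}$ to keep indexing consistent. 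Either way, the wrapper faithfully reproduces the responses $\Alg$ would have seen, and its query count is no larger than that of $\Alg$.

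Next I would handle residualization on the response side, which follows from symmetry of $\matM$. For each $j < k$,
\begin{equation*}
\langle \vj, \matM \vk \rangle \;=\; \langle \matM \vj, \vk \rangle \;=\; \langle \wj, \vk \rangle,
\end{equation*}
so $\itV_{k-1}\itV_{k-1}^\top \matM \vk = \sum_{j<k}\langle \wj, \vk\rangle \vj$ is a deterministic function of data available to $\Alg$ at the start of round $k$. Hence returning the residual $\wk = (I - \itV_{k-1}\itV_{k-1}^\top)\matM\vk$ is information-equivalent to returning $\matM\vk$: the two oracles are related by a bijection the algorithm can compute itself, so any lower bound proved against the residualized oracle transfers verbatim to the full oracle.

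These two reductions together establish the claim. The argument is purely algebraic, with no probabilistic estimates entering, and the only point requiring care is the degenerate case $\beta = 0$, which contributes a ``free'' round and is harmless for lower-bound purposes. I do not anticipate any serious obstacle here; the observation is essentially a Gram--Schmidt bookkeeping step that exploits the symmetry of $\matM$ to absorb the projection onto past queries into data the algorithm already possesses.
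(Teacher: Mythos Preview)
Your proposal is correct and follows essentially the same route as the paper: Gram--Schmidt on the query side, then symmetry of $\matM$ to show the residualized response is information-equivalent to the full one. Your treatment is in fact slightly more careful than the paper's (you explicitly handle the degenerate case $\beta = 0$), but the argument is the same.
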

The assumption that the queries are orthonormal are valid since we can always reconstruct $k$-queries $\vone,\dots,\vk$ from an associated orthonormal sequence obtained via the Gram-Schmidt procedure. The reason we can assume the responses are of the form $\wk = (I-\itV_{k-1}\itV_{k-1}^\top) \matM\vk$ is that since $\Alg$ queries $\vone,\dots,\mathsf{v}^{(k-1)}$, it knows $\matM\itV_{k-1}\itV_{k-1}^\top $, and thus, since $\matM$ and $\itV_{k-1}\itV_{k-1}^\top$ are symmetric, it also knows $\itV_{k-1}\itV_{k-1}^\top \matM $, and thus $\matM\vk$ can be reconstructed from $\wk = (I-\itV_{k-1}\itV_{k-1}^\top) \matM\vk$. 
 The next observation shows that it suffices to upper bound $\lambda_{r'}(\Vhat^\top \matU\matU^\top \Vhat)$ with $\lambda_{r'}(\itV_{\itT+r}^\top \matU\matU^\top \itV_{\itT+r})$
\begin{obs}\label{obs:second_obs}
We may assume without loss of generality that $\Alg$ makes $r$ queries $\vtplus, \dots, \vtplusr$ after outputing $\Vhat$, and that $\lambda_{r'}(\Vhat^\top \matU\matU^\top \Vhat) \le \lambda_{r'}(\itV_{\itT+r}^\top \matU\matU^\top \itV_{\itT+r})$.
\end{obs}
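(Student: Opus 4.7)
The plan is to augment the algorithm's execution with $r$ additional adaptive queries whose combined span contains $\col(\Vhat)$, and then invoke the Poincar\'e separation inequality to compare eigenvalues.

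Concretely, I would define the $r$ additional queries $\vtplus, \ldots, \vtplusr$ by Gram--Schmidt orthogonalizing the columns $\vhat_1, \ldots, \vhat_r$ of $\Vhat$ against the prior orthonormal queries: at step $\itT + j$, set $\mathsf{v}^{(\itT+j)}$ to be the unit-normalized projection of $\vhat_j$ onto $\col(\itV_{\itT + j - 1})^\perp$, or, in the degenerate case where this projection vanishes (i.e.\ $\vhat_j$ already lies in $\col(\itV_{\itT + j - 1})$), pick any unit vector in $\col(\itV_{\itT + j - 1})^\perp$. By Observation~\ref{Observation_2} this is a valid sequence of adaptive orthonormal queries, and produces an $\itV_{\itT+r} \in \Stief(d, \itT + r)$ with $\col(\Vhat) \subseteq \col(\itV_{\itT+r})$. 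Since this adds only $r$ queries regardless of $\itT$, and the target lower bound scales as $c_1 r\log d/\sqrt{\boldgap}$, the extra $r$ queries can be absorbed into a slightly smaller universal constant $c_1$.

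With the inclusion $\col(\Vhat) \subseteq \col(\itV_{\itT+r})$ in hand, I would write $\Vhat = \itV_{\itT+r} W$ where $W := \itV_{\itT+r}^\top \Vhat \in \R^{(\itT+r) \times r}$. The containment implies $\itV_{\itT+r}\itV_{\itT+r}^\top \Vhat = \Vhat$, so
\begin{equation*}
W^\top W = \Vhat^\top \itV_{\itT+r}\itV_{\itT+r}^\top \Vhat = \Vhat^\top \Vhat = I_r,
\end{equation*}
i.e.\ $W$ itself has orthonormal columns. Hence
\begin{equation*}
\Vhat^\top \matU\matU^\top \Vhat = W^\top \bigl(\itV_{\itT+r}^\top \matU\matU^\top \itV_{\itT+r}\bigr) W,
\end{equation*}
and the Poincar\'e separation (Cauchy interlacing) theorem applied to the PSD matrix $A := \itV_{\itT+r}^\top \matU\matU^\top \itV_{\itT+r}$ and the isometry $W$ yields $\lambda_{r'}(W^\top A W) \le \lambda_{r'}(A)$ for every $r' \in [r]$, which is exactly the desired comparison.

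The argument is essentially bookkeeping, and I anticipate no serious obstacle. The only minor technical nuance is handling the degenerate case where some $\vhat_j$ already lies in the span of the prior queries; this is harmless because the rest of the argument uses only the set containment $\col(\Vhat) \subseteq \col(\itV_{\itT+r})$ and not the specific identities of the padding vectors. The main substantive fact invoked is the elementary Poincar\'e separation inequality for compressions by an isometry.
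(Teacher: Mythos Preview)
Your proposal is correct and follows essentially the same approach as the paper: both augment the algorithm with $r$ extra orthonormal queries (via Gram--Schmidt on the columns of $\Vhat$) so that $\col(\Vhat)\subseteq\col(\itV_{\itT+r})$, then deduce the eigenvalue inequality from this containment. The only cosmetic difference is that the paper argues via the Loewner ordering $\Vhat\Vhat^\top \preceq \itV_{\itT+r}\itV_{\itT+r}^\top$ (conjugating by $\matU$ and using that $AB$ and $BA$ share nonzero eigenvalues), whereas you invoke Poincar\'e separation directly; these are equivalent elementary facts.
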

This is valid because we can always modify the algorithm so that the queries $\vtplus,\dots,\vtplusr$ ensures that 
\begin{eqnarray*}
\mathrm{range}(\Vhat) \subset \mathrm{span}(\vone,\dots,\vt,\vtplus,\dots,\vtplusr) = \mathrm{range}(\itV_{\itT+r})
\end{eqnarray*}
In this case, we have that for all $\ell \in [r]$ (in particular, $\ell = r'$),
\begin{eqnarray*}
 \Vhat\Vhat^\top \preceq \itV_{\itT+r}\itV_{\itT+r}^\top &\Longrightarrow& \matU^\top\Vhat\Vhat^\top\matU \preceq \matU^\top\itV_{\itT+r}\itV_{\itT+r}^\top\matU\\
 &\Longrightarrow& \lambda_{\ell}\left(\matU^\top\Vhat\Vhat^\top\matU\right) \le \lambda_{\ell}(\matU^\top\itV_{\itT+r}\itV_{\itT+r}^\top\matU)\\
 &\Longleftrightarrow& \lambda_{\ell}\left(\Vhat^\top\matU\matU^\top\Vhat\right) \le \lambda_{\ell}(\itV_{\itT+r}^\top\matU\matU^\top\itV_{\itT+r})~.
\end{eqnarray*}
 Lastly, suppose it is the case that $\Pr_{\matU,\matM}[\lambda_{\ell}(\itV_{\itT+r}^\top\matU\matU^\top\itV_{\itT+r}) \ge B] \le b$ for any determinstic algorithm $\Alg_{\mathrm{det}}$, and some bounds $B > 0$ and $b \in (0,1)$. Then for any randomized algorithm $\Alg$, Fubini's theorem implies 
\begin{align*}
&~\Pr_{\matU,\matM,\Alg}[\lambda_{\ell}(\itV_{\itT+r}^\top\matU\matU^\top\itV_{\itT+r}) \ge B] \\
=&~\Exp_{\Alg}\Pr_{\matU,\matM}\left[\lambda_{\ell}(\itV_{\itT+r}^\top\matU\matU^\top\itV_{\itT+r}) \ge B \big{|} \text{ seed of } \Alg\right] \le b.
\end{align*}
as well. Hence, 
\begin{obs}
We may assume that, for all $k \in [\itT + r]$, the query $\vkplus$ is \emph{deterministic} given the previous query-observation pairs $(\vi,\wi)_{1 \le i \le k}$.
\end{obs}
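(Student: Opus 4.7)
The plan is to derandomize $\Alg$ via a Yao-style / Fubini argument, exactly along the lines of the displayed inequality immediately preceding the observation. The idea is that any randomized adaptive query algorithm can be written as a mixture, indexed by its random seed, of deterministic algorithms; since any upper bound we eventually prove on the probability $\Pr_{\matU,\matM}[\lambda_{\ell}(\itV_{\itT+r}^\top\matU\matU^\top\itV_{\itT+r}) \ge B]$ will be established uniformly in the algorithm's choice rule, it must hold for each fixed seed, and hence (by averaging over the seed) for the randomized algorithm as well.

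Concretely, I would first appeal to Definition~\ref{def:Query_def} to write $\Alg = \Alg(\cdot\,;\xi)$ where $\xi$ is a random seed drawn from a distribution independent of $(\matU,\matM)$, and such that for each realization $\xi=s$, the map $(\vi,\wi)_{1 \le i \le k} \mapsto \vkplus$ is deterministic. Second, apply Fubini/Tonelli to the product space $(\text{seed}) \times (\matU,\matM)$ to write
\begin{align*}
\Pr_{\matU,\matM,\Alg}\!\left[\lambda_{\ell}(\itV_{\itT+r}^\top\matU\matU^\top\itV_{\itT+r}) \ge B\right]
&= \Exp_{\xi}\Pr_{\matU,\matM}\!\left[\lambda_{\ell}(\itV_{\itT+r}^\top\matU\matU^\top\itV_{\itT+r}) \ge B \,\big|\, \xi\right].
\end{align*}
Third, for each fixed $\xi=s$, the conditional algorithm $\Alg(\cdot\,;s)$ satisfies the hypothesis of Observation~\ref{obs:second_obs} and the preceding Observations~\ref{Observation_2} and the orthonormalization paragraph, because those reductions did not depend on randomness in $\Alg$ but only on the measurability of the query rule. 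So if we have established the deterministic bound $\Pr_{\matU,\matM}[\cdots \ge B] \le b$ uniformly over deterministic query rules, then each inner conditional probability is $\le b$, and averaging over $\xi$ yields the randomized bound.

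The only subtlety to verify is that the reductions in Observations~\ref{Observation_2} and~\ref{obs:second_obs} are compatible with this conditioning -- namely, that Gram--Schmidt orthonormalization of $\vone,\dots,\vk$ and the post-hoc $r$ extra queries producing $\vtplus,\dots,\vtplusr$ preserve the deterministic-given-seed structure. This is immediate: both are deterministic functions of $(\vi,\wi)_{1 \le i \le k}$, so they compose cleanly with the map $\xi \mapsto \Alg(\cdot\,;\xi)$. I do not anticipate any real obstacle; the content of the observation is entirely a bookkeeping step that lets the subsequent information-theoretic analysis (in Section~\ref{sec:info_th_rkone} and the rank-$r$ argument of Section~\ref{sec:body_rk_r_thm}) be carried out for a fixed query rule, where measurability issues are trivial and one can cleanly condition on the $\sigma$-algebra generated by $(\vone,\wone),\dots,(\vk,\wk)$.
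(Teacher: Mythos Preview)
Your proposal is correct and mirrors the paper's own argument essentially verbatim: the paper also invokes Fubini over the random seed of $\Alg$, observing that if the bound $\Pr_{\matU,\matM}[\lambda_{\ell}(\itV_{\itT+r}^\top\matU\matU^\top\itV_{\itT+r}) \ge B] \le b$ holds for every deterministic algorithm, then averaging over the seed yields the same bound for randomized $\Alg$. Your additional remark about compatibility with Observations~\ref{Observation_2} and~\ref{obs:second_obs} is a reasonable sanity check that the paper leaves implicit.
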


\subsection{Lower Bounding the Estimation Problem}

As discussed above, we need to present lower bounds for the problem of sequentially selecting measurements $\vone,\vtwo,\dots,\vtplusr$ for which the associated measurement matrix $\itV_{\itT+r}$ has a large overlap with the planted matrix $\matU$. Proving a lower bound for this sequential, statistical problem constitutes the main technical effort of this paper. We encode the entire history of $\Alg$ up to time $i$ as $\itZ_i := (\vone,\wone,\dots,\vi,\wi)_{1 \le j \le i} $; in particular, $\itZ_{\itT+r}$ describes the entire history of the algorithm. 

Next, for $U \in \Stief(d,r)$, we let $\Prit_U$ denote the law of $\itZ_{\itT+r}$ where $\matM = \matW + \boldlam \matU\matU^\top$ conditioned on $\{\matU = U\}$. In the rank-one case, we $\Prit_u$ denotes the law of $\itZ_{\itT + r}$ where $\matM = \matW + \boldlam \matu\matu^\top$ conditioned on $\{\matu = u\}$. We will also abuse notation slightly by letting $\Prit_{0}$ denote the law obtained by running $\Alg$ on $\matM = \matW$, i.e. with $\matU = \matu = 0$. In the rank-one case, we have the following theorem, whose proof is outlined in Section~\ref{sec:body_rk_one_thm}:

\begin{thm}\label{thm:rank_one_thm_est}  Let $\matM = \matW + \boldlam \matU \matU^\top$, where $\matW \sim \GOE(d)$, and $\matu \unifsim \sphere$. Then for all $\delta \in (0,1/e)$,
\iftoggle{acm}
{
	\begin{align*}
	&\Exp_{\matu}\Prit_\matu\left[\exists k \ge 1:  \matu^\top \itV_k \itV_k^\top \matu \ge 32\boldlam^{4k} \frac{\boldgap^{-1}(\log \delta^{-1} + \boldgap^{-1/2})}{d}\right] \\
	&\le  \delta.
	\end{align*}
}
{
	\begin{eqnarray*}
	\Exp_{\matu}\Prit_\matu\left[\exists k \ge 1:  \matu^\top \itV_k \itV_k^\top \matu \ge 32\boldlam^{4k} \frac{\boldgap^{-1}(\log \delta^{-1} + \boldgap^{-1/2})}{d}\right] \le  \delta.
	\end{eqnarray*}
}

\end{thm}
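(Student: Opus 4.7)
The plan is a union bound over rounds combined with a per-round truncated Fano-type inequality. Define
\[
\Phi_k := \matu^\top \itV_k \itV_k^\top \matu = \sum_{i=1}^k \langle \matu, \vi\rangle^2
\]
(using orthonormality of queries from Observation~\ref{Observation_2}), set $B_k := 32\boldlam^{4k}\boldgap^{-1}(\log \delta^{-1} + \boldgap^{-1/2})/d$, and let $\tau$ be the first $k \ge 1$ with $\Phi_k > B_k$. Writing $E_{k-1} := \{\tau > k-1\}$, the goal reduces to bounding $\sum_{k \ge 1}\Exp_\matu \Prit_\matu[\tau = k] \le \delta$ by showing each term is at most roughly $\delta\cdot 2^{-k}$.

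\textbf{Change of measure to the null model.} Since $\{\tau = k\} \subseteq E_{k-1} \cap \{\langle \matu, \vk\rangle^2 > B_k - B_{k-1}\}$ and $\vk$ is a deterministic function of $\itZ_{k-1}$, I switch from $\Prit_\matu$ to the null law $\Prit_0$ (under which $\matM = \matW$ and $\matu \unifsim \sphere$ is independent of $\itZ_k$). Letting $L_k(u) := \mathrm{d}\Prit_u/\mathrm{d}\Prit_0$ restricted to the history $\sigma$-algebra $\sigma(\itZ_k)$, Hölder's inequality with exponents $1+\eta$ and $(1+\eta)/\eta$ yields
\[
\Exp_\matu \Prit_\matu[\tau = k] \;\le\; \Exp_\matu\Prit_0\bigl[\langle \matu,\vk\rangle^2 > B_k - B_{k-1}\bigr]^{\eta/(1+\eta)} \cdot \Exp_0\bigl[\I_{E_{k-1}} L_k(\matu)^{1+\eta}\bigr]^{1/(1+\eta)}.
\]
The first factor is a uniform-sphere tail bound: under $\Prit_0$, $\matu$ is independent of $\vk$, so $\langle \matu, \vk\rangle^2$ has Beta$(1/2,(d-1)/2)$ distribution and $\Prit_0[\langle \matu,\vk\rangle^2 > t] \le \exp(-\Omega(dt))$ for $t \gtrsim 1/d$.

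\textbf{Truncated moment bound on the likelihood ratio.} The second factor is the heart of the argument. Because $\matW$ is Gaussian, the orthogonalized responses are zero-mean Gaussian under $\Prit_0$ but receive a linear shift proportional to $\boldlam \langle u, \vi\rangle (I - \itV_{i-1}\itV_{i-1}^\top)u$ under $\Prit_u$. Consequently $\log L_k(u)$ decomposes as a sum over rounds of linear-plus-quadratic forms in the response vectors, and taking the conditional $(1+\eta)$-th moment under $\Prit_0$ and iterating down the filtration $\sigma(\itZ_0) \subset \sigma(\itZ_1)\subset\cdots$ yields, after routine Gaussian integration, a bound of the form
\[
\Exp_0\bigl[\I_{E_{k-1}}L_k(u)^{1+\eta}\bigr] \;\le\; \exp\!\Bigl(C\eta(1+\eta)\,d\boldlam^2 \sum_{i=1}^{k}\langle u,\vi\rangle^2\Bigr)
\]
up to lower-order terms. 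Restricting the expectation to $E_{k-1}$ controls the sum over $i < k$ by $B_{k-1}$; the terminal term $\langle u,\vk\rangle^2$ can be absorbed by the Hölder deficit when $\eta \boldlam^2$ is tuned small. This truncation is essential: without conditioning on $E_{k-1}$, one anomalously informative early round would inflate the moment generating function by an exponential amount unrelated to the average rate. This is precisely where the truncated-likelihood $f$-divergence variant of the Chen et al.\ Bayes-risk inequality, developed in Section~\ref{sec:info_th_rkone}, is invoked.

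\textbf{Parameter tuning and main obstacle.} Combining the two estimates gives
\[
\Exp_\matu\Prit_\matu[\tau = k] \;\le\; \exp\bigl(-\tfrac{\eta}{1+\eta}\,\Omega(d(B_k-B_{k-1}))\bigr) \cdot \exp\bigl(\tfrac{C\eta d\boldlam^2 B_{k-1}}{1+\eta}\bigr),
\]
and choosing $\eta = \Theta(1)$ along with $B_k \ge 2C\boldlam^2 B_{k-1}$ (which holds since $\boldlam^4 \ge 2C\boldlam^2$ for small $\boldgap$) forces geometric growth $B_k \propto \boldlam^{4k}$; the additive $\boldgap^{-1/2} + \log \delta^{-1}$ in the definition of $B_k$ is calibrated so the base case $k = 1$ yields a $\delta/2$ bound and the geometric series $\sum_k 2^{-k}$ closes. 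The main obstacle is the third step: the per-round moment bound must respect adaptivity, since $\vk$ depends on past responses, which themselves depend on $\matu$. The Gaussian computations must therefore be iterated conditionally along $\sigma(\itZ_{k-1})$, and the truncation by $E_{k-1}$ must be compatible with the martingale product structure of $L_k$. Verifying that this iterated, truncated moment control only costs a per-round factor of $\boldlam^{O(1)}$---rather than a dimension-dependent blowup---is the real technical heart of the argument, and is supplied by the new truncated data-processing inequality of Section~\ref{sec:info_th_rkone}.
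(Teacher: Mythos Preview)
Your overall architecture matches the paper's: decompose over the first time the potential exceeds a threshold, apply a truncated Fano/H\"older inequality at each round, bound the ``information'' factor by the truncated $(1+\eta)$-th moment of the likelihood ratio (which the Gaussian structure makes exactly computable), and bound the ``entropy'' factor by spherical concentration. That is precisely the skeleton of Section~\ref{sec:body_rk_one_thm} and Appendix~\ref{sec:rk_one_thm_proof}.

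The genuine gap is in your parameter tuning. You take $\eta = \Theta(1)$ and assert that the recursion $B_k \ge 2C\boldlam^2 B_{k-1}$ holds ``since $\boldlam^4 \ge 2C\boldlam^2$ for small $\boldgap$''. This is false: for small $\boldgap$ one has $\boldlam = 1 + O(\sqrt{\boldgap})$, so $\boldlam^4/\boldlam^2 = \boldlam^2 \to 1$, not $\ge 2C$. With $\eta$ of constant order, the information exponent is $+\Theta(d\boldlam^2 B_{k-1})$ and the entropy exponent is $-\Theta(d(B_k - B_{k-1}))$; balancing forces $B_k/B_{k-1}$ to be bounded away from $1$ by a \emph{constant}. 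That would give $B_k \asymp C^k$ for a fixed $C>1$, hence only $\itT = \Omega(\log d)$, losing the $1/\sqrt{\boldgap}$ factor that is the entire content of the theorem. (Your remark that the terminal term can be absorbed ``when $\eta\boldlam^2$ is tuned small'' already signals the inconsistency: with $\eta=\Theta(1)$ and $\boldlam\approx 1$, $\eta\boldlam^2$ is not small.)

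The paper's fix is to take $\eta = \boldlam - 1 = \Theta(\sqrt{\boldgap})$. Then both the information and entropy exponents in Proposition~\ref{prop:recur_prop_rkone} carry a common prefactor $\eta$, and the recursion only needs $\tau_{k+1}/\tau_k \approx \boldlam^4$. The resulting per-round exponent works out to $-\tfrac{(\boldlam-1)^2}{4\boldlam^2}\tau_{k+1} \approx -\boldgap\cdot \tau_{k+1}$, which is why the base value must be $\tau_0 \asymp \boldgap^{-1}(\log\delta^{-1}+\boldgap^{-1/2})$ for the sum over $k$ to close. This $\eta$-tuning is exactly what the paper flags in the introduction (``we tune $\eta$ as a function of $\boldlam$ to get the correct rate''), and it is the missing idea in your sketch.
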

The above theorem essential states that the quantity $\matu^\top \itV_k \itV_k^\top \matu $ can grow at most geometrically at a rate of $\boldlam^{4k}$, with an initial value sufficiently large in terms of the probability $\delta$ and $\boldgap$. In Section~\ref{sec:body_rk_r_thm}, we prove an analogous bound, which gives a geometric control on $\lambda_{r'}(\matU^\top \itV_k\itV_k \matU)$: 
\begin{thm}\label{thm:rank_r_thm_est} Let $\matM = \matW + \boldlam \matU \matU^\top$, where $\matU \unifsim \Stief(d,r)$. Then for $d \ge \gap^{-1/2}$ and $\delta \in (0,1)$, and $r' \in [r]$
\iftoggle{acm}
{
	\begin{align*}
	&\Exp_{\matU}\Prit_\matU\left[\forall k \in [d]: \lambda_{r'}(\matU^\top \itV_k\itV_k \matU) \le \frac{26 r \boldlam^{9k/r'} \log(20d^{2})\log(e\delta^{-1})}{d\boldgap^{2}}\right] \\
	&\ge 1 - \delta.
	\end{align*}
}
{
	\begin{eqnarray*}
	\Exp_{\matU}\Prit_\matU\left[\forall k \in [d]: \lambda_{r'}(\matU^\top \itV_k\itV_k \matU) \le \frac{26 r \boldlam^{9k/r'} \log(20d^{2})\log(e\delta^{-1})}{d\boldgap^{2}}\right] \ge 1 - \delta.
	\end{eqnarray*}
}
\begin{comment}
\begin{eqnarray*}
\Exp_{\matU}\Prit_\matU\left[\forall k \in [d]: \lambda_{r'}(\matU^\top \itV_k\itV_k \matU) \le \frac{26 r \boldlam^{9k/r'} \log(20d^{2})}{d\boldgap^{2}}\log(e\delta^{-1})\right] \ge 1 - \delta.
\end{eqnarray*}
\end{comment}
\end{thm}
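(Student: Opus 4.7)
The plan is to bootstrap the rank-one estimation bound in Theorem~\ref{thm:rank_one_thm_est} into a rank-$r$ bound on $\lambda_{r'}(\matU^\top \itV_k \itV_k^\top \matU)$ via a recursion on a carefully chosen eigenvalue-product potential. The first observation is that for any fixed unit vector $\alpha \in \R^r$, the random vector $\matU\alpha$ is marginally uniform on $\sphere$, and the full matrix decomposes as
\begin{equation*}
\matM \;=\; \matW \;+\; \boldlam\, \matu_\alpha \matu_\alpha^\top \;+\; \boldlam\, \matU(I_r - \alpha\alpha^\top)\matU^\top, \qquad \matu_\alpha := \matU\alpha.
\end{equation*}
Conditioning on the orthogonal component $\matU(I_r - \alpha\alpha^\top)$, this becomes a rank-one Wigner spike with a modified (non-GOE) noise term. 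A careful inspection of the rank-one proof reveals that its truncated-likelihood and chain-rule arguments only rely on the Gaussian conditional law of $\wi$ given the plant (and the past), and therefore carry through under this modification with only a constant-factor inflation of the per-round growth rate; applied to any fixed $\alpha$, this yields $\|\itV_k^\top \matU\alpha\|^2 \le \boldlam^{O(k)} \cdot \mathrm{polylog}(d/\delta)/(d\boldgap)$ with high probability.

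The second step is to convert this per-direction bound into a bound on $\lambda_{r'}$. The algebraic identity $\lambda_{r'}(A)^{r'} \le \prod_{i=1}^{r'} \lambda_i(A) = \max_{Q \in \Stief(r, r')} \det(Q^\top A Q)$ reduces the task to controlling the volume potential $\Phi_k(Q) := \det\!\big(Q^\top \matU^\top \itV_k\itV_k^\top \matU Q\big)$ uniformly over $Q \in \Stief(r,r')$. For a fixed $Q$, the Schur-complement identity
\begin{equation*}
\det(A_k + xx^\top) = \det(A_k)\cdot(1 + x^\top A_k^{-1} x)
\end{equation*}
decomposes the log-volume evolution additively across rounds, and the per-round increment $\log(1 + \xi_k)$ can be controlled via the same truncated-Fano / $\chi^{1+\eta}$ machinery used in the rank-one proof. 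The target claim is that, aggregated over $k$ rounds, $\log \Phi_k(Q) \le 9k\log\boldlam + O(r'\log(d/\boldgap))$, which after dividing by $r'$ and exponentiating yields the claimed rate. Uniformity in $Q$ is obtained via an $\eps$-net in $\Stief(r,r')$ of size $(c/\eps)^{rr'}$, combined with the $O(1)$-Lipschitzness of $Q \mapsto \Phi_k(Q)$ on the compact Stiefel manifold, so the additional $rr'\log(1/\eps)$ cost is absorbed into the $\log(20 d^2)$ factor in the final bound.

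The main obstacle I anticipate is ensuring that the per-round growth of the volume potential remains $\boldlam^{O(1)}$ rather than $\boldlam^{O(r')}$, which is what a naive direction-by-direction application of the rank-one theorem followed by a product would give (and which produces the wrong exponent $9k$ instead of $9k/r'$). The improvement requires applying the truncated $\chi^{1+\eta}$-divergence inequality simultaneously to $r'$ coupled directions of $\matU Q$, with $\eta$ tuned as in the rank-one proof so that the $(1+\eta)$-moment of the likelihood ratio of a single round scales as $\boldlam^{O(1)}$. A secondary technicality is the rank-deficient regime $k < r'$ where $A_k$ is singular; here I would either regularize via $\Phi_k^{(\epsilon)}(Q) := \det(A_k + \epsilon I_{r'})$ for $\epsilon$ polynomially small in $d$ and absorb the $r'\log(1/\epsilon)$ cost into the additive term, or argue separately for these initial rounds using the trivial bound $\lambda_{r'} \le 1$ and exploit the multiplicative-geometric nature of the target bound to re-initialize the potential argument at $k = r'$.
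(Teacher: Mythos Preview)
Your proposal has the right scaffolding---conditioning on the orthogonal complement to reduce to rank one, a regularized determinant potential, Sherman--Morrison updates, and a covering argument---but you have misdiagnosed where the ``$\boldlam^{O(1)}$ versus $\boldlam^{O(r')}$'' improvement comes from, and your proposed fix (``apply the truncated $\chi^{1+\eta}$ inequality simultaneously to $r'$ coupled directions'') is neither what the paper does nor what is needed.

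The paper applies the rank-one information bound to a \emph{single} direction $e\in\calS^{r-1}$ at a time: conditioning on $\matU(I-ee^\top)$ yields Proposition~\ref{prop:rk_k_info_th_prop}, and a binning argument (Lemma~\ref{lem:one_fixed_e}) shows that for each fixed $e$, with high probability $d\Phi(\itV_k;\matU e)+\Delta \le \rho\,(d\Phi(\itV_{k-1};\matU e)+\Delta)$ for every round $k$. A union bound over an $\epsilon$-net of $\calS^{r-1}$ (not of $\Stief(r,r')$) upgrades this to the PSD ordering $d\matU^\top\itV_k\itV_k^\top\matU+\Delta I_r \preceq \rho^2(d\matU^\top\itV_{k-1}\itV_{k-1}^\top\matU+\Delta I_r)$ uniformly in $e$ and $k$; this is the event $\calE(\rho^2,\Delta,k_{\max})$ of Proposition~\ref{prop:CalE_prob}. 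From here the argument is purely deterministic linear algebra: each round contributes the rank-one increment $\matU^\top\vk\vkT\matU$, and the PSD condition is equivalent (via $uu^\top\preceq tM\iff u^\top M^{-1}u\le t$) to the Sherman--Morrison factor being at most $\rho^2$, so the full $r$-dimensional regularized determinant satisfies $\det(d\matU^\top\itV_k\itV_k^\top\matU+\Delta I_r)\le \rho^{2k}\Delta^r$ (Lemma~\ref{lem:Determinant_Growth_Lemma}). The $1/r'$ in the exponent then appears only at the final extraction step, $\det\ge \lambda_{r'}^{r'}\,\Delta^{r-r'}$, giving $d\lambda_{r'}\le \Delta\,\rho^{2k/r'}$.

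The point you are missing is that the $r'$ directions never have to be coupled inside the information-theoretic step. What saves the factor of $r'$ in the exponent is the rank-one-per-round structure of the query model: one query $\vk$ per round means the regularized determinant can gain at most one factor of $\rho^2$, regardless of the ambient rank. Your $r'$-dimensional determinant over $Q\in\Stief(r,r')$ would also work once the same uniform-in-$e$ PSD bound is in hand (since $Q^\top(\cdot)Q$ preserves rank-one updates and $Q^\top(\Delta I_r)Q=\Delta I_{r'}$), but the extra Stiefel cover is redundant and the speculative multi-direction divergence inequality is unnecessary.
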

In Section~\ref{sec:main_tech_thm_proof}, we combine Theorem~\ref{thm:rank_r_thm_est}, Lemma~\ref{lem:estimation_reduction_good}, and Observation~\ref{Observation_2} to prove Theorem~\ref{thm:main_tech_thm}. The final rate is a consequence of the fact that $\boldlam = \gapexp$. As mentioned in the paragraph New Techniques, our main technical hammer for proving Theorems~\ref{thm:rank_one_thm_est} and~\ref{thm:rank_r_thm_est} is a novel data-processing lower bound (Proposition~\ref{prop:likelihood_info}) which applies to ``truncated" distributions; the techniques are explained in greater detail in Appendix~\ref{sec:info_th_tools}.

\subsection{Conditional Likelihoods from Orthogonal Queries}
	We conclude with one further simplification which yields a closed form for the conditional distributions of our queries. Observe that it suffices to observe the queries $\wi = (I-\itV_{i-1}\itV_{i-1}^\top)\matM\vi = \matM \vi - \itV_{i-1}(\matM \itV_{i-1})^{\top}\vi$, our algorithm already ``knows'' the matrix $\matM \itV_{i-1}$ from the previous queries. Hence,
	\begin{obs} We may assume that we observe queries $\wi = \itP_i \matM\vi$, where $\itP_i := I-\itV_{i-1}\itV_{i-1}^\top$.
	\end{obs}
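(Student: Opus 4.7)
The plan is to observe that this statement is a notational repackaging of Observation~\ref{Observation_2} after introducing the shorthand $\itP_i := I - \itV_{i-1}\itV_{i-1}^\top$. The substance was already argued in the paragraph preceding it: because $\Alg$ can reconstruct $\matM\itV_{i-1}$ from the previous queries and responses, observing $\itP_i \matM\vi$ is information-theoretically equivalent to observing $\matM\vi$.

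Concretely, first I would write the orthogonal decomposition
\begin{eqnarray*}
\matM \vi \;=\; \itP_i \matM \vi + \itV_{i-1}\itV_{i-1}^\top \matM\vi \;=\; \itP_i \matM \vi + \itV_{i-1}(\matM \itV_{i-1})^{\top}\vi,
\end{eqnarray*}
where the second equality uses $\matM = \matM^\top$. Next I would verify that $\matM\itV_{i-1}$ is a deterministic function of the history $\itZ_{i-1}$: its columns are $\matM\vone,\dots,\matM\mathsf{v}^{(i-1)}$, and each one can be recovered by induction on $k < i$ from the response $\wk = \itP_k\matM\vk$ together with the already-reconstructed values $\matM\vone,\dots,\matM\mathsf{v}^{(k-1)}$ via the same decomposition applied at step $k$. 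Combined with the fact that $\vi$ itself is a deterministic function of $\itZ_{i-1}$ (by the determinism reduction established in the preceding observation), the correction term $\itV_{i-1}(\matM\itV_{i-1})^\top \vi$ is computable by $\Alg$ at round $i$. Hence the maps $\matM\vi \leftrightarrow \itP_i\matM\vi$ are bijective conditional on $\itZ_{i-1}$, so the two oracle models generate equivalent information filtrations, and any algorithm operating under one can be simulated under the other without loss in query complexity or success probability.

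There is no real obstacle here; this observation is purely bookkeeping meant to fix the notation $\itP_i$ used throughout the subsequent information-theoretic analysis, where the projected response $\itP_i\matM\vi$ admits a substantially cleaner distributional description (conditional on $\itZ_{i-1}$ and $\matU$, it is a Gaussian supported on $\range(\itP_i)$ with easily computed mean and covariance) than the raw response $\matM\vi$. This cleanliness is what makes the likelihood-ratio computations in the proofs of Theorems~\ref{thm:rank_one_thm_est} and~\ref{thm:rank_r_thm_est} tractable.
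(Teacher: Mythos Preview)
Your proposal is correct and follows essentially the same argument as the paper: the observation is indeed just a notational restatement of Observation~\ref{Observation_2} with the shorthand $\itP_i := I - \itV_{i-1}\itV_{i-1}^\top$, and your decomposition $\matM\vi = \itP_i\matM\vi + \itV_{i-1}(\matM\itV_{i-1})^\top\vi$ together with the inductive reconstruction of $\matM\itV_{i-1}$ from the history is exactly the paper's justification (given in the paragraph preceding the observation).
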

	We now show that, with our modified measurements $\wi = \itP_i\matM\vi$, then the query-observation pairs $(\vi,\wi)$ in the rank-one case have Gaussian likelihoods conditional on $\itZ_i$ and $\matu$.
	\begin{lem}[Conditional Likelihoods] \label{ConditionalLemma} Let $\itP_{i} := I - \itV_i\itV_i^\top$ denote the orthogonal projection onto the orthogonal complement of $\mathrm{span}(\vone,\dots,\vi)$. Under $\Prit_{u}$ (the joint law of $\matM$ and $\itZ_T$ on $\{\matu = u\}$), we have
	\begin{eqnarray*}
	(\itP_{i-1})\matM\vi \big{|} \itZ_{i-1} &\sim& \mathcal{N}\left(\boldlam (u^\top \vi) \itP_{i-1} u,\frac{1}{d}\itSigma_i \right)\\
	&& \text{where} \quad \itSigma_{i} := \itP_{i-1}\left(I_d+ \vi\viT \right)\itP_{i-1}.
	\end{eqnarray*}

	\begin{comment}
	\begin{multline*}
	(\itP_{i-1})\matM\vi \big{|} \itZ_{i-1} \sim \mathcal{N}\left(\boldlam (u^\top \vi) \itP_{i-1} u,\frac{1}{d}\itSigma_i \right), \quad \text{where} \quad \itSigma_{i} := \itP_{i-1}\left(I_d+ \vi\viT \right)\itP_{i-1}.
	\end{multline*}
	\end{comment}

	In particular, $\wi$ is conditionally independent of $\wone,\dots,\wiminus$ given $\vone,\dots,\viminus$ and $\matu = u$.
	\end{lem}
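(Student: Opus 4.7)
The plan is to exploit the orthogonal invariance of the Gaussian Orthogonal Ensemble together with the orthonormality of the queries. First I would split off the deterministic spike contribution by writing
\begin{eqnarray*}
\itP_{i-1}\matM\vi \;=\; \itP_{i-1}\matW\vi + \boldlam(u^\top\vi)\itP_{i-1}u;
\end{eqnarray*}
since $\vi$ is measurable with respect to $\itZ_{i-1}$ (given the algorithm's random seed), the second summand is deterministic under the conditioning, and the task reduces to identifying the conditional law of $\itP_{i-1}\matW\vi$.

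Next I would argue that the only information $\itZ_{i-1}$ carries about $\matW$, given $\matu = u$, is the matrix $\matW\itV_{i-1}$. Indeed, recursively at each step $k < i$ the query $\vk$ is determined by $\itZ_{k-1}$ and the seed, and then $\itP_{k-1}\matW\vk = \wk - \boldlam(u^\top\vk)\itP_{k-1}u$ is measurable; together with the symmetry of $\matW$ and the orthonormality of $\vone,\dots,\viminus$, these quantities recover $\matW\itV_{i-1}$. Pick any $U \in \R^{d \times (d-i+1)}$ whose columns complete $\vone,\dots,\viminus$ to an orthonormal basis of $\R^d$, and decompose
\begin{eqnarray*}
\matW \;=\; \itV_{i-1}A\itV_{i-1}^\top + \itV_{i-1}B^\top U^\top + UB\itV_{i-1}^\top + UCU^\top,
\end{eqnarray*}
where $A := \itV_{i-1}^\top\matW\itV_{i-1}$, $B := U^\top\matW\itV_{i-1}$, and $C := U^\top\matW U$. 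By the orthogonal invariance of $\GOE(d)$, the three blocks $A$, $B$, $C$ are mutually independent, with entries of variance $1/d$ off-diagonal and $2/d$ on-diagonal within the symmetric blocks $A$ and $C$, and $B$ composed of i.i.d.\ $\calN(0,1/d)$ entries. Since $A$ and $B$ are both measurable with respect to $\matW\itV_{i-1}$, conditioning on $\itZ_{i-1}$ and $\matu = u$ fixes $A$ and $B$ while leaving $C$ with its unconditional law.

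To conclude, I would write $\vi = U\xi$ with $\xi := U^\top\vi$, which is a unit vector in $\R^{d-i+1}$ since $\vi$ is orthogonal to $\vone,\dots,\viminus$ (so that $UU^\top\vi = \itP_{i-1}\vi = \vi$). Expanding the block decomposition and using $\itP_{i-1}\itV_{i-1} = 0$ together with $\itP_{i-1}U = U$ yields $\itP_{i-1}\matW\vi = UC\xi$. A direct moment computation using the GOE covariance structure gives $\Exp[C\xi(C\xi)^\top] = \frac{1}{d}(I + \xi\xi^\top)$, so $UC\xi$ is a centered Gaussian with covariance
\begin{eqnarray*}
\tfrac{1}{d}\,U(I+\xi\xi^\top)U^\top \;=\; \tfrac{1}{d}(\itP_{i-1} + \vi\viT) \;=\; \tfrac{1}{d}\itSigma_i,
\end{eqnarray*}
using $\itP_{i-1}^2 = \itP_{i-1}$ and $\itP_{i-1}\vi = \vi$. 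Adding back the deterministic spike contribution recovers the claimed mean; conditional independence of $\wi$ from $(\wone,\dots,\wiminus)$ follows at once because the resulting law depends on the history only through $\vi$, $u$, and $\itP_{i-1}$, all measurable with respect to $(\vone,\dots,\viminus, u)$. The main subtlety to manage is the adaptivity of the queries: $\vi$ is itself a function of $\wone,\dots,\wiminus$, but the block decomposition cleanly separates $\matW$ into a portion measurable with respect to $\itZ_{i-1}$ and an independent residual $C$, while $\xi$ lies in the measurable part, so the Gaussian calculation goes through unscathed.
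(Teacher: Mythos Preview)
Your proof is correct and takes a genuinely different route from the paper's. The paper fixes the query vectors $\vone,\dots,\vi$ (justifying this by the same adaptivity observation you make), then shows the $P_{j-1}\matW\vj$ are jointly Gaussian and computes all pairwise covariances directly via an entry-wise identity $\Exp[\matW v v'^\top \matW] = v'v^\top + \langle v,v'\rangle I$; orthonormality and the projections force the cross-covariances to vanish, and the diagonal block gives $\itSigma_i$. Your argument is more structural: by orthogonal invariance of the GOE you split $\matW$ into blocks $A,B,C$ relative to $\itV_{i-1}$ and its complement, identify $A,B$ as the portion already revealed by $\itZ_{i-1}$, and read off $\itP_{i-1}\matW\vi = UC\xi$ as a function of the fresh, independent block $C$ alone. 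The covariance then drops out of a single moment computation on $C\xi$. What your approach buys is a clean separation of ``observed'' from ``residual'' randomness, making the conditional independence transparent; the paper's approach is more elementary in that it never invokes orthogonal invariance, just the explicit entry-wise covariance structure of $\matW$. Both treat the adaptivity of $U$ (resp.\ the $\vj$) at the same level of rigor, namely by noting that once one conditions on $\itZ_{i-1}$ these objects are fixed; a fully formal version of either would proceed by induction on $i$.
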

	Lemma~\ref{ConditionalLemma} is proved in Appendix~\ref{CondLemProof}. We remark that $\itSigma_i$ is rank-deficient, with its kernel being equal to the span of $\{\vone,\dots,\viminus\}$. Nevertheless, because the mean vector $\boldlam (u^\top \vi) \itP_{i-1} u$ lies in the orthogonal complement of $\ker \itSigma_i$, computing $\itSigma_i^{-1} (\boldlam (u^\top \vi) \itP_{i-1} u)$ can be understood as $\itSigma_i^{\dagger} (\boldlam (u^\top \vi) \itP_{i-1} u)$, where $\dagger$ denotes the Moore-Penrose pseudo-inverse~\cite{horn2012matrix}.

%!TEX root = main.tex

\section{Proof of Theorem~\ref{thm:rank_one_thm_est}\label{sec:body_rk_one_thm} ($r=1$)}
In this section, we prove a lower bound for the rank-one planted perturbation. The arguments in this section will also serve as the bedrock for the rank $r$ case, and exemplify our proof strategy. Given any $\itV_k \in \Stief(d,k)$, we introduce the notation $\Phi(\itV_k;\matu) := \langle \itV_k,\matu \matu^\top \itV_k \rangle
$, which is just the square Euclidean norm of the projection of $\matu$ onto the span of $\vone,\dots,\vk$. $\Phi(\itV_k;\matu)$ will serve as a ``potential function'' which captures how much information the queries $\vone,\dots,\vk$ have collected about the planted solution $\matu$, in a sense made precise in Proposition~\ref{prop:likelihood_info} below. The core of our argument is the following proposition, whose proof is given in the following subsection:
\begin{prop}\label{prop:recur_prop_rkone} Let $(\tau_k)$  be a sequence such that $\tau_0 = 0$, and for $k \ge 1$, $\tau_{k} \ge 2k$. Then for all $\eta > 0$, one has 
\begin{align}\label{eq:recur_bound_rkone}
&\Exp_{\matu}\Prit_{\matu}[\{ \Phi(\itV_k;\matu) \le \frac{\tau_k}{d}\} \cap \{ \Phi(\itV_{k+1};\matu)> \frac{\tau_{k+1}}{d}\} ]\nonumber \\
\le~&\exp\left\{ \frac{\eta}{2(1+\eta)}  \left((1+\eta)\boldlam^2 \tau_k  - \left(\sqrt{\tau_{k+1}}-\sqrt{2k+2}\right)^2\right) \right\}~.
\end{align}
\end{prop}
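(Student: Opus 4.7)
The plan is to prove Proposition~\ref{prop:recur_prop_rkone} via a change-of-measure argument between $\Prit_\matu$ (with the rank-one spike) and a null measure $\mathbb{Q}$ built from $\Prit_0$ (corresponding to $\matM=\matW$) by declaring $\matu \unifsim \sphere$ to be independent of $\itZ_k$. Under $\mathbb{Q}$ the oracle responses carry no information about $\matu$, so $\matu$ and $\itV_{k+1}$ are independent. The key tool is the truncated $(1{+}\eta)$-Fano-type inequality advertised in the introduction: applied to $A_k=\{\Phi(\itV_{k+1};\matu)>\tau_{k+1}/d\}$ and the good event $G_k=\{\Phi(\itV_k;\matu)\le\tau_k/d\}$, it reduces the proof to bounding (a) the $(1+\eta)$-moment of $d\Prit/d\mathbb{Q}$ \emph{restricted to} $G_k$, and (b) the null probability $\mathbb{Q}[A_k]$.

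For (a), Lemma~\ref{ConditionalLemma} gives that conditional on $\itZ_{i-1}$ and $\matu$, the response $\wi$ is Gaussian with mean $\mu_i:=\boldlam(\matu^\top\vi)\itP_{i-1}\matu$ and covariance $\itSigma_i/d$. An iterated-expectation (martingale) argument using the Gaussian moment-generating function yields
\[
\Exp_{\mathbb{Q}}\!\left[\left(\tfrac{d\Prit}{d\mathbb{Q}}\right)^{1+\eta}\!\mathbb{1}_{G_k}\right] \le \exp\!\left(\tfrac{\eta(1+\eta)}{2}\sup_{G_k}\sum_{i=1}^{k}d\,\mu_i^\top \itSigma_i^\dagger \mu_i\right).
\]
Using $\itSigma_i=\itP_{i-1}(I+\vi\vi^\top)\itP_{i-1}$ and that $\itP_{i-1}\matu\in\range(\itP_{i-1})$, one checks $d\,\mu_i^\top \itSigma_i^\dagger \mu_i \le d\boldlam^2(\matu^\top\vi)^2 \|\itP_{i-1}\matu\|^2 \le d\boldlam^2(\matu^\top\vi)^2$. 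Summing and using $\sum_i(\matu^\top\vi)^2=\Phi(\itV_k;\matu)\le \tau_k/d$ on $G_k$ caps the exponent by $\tfrac{\eta(1+\eta)}{2}\boldlam^2\tau_k$, which is the source of the first term in the claimed bound.

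For (b), under $\mathbb{Q}$ the matrix $\itV_{k+1}$ is independent of $\matu\unifsim\sphere$, so $\Phi(\itV_{k+1};\matu)=\|\itV_{k+1}^\top\matu\|^2$ has the distribution of the squared norm of the first $k{+}1$ coordinates of a uniform unit vector. Representing $\matu=g/\|g\|$ with $g\sim\mathcal{N}(0,I_d)$ and combining $\chi^2_{k+1}$ tail bounds for the numerator with a $\chi^2_d$ lower tail bound for the denominator gives $\mathbb{Q}[A_k]\le \exp\!\left(-\tfrac{1}{2}(\sqrt{\tau_{k+1}}-\sqrt{2k+2})^2\right)$ (the $\sqrt{2k{+}2}$, rather than $\sqrt{k{+}1}$, absorbing the factor-of-two slack used on $\|g\|^2$; the hypothesis $\tau_{k+1}\ge 2k+2=(\sqrt{2k+2})^2$ makes this meaningful). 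Combining (a) and (b) via a Hölder-type change of measure, and collecting powers of $\eta/(1+\eta)$, produces the stated bound after routine algebra.

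The main obstacle is step (a): without the restriction to $G_k$ the $(1+\eta)$-moment of the likelihood ratio blows up whenever the algorithm happens to make a query with atypically large overlap with $\matu$, because likelihood-ratio tails (unlike log-likelihood tails) are badly behaved in the adaptive setting. The paper's truncated data-processing inequality, extended to finite non-normalized measures, is precisely what makes it legal to replace the full moment by its restriction to $G_k$, and thus to turn the pointwise control of $\sum d\mu_i^\top\itSigma_i^\dagger\mu_i$ on $G_k$ into a bona fide exponential moment bound. Getting the constants exactly right (in particular preserving the structure $(1{+}\eta)\boldlam^2\tau_k$ versus $(\sqrt{\tau_{k+1}}-\sqrt{2k+2})^2$) is what will later permit choosing $\eta\asymp \sqrt{\boldgap}$ in the iteration of this recursion to recover the accelerated $1/\sqrt{\boldgap}$ dependence of Theorem~\ref{thm:rank_one_thm_est}.
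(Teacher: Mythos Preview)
Your proposal is correct and mirrors the paper's argument essentially step for step: the paper combines Proposition~\ref{prop:info_th_rkone} (the truncated $(1{+}\eta)$-Fano inequality, which is your ``H\"older-type change of measure''), Proposition~\ref{prop:likelihood_info} (your step (a), proved via the same peeling/martingale argument and the Gaussian $(1{+}\eta)$-moment identity), and Lemma~\ref{lem:small_ball_prob} (your step (b)). The only cosmetic difference is that the paper bounds the entropy term via spherical isoperimetry around the median of $\|V^\top\matu\|$ rather than via $\chi^2$ tails, but both routes yield the same $\exp\{-\tfrac{1}{2}(\sqrt{\tau_{k+1}}-\sqrt{2k+2})^2\}$ bound.
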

The above proposition states that, given two thresholds $\tau_k,\tau_{k+1} \ge 0$, the probability that $d (\itV_{k+1};\matu)$ exceeds the threshold $\tau_{k+1}$ on the event that $d (\itV_{k};\matu)$ does not exceed the threshold $\tau_k$ is small. Hence, for a sequence of thresholds $0 = \tau_0 < \tau_1 < \dots $, we have

\begin{comment}
\begin{multline*}
\Exp_{\matu}\Prit_{\matu}[ \exists k \ge 0 : \Phi(\itV_{k+1};\matu)> \tau_{k+1}/d ]  \\
\le \sum_{k=0}^{\infty} \Exp_{\matu}\Prit_{\matu}[\{ \Phi(\itV_k;\matu) \le \tau_k/d\} \cap \{ \Phi(\itV_{k+1};\matu)> \tau_{k+1}/d\} ]. 
\end{multline*}
\end{comment}
\begin{align*}
&\Exp_{\matu}\Prit_{\matu}[ \exists k \ge 0 : \Phi(\itV_{k+1};\matu)> \tau_{k+1}/d ]  \\
\le~&\sum_{k=0}^{\infty} \Exp_{\matu}\Prit_{\matu}[\{ \Phi(\itV_k;\matu) \le \tau_k/d\} \cap \{ \Phi(\itV_{k+1};\matu)> \tau_{k+1}/d\} ]. 
\end{align*}

Theorem~\ref{thm:rank_one_thm_est} now follows by choosing the appropriate sequence $\tau_k(\delta)$, selecting $\eta$ appropriately, and verifying that the right hand side of the above display is at most $2\delta$. For intuition, setting $\eta = \boldlam - 1$, we see that once $\tau_k$ gets large, it is enough to choose $\tau_{k+1} = \boldlam^4 \tau_k$ ensure that the exponent in Equation~\eqref{eq:recur_bound_rkone} is a negative number of sufficiently large magnitude. The details are worked out in Appendix~\ref{sec:rk_one_thm_proof}. We now turn to the proof of Proposition~\ref{prop:recur_prop_rkone}.

\subsection{Proving Proposition~\ref{prop:recur_prop_rkone}}

To prove Proposition~\ref{prop:recur_prop_rkone},  we argue that if $\tau_k$ is much smaller than $\tau_{k+1}$, then under the event $\{ \Phi(\itV_k;\matu) \le \tau_k/d\}$, the algorithm  does not have enough information about $\matu$ to select a new query vector $\vkplus$ for which $\{ \Phi(\itV_{k+1};\matu)> \tau_{k+1}/d\}$. The following proposition is proved in Section~\ref{sec:info_th_rkone}, and arises as a special case of a more general information theoretic tools introduced in that section.
\begin{prop}\label{prop:info_th_rkone} Let $\calD$ be any distribution supported on $\calS^{d-1}$, and let $\eta > 0$. Then, 

\begin{comment}
\begin{align*}
&\Exp_{\matu \sim \calD}\Prit_{\matu}\left[\{ \Phi(\itV_k;\matu) \le \tau_k/d\} \cap \{ \Phi(\itV_{k+1};\matu)> \tau_{k+1}/d\} \right] \le \\
&\left(\Exp_{\matu \sim \calD}\Exp_{\itZ_{k} \sim \Prit_0}\left[\left(\frac{\rmd\Prit_{\matu}(\itZ_{k})}{\rmd \Prit_0(\itZ_{k})}\right)^{1+\eta} \I(\{ \Phi(\itV_k;\matu) \le \tau_k/d\})\right] \\
&\cdot \sup_{V \in \Stief(d,k+1)} \Pr_{\matu \sim \calD}[\Phi(V;\matu)> \tau_{k+1}/d]^\eta   \right)^{\frac{1}{1+\eta}}.
\end{align*}
\end{comment}
\begin{align*}
&\Exp_{\matu \sim \calD}\Prit_{\matu}\left[\{ \Phi(\itV_k;\matu) \le \tau_k/d\} \cap \{ \Phi(\itV_{k+1};\matu)> \tau_{k+1}/d\} \right] \le \\
&\left(\Exp_{\matu \sim \calD}\Exp_{\itZ_{k} \sim \Prit_0}\left[\left(\frac{\rmd\Prit_{\matu}(\itZ_{k})}{\rmd \Prit_0(\itZ_{k})}\right)^{1+\eta} \I(\{ \Phi(\itV_k;\matu) \le \tau_k/d\})\right]\right)^{\frac{1}{1+\eta}} \\
&\cdot \left(\sup_{V \in \Stief(d,k+1)} \Pr_{\matu \sim \calD}[\Phi(V;\matu)> \tau_{k+1}/d]^\eta   \right)^{\frac{1}{1+\eta}}.
\end{align*}
\end{prop}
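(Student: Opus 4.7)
The plan is to establish the bound by a change of measure followed by Hölder's inequality, exploiting that under the null law $\Prit_0$ (corresponding to $\matM = \matW$), the history $\itZ_k$ is independent of $\matu$. Write $E_k := \{\Phi(\itV_k;\matu)\le \tau_k/d\}$ and $F_{k+1} := \{\Phi(\itV_{k+1};\matu) > \tau_{k+1}/d\}$. By the preceding deterministic-query reduction, $\vkplus$ is a measurable function of $\itZ_k$, so both $E_k$ and $F_{k+1}$ are measurable functions of the pair $(\itZ_k,\matu)$. Moreover, by Lemma~\ref{ConditionalLemma}, the conditional laws of $\wi$ given $\itZ_{i-1}$ are nondegenerate Gaussians on the image of $\itP_{i-1}$, so the likelihood ratio $L_k(\itZ_k;\matu) := \rmd\Prit_\matu(\itZ_k)/\rmd\Prit_0(\itZ_k)$ is well-defined as a product of Gaussian density ratios. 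A change of measure and Fubini then give
\begin{align*}
\Exp_{\matu\sim\calD}\Prit_\matu[E_k \cap F_{k+1}] \;=\; \Exp_{\matu\sim\calD}\Exp_{\itZ_k \sim \Prit_0}\!\left[\I_{E_k}\,\I_{F_{k+1}}\,L_k\right].
\end{align*}

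Next I would apply Hölder's inequality with conjugate exponents $1+\eta$ and $(1+\eta)/\eta$ on the product measure $\calD \otimes \Prit_0$, splitting the integrand as $(\I_{E_k} L_k)\cdot \I_{F_{k+1}}$ and using idempotence of indicators under arbitrary positive powers, to obtain
\begin{align*}
\Exp_{\matu\sim\calD}\Prit_\matu[E_k \cap F_{k+1}] \;\le\; \Bigl(\Exp_\matu \Exp_{\itZ_k\sim\Prit_0}[\I_{E_k} L_k^{1+\eta}]\Bigr)^{1/(1+\eta)} \Bigl(\Exp_\matu \Exp_{\itZ_k\sim\Prit_0}[\I_{F_{k+1}}]\Bigr)^{\eta/(1+\eta)}.
\end{align*}
The first factor matches the truncated $(1+\eta)$-likelihood moment in the statement. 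For the second factor, under $\Prit_0$ the responses depend only on $\matW$, hence $\itZ_k$---and thus the $\itZ_k$-measurable $\itV_{k+1}$---is independent of $\matu$. Conditioning on $\itV_{k+1}$ therefore yields
\begin{align*}
\Exp_\matu\Exp_{\itZ_k\sim\Prit_0}[\I_{F_{k+1}}] \;=\; \Exp_{\itZ_k\sim\Prit_0}\Pr_{\matu\sim\calD}[\Phi(\itV_{k+1};\matu) > \tau_{k+1}/d] \;\le\; \sup_{V\in\Stief(d,k+1)}\Pr_{\matu\sim\calD}[\Phi(V;\matu) > \tau_{k+1}/d].
\end{align*}
The identity $(\sup_V x_V)^\eta = \sup_V x_V^\eta$ (for $x_V \ge 0$, $\eta>0$) then rewrites this factor into the exact form of the claim.

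The Hölder step itself is routine; the conceptual content---and where I would focus the exposition---is \emph{why truncation by $\I_{E_k}$ matters}. Without this truncation, a rare but highly informative measurement can blow up the unconditional $(1+\eta)$-moment of $L_k$ and render any such bound vacuous in the adaptive regime; restricting to $E_k$ excludes precisely those histories on which the algorithm has already acquired substantial information about $\matu$, and is what enables a clean inductive control of $\Phi(\itV_k;\matu)$ through the recursion of Proposition~\ref{prop:recur_prop_rkone}. The main obstacle I anticipate is not in the mechanics above but in justifying that this truncation interacts correctly with the change of measure: formally one is reasoning about the non-normalized measure $\I_{E_k}\cdot\Prit_\matu$, and one must verify that the appropriate data-processing / Hölder inequality survives on such restricted measures. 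This is the purpose of the more general $f$-divergence framework for non-normalized measures alluded to at the end of the section, of which the above Hölder argument is the clean specialization needed here.
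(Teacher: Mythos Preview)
Your argument is correct and in fact more elementary than the paper's. The paper proves Proposition~\ref{prop:info_th_rkone} by first developing a general Bayes-risk lower bound for \emph{non-normalized} (truncated) measures (Theorem~\ref{Fano_sub_distr}, a generalization of Chen et al.'s $f$-divergence framework), and then specializing to $f(x)=x^{1+\eta}$: with $\mu_u = \Prit_u(\,\cdot \cap E_k)$ and $\nu=\Prit_0$, the bound $\Exp_\theta D_f(\mu_\theta,\nu)\ge V_0 f(V_\opt/V_0)$ is rearranged to yield exactly the inequality claimed. Your route---change of measure to $\Prit_0$, then H\"older on the product measure $\calD\otimes\Prit_0$ with exponents $1+\eta$ and $(1+\eta)/\eta$, then independence of $\matu$ and $\itZ_k$ under $\Prit_0$---is a direct unwinding of what the general theorem does in this special case, and requires no auxiliary machinery. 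The paper's framework buys generality (arbitrary convex $f$, and a clean data-processing viewpoint reusable elsewhere), while your argument buys simplicity for this particular $f$.

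One small point: your final paragraph hedges more than necessary. You worry that the truncation forces you to reason about the non-normalized measure $\I_{E_k}\cdot\Prit_\matu$ and that this is where the $f$-divergence framework is needed. In fact your H\"older step already handles this cleanly: you are integrating the nonnegative function $\I_{E_k}\I_{F_{k+1}}L_k$ against the honest \emph{probability} measure $\calD\otimes\Prit_0$, and H\"older applies without qualification. The truncation to $E_k$ is absorbed into the integrand, not the measure, so no non-normalized-measure subtleties arise in your approach. Your argument is self-contained as written.
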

As is typical for data-processing inequalities, the above proposition relates the probability of the event $\{ \Phi(\itV_{k+1};\matu)> \tau_{k+1}/d\}$ to an ``information'' term capturing the size of power of likelihood ratios $\left(\frac{\rmd\Prit_{\matu}^k}{\rmd \Prit_0^k}\right)^{1+\eta}$ restricted to the event $\{ \Phi(\itV_k;\matu) \le \tau_k/d\}$, and an ``entropy'' term, which captures how unlikely it would be to find a $V \in \Stief(d,k+1)$ such that $\{\Phi(V;\matu)> \tau_{k+1}/d\}$ by just randomly guessing. We remark that Proposition~\ref{prop:info_th_rkone} differs from many standard data-processing inequalities (e.g., Fano's inequality or the bounds in Chen et al.~\cite{chen2016bayes}) in two ways: first, we use an unorthodox information measure: $1 + \eta$-powers of likelihood ratios for $\eta$ close to zero.  This choice of divergence gives us granular control in the case when $\boldlam$ is close to one. As mentioned above, we will ultimately take $\eta$ by setting $\eta = \boldlam - 1$.  Second, we consider the restriction of the likelihood ratios to the ``low-information" event $\{ \Phi(\itV_k;\matu) \le \tau_k/d\}$. As mentioned above, this is necessary to deal with the ill-behaved tails of the likelihoods. In Appendix~\ref{sec:info_th_tools} we present additional general data-processing inequalities for truncated distributions. 

Deducing Proposition~\ref{prop:recur_prop_rkone} from Proposition~\ref{prop:info_th_rkone} now follows readily by bounding the ``entropy'' and ``information'' terms. We use concentration of measure on the sphere to bound the entropy term as follows (see Appendix~\ref{sec:small_ball} for proof):
\begin{lem}\label{lem:small_ball_prob} For any fixed $V \in \Stief(d,k+1)$ and $\tau_{k+1} \ge 2(k+1)$ , we have 
\begin{eqnarray*}
\Pr_{\matu}[ \matu^\top V^\top V \matu \ge \tau_{k+1}/d] \le \exp\left\{-\frac{1}{2}\left(\sqrt{\tau_{k+1}}-\sqrt{2(k+1)}\right)^2\right\}.
\end{eqnarray*}
\end{lem}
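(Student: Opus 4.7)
The plan is to recognize that $\Phi(V;\matu) = \langle V, \matu\matu^\top V\rangle = \tr(V^\top \matu\matu^\top V) = \|V^\top \matu\|^2$, and then apply one-sided spherical concentration of measure to the map $f(u) := \|V^\top u\|$. Since $V \in \Stief(d,k+1)$ has orthonormal columns, $\|V^\top\|_{\op} = 1$, so $f$ is $1$-Lipschitz on $\R^{d}$ and in particular on $\sphere$.

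Next, I would bound the median of $f(\matu)$ using a crude second-moment computation:
\[
\Exp_{\matu}\bigl[f(\matu)^2\bigr] \;=\; \Exp_{\matu}\tr\!\bigl(VV^\top \matu\matu^\top\bigr) \;=\; \frac{\tr(VV^\top)}{d} \;=\; \frac{k+1}{d},
\]
where I used $\Exp_\matu \matu\matu^\top = I_d/d$. Markov's inequality then gives $\Pr[f(\matu)^2 \ge 2(k+1)/d] \le 1/2$, so the median $m$ of $f(\matu)$ satisfies $m \le \sqrt{2(k+1)/d}$. This is precisely where the factor $\sqrt{2(k+1)}$ appearing in the claimed bound originates, and it is the main reason one needs the standing hypothesis $\tau_{k+1} \ge 2(k+1)$.

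Finally, I would invoke Lévy's one-sided concentration inequality on the sphere: for any $1$-Lipschitz $f : \sphere \to \R$ with median $m$,
\[
\Pr_{\matu}[\,f(\matu) \ge m + t\,] \;\le\; \exp(-dt^2/2).
\]
Setting $t := \sqrt{\tau_{k+1}/d} - \sqrt{2(k+1)/d} \ge 0$ (nonnegative thanks to $\tau_{k+1} \ge 2(k+1)$), and using the median bound from the previous paragraph,
\[
\Pr\!\left[\,\matu^\top VV^\top \matu \ge \frac{\tau_{k+1}}{d}\,\right]
\;=\; \Pr\!\left[\,f(\matu) \ge \sqrt{\tfrac{\tau_{k+1}}{d}}\,\right]
\;\le\; \exp\!\left(-\tfrac{1}{2}\bigl(\sqrt{\tau_{k+1}} - \sqrt{2(k+1)}\bigr)^2\right),
\]
which is exactly the claim.

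There is essentially no serious obstacle here; this is a textbook application of measure concentration on the sphere. The only care needed is to cite a version of Lévy's inequality with the correct constant ($d$ or $d-1$ in the exponent, depending on whether one passes through Gromov's comparison, Talagrand's isoperimetric form, or a log-Sobolev argument), but every standard formulation yields a bound of the same shape. As a sanity check one could alternatively use the Gaussian representation $\matu \stackrel{d}{=} Y/\|Y\|$ with $Y\sim\calN(0,I_d)$, combined with a chi-squared upper tail on $\|V^\top Y\|^2$ and a chi-squared lower tail on $\|Y\|^2$; this gives the same conclusion up to absolute constants, but the spherical concentration argument is more direct.
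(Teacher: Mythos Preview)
Your proposal is correct and essentially identical to the paper's own proof: both observe that $u\mapsto\|V^\top u\|$ is $1$-Lipschitz, bound the median via Markov's inequality on the second moment $\Exp\|V^\top\matu\|^2=(k+1)/d$, and then apply spherical (L\'evy) concentration with the resulting median bound. The only cosmetic difference is that the paper rescales by $\sqrt{d}$ before invoking isoperimetry, whereas you keep the unit-Lipschitz function and carry the $d$ in the exponent; the computations are line-for-line equivalent.
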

We now state \ref{prop:likelihood_info} which gives an upper bound on the information term. The proof is considerably more involed that that of Lemma~\ref{lem:small_ball_prob}, and so we present a sketch in Section~\ref{sec:likelihood_proof} below.
\begin{prop}\label{prop:likelihood_info}
For any $\tau_k \ge 0$ and any fixed $u \in \calS^{d-1}$, we have
\iftoggle{acm}
{
	\begin{multline}\label{Chi_Plus_1_Eq}
	\Exp_{\Prit_0}\left[\left(\frac{\mathrm{d}\Prit_u(\itZ_k)}{\mathrm{d}\Prit_0(\itZ_k)}\I(\Phi(\itV_k;u) \le \tau_k/d)\right)^{1+\eta} \right] \le e^{-\frac{\eta(1 + \eta)\boldlam^2 \tau_k}{2}}.
	\end{multline}
}
{
	\begin{eqnarray}\label{Chi_Plus_1_Eq}
	\Exp_{\Prit_0}\left[\left(\frac{\mathrm{d}\Prit_u(\itZ_k)}{\mathrm{d}\Prit_0(\itZ_k)}\I(\Phi(\itV_k;u) \le \tau_k/d)\right)^{1+\eta} \right] \le \exp\left(\frac{\eta(1 + \eta)}{2}\boldlam^2 \tau_k\right).
	\end{eqnarray}
}
\end{prop}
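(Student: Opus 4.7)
The plan is to exploit the Gaussian conditional structure of $\wi \mid \itZ_{i-1}$ given by Lemma~\ref{ConditionalLemma}, and realize the restricted $(1+\eta)$-th moment as the terminal expectation of a natural exponential $\Prit_0$-martingale.

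First, by the chain rule and the conditional independence in Lemma~\ref{ConditionalLemma}, the likelihood ratio factors as
\begin{equation*}
\frac{\rmd\Prit_u(\itZ_k)}{\rmd\Prit_0(\itZ_k)} \;=\; \prod_{i=1}^k \frac{\rmd\Prit_u(\wi \mid \itZ_{i-1})}{\rmd\Prit_0(\wi \mid \itZ_{i-1})},
\end{equation*}
each factor being the log-ratio of two Gaussians with common covariance $d^{-1}\itSigma_i$ whose means differ by $\boldlam(u^\top\vi)\itP_{i-1}u$. A short computation with the Moore--Penrose pseudoinverse (valid since $\itP_{i-1}u \in \range(\itSigma_i)$) shows that, under $\Prit_0$, the log-increment $L_i := \log\frac{\rmd\Prit_u(\wi\mid\itZ_{i-1})}{\rmd\Prit_0(\wi\mid\itZ_{i-1})}$ is conditionally Gaussian given $\itZ_{i-1}$ with mean $-\tfrac12\alpha_i$ and variance $\alpha_i$, where
\begin{equation*}
\alpha_i \;:=\; d\,\boldlam^2 (u^\top\vi)^2 \,(\itP_{i-1}u)^\top \itSigma_i^\dagger (\itP_{i-1}u)
\end{equation*}
is $\itZ_{i-1}$-measurable.

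Second, the Gaussian MGF yields $\Exp_{\Prit_0}[e^{(1+\eta)L_i}\mid \itZ_{i-1}] = \exp(\tfrac{\eta(1+\eta)}{2}\alpha_i)$, so
\begin{equation*}
M_k \;:=\; \exp\!\left\{(1+\eta)\sum_{i=1}^k L_i \;-\; \tfrac{\eta(1+\eta)}{2}\sum_{i=1}^k \alpha_i\right\}
\end{equation*}
is a non-negative $\Prit_0$-martingale with $\Exp_{\Prit_0}[M_k] = 1$. The technical heart is then to control $\sum_i \alpha_i$ by the potential $\Phi(\itV_k;u)$. Applying Sherman--Morrison on $\range(\itP_{i-1})$, on which $\itSigma_i$ acts as $I + \vi\viT$ (since $\vi\in\range(\itP_{i-1})$ by orthogonality of the queries), I would obtain $(\itP_{i-1}u)^\top\itSigma_i^\dagger(\itP_{i-1}u) = \|\itP_{i-1}u\|^2 - \tfrac12 (u^\top\vi)^2 \le 1$; hence $\alpha_i \le d\,\boldlam^2 (u^\top\vi)^2$ and $\sum_{i=1}^k \alpha_i \le d\,\boldlam^2\, \Phi(\itV_k;u)$.

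Finally, since $\vi$ is deterministic given $\itZ_{i-1}$, the indicator $\I(\Phi(\itV_k;u) \le \tau_k)$ is $\itZ_{k-1}$-measurable, and on the event it defines one has $\sum_i \alpha_i \le \boldlam^2 \tau_k$ after accounting for the $1/d$-rescaling of $\tau_k$ used consistently in Propositions~\ref{prop:recur_prop_rkone} and~\ref{prop:info_th_rkone}. Writing $\left(\tfrac{\rmd\Prit_u}{\rmd\Prit_0}\right)^{1+\eta}\I = M_k \cdot \exp\!\left\{\tfrac{\eta(1+\eta)}{2}\sum_i \alpha_i\right\}\cdot \I$ and pulling the now-bounded exponential outside the expectation, then applying $\Exp_{\Prit_0}[M_k]=1$, yields the stated bound $\exp(\tfrac{\eta(1+\eta)\boldlam^2\tau_k}{2})$. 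The main obstacle I anticipate is the careful algebra with the rank-deficient $\itSigma_i$: verifying $\itP_{i-1}u \in \range(\itSigma_i)$ so that $\itSigma_i^\dagger$ genuinely inverts $\itSigma_i$ on the relevant subspace, and then computing $(\itP_{i-1}u)^\top\itSigma_i^\dagger(\itP_{i-1}u)$ sharply enough to yield $\alpha_i \le d\boldlam^2 (u^\top\vi)^2$; once this is in place, the Gaussian MGF and the martingale identity mechanically deliver the advertised exponential bound.
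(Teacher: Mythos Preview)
Your proposal is correct. Both your argument and the paper's rest on the same three ingredients---the chain-rule factorization from Lemma~\ref{ConditionalLemma}, the Gaussian $(1+\eta)$-moment identity, and the bound $(\itP_{i-1}u)^\top\itSigma_i^\dagger(\itP_{i-1}u)\le 1$---but they are packaged differently. You organize them into an exponential $\Prit_0$-martingale $M_k$ with $\itZ_{k-1}$-measurable compensator $\sum_i\alpha_i$, observe (correctly) that the truncation indicator $\I(\Phi(\itV_k;u)\le\tau_k)$ is itself $\itZ_{k-1}$-measurable, bound the compensator on that event, and finish via $\Exp_{\Prit_0}[M_k\cdot\I]\le\Exp_{\Prit_0}[M_k]=1$. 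The paper instead isolates a standalone ``peeling'' lemma (Proposition~\ref{Generic_UB_LL}) which bounds the truncated moment by $\sup_{\itVtil_k\in\calV_k}\prod_{i} g_i(\itVtil_{1:i})$ for an arbitrary constraint set $\calV_k$, computes each $g_i$ via Lemma~\ref{lem:power_divergence_comp}, and then takes the supremum over $\calV_k=\{\Phi\le\tau_k\}$. Your martingale route is more self-contained for this particular statement; the paper's abstraction buys a reusable lemma stated for general $\calV_k$ and for products involving several likelihoods $(\Prit_u,\Prit_s,\Prit_0)$, at the cost of a separate inductive proof. Your remark about the $1/d$ rescaling of $\tau_k$ is also apt: the paper's own proof silently identifies the $\tau_k$ in this proposition with a bound on $d\Phi$ rather than $\Phi$, consistent with how it is invoked in Proposition~\ref{prop:recur_prop_rkone}.
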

In particular, by taking an expectation over $\matu \sim \calS^{d-1}$, we have that 
\begin{eqnarray*}
\Exp_\matu \Exp_{\Prit_0}\left[\left(\frac{\mathrm{d}\Prit_\matu(\itZ_k)}{\mathrm{d}\Prit_0(\itZ_k)}\I(\Phi(\itV_k;\matu) \le \tau_k/d)\right)^{1+\eta} \right] \le \exp\left(\frac{\eta(1 + \eta)}{2}\boldlam^2 \tau_k\right).
\end{eqnarray*}
This motivates the choice of $\Phi(\itV_k;\matu)$ as an information-potential, since it gives us direct control over bounds of the likelihood ratios.  Propostion~\ref{prop:recur_prop_rkone} now follows immediately from stringing together Proposition~\ref{prop:info_th_rkone}, Proposition~\ref{prop:likelihood_info} for the ``information term'', and Lemma~\ref{lem:small_ball_prob} for the ``entropy term''.

%!TEX root =main.tex

\subsection{Proof of Proposition~\ref{prop:likelihood_info} \label{sec:likelihood_proof} (``Information Term'')}

	The difficulty in Proposition~\ref{prop:likelihood_info} is that truncating to the event $\I\left(\Phi(\itV_k;u)) \le \tau_k\right)$ introduces correlations between the conditional likelihoods that don't arise in the conditionally independent likelihoods of Lemma~\ref{ConditionalLemma}. Nevertheless, we use a careful peeling argument (Appendix~\ref{sec:gen_ub_llProof}) to upper bound the information term, an expected product of likelihoods, by a product of expected conditional likelihoods which we can compute. Formally, we have 
	\begin{prop}[Generic upper bound on likelihood ratios] \label{Generic_UB_LL}Fix an $u,s\in \calS^{d-1}$, and fix $r > 0$. Define the likelihood functions
	\begin{eqnarray}\label{g_func_def}
	g_i(\itVtil_i) &:=& \Exp_{\Prit_0}\left[\left(\frac{\rmd\Prit_u(\itZ_i |\itZ_{i-1})}{\rmd\Prit_0(\itZ_i | \itZ_{i-1})}\right)^r \I(\itV_i = \itVtil_i) \right].
	\end{eqnarray}
	Then for any subset $\calV_k \subset \Stief(d,k)$, we have
	\begin{eqnarray}\label{product_eq}
	\Exp_{\Prit_0}\left[\left(\frac{\rmd\Prit_u(\itZ_i |\itZ_{i-1})}{\rmd\Prit_0(\itZ_i | \itZ_{i-1})}\right)^r \I(\itV_k  \in \calV_k) \right] &\le& \sup_{\itVtil_{k} \in \calV_k}\prod_{i=1}^kg_i(\itVtil_{1:i})~,
	\end{eqnarray}
	where $\itVtil_{1:i}$ denotes the first $i$ columns of $\itVtil_k$.
	\end{prop}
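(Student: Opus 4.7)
The plan is to prove a more flexible statement by induction on $k$: for every non-negative measurable $F:\Stief(d,k)\to[0,\infty)$,
\begin{equation}\label{plan:induction}
\Exp_{\Prit_0}\!\left[\prod_{i=1}^k\!\left(\frac{\rmd\Prit_u(\itZ_i\mid\itZ_{i-1})}{\rmd\Prit_0(\itZ_i\mid\itZ_{i-1})}\right)^{\!r}\!F(\itV_k)\right]\;\le\;\sup_{\itVtil_k\in\Stief(d,k)}F(\itVtil_k)\prod_{i=1}^k g_i(\itVtil_{1:i}).
\end{equation}
Specializing to $F=\I_{\calV_k}$ then recovers \eqref{product_eq}. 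The two ingredients I need are: (i) by the chain rule, the full $r$th-power ratio factors as $\prod_{i=1}^k L_i^r$ with $L_i:=\rmd\Prit_u(\itZ_i\mid\itZ_{i-1})/\rmd\Prit_0(\itZ_i\mid\itZ_{i-1})$; and (ii) by Lemma~\ref{ConditionalLemma}, $\wi\mid\itZ_{i-1}$ is Gaussian with parameters depending only on $\itV_i$ (and $u$), so each $L_i$ is a function of $(\wi,\itV_i,u)$ alone and $\Exp_{\Prit_0}[L_i^r\mid\itZ_{i-1}]=g_i(\itV_i)$. Both of these rely on the WLOG reduction to deterministic queries, under which $\itV_i$ is $\sigma(\itZ_{i-1})$-measurable.

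For the induction step, I would apply the tower property conditioning on $\itZ_{k-1}$. Since $L_k$ is the only factor depending on $\wk$, this step replaces $L_k^r$ by $g_k(\itV_k)$ and leaves
\begin{equation*}
\Exp_{\Prit_0}\!\left[\prod_{i=1}^{k-1}L_i^r\cdot F(\itV_k)\,g_k(\itV_k)\right].
\end{equation*}
The factor $F(\itV_k)g_k(\itV_k)$ still depends on the final query $\vk$, which is a deterministic function of all of $\itZ_{k-1}$ but not of $\itV_{k-1}$ alone. I would absorb this residual dependence by the pointwise bound
\begin{equation*}
F(\itV_k)\,g_k(\itV_k)\;\le\;\widetilde F(\itV_{k-1})\;:=\;\sup_{v\in\sphere,\;v\perp\itV_{k-1}}F([\itV_{k-1}\mid v])\,g_k([\itV_{k-1}\mid v]).
\end{equation*}
Since $\widetilde F$ depends only on $\itV_{k-1}$, the inductive hypothesis at level $k-1$ applies and yields $\sup_{\itVtil_{k-1}}\widetilde F(\itVtil_{k-1})\prod_{i=1}^{k-1}g_i(\itVtil_{1:i})$; unfolding the supremum defining $\widetilde F$ combines the two into a single supremum $\sup_{\itVtil_k\in\Stief(d,k)}F(\itVtil_k)\prod_{i=1}^k g_i(\itVtil_{1:i})$, which closes the induction. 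The base case $k=0$ is the trivial identity.

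The main obstacle is precisely the step replacing $F(\itV_k)g_k(\itV_k)$ by $\widetilde F(\itV_{k-1})$. Because the algorithm is adaptive, the next query $\vk$ is a function of the entire history $\itZ_{k-1}$ rather than only of the shorter prefix $\itV_{k-1}$, so a naive ``peel and recurse'' without a supremum would fail to produce an integrand measurable with respect to $\sigma(\itZ_{k-2})$ at the next level. This is what forces the final bound to be an inequality: the supremum over orthonormal one-step extensions is the unavoidable price of adaptivity, and it explains why the right-hand side of \eqref{product_eq} is a single supremum over the full trajectory $\itVtil_k\in\calV_k$ rather than an iterated expectation.
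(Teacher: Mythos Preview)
Your proposal is correct and follows essentially the same peeling argument as the paper: the paper's functions $G_{j}(\itV_{1:j}):=\sup_{\itVtil_{j+1:k}\in\calVtail^{j+1:k}(\itV_{1:j})}\prod_{i=j+1}^k g_i(\itV_{1:j}\oplus\itVtil_{j+1:i})$ play exactly the role of your $\widetilde F$, and its inductive Claim establishing $\Exp_{\Prit_0}[G_j\cdot P_j\cdot\I_{\mathrm{head}}]\le\Exp_{\Prit_0}[G_{j-1}\cdot P_{j-1}\cdot\I_{\mathrm{head}}]$ is precisely your tower-and-sup step. Your formulation via an auxiliary inequality \eqref{plan:induction} for general nonnegative $F$ is a slightly cleaner packaging that avoids the explicit head/tail bookkeeping, but the substance is identical.
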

	Here, we remark that the tilde-notation ($\itVtil_k, \vonetil,\vitil$,...) represents fixed vectors which the random quantities $\itV_k,\vone,\vitil$ etc.. For example, in the event $\{\itV_k = \itVtil_k\}$, $\itVtil_k$ is considered to be a deterministic matrix. 

	We can now invoke a  computation of the $1+\eta$-th moment of the likelihood ratios between two Gaussians, proved in Appendix~\ref{sec:power_div_proof}.
	\begin{lem}\label{lem:power_divergence_comp} Let $\Pr$ denote the distribution $\calN(\mu_1,\Sigma)$ and $\Q$ denote $\calN(\mu_2,\Sigma)$, where $\mu_1,\mu_2 \in (\ker \Sigma)^{\perp}$. Then 
	\begin{eqnarray}
	\Exp_{\Q}\left[\left(\frac{\rmd \Pr}{\rmd \Q}\right)^{1+\eta}\right] =  \exp\left(\frac{\eta(1 + \eta)}{2}(\mu_1-\mu_2)^{\top} \Sigma^{\dagger} (\mu_1-\mu_2)\right).
	\end{eqnarray}
	\end{lem}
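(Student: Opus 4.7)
} This is the standard Rényi-type computation for two Gaussians sharing the same covariance; the only subtlety is the degeneracy of $\Sigma$. My plan is to first reduce to the non-degenerate case, then carry out a direct computation with the log-density and the Gaussian moment generating function.

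The reduction: since $\mu_1,\mu_2 \in (\ker \Sigma)^{\perp} = \range(\Sigma)$, both Gaussians $\mathbb{P}$ and $\mathbb{Q}$ are supported on the affine subspace $\range(\Sigma)$, and are mutually absolutely continuous there. Let $s = \rank(\Sigma)$ and choose an orthonormal basis $U \in \R^{d\times s}$ of $\range(\Sigma)$. Write $\widetilde{\Sigma} = U^\top \Sigma U \in \R^{s\times s}$ (positive definite), $\widetilde{\mu}_i = U^\top \mu_i$, and note that under the change of variables $y = U^\top x$, the pushforwards become $\mathcal{N}(\widetilde{\mu}_1,\widetilde{\Sigma})$ and $\mathcal{N}(\widetilde{\mu}_2,\widetilde{\Sigma})$, the likelihood ratio is preserved, and $(\mu_1-\mu_2)^\top \Sigma^{\dagger}(\mu_1-\mu_2) = (\widetilde{\mu}_1-\widetilde{\mu}_2)^\top \widetilde{\Sigma}^{-1}(\widetilde{\mu}_1-\widetilde{\mu}_2)$. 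So it suffices to prove the identity assuming $\Sigma$ is invertible.

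Under invertible $\Sigma$, I would compute the Radon–Nikodym derivative from the explicit Gaussian densities. Expanding the two quadratic forms, the pure $x^\top \Sigma^{-1} x$ terms cancel and one obtains
\begin{equation*}
\frac{\rmd \mathbb{P}}{\rmd \mathbb{Q}}(x) = \exp\!\left( (\mu_1-\mu_2)^\top \Sigma^{-1} x \;-\; \tfrac{1}{2}\mu_1^\top \Sigma^{-1}\mu_1 + \tfrac{1}{2}\mu_2^\top \Sigma^{-1}\mu_2 \right).
\end{equation*}
Raising to the $(1+\eta)$-th power simply multiplies the exponent by $1+\eta$. Setting $\Delta := \mu_1-\mu_2$, under $\mathbb{Q}$ we have $x \sim \mathcal{N}(\mu_2,\Sigma)$, so $(1+\eta)\Delta^\top \Sigma^{-1} x$ is a scalar Gaussian with mean $(1+\eta)\Delta^\top \Sigma^{-1}\mu_2$ and variance $(1+\eta)^2 \Delta^\top \Sigma^{-1}\Delta$. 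Applying the Gaussian MGF $\Exp[\exp(Z)] = \exp(\Exp Z + \tfrac{1}{2}\Var Z)$ gives
\begin{equation*}
\Exp_{\mathbb{Q}}\!\left[\left(\tfrac{\rmd \mathbb{P}}{\rmd \mathbb{Q}}\right)^{1+\eta}\right] = \exp\!\left( (1+\eta)\Delta^\top \Sigma^{-1}\mu_2 - \tfrac{1+\eta}{2}\bigl(\mu_1^\top \Sigma^{-1}\mu_1 - \mu_2^\top \Sigma^{-1}\mu_2\bigr) + \tfrac{(1+\eta)^2}{2}\Delta^\top \Sigma^{-1}\Delta \right).
\end{equation*}
The last step is algebraic cleanup: using the identities $\mu_1^\top \Sigma^{-1}\mu_1 - \mu_2^\top \Sigma^{-1}\mu_2 = \Delta^\top \Sigma^{-1}\Delta + 2\Delta^\top \Sigma^{-1}\mu_2$ and combining coefficients of $\Delta^\top \Sigma^{-1}\Delta$, the linear-in-$\mu_2$ terms cancel exactly, and what remains is $\tfrac{(1+\eta)^2 - (1+\eta)}{2}\Delta^\top \Sigma^{-1}\Delta = \tfrac{\eta(1+\eta)}{2}\Delta^\top \Sigma^{-1}\Delta$, which is the claimed identity.

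Nothing here is genuinely hard; the whole computation is tight bookkeeping with Gaussian densities. The only step requiring a sentence of care is the reduction to the non-degenerate case, which is where the hypothesis $\mu_1,\mu_2 \in (\ker \Sigma)^\perp$ is used to justify both absolute continuity and the replacement of $\Sigma^{-1}$ by $\Sigma^\dagger$ on $\range(\Sigma)$.
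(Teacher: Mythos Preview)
Your proposal is correct and takes essentially the same approach as the paper: both compute the likelihood ratio explicitly and then evaluate the Gaussian moment generating function. The paper streamlines the bookkeeping by first translating so that $\mu_2=0$ (which eliminates the cross terms you cancel at the end) and works directly with $\Sigma^{\dagger}$ rather than first reducing to the invertible case, but these are cosmetic differences.
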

	We are now in a position to prove Proposition~\ref{prop:likelihood_info}:
	\begin{proof}[Proof of Proposition~\ref{prop:likelihood_info}]
	Fix a $u \in \calS^{d-1}$, and we shall and apply Proposition~\ref{Generic_UB_LL} with $r_u = r_0 = 1+\eta$ and $r_s = 0$. In the language of Proposition~\ref{Generic_UB_LL} , we have
	\begin{eqnarray*}
	g_i(\itVtil_i ) &=& \Exp_{\Prit_0}\left[\left(\frac{\rmd\Prit_u(\itZ_i |\itZ_{i-1})}{\rmd\Prit_0(\itZ_i | \itZ_{i-1})}\right)^{1+\eta} \big{|} \itV_i = \itVtil_i \right]  \\
	&=& \Exp_{\Prit_0}\left[\left(\frac{\rmd\Prit_u(\wi |\itZ_{i-1})}{\rmd\Prit_0(\wi | \itZ_{i-1})}\right)^{1+\eta} \big{|} \itV_i = \itVtil_i \right] 
	\end{eqnarray*}
	Now, observe that, $\Prit_u(\wi |\itZ_{i-1})$ is the density of $\mathcal{N}(\boldlam \langle u, \vi \rangle \cdot \itP_{i-1}u, \frac{1}{d}\itSigma_i)$ and $\Prit_0(\wi |\itZ_{i-1})$ is the density of $\mathcal{N}(0, \frac{1}{d}\itSigma_i)$. Since $\itSigma_i = \itP_{i-1}\left(I_d+ \vi\viT \right)\itP_{i-1}$, we have $\itP_{i-1}\itSigma_i^{\dagger}\itP_{i-1} = \itP_{i-1} \preceq I$. Thus,
	\begin{eqnarray}\label{eq:itSigma_eq}
	u^{\top}\itP_{i-1}(\itSigma_i/d)^{\dagger}\itP_{i-1}u \le d\|u\|^2= d\quad \forall u \in \calS^{d-1}~.
	\end{eqnarray}
	Hence, by Lemma~\ref{lem:power_divergence_comp}, we have
	\begin{eqnarray*}
	g_i(\itV_i )  &=& \exp\left( \frac{\eta(1+\eta)\boldlam^2\langle u, \vi \rangle^2}{2}u^{\top}\itP_{i-1}(\itSigma_i/d)^{\dagger}\itP_{i-1}u\right)\\
	&\overset{\text{\eqref{eq:itSigma_eq}}}{\le}& \exp\left( \frac{\eta(1+\eta)\boldlam^2\cdot d\langle u, \vi \rangle^2}{2}\right).
	\end{eqnarray*}
	Hence, if $\calV_k := \{\itVtil_k \in \Stief(d;k): \Phi(\itVtil_k;u) \le \tau_k/d\}$, then Proposition~\ref{Generic_UB_LL} implies
	\begin{eqnarray*}
	&&\Exp_{\Prit_0}\left[\left(\frac{\rmd\Prit_u(Z_k}{ \rmd \Prit_0(Z_k) }\right)^{1+\eta}I(\itV_k \in \calV_k) \right] \\
	&\le& \sup_{\itVtil_k \in \calV_k}\prod_{i=1}^k \exp( \frac{\eta(1+\eta)\boldlam^2\cdot d\langle u, \itVtil_k[i] \rangle^2}{2})\\
	 &=& \sup_{\itVtil_k \in \calV_k} \exp( \frac{d\eta(1+\eta)\boldlam^2\Phi(\itVtil_k;u)}{2})\\
	&\le&  \exp( \frac{\eta(1+\eta)\boldlam^2 \tau_k}{2})~.
	\end{eqnarray*}
	\end{proof}
%!TEX root =main.tex
\subsection{Proof of Proposition~\ref{prop:info_th_rkone}\label{sec:info_th_rkone}}

We begin by introducing the general framework for Bayes risk lower bounds as presented in Chen et al.\ \cite{chen2016bayes}. We begin with an estimand parameter $\boldtheta$ drawn from some prior $\calP$ over a measureable space $(\Theta,\calG)$. To each fixed $\theta \in \Theta$ is associated a measure $\mu_{\theta}$ over a measureable space $(\calX,\calF)$, governing a random variable $\matx$. In our setting, consider the rank-one deformed Wigner $\matM = \matW + \lambda \matu\matu^\top$, a fixed $\Alg$ and round $k$. Then the estimand is $\boldtheta = \matu$, the measures $\{\mu_{\theta}\}_{\theta \in \Theta}$ is the measure correspond to the laws $\Prit_{\matu}(\cdot)$ over $\matx = \itZ_k$. 

We would like to use $\matx$ to make an action which tells us something useful about $\boldtheta$. Formally, consider a space of action $\calA$ and a space $\frakA$ of measurable action mappings $\fraka: \calX \mapsto \calA$, and an indicator function $\calI(\cdot,\cdot) : \fraka \times \Theta \mapsto \{0,1\}$ of a ``good event'' that we would want an algorithm to achieve. In our Wigner model, $\calA$ will denote the space  $\Stief(d,k)$, each $\fraka$ might denote a mapping from the playout history $\itZ_k$ to the measurements $\itV_k$, and the good event will be 
\begin{eqnarray*}
\I(\Phi(\itV_k;\boldtheta)) =  \I(\boldtheta^\top \itV_k \itV_k^\top \boldtheta > \tau)
\end{eqnarray*}
for some threshold $\tau$. 

As we want to show lower bounds, our goal will be to show that the quantity
	\begin{eqnarray}\label{eq:v_opt_def}
	V_{\opt} := \sup_{\fraka \in \calA}\Exp_{\boldtheta \sim \calP}\mu_{\boldtheta}[\{\calI(\fraka(\matx),\boldtheta) = 1\}]
	\end{eqnarray}
	cannot be too large. The key difference between this setup and typical information-theoretic lower bounds is that we will not require the measures $\mu_{\theta}$ to be normalized (i.e., probability measures), only that they have finite mass $\mu_{\theta}(\calX) < \infty$. Our motivation for this is that we will take $\mu_{\theta}$ to be truncated probability measures, or measures $\mu_\theta$ with $\mu_\theta(\calX) \le 1$ for which there exists a probability distribution $\overline{\mu}_{\theta}$ and an event $B_{\theta} \in \calF$ such that
	\begin{eqnarray} \label{eq:truncated_measure}
	\forall A \in \calF: \mu_{\theta}(A) =  \overline{\mu}_{\theta}(A \cap B_{\theta}).
	\end{eqnarray}
	To make this concrete, suppose in our above example that use $ \Prit_{\matu}$ as our unnormalized measures $\overline{\mu}_{\theta}$, and the sets $B_{u} := \{\Phi(\itV_{k-1};u) \le \tau_{k-1}\}$. Then, $\mu_{\theta}$ correspond to the subdistribution 
	\begin{eqnarray*}
	A \mapsto \Prit_\theta (A \cap \{\Phi(\itV_{k-1};\theta) \le \tau_{k-1}\}).
	\end{eqnarray*}
	Hence, we have
	\begin{comment}
	\begin{eqnarray}
	V_{\opt} &=& \sup_{\fraka \in \frakA}\Exp_{\matu \sim \calP}\Prit_{\theta}\left(\{\Phi(\fraka(\itZ_k);\matu) \le   \tau \} \cap \{\Phi(\itV_{k-1};\matu) \le \tau_{k-1}\} )\right)  \nonumber \\
	&\ge& \label{eq:V_opt_app} \Exp_{\matu \sim \calP}\Prit_{\matu}\left(\{\Phi(\itV_k;\matu)  > \tau \}\cap \{\Phi(\itV_{k-1};\matu) \le \tau_{k-1}\} \right),
	\end{eqnarray}
	\end{comment}
	%\begin{comment}
	\begin{eqnarray}
	V_{\opt}&=& \sup_{\fraka \in \frakA}\Exp_{\matu}\Prit_{\theta}\left(\{\Phi(\fraka(\itZ_k);\matu) \le   \tau \} \cap \{\Phi(\itV_{k-1};\matu) \le \tau_{k-1}\} )\right)  \nonumber \\
	&\ge& \label{eq:V_opt_app} \Exp_{\matu \sim \calP}\Prit_{\matu}\left(\{\Phi(\itV_k;\matu)  \le \tau \}\cap \{\Phi(\itV_{k-1};\matu) \le \tau_{k-1}\} \right),
	\end{eqnarray}
	%\end{comment}
	which is precisely the quantity we wish to control in Proposition~\ref{prop:info_th_rkone}. More generally, considering such truncated measures is desirable in adaptive settings when we may want to consider the probabililty than a sequential algorithm takes a certain action at stage $k$, on the event that it has taken certain actions prior to stage $k$. Our main theorem is as follows:

\begin{thm}[Bayes risk lower bound for sub-distributions]\label{Fano_sub_distr} Let $\calP$ be a prior distribution over $(\Theta,\calG)$, let $\nu$ and $\{\mu_{\theta}\}$ be a family of finite measures over $(\calX,\calF)$. Let $\calA$ denote an action space, let $\frakA$ denote the space of decision rules $\fraka$ from $\calX$ to $\calA$, and let $\calI: \calA \times \Theta \mapsto \{0,1\}$ be an indicator function. Let
	\begin{eqnarray}\label{eq:v_0_def}
	V_0 := \sup_{a \in \calA}\Pr_{\theta \sim \calP}[\{\calI(a,\theta) = 1\}]
	\end{eqnarray}
	denote the optimal value of the best action taken without observing $\matx$. If $f$ is non-negative, convex, $\nu(\calX) \le 1$,  $\sup_{\theta \in \Theta} \mu_\theta(\calX) \le 1$, and $\mu_{\theta}$ are absolutely continuous with respect to $\nu$\footnote{See e.g. Kallenberg \cite{kallenberg2006foundations} for a review of absolute continuity.} for all $\theta \in \Theta$, and either i) $x\mapsto xf(1/x)$ is non-increasing or ii) $\nu(\calX) = 1$, then
	\begin{eqnarray*}
	 \Exp_{\theta \sim \calP}\Exp_{\nu}[f(\frac{\rmd\mu_\theta}{\rmd \nu})] \ge V_0 f\left(\frac{V_\opt}{V_0}\right)~.
	\end{eqnarray*}
	\end{thm}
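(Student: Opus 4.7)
The plan is to prove this by a hypothesis-testing reduction based on a data-processing inequality (DPI) for $f$-divergences, suitably extended to finite, non-normalized measures. First I would lift everything to joint measures on $\Theta \times \calX$: define $P_{\mathrm{j}}(\rmd\theta,\rmd x) := \rmd\calP(\theta)\,\rmd\mu_\theta(x)$ and $Q_{\mathrm{j}}(\rmd\theta,\rmd x) := \rmd\calP(\theta)\,\rmd\nu(x)$. Since $\{\mu_\theta\}\ll\nu$, we have $P_{\mathrm{j}}\ll Q_{\mathrm{j}}$ with $\rmd P_{\mathrm{j}}/\rmd Q_{\mathrm{j}}(\theta,x) = \rmd\mu_\theta/\rmd\nu(x)$, so the quantity on the left of the theorem equals the generalized $f$-divergence $D_f(P_{\mathrm{j}}\|Q_{\mathrm{j}}) := \int f(\rmd P_{\mathrm{j}}/\rmd Q_{\mathrm{j}})\,\rmd Q_{\mathrm{j}}$. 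The first technical step is to verify that the DPI still holds in this extended setting: for every Markov kernel $T$, one has $D_f(T_*P_{\mathrm{j}}\|T_*Q_{\mathrm{j}}) \le D_f(P_{\mathrm{j}}\|Q_{\mathrm{j}})$. This follows by the standard route — write $\rmd T_*P_{\mathrm{j}}/\rmd T_*Q_{\mathrm{j}}$ as a conditional expectation of $\rmd P_{\mathrm{j}}/\rmd Q_{\mathrm{j}}$ under $Q_{\mathrm{j}}$ (a meaningful construction even when these are sub-probability measures) and apply Jensen's inequality through convexity of $f$.

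Next, for each decision rule $\fraka \in \frakA$, consider the deterministic kernel $T_\fraka(\theta,x) := \calI(\fraka(x),\theta) \in \{0,1\}$. The push-forward $T_{\fraka,*}P_{\mathrm{j}}$ places mass $\alpha_\fraka := \Exp_{\theta\sim\calP}\mu_\theta[\calI(\fraka(\matx),\theta)=1]$ on $1$ and the rest on $0$, while $T_{\fraka,*}Q_{\mathrm{j}}$ places mass $\beta_\fraka := \int \Pr_{\theta\sim\calP}[\calI(\fraka(x),\theta)=1]\,\nu(\rmd x)$ on $1$. The definition of $V_0$ in \eqref{eq:v_0_def} and the hypothesis $\nu(\calX)\le 1$ immediately yield $\beta_\fraka \le V_0\cdot\nu(\calX) \le V_0$. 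Likewise, the definition of $V_\opt$ in \eqref{eq:v_opt_def} gives $\sup_\fraka \alpha_\fraka = V_\opt$. The push-forward $f$-divergence is the binary expression
\[
 \beta_\fraka\,f\!\left(\tfrac{\alpha_\fraka}{\beta_\fraka}\right) + (\nu(\calX)-\beta_\fraka)\,f\!\left(\tfrac{P_{\mathrm{j}}(\Theta\times\calX)-\alpha_\fraka}{\nu(\calX)-\beta_\fraka}\right),
\]
and since $f \ge 0$ the second summand may be discarded, leaving the lower bound $\beta_\fraka f(\alpha_\fraka/\beta_\fraka)$.

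The final analytic step is to lower-bound $\beta f(\alpha/\beta)$ uniformly in the feasible $(\alpha,\beta)$. In case (i), note that $\beta \mapsto \beta f(\alpha/\beta)$ is the perspective function of $f$; the substitution $x = \beta/\alpha$ shows that the hypothesis $x \mapsto x f(1/x)$ non-increasing is exactly the statement that $\beta \mapsto \beta f(\alpha/\beta)$ is non-increasing in $\beta$. Combined with $\beta_\fraka\le V_0$, this yields $\beta_\fraka f(\alpha_\fraka/\beta_\fraka) \ge V_0\,f(\alpha_\fraka/V_0)$. In case (ii), where $\nu(\calX)=1$, I would instead retain both summands of the binary divergence and exploit joint convexity together with $\sup_\theta\mu_\theta(\calX)\le 1$ to conclude the same inequality (the extra ``$0$-term'' contribution cannot decrease the divergence when $\beta_\fraka \le V_0$, provided $V_0 \le 1$, which holds since $V_0$ is a probability). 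Taking the supremum over $\fraka$, continuity of the convex function $f$ on the interior of its domain gives $\sup_\fraka V_0 f(\alpha_\fraka/V_0) \ge V_0 f(V_\opt/V_0)$. Chaining with the DPI and the identification of the LHS with $D_f(P_{\mathrm{j}}\|Q_{\mathrm{j}})$ yields the theorem for every admissible $\nu$; the infimum over such $\nu$ is then immediate.

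The main obstacle I anticipate is verifying the DPI for finite non-normalized measures with full rigor — in particular, defining the push-forward Radon–Nikodym derivative and justifying the Jensen step without appealing to probabilistic conditioning — since this is precisely the extension of Chen et al.~\cite{chen2016bayes} that the truncation trick (\ref{eq:V_opt_app}) requires. A second, smaller, obstacle is managing degenerate cases: when $\beta_\fraka=0$ the quantity $\beta_\fraka f(\alpha_\fraka/\beta_\fraka)$ must be interpreted via the recession function $f^{\infty}(\alpha_\fraka)$, and the monotonicity-in-$\beta$ argument must be extended to this boundary by convex-analytic limits. Modulo these points, the proof is a clean adaptation of the standard Fano/$f$-divergence recipe to the sub-distribution setting.
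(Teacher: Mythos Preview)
Your proposal is correct and follows essentially the same route as the paper. Both arguments lift to the joint measures $\calP*\{\mu_\theta\}$ and $\calP\otimes\nu$, identify the left-hand side with $D_f(\calP*\{\mu_\theta\},\calP\otimes\nu)$, apply the binary data-processing inequality to the indicator $\calI(\fraka(x),\theta)$, drop the second summand using $f\ge 0$, and then use the hypothesis (i) or (ii) to pass from $\beta_\fraka$ to $V_0$ before taking the supremum over $\fraka$. The only organizational differences are that the paper routes through a slightly more general intermediate statement (Theorem~\ref{thm:gen_fano}, which keeps the full two-term $\phi_f$ and uses the convexity Lemma~\ref{lem:_phi_regularity} for the monotonicity steps), and that for the DPI on non-normalized measures the paper reduces to the classical case via the normalization $f(x)\mapsto |\nu|\,f(\tfrac{|\mu|}{|\nu|}x)$ rather than redoing the Jensen argument directly.
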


	In essence, the above theorem relates two quantities: on the right, a quantity comparing the optimal value $V_\opt$ to be the best ``data-oblivious''' value $V_0$, which depends only on the ''spreadness'' of the prior $\calP$ and not on the condition laws $\mu_{\theta}$.  The quantity $\Exp_{\nu}[f(\frac{\rmd\mu_\theta}{\rmd \nu})]$ on the left hand side is known as a $f$-divergence \cite{csiszar1972class} between $\mu_\theta$ and $\nu$, which measures the dissimilarity between the measures $\mu_{\theta}$ and $\nu$; we introduce them in full generality in Appendix~\ref{sec:gen_data_proc}. If there exists a measure $\nu$ for which $\Exp_{\theta \sim \calP}\Exp_{\nu}[f(\frac{\rmd\mu_\theta}{\rmd \nu})]$ is small, it means that the measures $\mu_{\theta}$ are in a sense similar on average, and hence the variable $\matx$ doesn't convey too much information about the estimand $\boldtheta$, and thus $V_\opt$ cannot be considerably larger than $V_0$.

	Theorem~\ref{Fano_sub_distr} is proven in Appendix~\ref{sec:gen_data_proc}, along with a more general bound, Theorem~\ref{thm:gen_fano}. We now conclude this subsection with the proof Proposition~\ref{prop:info_th_rkone}:
	\begin{proof}[Proposition~\ref{prop:info_th_rkone}] We apply Theorem~\ref{Fano_sub_distr} with $f(x) = x^{1+\eta}$ (which is non-negative on $(0,\infty)$, convex, and $xf(1/x) = x^{-\eta}$ non-increasing). For clarity, we will index our truncated laws $\mu_u(A)$ by $u \in \calS^{d-1}$. Now,  we take  $\mu_u(A) := \Prit_u (A \cap \{\Phi(\itV_{k-1};u) \le \tau_{k-1}\})$. We also take $\nu$ to be  the law of the law $\itZ_k$ under $\Prit_0$, the law of $\Alg$ under $\matM = \matW$, without the rank-one spike. Since, $ \Prit_\theta \ll  \Prit_0$ we see that $\mu_{\theta} \ll \nu$. Moreover, we have that
	\begin{eqnarray*}
	\frac{\rmd\mu_u}{\rmd \nu} = \frac{\rmd\Prit_u}{\rmd\Prit_0}\I(\Phi(\itV_{k-1};\theta) \le \tau_{k-1}).
	\end{eqnarray*}
	Lastly we take $\calP$ to be the uniform distribution on the sphere.
	Hence, the right hand side of Theorem~\ref{Fano_sub_distr} reads
	\begin{eqnarray*}
	 \Exp_{\matu \sim \sphere} \Exp_{\itZ_k \sim \Prit_0}\Big[\left(\frac{\rmd\Prit_{\matu}(\itZ_k)}{\rmd\Prit_0(\itZ_k)}\right)^{1+\eta}\I(\Phi(\itV_{k-1};\matu) \le \tau_{k-1})\Big].
	\end{eqnarray*}
	On the other hand, we now choose the action space $\calA = \Stief(d,k)$ and and the indicator $\calI(V_k,u) := \I( \Phi(\itV_k;u) > \tau)$. Using \eqref{eq:v_opt_def}, we have
	\begin{eqnarray*}
	&&V_0 f(V_\opt/V_0) ~=~ V_0^\eta V_{\opt}^{1+\eta} \\
	&\overset{\text{Eq.}~\eqref{eq:v_0_def}}{=}& V_\opt^{1+\eta}\left(\sup_{V_k \in \Stief(d,k)}\Pr_{\matu \sim \sphere}[\Phi(V_k;\matu) > \tau]\right)^\eta.
	\end{eqnarray*}
	Solving for $V_\opt$, we have
	\begin{eqnarray*}
	&&\left(\sup_{V_k \in \Stief(d,k)}\Pr_{\matu \sim \sphere}[\Phi(V_k;\matu) > \tau]\right)^{\frac{\eta }{1+\eta}} \\
	&\times& \left(\Exp_{\matu} \Exp_{\itZ_k \sim\Prit_0}\left[ \left(\frac{\rmd\Prit_{\matu}(\itZ_k)}{\rmd\Prit_0(\itZ_k)}\right)^{1+\eta}\I(\Phi(\itV_{k-1};\matu) \le \tau_{k-1})\right]\right)^{\frac{1}{1+\eta}} \\
	&\ge& V_{\opt}\\
	&\overset{\text{Eq.}~\ref{eq:V_opt_app}}{\ge}& \Exp_{\matu \sim \calP}\Prit_{\matu}\left(\{\Phi(\itV_k;\matu)  > \tau \}\cap \{\Phi(\itV_{k-1};\matu) \le \tau_{k-1}\} \right).
	\end{eqnarray*}
	This concludes the proof.
	\end{proof}
	
%!TEX root = main.tex

\section{Proof of Theorem~\ref{thm:rank_r_thm_est}\label{sec:body_rk_r_thm} ($r \ge 1$)}

Here we present a proof outline of Theorem~\ref{thm:rank_r_thm_est} which modifies the insights from the rank one case to get a recursion for the determinant $\det(\matU^\top \itV_k\itV_k \matU + \Delta I_r) $. We will still use the rank-one potential from the last section $\Phi(\cdot;\cdot)$, but will instead be interested in
\begin{eqnarray}
\Phi(\itV_k;\matU e) := e^\top \matU^\top \itV_k \itV_k^\top \matU e, \quad e \in \R^r ~,
\end{eqnarray}
which measures the amount of information gathered about $\matU$ in the direction of $e$. We will want to show that, with high probability, the following event holds for an appropriate choice of parameters:
\begin{eqnarray}
&&\calE(\lamtil,\Delta,k_{\max})~:=~ \left\{\forall e \in \R^r, k \in \{1,\dots,k_{\max}\}\right\} \nonumber \\
&\bigcap& \left \{d\Phi(\itV_k;\matU e) + \Delta \le \lamtil (d\Phi(\itV_{k-1};\matU e) + \Delta)\right\}~.~\label{eq:calErho_def}
\end{eqnarray}
In other words, $\calE(\lamtil,\Delta,k_{\max})$ corresponds to the event that, up to a translation by $\Delta$, the potentials $\Phi(\itV_k;\matU e)$ grows at most geometrically by a factor of $\lamtil$ in every direction. We should think of $\lamtil$ as being of order $\boldlam^{O(1)}$, which may be quite close to $1$. Hence, the translation $\Delta$ gives us additional slack which will be necessary for high-probability bounds. %We then have that on $\calE(\lamtil,\Delta,k_{\max})$
\begin{lem}\label{lem:Determinant_Growth_Lemma} On $\calE(\lamtil,\Delta,k_{\max})$, it holds that 
\begin{eqnarray*}
\det(d\matU^\top \itV_k\itV_k \matU + \Delta I_r) \le \lamtil^k \det(\Delta I_r) = \lamtil^k\Delta^{r}.
\end{eqnarray*}
\end{lem}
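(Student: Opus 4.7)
The plan is to exploit the fact that appending one query to $\itV_{k-1}$ performs a rank-one update on $\matU^\top \itV_k\itV_k^\top\matU$, so that the determinant of $A_k := d\matU^\top\itV_k\itV_k^\top\matU + \Delta I_r$ can be controlled one round at a time via the matrix determinant lemma. By Observation~\ref{Observation_2} we may assume $\itV_k = [\itV_{k-1} \mid \vk]$ with $\vk$ orthogonal to $\vone,\dots,\mathsf{v}^{(k-1)}$, so $\itV_k\itV_k^\top = \itV_{k-1}\itV_{k-1}^\top + \vk\vkT$, and therefore
\begin{equation*}
A_k \;=\; A_{k-1} + b_k b_k^\top, \qquad b_k := \sqrt{d}\,\matU^\top\vk \in \R^r .
\end{equation*}
Note also that $A_0 = \Delta I_r$, so $\det(A_0) = \Delta^r$.

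The event $\calE(\lamtil,\Delta,k_{\max})$ gives, for every unit vector $e \in \R^r$,
\begin{equation*}
e^\top A_k e \;=\; d\,\Phi(\itV_k;\matU e) + \Delta \;\le\; \lamtil\bigl(d\,\Phi(\itV_{k-1};\matU e) + \Delta\bigr) \;=\; \lamtil\, e^\top A_{k-1} e .
\end{equation*}
Homogeneity in $e$ then promotes this to the Loewner bound $A_k \preceq \lamtil A_{k-1}$, which rearranged reads $b_k b_k^\top \preceq (\lamtil-1)\,A_{k-1}$. Since $\Delta > 0$, $A_{k-1} \succeq \Delta I_r$ is invertible, so we may multiply on both sides by $A_{k-1}^{-1/2}$ to obtain $\|A_{k-1}^{-1/2} b_k\|^2 \le \lamtil - 1$, i.e.\ $b_k^\top A_{k-1}^{-1} b_k \le \lamtil - 1$.

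Applying the matrix determinant lemma to the rank-one update yields
\begin{equation*}
\det(A_k) \;=\; \det(A_{k-1})\bigl(1 + b_k^\top A_{k-1}^{-1} b_k\bigr) \;\le\; \lamtil\,\det(A_{k-1}) ,
\end{equation*}
and iterating this one-step geometric growth from $k$ down to $0$ gives $\det(A_k) \le \lamtil^k \det(A_0) = \lamtil^k \Delta^r$, as claimed. I do not anticipate any technical obstacle here: the only subtle point is the conversion from the quadratic-form inequality in the definition of $\calE$ to the Loewner bound on $A_k$ versus $A_{k-1}$, which is immediate by homogeneity in $e$ once one observes $\Delta = \Delta\|e\|^2$ for unit $e$. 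The mild structural observation that makes the proof work (and is the reason the bound is $\lamtil^k$ rather than the crude $\lamtil^{rk}$ from naive Loewner monotonicity of $\det$) is that each new query only perturbs $A_{k-1}$ by a rank-one symmetric matrix, so the multiplicative update to the determinant is $(1 + b_k^\top A_{k-1}^{-1} b_k)$ rather than $\lamtil^r$.
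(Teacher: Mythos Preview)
Your proof is correct and follows essentially the same route as the paper's own argument: both recognize that the event $\calE(\lamtil,\Delta,k_{\max})$ is equivalent to the Loewner bound $A_k \preceq \lamtil A_{k-1}$, extract the rank-one update $b_k b_k^\top \preceq (\lamtil-1)A_{k-1}$, convert this to $b_k^\top A_{k-1}^{-1} b_k \le \lamtil-1$ (the paper states this as a separate claim), and then apply the matrix determinant lemma to get the one-step bound $\det(A_k) \le \lamtil\,\det(A_{k-1})$. Your handling of the homogeneity issue in passing from the quadratic-form inequality to the Loewner order is slightly more explicit than the paper's, but the substance is identical.
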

The proof of the above claim follows by first decomposing 
\begin{eqnarray*}
d\matU^\top \itV_k\itV_k \matU + \Delta I_r = d\matU^\top \itV_{k-1}\itV_{k-1} \matU + \Delta I_r + \matU^\top \vk\vkT \matU,
\end{eqnarray*}
and applying the Sherman-Morrison rank-one update formula to control the growth of the $\det(\matU^\top \itV_k\itV_k \matU + \Delta I_r)$ in each stage. In Section~\ref{sec:rank_r_thm_est_app_proof}, we prove Theorem~\ref{thm:rank_r_thm_est} by translating Lemma~\ref{lem:Determinant_Growth_Lemma} into a growth bound on $\lambda_{r'}(\matU^\top \itV_k\itV_k \matU)$, and control the probability of the event $\calE(\lamtil,\Delta,k_{\max})$ with the following proposition:
\begin{prop}\label{prop:CalE_prob}
Let $\rho \ge \boldlam^3 c_{d,r}$, fix $k_{\max} \ge 1$, and set $\Delta \ge \frac{\rho(2k_{\max}+2)}{(\rho - 1)^3}$. Then
\begin{eqnarray*}
\Pr[\calE(\rho^2,\Delta, k_{\max})] \ge 1 - (20 d/(\rho - 1))^{r+2}\exp\left\{ \frac{-\boldlam^3(\boldlam - 1)\Delta}{2} \right\}.
\end{eqnarray*}
\end{prop}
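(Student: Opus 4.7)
My plan is to lift the rank-one recursion of Proposition~\ref{prop:recur_prop_rkone} to the rank-$r$ setting and then close the proof with a net argument over the direction $e\in\sphere^{r-1}$. The event $\calE(\rho^2,\Delta,k_{\max})$ demands that, simultaneously for every $k\le k_{\max}$ and every unit $e\in\R^r$, one has
\[
d\Phi(\itV_k;\matU e)+\Delta\le\rho^2\bigl(d\Phi(\itV_{k-1};\matU e)+\Delta\bigr),
\]
and since this condition is a homogeneous quadratic inequality in $e$ (after absorbing the $\Delta I_r$ shift into the matrix $\matA_k:=d\matU^\top\itV_k\itV_k^\top\matU+\Delta I_r$), it suffices to check it on $\sphere^{r-1}$. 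For a fixed $e$, the single-column random vector $\matu_e:=\matU e$ is marginally uniform on $\sphere^{d-1}$ by rotational invariance of the Stiefel measure, so I can try to run the rank-one machinery along the direction $e$. I set the geometric thresholds $\tau_k:=(\rho^{2k}-1)\Delta$, so that $\tau_{k+1}+\Delta=\rho^2(\tau_k+\Delta)$; the hypothesis $\Delta\ge\rho(2k_{\max}+2)/(\rho-1)^3$ is more than enough to guarantee $\tau_k\ge 2k$ for all $k\le k_{\max}$, which is the regularity condition required by Lemma~\ref{lem:small_ball_prob} and Proposition~\ref{prop:recur_prop_rkone}.

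The first real task is to prove a per-direction analogue of Proposition~\ref{prop:recur_prop_rkone} tailored to the rank-$r$ observation model. I would apply the generic Fano inequality of Proposition~\ref{prop:info_th_rkone} (which takes an arbitrary prior $\calD$ on $\sphere^{d-1}$, and hence accommodates the marginal law of $\matu_e$), with Lemma~\ref{lem:small_ball_prob} providing the entropy factor. The information factor requires reworking Proposition~\ref{prop:likelihood_info} in the rank-$r$ setting: the conditional distribution of $\wi$ given $\matU=U$ and $\itZ_{i-1}$ becomes Gaussian with mean $\boldlam\itP_{i-1}UU^\top\vi$ and the same covariance $(1/d)\itSigma_i$ as in Lemma~\ref{ConditionalLemma}, so Lemma~\ref{lem:power_divergence_comp} yields a per-step moment $\exp\bigl(\tfrac{\eta(1+\eta)\boldlam^2}{2}\,d\,\vi^\top U U^\top\vi\bigr)$ after using $\itP_{i-1}(I+\vi\vi^\top)^\dagger\itP_{i-1}\preceq I$. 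Summing the exponents and invoking the peeling argument of Proposition~\ref{Generic_UB_LL} against the truncation event $\{\Phi(\itV_k;\matU e)\le\tau_k/d\}$ gives the rank-$r$ information bound. Setting $\eta=\boldlam-1$ (so that $(1+\eta)\boldlam^2=\boldlam^3$) and using the geometric threshold $\tau_k$ produces a per-stage exponent proportional to $-\tfrac{\boldlam-1}{2}(\rho^2-\boldlam^3)\rho^{2k}\Delta$, which under the hypothesis $\rho\ge\boldlam^3c_{d,r}$ is dominated by $-\tfrac{\boldlam^3(\boldlam-1)\Delta}{2}$ up to an $O((\rho-1)^{-1})$ factor that will ultimately be absorbed into the polynomial prefactor.

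Finally, I would upgrade the per-$e$ estimate to a uniform statement via a standard chaining: equip $\sphere^{r-1}$ with an $\epsilon$-net $\calN_\epsilon$ of cardinality $\le(3/\epsilon)^r$ and exploit that $e\mapsto e^\top\matU^\top\itV_k\itV_k^\top\matU e$ is $O(1)$-Lipschitz since $\|\matU\|_{\op}=\|\itV_k\|_{\op}=1$. Choosing $\epsilon$ of order $(\rho-1)/d$ absorbs the discretization error into the slack $(\rho^2-1)\Delta$ and yields a net of size $\le(Cd/(\rho-1))^r$; a union bound over $k\le k_{\max}\le(20d/(\rho-1))^2$ (which follows from the hypothesis on $\Delta$) contributes the extra two factors needed to reach the claimed prefactor $(20d/(\rho-1))^{r+2}$. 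The main obstacle I foresee is in the second step: the rank-$r$ likelihood moment naturally involves $\vi^\top\matU\matU^\top\vi$, a trace rather than a single direction, so truncating only along a single $e$ does not by itself control the moment. My plan to resolve this is to truncate on the full net event $\bigcap_{e'\in\calN_\epsilon}\{\Phi(\itV_k;\matU e')\le\tau_k/d\}$ so that $\tr(d\matU^\top\itV_k\itV_k^\top\matU)$ is sandwiched by $r$ times the net maximum, and then run Proposition~\ref{Generic_UB_LL} against this intersection; if the resulting slack is too loose, a backup is to chain adaptively round-by-round, letting the truncation event at round $k$ depend on the realized queries $\itV_k$.
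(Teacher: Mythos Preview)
Your high-level structure (per-direction control, then a net over $\sphere^{r-1}$) matches the paper, and you correctly identify the obstacle: the rank-$r$ per-step likelihood moment is $\exp\bigl(\tfrac{\eta(1+\eta)\boldlam^2}{2}d\,\vi^\top\matU\matU^\top\vi\bigr)$, which involves the full trace, so truncating only on $\Phi(\itV_k;\matU e)$ does not bound it. But your proposed fixes do not work. Truncating on the intersection $\bigcap_{e'\in\calN_\epsilon}\{\Phi(\itV_k;\matU e')\le\tau_k/d\}$ forces the information exponent to scale with $r\tau_k$ instead of $\tau_k$, which propagates an extra factor of $r$ through to the final rate; and it no longer fits the per-$\theta$ truncation structure of Proposition~\ref{prop:info_th_rkone}. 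The paper's resolution is a conditioning trick you are missing (Proposition~\ref{prop:rk_k_info_th_prop}): for fixed $e$, condition on $\matU(I-ee^\top)$. Then $\matM=\matW+\boldlam\matU(I-ee^\top)\matU^\top+\boldlam(\matU e)(\matU e)^\top$, the middle term is known to the algorithm, and the problem collapses to a genuine rank-one spike $\matU e$, now uniform on a sphere of dimension $d-r-1$. Proposition~\ref{prop:likelihood_info} applies verbatim (its proof only uses that $u$ is a fixed unit vector), so the information exponent is $\tfrac{\eta(1+\eta)\boldlam^2}{2}\tau_k$ with no factor of $r$; the only cost is the correction $c_{d,r}=d/(d-r-1)$ in the entropy term from the reduced sphere dimension, which is exactly why the hypothesis reads $\rho\ge\boldlam^3 c_{d,r}$.

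There is a second gap: your single geometric sequence $\tau_k=(\rho^{2k}-1)\Delta$ controls $\{\forall k:\ d\Phi(\itV_k;\matU e)\le\tau_k\}$, but $\calE(\rho^2,\Delta,k_{\max})$ is a \emph{per-step ratio} condition, and the two are not equivalent---the trajectory can stay below $\tau_k$ yet violate $\calE$ at some $k$ where $d\Phi(\itV_{k-1};\matU e)$ happens to be small. The paper handles this (Lemma~\ref{lem:one_fixed_e}) by fixing $k$ and binning the realized value of $d\Phi(\itV_{k-1};\matU e)$ into at most $d/(\rho-1)$ intervals $[\tau_{i-1},\tau_i]$ with $\tau_i+\Delta=\rho(\tau_{i-1}+\Delta)$, then applying the rank-one recursion on each bin; the union bound over bins and over $k\le d$ is what produces the $d^2/(\rho-1)$ factor absorbed into $(20d/(\rho-1))^{r+2}$. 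Your net argument for the last step is essentially the paper's (Claim~\ref{claim:rk_k_covering claim}), with the same loss from $\rho$ to $\rho^2$.
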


\subsection{Proof of Proposition~\ref{prop:CalE_prob}}
We will proeceed by arguing that an analogue of $\calE(\lamtil,\Delta,k_{\max})$ holds for a fixed $e \in \R^{r-1}$, and then extending to all of $\calS^{r-1}$ via a covering argument. Our first step is to prove an analogue of  Proposition~\ref{prop:recur_prop_rkone} for the potential $\Phi(\itV_k;\matU e)$. This ends up between very similar to the rank-one case, with the modification that we end up conditioning on the matrix $\matU (I - ee^\top)$, and consequently pay a slight penalty (see the factor $c_{d,r}$ below) for reducing the effective problem dimension from estimating a random vector in $\R^d$ to one in $\R^{d - r - 1}$. Precisely, we have the following:

\begin{prop}\label{prop:rk_k_info_th_prop}  Define the constant $c_{d,r} := \frac{d}{d-r-1}$. Fix an $\eta > 0$, $k \ge 0$, and let $\tau_k,\tau_{k+1} \ge 0$, with $\tau_0 = 0$. Then for any fixed $e \in \R^r$
\begin{equation}\label{eq:recur_bound_rkr}
\begin{aligned}
&\Exp_{\matU}\Prit^k_{\matU}[\{ \Phi(\itV_k;\matU e) \le \tau_k/d\} \cap \{ \Phi(\itV_{k+1};\matU e)> \tau_{k+1}/d\} ] \le \\
&\exp\left\{ \frac{\eta}{2}  \left(\boldlam^2 \tau_k  - \frac{\left(\sqrt{c_{d,r}\tau_{k+1}}-\sqrt{2k+2}\right)^2}{1+\eta}\right) \right\}. 
\end{aligned}
\end{equation}
\end{prop}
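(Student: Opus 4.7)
The proof mirrors that of Proposition~\ref{prop:recur_prop_rkone}, with the additional ingredient of a conditioning argument that reduces the rank-$r$ problem to an effective rank-$1$ one. Without loss of generality assume $\|e\|=1$ (the general case follows by rescaling $\tau_k, \tau_{k+1}$), set $P := I_r - ee^\top$, and let $u := \matU e$. Condition on the $\sigma$-algebra $\calF_P$ generated by $\matU P$: the image of $\matU P$ is then a fixed $(r-1)$-dimensional subspace of $\R^d$, and $u$ is uniform on the unit sphere of its $(d-r+1)$-dimensional orthogonal complement. It suffices to establish \eqref{eq:recur_bound_rkr} conditional on $\calF_P$ with a bound uniform in $\matU P$; the claim then follows by taking the outer expectation over $\matU P$.

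Decomposing $\matU\matU^\top = uu^\top + \matU P \matU^\top$, Lemma~\ref{ConditionalLemma} gives that, under $\Prit_\matU$ conditioned on $\calF_P$ and $\itZ_{i-1}$, one has $\wi \sim \calN(\boldlam \langle u, \vi\rangle \itP_{i-1} u + \boldlam \itP_{i-1}\matU P \matU^\top \vi, \, d^{-1}\itSigma_i)$. The second summand in the mean is $\calF_P$-measurable once the past queries are fixed, so it is a deterministic offset. Introducing the partially-planted reference measure $\Prit_{\matU P}^{(0)}$ corresponding to $\matM = \matW + \boldlam \matU P \matU^\top$, under which $\wi$ has conditional mean exactly this offset, the offset cancels in the likelihood ratio, and $\rmd\Prit_\matU / \rmd \Prit_{\matU P}^{(0)}$ coincides with the rank-$1$ likelihood ratio of Section~\ref{sec:body_rk_one_thm} with planted direction $u$.

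Now apply Theorem~\ref{Fano_sub_distr} with $f(x)=x^{1+\eta}$ conditional on $\calF_P$, the prior $\calP$ being the uniform law of $u$ on the sub-sphere, reference $\Prit_{\matU P}^{(0)}$, and the indicator of $\{\Phi(\itV_{k+1};u) > \tau_{k+1}/d\}$ truncated by $\{\Phi(\itV_k; u) \le \tau_k/d\}$, exactly as in the proof of Proposition~\ref{prop:info_th_rkone}. Since the peeling argument underlying Proposition~\ref{prop:likelihood_info} depends only on $\|u\|\le 1$ and on the pseudo-inverse bound~\eqref{eq:itSigma_eq}, neither of which involves the ambient dimension, the information term admits the same bound $\exp\bigl(\tfrac{\eta(1+\eta)}{2}\boldlam^2 \tau_k\bigr)$ as in the rank-$1$ case. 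For the small-ball term, $u$ is uniform on the unit sphere of a $(d-r+1)$-dimensional subspace, and the concentration-of-measure argument underlying Lemma~\ref{lem:small_ball_prob} applied there---after converting back to the ambient $d$-normalization, with two additional dimensions of slack to accommodate the chi-squared tail---yields
\begin{equation*}
\sup_{V \in \Stief(d,k+1)} \Pr[\Phi(V;u) > \tau_{k+1}/d \mid \calF_P] \le \exp\Bigl(-\tfrac{1}{2}\bigl(\sqrt{c_{d,r}\tau_{k+1}} - \sqrt{2(k+1)}\bigr)^2\Bigr)
\end{equation*}
with $c_{d,r} = d/(d-r-1)$. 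Combining the two bounds via the data-processing inequality and simplifying (as in the derivation at the end of Section~\ref{sec:info_th_rkone}) produces the right-hand side of \eqref{eq:recur_bound_rkr}; since the bound is uniform in $\matU P$, averaging over $\calF_P$ completes the proof.

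\textbf{Main obstacle.} The key subtlety is choosing the correct reference measure: using the ``full null'' $\Prit_0$ would entangle $u$ with the $(r-1)$ nuisance directions in the likelihood ratio and spoil the rank-$1$ reduction, whereas the partially-planted null $\Prit_{\matU P}^{(0)}$ preserves the rank-$1$ structure and allows Proposition~\ref{prop:likelihood_info} to be reused verbatim. The dimension-counting that yields the precise constant $c_{d,r}$ is elementary but must be tracked carefully through the small-ball estimate.
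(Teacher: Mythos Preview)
Your proposal is correct and follows essentially the same approach as the paper: condition on $\matU P_e$ (the paper writes this as conditioning on the event $\{(\matU-U)P_e = 0\}$), observe that $\matU e$ is then uniform on a sub-sphere and that subtracting the known offset $\boldlam \matU P_e \matU^\top$ reduces the problem to a rank-one deformed Wigner, and then invoke Proposition~\ref{prop:info_th_rkone} with Proposition~\ref{prop:likelihood_info} for the information term and Lemma~\ref{lem:small_ball_prob} (on the lower-dimensional sphere) for the entropy term. Your framing via the partially-planted reference measure $\Prit_{\matU P}^{(0)}$ is exactly what the paper does implicitly when it says ``an algorithm which achieves \dots implies the existence of an algorithm'' for the rank-one problem; the two phrasings are equivalent.
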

Proposition~\ref{prop:rk_k_info_th_prop} is proved in Appendix~\ref{sec:rk_k_info_th_prop}. We can now prove that a point-wise analogue of $\calE(\lamtil,\Delta,k_{\max})$ for each $e \in \calS^{r-1}$ holds for $\lamtil = \boldlam^3 c_{d,r}$:
\begin{lem}\label{lem:one_fixed_e} Let $\rho \ge \boldlam^3 c_{d,r}$, fix $k_{\max} \ge 1$, and set $\Delta \ge \frac{\rho(2k_{\max}+2)}{(\rho - 1)^3}$.  Then, for any fixed $k \in [k_{\max}] = \{1,\cdots,k_{\max}\}$, 
\iftoggle{acm}
{
	\begin{multline*}
	\Pr[ \exists k \in [k_{\max}] : d\Phi(\itV_k;\matU e) + \Delta \ge \rho (d\Phi(\itV_{k-1}; \matU e) + \Delta)] \\
	\le\frac{d^2}{\rho - 1}\exp\left\{ \frac{-\boldlam^3(\boldlam - 1)\Delta}{2}  \right\}.
	\end{multline*}
}
{
	\begin{eqnarray*}
	\Pr[ \exists k \in [k_{\max}] : d\Phi(\itV_k;\matU e) + \Delta \ge \rho (d\Phi(\itV_{k-1}; \matU e) + \Delta)] \le \frac{d^2}{\rho - 1}\exp\left\{ \frac{-\boldlam^3(\boldlam - 1)\Delta}{2}  \right\}.
	\end{eqnarray*}
}
\end{lem}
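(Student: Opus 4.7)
Write $\Psi_k := d\Phi(\itV_k;\matU e) + \Delta$, so that $\Psi_k \in [\Delta, d+\Delta]$ deterministically (since $\Phi(\itV_k;\matU e) \le \|\matU e\|^2 \le 1$), and the event to be controlled is $\bigcup_{k=1}^{k_{\max}}\{\Psi_k \ge \rho\Psi_{k-1}\}$. The plan is a union bound over $k$, combined for each $k$ with a geometric discretization of the range of $\Psi_{k-1}$ that lets us invoke Proposition~\ref{prop:rk_k_info_th_prop} on each piece.

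Pick a ratio $q \in (1,\rho/\boldlam^3)$; the canonical choice is $q = \rho/\boldlam^3$, which reduces to $c_{d,r}$ in the extremal case $\rho = \boldlam^3 c_{d,r}$. Cover $[\Delta,d+\Delta]$ by intervals $I_j := [\Delta q^j,\Delta q^{j+1})$ for $j=0,\ldots,J-1$, where $J = \lceil\log_q(1+d/\Delta)\rceil$. On the intersection of $I_j$ with $\{\Psi_k \ge \rho\Psi_{k-1}\}$ one has $\Psi_{k-1}<\Delta q^{j+1}$ and, using the lower bound $\Psi_{k-1}\ge \Delta q^j$, also $\Psi_k\ge \rho\Delta q^j$; equivalently,
\begin{equation*}
d\Phi(\itV_{k-1};\matU e) \le \Delta(q^{j+1}-1) =: \tau_{k-1}^{(j)}, \qquad d\Phi(\itV_k;\matU e)\ge \Delta(\rho q^j -1) =: \tau_k^{(j)}.
\end{equation*}
Applying Proposition~\ref{prop:rk_k_info_th_prop} with these thresholds and with the choice $\eta = \boldlam - 1$ (so that $(1+\eta)\boldlam^2 = \boldlam^3$) bounds the probability of this piece by $\exp\!\big\{\tfrac{\boldlam-1}{2}\big(\boldlam^2\tau_{k-1}^{(j)} - \tfrac{1}{\boldlam}(\sqrt{c_{d,r}\tau_k^{(j)}}-\sqrt{2k+2})^2\big)\big\}$. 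The hypothesis $\Delta \ge \rho(2k_{\max}+2)/(\rho-1)^3$ is precisely tuned to ensure $c_{d,r}\tau_k^{(j)} \ge c_{d,r}(\rho-1)\Delta$ dominates $2k+2$ uniformly in $(j,k)$, so one can absorb the $\sqrt{2k+2}$ slack into a constant factor; combining this with $c_{d,r}\rho\ge \boldlam^3 q$ brings the exponent to at most $-\boldlam^3(\boldlam-1)\Delta/2$ uniformly in $j$ and $k\in[k_{\max}]$.

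A union bound over $j \in \{0,\ldots,J-1\}$ and $k \in [k_{\max}]$ then yields the stated estimate, using $k_{\max}\le d$ and $J\le d/(\rho-1)$; the bound on $J$ follows from $\log q \gtrsim (\rho-1)/\boldlam^3$ for the chosen $q$ together with the crude $\log(1+d/\Delta)\le d/\Delta$, which is controlled by the hypothesis on $\Delta$. The main technical step, and where I expect to spend the most effort, is the calibration at the end of the second paragraph: one must pick $q$ small enough that the per-interval decay is genuinely $\boldlam^3(\boldlam-1)\Delta/2$, yet large enough that the number of intervals stays $\lesssim d/(\rho-1)$, and simultaneously verify that the cubic lower bound on $\Delta$ in the hypothesis is exactly what is needed to dominate the $\sqrt{k+1}$ slack in Proposition~\ref{prop:rk_k_info_th_prop} across all scales $j$.
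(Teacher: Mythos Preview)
Your overall strategy—geometric binning of $\Psi_{k-1}$, one application of Proposition~\ref{prop:rk_k_info_th_prop} per bin, then a union bound over bins and over $k$—is exactly the paper's approach. The difference is in the calibration, and that is where your proposal breaks. With your canonical choice $q=\rho/\boldlam^3$ and $\eta=\boldlam-1$, the Proposition~\ref{prop:rk_k_info_th_prop} exponent on bin $j$ (dropping the $\sqrt{2k}$ slack) is
\[
\frac{(\boldlam-1)\Delta}{2\boldlam}\Bigl(q^j(\boldlam^3 q - c_{d,r}\rho)\;-\;(\boldlam^3-c_{d,r})\Bigr).
\]
Your condition $c_{d,r}\rho\ge\boldlam^3 q$ kills the $q^j$ term, but the remaining constant gives only $-\tfrac{(\boldlam-1)(\boldlam^3-c_{d,r})}{2\boldlam}\Delta$, which for $\boldlam$ near $1$ is of order $(\boldlam-1)^2\Delta$, a full factor of $\boldlam-1$ weaker than the claimed $-\tfrac{\boldlam^3(\boldlam-1)}{2}\Delta$. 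In fact no $q>1$ can recover the stated exponent from your binning under the bare hypothesis $\rho\ge\boldlam^3 c_{d,r}$: the $j=0$ case would force $c_{d,r}(\rho-1)>\boldlam^4$, which fails in the regime of interest. Your bin-count bound is also off: with $q=\rho/\boldlam^3$ and the extremal $\rho=\boldlam^3 c_{d,r}$ one has $\log q=\log c_{d,r}\approx (r{+}1)/d$, not $\gtrsim(\rho-1)/\boldlam^3$, so $J$ is not controlled by $d/(\rho-1)$.

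The paper instead bins with ratio $\rho$ itself, setting $\tau_0=0$, $\tau_{i+1}=\rho\tau_i+(\rho-1)\Delta$; the additive increment $\tau_i-\tau_{i-1}\ge(\rho-1)\Delta$ then gives the bin count $M\le d/((\rho-1)\Delta)\le d/(\rho-1)$ directly, without any appeal to $\log q$. With this grid the relation $\tau_i+\Delta=\rho(\tau_{i-1}+\Delta)$ lets the paper push the threshold on $d\Phi(\itV_k;\matU e)$ up by an extra factor of $\rho$ relative to the upper bin endpoint used for $d\Phi(\itV_{k-1};\matU e)$, and it is this $\rho$-sized multiplicative separation—together with $\rho\ge\boldlam^3 c_{d,r}$ and the hypothesis $\Delta\ge\rho(2k_{\max}+2)/(\rho-1)^3$ to absorb the $\sqrt{2k+2}$ slack—that drives the exponent down to the stated $-\boldlam^3(\boldlam-1)\Delta/2$. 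Your $q$-grid with $q<\rho$ trades this structural separation for one of size $\rho/q$, which is why your "canonical" choice ends up with the wrong exponent; to match the paper you should bin at ratio $\rho$ and reproduce its threshold-shifting step rather than trying to tune $q$.
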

The lemma is established by first fixing a $k \in [k_{\max}]$, ``binning" $d\Phi(\itV_{k-1}; \matU e)$ into at most $\frac{d}{1-\rho}$ intervals $[\tau_{i-1},\tau_i]$, applying Proposition~\ref{prop:rk_k_info_th_prop} and then using union bound over over all $k_{\max} \le d$ time steps. To conclude the proof of Proposition~\ref{prop:CalE_prob}, we invoke a simple covering argument to extend to all $e \in \calS^{r-1}$; details are given in Section~\ref{sec:prop:CalEprob_details}.

%!TEX root = main.tex

\section{Spectrum of Deformed Wigner Model\label{sec:Wig_Spec}}

In this section, we establish that for $\boldlam > 1$, the top $r$ eigenvalues of $\matM = \matW + \boldlam \matU \matU^\top$ concentrate around $\boldlam + \lambdainv$, while the magnitude of the remaining eigenvalues lie below $2 + o_d(1)$. While results of this flavor are standard in the asymptotic regime in which $\boldlam$ and $r$ are held as fixed constants as $d \to \infty$~\cite{capitaine2009largest,benaych2011eigenvalues} our lower bounds require that $d$ can be taken to be polynomial in $r$, $\boldlam$, and $\boldgap^{-1}$. 

\begin{thm}\label{thm:main_spec_thm} There exists a universal constant $C \ge 0$ such that the following holds. Let $\matM = \matW + \boldlam \matU\matU^\top \in \Symd$, and let $\boldgap$ be as in~\eqref{eq:boldgapeq}. Let $\kappa \le 1/2$, $\epsilon \le \boldgap \cdot \min\{\frac12, \frac{1}{\boldlam^2 - 1}\}$, and $\delta >0$. Then for 
\begin{eqnarray}
d \ge  C \left(\frac{(r+\log(1/\delta)) }{ \boldgap \epsilon^2} + (\kappa\boldgap)^{-3} \log(1/\kappa\boldgap) \right),
 \end{eqnarray} 
 the event $\calE_{\matM}$ defined below holds with probability at least $1 - 9\delta$:
 \iftoggle{acm}
 {
	\begin{eqnarray*}
	\calE_{\matM} &:=& \left\{ \|\matW\|_{\op}  \le 2 + \kappa (\lambda + \lambdainv - 2) \right\} \\
	&\bigcap& \left\{[\lambda_{r}(\matM), \lambda_1(\matM)] \subset (\boldlam + \lambdainv)[1-\epsilon,1+\epsilon]\right\}.
	\end{eqnarray*}
}
{
	\begin{eqnarray*}
	\calE_{\matM} := \left\{ \|\matW\|_{\op}  \le 2 + \kappa (\lambda + \lambdainv - 2) \right\} \bigcap \left\{[\lambda_{r}(\matM), \lambda_1(\matM)] \subset (\boldlam + \lambdainv)[1-\epsilon,1+\epsilon]\right\}.
	\end{eqnarray*}
}

Moreover, on $\calE_\matM$, $\lambda_{r}(\matM) - \|\matW\|_{\op} \ge (1 - \frac{\epsilon}{\boldgap})(1 - \kappa) (\boldlam + \lambdainv) \boldgap  \ge (\boldlam + \lambdainv) \boldgap /4$.
\end{thm}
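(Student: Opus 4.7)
}
The plan is to establish the two ingredients of $\calE_\matM$ separately. The upper bound on $\|\matW\|_{\op}$ is a finite-sample version of the classical edge theorem for GOE. The localization of the top $r$ eigenvalues of $\matM = \matW + \boldlam \matU\matU^\top$ around $\boldlam + \lambdainv$ is the main technical content and proceeds through a secular equation coupled with concentration of the resolvent quadratic form $G(z) := \matU^\top (zI - \matW)^{-1}\matU$ around $\stielt(z) I_r$, where $\stielt$ denotes the Stieltjes transform of the semicircle law.

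For the operator norm of $\matW$, I would combine the mean estimate $\Exp\|\matW\|_{\op} \le 2 + C d^{-2/3}$ (obtainable from Gordon's comparison inequality, or from the tridiagonal representation) with Gaussian concentration, using that $\|\matW\|_{\op}$ is a $\sqrt{2/d}$-Lipschitz function of the independent Gaussian entries, so $\Pr[\|\matW\|_{\op} \ge \Exp\|\matW\|_{\op} + t] \le \exp(-dt^2/4)$. Choosing $t = \Theta(\kappa\boldgap)$ and exploiting the lower bound $d \gtrsim (\kappa\boldgap)^{-3}\log(1/(\kappa\boldgap))$ yields $\|\matW\|_{\op} \le 2 + \kappa(\boldlam + \lambdainv - 2)$ with probability $1 - \delta$; this accounts for the second term in the hypothesis on $d$.

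For the top eigenvalues, the secular equation states that for $z \notin \spec(\matW)$, $z$ is an eigenvalue of $\matM$ if and only if $\det(I_r - \boldlam G(z)) = 0$. Since $\stielt(\boldlam + \lambdainv) = -\lambdainv$, if $G(z) \approx \stielt(z) I_r$ uniformly over a window around $\boldlam + \lambdainv$, the secular equation has exactly $r$ roots (one per eigendirection of $G$) collapsing toward $\boldlam + \lambdainv$. To make this quantitative, conditional on $\matW$, I would prove a Hanson-Wright inequality on the Stieffel manifold: for $\matU \sim \Stief(d,r)$ and any symmetric $A$, $\matU^\top A\matU$ concentrates around $\frac{\tr A}{d} I_r$, with deviations controlled by log-Sobolev concentration on the orthogonal group. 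Specializing to $A = (zI - \matW)^{-1}$ gives $G(z) \approx \tfrac{1}{d}\tr(zI - \matW)^{-1} \cdot I_r$. Separately, a pointwise Stieltjes estimate shows $\tfrac{1}{d}\tr(zI - \matW)^{-1} \approx \stielt(z)$, by combining Gaussian concentration of the resolvent trace (Lipschitz constant $O(1/(d\,\dist(z,\spec(\matW))^2))$) with bias control via the self-consistent equation $\stielt_\matW(z) \approx -1/(z + \stielt_\matW(z))$. Taking a short $\epsilon$-net in $z$ over $(\boldlam + \lambdainv)[1-\epsilon, 1+\epsilon]$, using local Lipschitzness of $G$ (safe because the bound on $\|\matW\|_{\op}$ keeps $z$ away from $\spec(\matW)$), and invoking monotonicity of the secular equation then pins down the top $r$ roots within the desired window. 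The main obstacle is making the resolvent and Stieltjes estimates sharp at the edge while keeping $d$ only polynomial in $\boldgap^{-1}$: the dimension requirement $d \gtrsim (r + \log(1/\delta))/(\boldgap \epsilon^2)$ arises from balancing the rank-$r$ Hanson-Wright deviation against the spectral derivative $\stielt'(\boldlam + \lambdainv) \asymp \boldgap^{-1}$, and carefully tracking the $(1-\boldgap)^{-1}$ factors will be essential to avoid overshooting the stated dependence.

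For the gap lower bound, on $\calE_\matM$ a direct subtraction gives
\begin{equation*}
\lambda_r(\matM) - \|\matW\|_{\op} \;\ge\; (1-\epsilon)(\boldlam + \lambdainv) - 2 - \kappa(\boldlam + \lambdainv - 2) \;=\; (1-\kappa)(\boldlam + \lambdainv - 2) - \epsilon(\boldlam + \lambdainv).
\end{equation*}
Since $\boldgap = (\boldlam + \lambdainv - 2)/(\boldlam + \lambdainv)$, this rewrites as $(1-\kappa)(1 - \epsilon/\boldgap)(\boldlam + \lambdainv)\boldgap$, which under the hypotheses $\kappa \le 1/2$ and $\epsilon \le \boldgap/2$ is at least $(\boldlam + \lambdainv)\boldgap/4$, yielding the final conclusion.
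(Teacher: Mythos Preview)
Your proposal follows essentially the same architecture as the paper's proof: a non-asymptotic edge bound on $\|\matW\|_{\op}$ via Gaussian concentration, the secular equation $\det(I_r - \boldlam \matU^\top(zI-\matW)^{-1}\matU)=0$, a Stieffel Hanson--Wright inequality to reduce $\matU^\top(zI-\matW)^{-1}\matU$ to $S_\matW(z)I_r$, a finite-sample Stieltjes estimate $S_\matW(z)\approx\stielt(z)$, and the direct subtraction for the gap. One methodological difference: you propose an $\epsilon$-net in $z$ over the target window, whereas the paper evaluates the resolvent only at the two endpoints $\alow=(\boldlam+\lambdainv)(1-\epsilon)$ and $\aup=(\boldlam+\lambdainv)(1+\epsilon)$, shows $I_r-\boldlam G(\alow)\preceq 0$ and $I_r-\boldlam G(\aup)\succeq 0$, and then invokes a continuity argument (Proposition~\ref{Distinct_Eig_Prop}) to conclude the $r$ roots lie in between; this avoids the union bound over a net and requires only two applications of Hanson--Wright and the Stieltjes bound. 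A subtlety you skate over is that the secular determinant must vanish at $r$ \emph{distinct} points, which the paper handles by a separate density argument. Two small slips: $\stielt(\boldlam+\lambdainv)=\lambdainv$, not $-\lambdainv$; and your factorization of the gap as $(1-\kappa)(1-\epsilon/\boldgap)(\boldlam+\lambdainv)\boldgap$ does not equal $(1-\kappa)(\boldlam+\lambdainv-2)-\epsilon(\boldlam+\lambdainv)$ exactly (the paper derives the product form via a different decomposition $\frac{\alow - z^*}{\alow - 2}\cdot\frac{\alow-2}{\boldlam+\lambdainv-2}$), though either route gives the final $\ge(\boldlam+\lambdainv)\boldgap/4$.
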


The proof begins with the standard observation that the eigenvalues of $\matM$ are precisely the zeros of the function $z \mapsto \det(zI - \matM ) =\det(zI - \matW + \boldlam \matU\matU^\top )$. In particular, if $z > \lambda_{\max}(\matW)$, then $zI - \matW$ is invertible, and by standard determinant identities, we have
\begin{eqnarray*}
\det(zI - \matM ) &=&\det(zI - \matW + \boldlam \matU\matU^\top)\\
 &=& \det(zI - \matW )\det(I - \boldlam \matU^\top(zI - \matW)^{-1}\matU)~. 
\end{eqnarray*}
In other words, $z > \lambda_{\max}(\matW)$ is in $\spec(\matM)$ if and only if $\det(I - \boldlam \matU^\top(zI - \matW)^{-1}\matU) = 0$. Given $\epsilon, \kappa$ as in Theorem~\ref{thm:main_spec_thm}, we show that for $z^* \le 2 + \kappa (\lambda + \lambdainv - 2) = 2 + o_d(1)$, and for the values
\begin{eqnarray*}
	\alow = (\boldlam + \lambdainv)(1-\epsilon) & \text{and} & \aup  = (\boldlam + \lambdainv)(1-\epsilon) ~,
\end{eqnarray*}
it simultaneously holds with high probability that $\|\matW\|_{\op} \le z^*$ and $z \mapsto \det(I - \boldlam \matU^\top(zI - \matW )^{-1}\matU)$ vanishes for $r$ distinct values of $z \in [\alow,\aup]$. This will imply that at least $r$ of the eigenvalues of $\matM$ lie in $[\alow,\aup]$. Note that, by eigenvalue interlacing, it also follows that the remaining eigenvalues of $\matM$ lie in $[\lambda_{\min}(\matW),\lambda_{\max}(\matW)] \subseteq [-\|\matW\|_{\op},\|\matW\|_{\op}]$. We will proceed by showing that the eigenvalues of the matrix $I_r - \boldlam \matU^\top(z I-\matW)^{-1}\matU$ are all negative when $ z^*< z < \alow$ and are all positive when $ z > \aup$.
This motivates the definition of the events $\calA(z^*) := \{\|\matW\|_\op \le z^* \}$, 
\begin{eqnarray*}
 \Elow(z) &:=& \left\{I_r - \boldlam \matU^\top(z I-\matW)^{-1}\matU \preceq  0 \right\},
 \mbox{ and } \\
 \Eup(z) &:=& \left\{I_r - \boldlam \matU^\top(z I-\matW)^{-1}\matU \succeq 0 \right\}.
\end{eqnarray*}
%\begin{eqnarray}
% \Elow(\alow) := \left\{\matU^\top(\alow I-\matW)^{-1}\matU \succeq \frac{1}{\lambda}I_k\right\}, 
% \mbox{ and } 
% \Eup(\aup) := \left\{\matU^\top(\aup I-\matW)^{-1}\matU \preceq \frac{1}{\lambda}I_k\right\}.
%\end{eqnarray}
Then, using a continuity argument, we derive a useful sufficient condition for $\det(I_r - \boldlam \matU^\top(zI_r - \matW )^{-1}\matU)$ to vanish at $r$ distinct points. 
%The motivation for $\Elow$ and $\Eup$ is as follows: as $a \to z^*$ from the left, then $\matU^\top(aI_d - \matW )^{-1}\matU$ explodes; as $a \to \infty$, $\matU^\top(aI_d - \matW )^{-1}\matU$ tends to zero. Hence, the events $\Elow(\alow)$ and $\Eup(\aup)$ create an ideal region where one of the eigenvalues of $\lambda \matU^\top(\aup I-\matW)^{-1}\matU$ are approximately $1$, and hence we have a hope that $\det(I - \lambda \matU^\top(aI - \matW)^{-1}\matU)$ will vanish $k$-times. 
\begin{prop}\label{Distinct_Eig_Prop} The exists a zero-measure event $\calN$ such that, on $\calN^c \cap \calA(z^*) \cap \Eup(\aup) \cap \Elow(\alow)$, the function $z \mapsto \det(I - \boldlam \matU^\top(zI - \matW )^{-1}\matU)$ vanishes at $r$ distinct points in $[\alow,\aup]$.
\end{prop}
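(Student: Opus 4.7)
The plan is to analyze the matrix-valued function $H(z) := I_r - \boldlam \matU^\top (zI - \matW)^{-1}\matU$, which is smooth and symmetric on the open set $(\|\matW\|_{\op},\infty)$. On $\calA(z^*)$ we have $\|\matW\|_{\op} \le z^* < \alow$, so $H$ is well-defined on an open interval containing $[\alow,\aup]$. The determinantal identity
\begin{equation*}
\det(zI - \matM) \;=\; \det(zI - \matW)\,\det H(z) \qquad (z > \|\matW\|_{\op})
\end{equation*}
shows that zeros of $z\mapsto\det(I - \boldlam\matU^\top(zI-\matW)^{-1}\matU)$ in $(\|\matW\|_{\op},\infty)$ are exactly the eigenvalues of $\matM$ in that range. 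Differentiating,
\begin{equation*}
\frac{\rmd H}{\rmd z}(z) \;=\; \boldlam\, \matU^\top (zI - \matW)^{-2}\matU \;\succ\; 0,
\end{equation*}
because $\matU \in \Stief(d,r)$ has full column rank $r$ and $(zI-\matW)^{-2}\succ 0$ on the domain. Writing $\mu_1(z) \le \dots \le \mu_r(z)$ for the eigenvalues of $H(z)$, Weyl's monotonicity principle combined with the above implies each $\mu_i(\cdot)$ is continuous and \emph{strictly} increasing on $(z^*,\infty)$.

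On $\Elow(\alow)\cap\Eup(\aup)$ we have $H(\alow)\preceq 0$ and $H(\aup)\succeq 0$, so $\mu_i(\alow)\le 0 \le \mu_i(\aup)$ for every $i$. Strict monotonicity and the intermediate value theorem therefore produce, for each $i\in\{1,\dots,r\}$, a unique $z_i \in [\alow,\aup]$ at which $\mu_i(z_i)=0$. Each such $z_i$ is a zero of $\det H(z)$, and hence of the function in the proposition; so we obtain at least $r$ zeros in $[\alow,\aup]$, counted with multiplicity.

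To upgrade this to $r$ \emph{distinct} zeros I would introduce the event $\calN := \{\matM \text{ has a repeated eigenvalue}\}$. Conditional on $\matU = U$, the law of $\matM = \matW + \boldlam UU^\top$ is a translate of $\GOE(d)$ and hence absolutely continuous on $\Symd$; the set of symmetric matrices with a repeated eigenvalue is a proper semi-algebraic subset of $\Symd$ of positive codimension, thus Lebesgue-null. Integrating over $\matU$ gives $\Pr[\calN]=0$. On $\calN^c$, any two coinciding $z_i$'s would correspond to a repeated eigenvalue of $\matM$ exceeding $\|\matW\|_{\op}$, which is ruled out; hence the $r$ zeros are distinct.

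The chief technical point is this last distinctness argument. It is tempting to try to separate the crossing points by a direct real-analytic study of the curves $z\mapsto \mu_i(z)$, but two such curves can in principle coincide, and ruling this out precisely at their zero-crossings would be delicate. The clean workaround is to push the genericity into an almost-sure statement about the spectrum of $\matM$ itself via absolute continuity of $\GOE$; the remaining ingredients (Löwner monotonicity, the intermediate value theorem, and the determinantal identity) are routine.
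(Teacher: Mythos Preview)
Your proof is correct and takes a genuinely different, more elementary route than the paper. Both approaches first note that on $\calA(z^*)\cap\Eup(\aup)\cap\Elow(\alow)$ each eigenvalue curve $z\mapsto\mu_i(z)$ of $H(z)$ is continuous and runs from $\le 0$ to $\ge 0$, so the intermediate value theorem yields a zero $z_i\in[\alow,\aup]$ for each $i$. The divergence is in the distinctness step.

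You observe that $H'(z)=\boldlam\,\matU^\top(zI-\matW)^{-2}\matU\succ 0$, giving strict monotonicity and hence a \emph{unique} zero per curve; then you identify the null event $\calN$ with the event that $\matM$ has a repeated eigenvalue, which is Lebesgue-null because conditional on $\matU$ the matrix $\matM$ is a deterministic translate of $\GOE(d)$ and the discriminant locus in $\Symd$ has positive codimension. On $\calN^c$, two coinciding $z_i$'s would force $\dim\ker H(z_i)\ge 2$, and the standard bijection $w\mapsto \boldlam(z_iI-\matW)^{-1}\matU w$ between $\ker H(z_i)$ and the eigenspace of $\matM$ at $z_i$ (which you allude to but do not spell out; it is worth one line) would yield a repeated eigenvalue of $\matM$, a contradiction.

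The paper instead never invokes monotonicity of $H$ or the spectrum of $\matM$. It defines $\calN$ directly as the event that two eigenvalues of $H(a)$ vanish simultaneously for some $a$, and shows this is null via two lemmas: first, for each fixed $a$ the matrix $H(a)$ has a density on $\Sym^r$ (hence $\psi(a):=(\lambda_1(H(a)),\dots,\lambda_r(H(a)))$ has a Lebesgue density on $\R^r$); second, a general Lipschitz/covering lemma shows that a random Lipschitz curve $a\mapsto\psi(a)$ almost surely misses every codimension-$2$ subspace of $\R^r$. Your argument is shorter and leverages the specific structure of the problem (that the zeros of $\det H$ are exactly eigenvalues of $\matM$ above $\|\matW\|_{\op}$); the paper's argument is more self-contained and would apply even without the determinantal link to $\matM$.
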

We prove Proposition~\ref{Distinct_Eig_Prop} in Section~\ref{sec:distinct_eig_prop}. We are now left with controlling the probabilities of $\calA(z^*)$, $\Eup(\aup)$ and $\Elow(\alow)$. To control $\calA(z^*)$, we combine a non-asymptotic bound on the spectral norm of a Wigner matrix $\matW$ by Bandeira and van Handel~\cite{bandeira2016sharp} with a standard concentration inequality. Note that asymptotic results of the above statement can be found in references such as~\cite{anderson2010introduction}. Vershynin~\cite{vershynin2016high} gives bounds that are sharp up to constant factors. 
\begin{prop}[Bound on $\|W\|_{\op}$]\label{prop:norm_upper_bound} Let $d \ge 250$, and fix a $p \in (0,1)$. Then,
\iftoggle{acm}
{
	
	\begin{eqnarray}
	\Pr[\|\matW\|_\op > z^*] \le p,
	\end{eqnarray}
	where $z^* = z^*(p) := 2 + 21d^{-1/3}\log^{2/3}(d) + 2\sqrt{\log (1/p)/d}$.
}
{
	\begin{eqnarray}
	\Pr[\|\matW\|_\op > z^*] \le p \quad \text{ where } z^* = z^*(p) := 2 + 21d^{-1/3}\log^{2/3}(d) + 2\sqrt{\log (1/p)/d}
	\end{eqnarray}
}

\end{prop}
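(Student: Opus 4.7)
The plan is to combine two standard ingredients: a sharp bound on the expected spectral norm of a Wigner matrix due to Bandeira and van Handel~\cite{bandeira2016sharp}, and Gaussian concentration of measure for Lipschitz functions. The target $z^*(p) = 2 + 21d^{-1/3}\log^{2/3}(d) + 2\sqrt{\log(1/p)/d}$ decomposes naturally into an expectation term $2 + 21d^{-1/3}\log^{2/3}(d)$ and a fluctuation term $2\sqrt{\log(1/p)/d}$, each of which I will handle separately.

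\textbf{Step 1 (expectation bound).} I will invoke the non-asymptotic bound of Bandeira--van Handel, which for a symmetric Gaussian matrix with variance profile $\sigma_{ij}^{2} = \Exp[\matW_{ij}^{2}]$ yields an upper bound of the form
\[
\Exp[\|\matW\|_{\op}] \le 2\sigma + C\,\sigma_{*} \log^{2/3}(d),
\]
where $\sigma^{2} = \max_{i}\sum_{j}\sigma_{ij}^{2}$ and $\sigma_{*} = \max_{ij}\sigma_{ij}$. For our normalization, $\sigma^{2} = 1$ (by counting the row-variance: $(d-1)/d + 2/d = 1 + 1/d \le 1$ up to lower order, which I will absorb) and $\sigma_{*}^{2} = 2/d$, giving $\Exp[\|\matW\|_{\op}] \le 2 + c\cdot d^{-1/3}\log^{2/3}(d)$ for an explicit constant $c$. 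Tracking the constants carefully and using $d \ge 250$ to absorb any lower-order corrections into the leading constant, I expect to be able to conclude $\Exp[\|\matW\|_{\op}] \le 2 + 21 d^{-1/3}\log^{2/3}(d)$. This bookkeeping of constants is the main obstacle in the argument.

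\textbf{Step 2 (concentration).} Parametrize $\matW$ by the independent standard Gaussians $\vec{g} \in \R^{d(d+1)/2}$ via $\matW_{ii} = \sqrt{2/d}\, g_{ii}$ and $\matW_{ij} = \sqrt{1/d}\, g_{ij}$ for $i<j$. A short computation gives $\|\matW\|_{F}^{2} = (2/d)\|\vec{g}\|^{2}$, so the map $\vec{g}\mapsto \|\matW(\vec{g})\|_{\op}$ is $\sqrt{2/d}$-Lipschitz (using $\|\cdot\|_{\op}\le \|\cdot\|_{F}$). The Gaussian concentration inequality for $L$-Lipschitz functions of a standard Gaussian vector then yields
\[
\Pr[\|\matW\|_{\op} \ge \Exp\|\matW\|_{\op} + t] \le \exp\!\left(-\tfrac{t^{2}}{2L^{2}}\right) = \exp\!\left(-\tfrac{dt^{2}}{4}\right).
\]
Setting $t = 2\sqrt{\log(1/p)/d}$ makes the right-hand side equal to $p$.

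\textbf{Step 3 (combination).} Adding the expectation bound from Step 1 and the deviation $t$ from Step 2 gives exactly $z^{*}(p) = 2 + 21 d^{-1/3}\log^{2/3}(d) + 2\sqrt{\log(1/p)/d}$, completing the proof. The only delicate part is ensuring that the Bandeira--van Handel constant combined with the lower-order $1/d$ correction to $\sigma^{2}$ fits under the stated prefactor $21$ for all $d \ge 250$; this is where the mild restriction $d \ge 250$ is used.
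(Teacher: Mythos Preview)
Your approach is essentially identical to the paper's: bound $\Exp\|\matW\|_{\op}$ via Bandeira--van Handel, then apply Gaussian concentration for Lipschitz functions. Your Step~2 matches the paper's concentration lemma; the paper parametrizes by a full $d\times d$ Gaussian $\matX$ with $\matW = (2d)^{-1/2}(\matX+\matX^\top)$ rather than by the upper triangle, but both yield the same Lipschitz constant $\sqrt{2/d}$ and hence the same deviation term.

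The one point to correct is your statement of Bandeira--van Handel in Step~1. Their bound is not of the form $2\sigma + C\sigma_*\log^{2/3}(d)$; it reads
\[
\Exp\|\matW\|_{\op} \le (1+\epsilon)\Bigl\{2\sigma + \tfrac{6}{\sqrt{\log(1+\epsilon)}}\,\sigma_*\sqrt{\log d}\Bigr\}, \qquad \epsilon \in (0,1/2].
\]
The $d^{-1/3}\log^{2/3}(d)$ scaling and the constant $21$ arise only \emph{after} optimizing over $\epsilon$: with $\sigma^2 = (d+1)/d$ and $\sigma_*^2 = 2/d$, the paper sets $\epsilon = d^{-1/3}\log^{2/3}(d)$ to balance the $2\epsilon\sqrt{d}$ term against $18\sqrt{\log d}/\sqrt{\epsilon}$, giving $\sqrt{d}\,\Exp\|\matW\|_{\op} \le 2\sqrt{d} + 21 d^{1/6}\log^{2/3}(d)$. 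The restriction $d \ge 250$ is precisely what ensures this choice of $\epsilon$ lies in $(0,1/2)$. Taken literally, your stated form would yield an $O(d^{-1/2}\log^{2/3} d)$ correction (since $\sigma_*\sim d^{-1/2}$), which is stronger than claimed but is not what the theorem provides.
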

The above proposition is proved in Section~\ref{sec:norm_upper_bound_proof}. We must now control the probabilities of $\Elow$ and $\Eup$. 
Since $\matU$ is uniform on $\Stief(d,r)$ and independent of $\matW$, we expect by concentration that
\begin{eqnarray*}
\matU^\top(z I_d-\matW)^{-1}\matU &\approx& \Exp_{\matU}[\matU^\top(z I_d-\matW)^{-1}\matU] \\
&=& \frac{1}{d}\tr\big((zI_d - \matW)^{-1}\big) \cdot I_r = S_{\matW}(z) \cdot I_r~,
\end{eqnarray*}
where $ S_{\matW}(z) := \frac{1}{d}\tr\big((zI_d - \matW)^{-1}\big)$ the Stieltjes transform of the empirical spectral measure of the Wigner matrix $\matW$. As $d \to \infty$, it is well known~\cite{anderson2010introduction} that for all $z > 2$,
\iftoggle{acm}
{
	\begin{align}\label{eq:stieljes_convergence}
	 S_{\matW}(z) &:=\frac{1}{d}\tr\big((zI_d - \matW)^{-1}\big) ~\overset{prob.}{\longrightarrow}~ \stielt(z)\\
	 \text{where }\stielt(z) &:= \frac{z - \sqrt{z^2 - 4}}{2}\nonumber~.
	\end{align}
}
{
	\begin{eqnarray}\label{eq:stieljes_convergence}
	 S_{\matW}(z) &:=\frac{1}{d}\tr\big((zI_d - \matW)^{-1}\big) ~\overset{prob.}{\longrightarrow}~ \stielt(z),\quad\text{where }\stielt(z) := \frac{z - \sqrt{z^2 - 4}}{2}\nonumber~.
	\end{eqnarray}
}

Therefore we see that $\|\boldlam \matU^\top(zI - \matW )^{-1}\matU - \boldlam \stielt(z) I_r\|_{\op} = o_d(1)$. Finally, we see that the equation $\stielt(z) = \lambdainv$ is solved by $z = \lambda + \lambdainv$, and that $\stielt(\alow) > \lambdainv$ and $\stielt(\aup) < \lambdainv$. Thus, our goal will be to verify that, on $\calA(z^*)$, the following holds for $z \in \{\alow,\aup\}$ with high probability:
\iftoggle{acm}
{
	\begin{multline}\label{eqqq:eig:wts}
	\left|S_{\matW}(z) - \stielt(z)\right| + \|S_{\matW}(z)I_r - \matU^\top(z I_d-\matW)^{-1}\matU\|_{\op} \le |\stielt(z) - \lambdainv|~.
	\end{multline}
}
{
	\begin{eqnarray}\label{eqqq:eig:wts}
	\left|S_{\matW}(z) - \stielt(z)\right| + \|S_{\matW}(z)I_r - \matU^\top(z I_d-\matW)^{-1}\matU\|_{\op} \le |\stielt(z) - \lambdainv|~.
	\end{eqnarray}
}
Indeed, we see that for $z = \alow < \lambda + \lambdainv$, the above equation implies $\Elow$ by the triangle inequality, and similarly if $z = \aup > \lambda + \lambdainv$. To handle the error $\|S_{\matW}(z)I_r - \matU^\top(z I_d-\matW)^{-1}\matU\|_{\op}$, we fix $\matW$ and reason about the above quadratic form in $\matU$ using the following Hanson-Wright style inequality proved in Section~\ref{sec:hanson_wright_proof}:

\begin{prop}[Stieffel Hanson-Wright ]\label{prop:Hanson_wright_proposition} Let $A\in \R^{d \times d}$ be any fixed symmetric matrix, and let $\matU \unifsim \Stief(d,r)$ be uniform on the Stieffel manifold. Then for all $t \le d/4$,
\begin{eqnarray*}
\Pr\left[\left\|\matU^{\top}A\matU - \frac{\mathrm{tr}(A)}{d}\cdot I_r\right\|_{\op} > \frac{8\left(t^{1/2}\|A\|_{\F} + t\|A\|_{\op}\right)}{d(1 - 2\sqrt{t/d})} \right]  \le 3e^{-t+2.2r}
\end{eqnarray*}
\end{prop}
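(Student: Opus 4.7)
My plan is to prove this Stieffel Hanson--Wright inequality by representing $\matU$ as the QR--orthogonalisation of a Gaussian matrix and then reducing the problem to the classical vector Hanson--Wright inequality together with a net argument on $\calS^{r-1}$. First I would reduce to the traceless case: since $\matU^{\top}\matU = I_r$, one has $\matU^{\top}A\matU-\tfrac{\tr(A)}{d}I_r = \matU^{\top}A_0\matU$, where $A_0:=A-\tfrac{\tr(A)}{d}I_d$, and $\|A_0\|_{\F}\le\|A\|_{\F}$, $\|A_0\|_{\op}\le 2\|A\|_{\op}$; thus it suffices to bound $\|\matU^{\top}A_0\matU\|_{\op}$, with the extra factor of two being absorbed into the prefactor $8$. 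Using the fact that if $\bG\in\R^{d\times r}$ has i.i.d.\ $\calN(0,1)$ entries then $\matU$ is equal in distribution to $\bG(\bG^{\top}\bG)^{-1/2}$ (by uniqueness of the Haar measure on $\Stief(d,r)$), and setting $K:=\bG^{\top}\bG$, we obtain the submultiplicative bound $\|\matU^{\top}A_0\matU\|_{\op}\le\|K^{-1}\|_{\op}\cdot\|\bG^{\top}A_0\bG\|_{\op}$.

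The remaining work is to control these two factors on a joint high--probability event. For $\|\bG^{\top}A_0\bG\|_{\op}$ I would fix $v\in\calS^{r-1}$, note that $\bG v\sim\calN(0,I_d)$, and apply the classical scalar Gaussian Hanson--Wright inequality together with $\tr(A_0)=0$ to obtain $\Pr[|v^{\top}\bG^{\top}A_0\bG v|>2(\sqrt{t}\|A_0\|_{\F}+t\|A_0\|_{\op})]\le 2e^{-t}$. Since a $1/4$-net $N$ of $\calS^{r-1}$ has $|N|\le 9^r$ and satisfies $\|M\|_{\op}\le 2\sup_{v\in N}|v^{\top}Mv|$ for every symmetric $M$, a union bound yields $\|\bG^{\top}A_0\bG\|_{\op}\le 4(\sqrt{t}\|A_0\|_{\F}+t\|A_0\|_{\op})$ with failure probability at most $2\cdot 9^r e^{-t}\le 2e^{-t+2.2r}$, using $\log 9\le 2.2$. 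For $\|K^{-1}\|_{\op}=1/\lambda_{\min}(K)$, I would apply Gordon's inequality $\Exp[\sigma_{\min}(\bG)]\ge\sqrt{d}-\sqrt{r}$ combined with Gaussian concentration of the $1$-Lipschitz functional $\bG\mapsto\sigma_{\min}(\bG)$, yielding $\lambda_{\min}(K)\ge d(1-2\sqrt{t/d})$ on an event of probability at least $1-e^{-t+O(r)}$; the condition $t\le d/4$ keeps $1-2\sqrt{t/d}$ bounded away from zero, and the stray $O(r)$ in the exponent folds into the $2.2r$ offset of the final failure probability.

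Combining by union bound gives
\begin{equation*}
\bigl\|\matU^{\top}A\matU - \tfrac{\tr(A)}{d}I_r\bigr\|_{\op} \;\le\; \frac{4(\sqrt{t}\|A_0\|_{\F}+t\|A_0\|_{\op})}{d(1-2\sqrt{t/d})} \;\le\; \frac{8(\sqrt{t}\|A\|_{\F}+t\|A\|_{\op})}{d(1-2\sqrt{t/d})},
\end{equation*}
with failure probability at most $3e^{-t+2.2r}$. The main technical obstacle is bookkeeping constants: both the Gaussian Hanson--Wright tail and the minimum--singular--value bound (where Gordon contributes a $\sqrt{r/d}$ that must be absorbed into the $2.2r$ offset) have to be sharpened or rebalanced so that, after the $A\to A_0$ reduction (factor $2$) and the $1/4$-net argument (another factor $2$ plus the $9^r$ union bound), the prefactor comes out to exactly $8$ and the exponent offset exactly $2.2r$. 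No new conceptual ideas beyond the QR representation and two applications of the classical scalar Hanson--Wright inequality are required; the entire proof is essentially a careful accounting exercise.
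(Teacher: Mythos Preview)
Your strategy via the QR representation goes wrong at the smallest–singular–value step. Gordon's inequality plus Gaussian concentration gives $\Pr[\sigma_{\min}(\bG)<\sqrt{d}-\sqrt{r}-u]\le e^{-u^2/2}$, and since $\sqrt{d(1-2\sqrt{t/d})}\le\sqrt{d}-\sqrt{t}$, the event $\lambda_{\min}(K)\ge d(1-2\sqrt{t/d})$ forces $u\le\sqrt{t}-\sqrt{r}$, which yields failure probability at best $e^{-(\sqrt{t}-\sqrt{r})^2/2}$. The exponent here is $(t+r-2\sqrt{tr})/2$, whose leading term in $t$ is $t/2$, not $t$; no additive $O(r)$ shift can repair that factor of two. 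So the claim ``$\lambda_{\min}(K)\ge d(1-2\sqrt{t/d})$ with probability $1-e^{-t+O(r)}$'' is false, and your route only delivers a final failure bound of order $e^{-t/2+\cdots}$, not the $3e^{-t+2.2r}$ of the proposition.

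The paper avoids this entirely by working per direction \emph{before} any decoupling. The key observation is that for every fixed $v\in\calS^{r-1}$, rotational invariance of the Haar measure gives $\matU v\unifsim\calS^{d-1}$. One then represents this single unit vector as $\matZ/\|\matZ\|$ with $\matZ\sim\calN(0,I_d)$, applies the scalar Gaussian Hanson--Wright to the numerator $\matZ^\top \widetilde{A}\,\matZ$, and uses the one-sided $\chi^2_d$ lower tail $\Pr[\|\matZ\|^2\le d(1-2\sqrt{t/d})]\le e^{-t}$ for the denominator. Because that denominator is a $\chi^2$ with $d$ degrees of freedom (not $d$ minus anything depending on $r$), there is no $r$-loss in this exponent; $r$ enters only through the $9^r$ union bound over the $1/4$-net, which is exactly the $2.2r$ offset. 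Your submultiplicative step $\|\matU^\top A_0\matU\|_{\op}\le\|K^{-1}\|_{\op}\cdot\|\bG^\top A_0\bG\|_{\op}$ forces you to control $\sigma_{\min}(\bG)$ uniformly over \emph{all} directions at once, which is strictly harder and is precisely where the extra loss appears.
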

In particular, if we choose $A = (z I_d-\matW)^{-1}$ and condition on $\calA(z^*)$, we can bound $\|A\|_{\op} \le (z - z^*)^{-1}$, $\|A\|_{\F} \le \sqrt{d}(z-z^*)^{-1}$, and hence conclude that $\matU^\top(z I_d-\matW)^{-1}\matU = S_{\matW}(z) \cdot I_r + o_d(1)$. Next, the term $\left|S_{\matW}(z) - \stielt(z)\right|$ is upper bounded by Theorem~\ref{thm:stiel_thm} (proved below), which establishes a finite sample version of Equation~\eqref{eq:stieljes_convergence}.
\begin{thm}[Stieltjes transform]\label{thm:stiel_thm} Fix $p,\delta \in( 0,1)$ and let $z^*$ given in Proposition~\ref{prop:norm_upper_bound}. Fix an $a \in (2 + \frac{1}{31}(z^*-2),d)$, and assume that $\boldeps := (d(a-z^*)^2)^{-1/2}$ satisfies $\boldeps^2 < \min\{\frac{1}{16\sqrt{2}},\frac{a-2}{32}\}$, and $p^{1/3} < \boldeps/8$. Then there exists an event $\calE_S(a)$ with $\Pr[\calE_S(a)^c] \le 1 - \delta$ and on $\calE_S(a) \cap \calA(z^*)$,
\iftoggle{acm}
{
	\begin{align*}
	&\left|S_\matW(a) - \stielt(a)\right| \le  c_{\delta}\boldeps^2 + 8d^{3/2} p^{1/6} \\
	&\text{where } c_{\delta} := 4\sqrt{2} + 2\sqrt{\log(2/\delta)}.
	\end{align*}
}
{
	\begin{eqnarray*}
	\left|S_\matW(a) - \stielt(a)\right| \le  c_{\delta}\boldeps^2 + 8d^{3/2} p^{1/6} , \text{ where } c_{\delta} := 4\sqrt{2} + 2\sqrt{\log(2/\delta)}.
	\end{eqnarray*}
}
\end{thm}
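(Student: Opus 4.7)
\textbf{Proof plan for Theorem~\ref{thm:stiel_thm}.} The approach is the classical Schur complement / self-consistent equation analysis for the resolvent $G(a) := (aI_d - \matW)^{-1}$, with the nonstandard twist that I track constants carefully enough to handle the nearly-edge regime $a = 2 + O(\boldeps)$ and the rare tail event $\calA(z^*)^c$. Recall that the semicircle Stieltjes transform $\stielt(a)$ is the root in $(0,1)$ of $s^2 - as + 1 = 0$, equivalently the unique bounded fixed point of $s \mapsto 1/(a-s)$. My plan is to show that $S_\matW(a) := \frac{1}{d}\tr G(a)$ nearly satisfies the same fixed-point equation, and then to invert it by linearization.

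For each $i \in [d]$, apply the Schur complement identity
\begin{equation*}
G(a)_{ii} \;=\; \frac{1}{a - \matW_{ii} - \matw_i^\top R^{(i)}(a)\, \matw_i},
\end{equation*}
where $\matW^{(i)}$ is the principal $(d-1)\times(d-1)$ submatrix obtained by deleting row/column $i$, $R^{(i)}(a) := (aI_{d-1} - \matW^{(i)})^{-1}$, and $\matw_i \in \R^{d-1}$ is the $i$-th column of $\matW$ with entry $i$ removed. Conditionally on $\matW^{(i)}$, the vector $\matw_i$ is Gaussian $\calN(0, d^{-1}I_{d-1})$ and independent of $R^{(i)}(a)$. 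On $\calA(z^*)$, eigenvalue interlacing gives $\|\matW^{(i)}\|_\op \le z^*$, hence $\|R^{(i)}(a)\|_\op \le (a-z^*)^{-1}$ and $\|R^{(i)}(a)\|_F \le \sqrt{d}\,(a-z^*)^{-1}$. The standard Gaussian Hanson-Wright inequality therefore yields
\begin{equation*}
\bigl|\matw_i^\top R^{(i)}(a)\matw_i - d^{-1}\tr R^{(i)}(a)\bigr| \;\lesssim\; \sqrt{\log(d/\delta)}\cdot\boldeps^2
\end{equation*}
on an event of conditional probability $\ge 1 - \delta/(2d)$, and $|\matW_{ii}| \lesssim \sqrt{\log(d/\delta)/d}$ by Gaussian tails. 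Combining with the interlacing bound $|d^{-1}\tr R^{(i)}(a) - S_\matW(a)| \le (a-z^*)^{-1}/d \le \boldeps^2$ and taking a union bound over $i$, I obtain on a good event $\calE_S(a)$ of probability $\ge 1-\delta$, intersected with $\calA(z^*)$, that every denominator in the Schur identity differs from $a - S_\matW(a)$ by at most $\tfrac{1}{2}c_\delta \boldeps^2$.

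The final step is to invert the self-consistent equation. Averaging the Schur identity over $i$ and linearizing $x \mapsto 1/x$ around $a - S_\matW(a)$, together with the deterministic lower bound $|a - S_\matW(a)| \gtrsim \sqrt{a-2}$ guaranteed by the hypothesis $\boldeps^2 < (a-2)/32$, yields
\begin{equation*}
\bigl|S_\matW(a)\bigl(a - S_\matW(a)\bigr) - 1\bigr| \;\le\; c_\delta \boldeps^2 \cdot |S_\matW(a)| \cdot |a - S_\matW(a)|^{-1}.
\end{equation*}
Since $\stielt(a)(a - \stielt(a)) = 1$ exactly and the polynomial $s \mapsto s(a-s) - 1$ has derivative $a - 2\stielt(a) \asymp \sqrt{a-2}$ at $s = \stielt(a)$, a one-step Newton-type perturbation gives $|S_\matW(a) - \stielt(a)| \le c_\delta \boldeps^2$ on $\calE_S(a)\cap\calA(z^*)$. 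To produce the $8d^{3/2}p^{1/6}$ term, I handle the complementary event $\calA(z^*)^c$ by a crude moment argument: bound $|S_\matW(a)|$ by $d/(a - \|\matW\|_\op)_+$ only where safe, and otherwise use a sixth-moment estimate on the resolvent combined with Hölder to absorb the contribution of $\calA(z^*)^c$ into an error of order $d^{3/2}p^{1/6}$. The hypothesis $p^{1/3} < \boldeps/8$ then renders this term subdominant.

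\textbf{Main obstacle.} The principal technical challenge is Step~3: controlling the inversion of the fixed-point equation without losing factors of $(a-2)^{-1}$. Near the spectral edge the Jacobian of $s \mapsto s(a-s) - 1$ at its root is only $\sqrt{a-2}$, so naive perturbation loses a factor of $(a-2)^{-1/2}$; the assumption $\boldeps^2 \le (a-2)/32$ is the precise quantitative guarantee that keeps the Newton step within the region where the quadratic model is accurate, so that $|S_\matW(a) - \stielt(a)|$ inherits the $\boldeps^2$ rate rather than the worse $\boldeps^2/\sqrt{a-2}$. Producing the final inequality at the stated sharpness, while simultaneously handling the tail event $\calA(z^*)^c$ without ruining the bound, is where the bulk of the bookkeeping lives.
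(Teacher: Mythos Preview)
Your leave-one-out / Schur complement strategy is a genuinely different route from the paper's, but there is a concrete quantitative error in your Hanson--Wright step, and your account of the $8d^{3/2}p^{1/6}$ term misidentifies its origin.

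\textbf{The Hanson--Wright deviation is $\boldeps$, not $\boldeps^2$.} With the off-diagonal column distributed as $\calN(0,d^{-1}I_{d-1})$ and the deterministic bounds $\|R^{(i)}(a)\|_{\op} \le (a-z^*)^{-1}$, $\|R^{(i)}(a)\|_{\F} \le \sqrt{d}\,(a-z^*)^{-1}$ that you yourself invoke, Hanson--Wright gives a deviation of order
\[
d^{-1}\bigl(\sqrt{t}\,\|R^{(i)}\|_{\F} + t\,\|R^{(i)}\|_{\op}\bigr)
\;\le\; \sqrt{t}\,d^{-1/2}(a-z^*)^{-1} + t\,d^{-1}(a-z^*)^{-1}
\;=\; \sqrt{t}\,\boldeps + t\,\boldeps^2(a-z^*),
\]
so the dominant term is $\sqrt{t}\,\boldeps$, not $\sqrt{t}\,\boldeps^2$. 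The Frobenius bound cannot be sharpened to $O(1)$: since the bulk of the spectrum of $\matW^{(i)}$ sits in $[-2,2]$, one has $\|R^{(i)}\|_{\F}^2 = \sum_j (a-\lambda_j(\matW^{(i)}))^{-2} \gtrsim d$. Because you then control each diagonal resolvent entry separately via a union bound, you never recover the extra factor of $d^{-1/2}$ from averaging that the $\boldeps^2$ rate requires, and you also pick up a $\sqrt{\log d}$ that the stated constant $c_\delta = 4\sqrt{2}+2\sqrt{\log(2/\delta)}$ does not contain.

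\textbf{What the paper does instead, and where the $p$ term actually comes from.} The paper does not work coordinate-by-coordinate. It passes to a complex shift $z = a + b\fraki$ with $b = 8p^{1/3}$ so that $\Exp[S_\matW(z)]$ is finite, and introduces a truncated matrix $\widetilde{\matW}$ with eigenvalues capped at $z^*$ (so $\widetilde{\matW}=\matW$ on $\calA(z^*)$). The key point (Lemma~\ref{lem:Lipschitz Computation}) is that the scalar map from the underlying Gaussian entries to $\Re(S_{\widetilde{\matW}}(z))$ is $O(\boldeps^2)$-Lipschitz, so a single application of the TIS inequality concentrates $S_{\widetilde{\matW}}(z)$ at the $\boldeps^2$ scale with no $\log d$. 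The expectation is then pinned down via the self-consistent identity (Proposition~\ref{prop:exp_comp}) with the variance-like error controlled by Poincar\'e (Lemma~\ref{lem:err_control}). The $8d^{3/2}p^{1/6}$ term is not a correction for $\calA(z^*)^c$---the bound is only asserted on $\calE_S(a)\cap\calA(z^*)$---but rather the accumulated cost of the complex shift $b=8p^{1/3}$ and of swapping $\matW$ for $\widetilde{\matW}$ in the expectation computations (Lemmas~\ref{lemma:tildeW_err_bound} and~\ref{lem:imag_closeness}).
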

Finally, Lemma~\ref{lem:stielt_lem} in the Appendix establishes a lower bound on $|\stielt(a) - \lambdainv|$ (note that this is deterministic). In Section~\ref{sec:main_spec_thm_proof}, we put the pieces together to show for our choice $\epsilon$, $\kappa$, and an appropriate $z^*$, Theorem~\ref{thm:stiel_thm} and Proposition~\ref{prop:Hanson_wright_proposition} imply that Equation~\eqref{eqqq:eig:wts} holds with high probability.

%!TEX root = main.tex

\subsection{Proof Roadmap for Theorem~\ref{thm:stiel_thm}}
	Here we prove Theorem~\ref{thm:stiel_thm} by establishing estimates of $S_\matW(a) := \frac{1}{d}\tr(aI - \matW) = \frac{1}{d}\sum_{i=1}^d \frac{1}{a - \lambda_i(\matW)}$ under the event $\calA(z^*) := \{\|\matW\| \le z^*\}$, for some $a$ bounded away from $z^*$. To do so, we shall need to overcome multiple technical roadblocks, and so we devote this subsection to give a roadmap. Our first challenge is that, even those $S_{\matW}(a)$ will end up concentrating around $\stielt(a)$, its expectation diverges for any $a \in \R$. Indeed, the eigenvalues of $\matW$ are distinct with probability $1$, and their marginals have a positive density with respect to the Lebesque measure, and so integrating the summand $\frac{1}{d(a - \lambda_{\max}(W))}$ in the neighborhood of $a$ will cause the expectation to diverge. 

	Luckily, the probability that $\lambda_{\max}(W)$ is close to $a$ is vanishly small in $d$, and so we will still be able to establish concentration by estimating $S_\matW(z)$, where $z = a+ b\frak i$ and $b > 0$ is very samll relative to $a$. This ensures that $\Exp[S_{\matW}(z)]$ will converge, and in fact we wil be able to both compute the latter expectation and show that $S_\matW(z)$ concentrates around it. Before establishing with these acts, we show that if $b$ is sufficiently small and $a$ is not to close to $z^*$, then $S_{\matW}(z) \approx S_\matW(a)$.
	\begin{lem}\label{lem:imag_closeness} On $\calA(z^*)$, $\left|\Re(S_{\matW}(a+ib)) - S_{\matW}(a)\right| \le \frac{b^2}{(a-z^*)^3}$ for any $a > z^*$. 
	\end{lem}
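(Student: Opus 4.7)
The plan is to expand the definition of the Stieltjes transform on each eigenvalue summand, split into real and imaginary parts, and then bound the resulting scalar difference uniformly using the spectral cap $\|\matW\|_{\op} \le z^*$ afforded by $\calA(z^*)$.

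First, I would write
\begin{equation*}
S_{\matW}(a+ib) \;=\; \frac{1}{d}\sum_{i=1}^d \frac{1}{a+ib-\lambda_i(\matW)}
\;=\; \frac{1}{d}\sum_{i=1}^d \frac{a-\lambda_i(\matW) - ib}{(a-\lambda_i(\matW))^2 + b^2}.
\end{equation*}
Taking real parts and subtracting $S_{\matW}(a) = \tfrac{1}{d}\sum_i (a-\lambda_i(\matW))^{-1}$, each summand becomes
\begin{equation*}
\frac{a-\lambda_i(\matW)}{(a-\lambda_i(\matW))^2 + b^2} - \frac{1}{a-\lambda_i(\matW)}
\;=\; \frac{-b^2}{(a-\lambda_i(\matW))\bigl((a-\lambda_i(\matW))^2 + b^2\bigr)}.
\end{equation*}

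On $\calA(z^*)$ we have $\lambda_i(\matW) \le \|\matW\|_{\op} \le z^* < a$, so $a - \lambda_i(\matW) \ge a - z^* > 0$ for every $i$. Hence $(a-\lambda_i(\matW))^2 + b^2 \ge (a-\lambda_i(\matW))^2 \ge (a-z^*)^2$ and $(a-\lambda_i(\matW)) \ge (a-z^*)$, so each summand above is bounded in absolute value by $b^2/(a-z^*)^3$. Averaging over $i$ yields the claimed inequality
\begin{equation*}
\bigl|\Re\,S_{\matW}(a+ib) - S_{\matW}(a)\bigr| \;\le\; \frac{b^2}{(a-z^*)^3}.
\end{equation*}

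There is no real obstacle here; the only subtlety is ensuring positivity of $a-\lambda_i(\matW)$ uniformly in $i$, which is exactly what the event $\calA(z^*)$ and the hypothesis $a > z^*$ supply. No concentration, no complex analysis beyond the algebraic identity above, is needed.
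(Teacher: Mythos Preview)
Your proposal is correct and follows essentially the same approach as the paper: expand each summand of the Stieltjes transform, take real parts, compute the difference as $-b^2/\bigl((a-\lambda_i)\bigl((a-\lambda_i)^2+b^2\bigr)\bigr)$, and bound uniformly using $a-\lambda_i(\matW)\ge a-z^*$ on $\calA(z^*)$. Your write-up is in fact cleaner than the paper's, which carries some sign clutter in the intermediate display but arrives at the same estimate.
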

	The proof of the above lemma is deferred to Appendix~\ref{sec:imag_closeness_proof}. Our first step to control $S_\matW(z)$ is estimating its expectation: 
	\begin{prop}\label{prop:exp_comp} Define the determininstic quantity 
	\begin{eqnarray*}
	\err(z) &:=& \Exp[S_\matW(z)^2] + \frac{1}{d}\Exp[\tr(zI - \matX)^2] - \Exp[S_\matW(z)]^2~.
	\end{eqnarray*} Then, as long as $a^2 - 4 > b^2 + 4|\Re(\err)|$ and $b > |\Im(\err(z))|$, one has 
	\begin{eqnarray*}
	\left|\Re(\Exp[S_\matW(z)]) - \stielt(a)\right| \le  \sqrt{ |b^2 + 4\Re(\err)| + |(2ab + \Im(\err)|}~.
	\end{eqnarray*}
	\end{prop}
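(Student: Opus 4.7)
The plan has three stages: derive a self-consistent quadratic equation for $\Exp[S_\matW(z)]$ via Gaussian integration by parts, solve it using the quadratic formula with the correct branch, and then compare to $\stielt(a)$ using Hölder-$\tfrac12$ continuity of the complex square root.

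For the first stage, start from the resolvent identity $(zI-\matW)G(z) = I$ with $G(z) := (zI-\matW)^{-1}$. Writing $zG(z) = I + \matW G(z)$ and taking the normalized trace gives $zS_\matW(z) = 1 + \tfrac{1}{d}\tr(\matW G(z))$. Taking expectation and applying Stein's lemma entrywise to $\matW$, with the GOE covariance $\Cov(\matW_{ij},\matW_{kl}) = \tfrac{1}{d}(\delta_{ik}\delta_{jl}+\delta_{il}\delta_{jk})$ and the resolvent derivative $\partial_{\matW_{kl}}G = GE_{kl}G + GE_{lk}G$ (for $k\ne l$) or $GE_{kk}G$ (for $k = l$), a short computation yields
\begin{equation*}
z\,\Exp[S_\matW(z)] \,=\, 1 \,+\, \Exp[S_\matW(z)^2] \,+\, \tfrac{1}{d}\,\Exp\bigl[\tfrac{1}{d}\tr G(z)^2\bigr].
\end{equation*}
Rearranging produces the perturbed quadratic $\Exp[S_\matW(z)]^2 - z\,\Exp[S_\matW(z)] + 1 = -\err(z)$ (up to the normalization convention in the definition of $\err$).

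For the second stage, solving the quadratic gives the two candidate values $\tfrac12\bigl(z\pm\sqrt{z^2-4-4\err(z)}\bigr)$. Since $S_\matW(z)=\tfrac{1}{d}\sum_i(z-\lambda_i)^{-1}$ has $\Im S_\matW(z)<0$ whenever $\Im z>0$, the physical root is the minus branch $\Exp[S_\matW(z)] = \tfrac12\bigl(z-\sqrt{z^2-4-4\err(z)}\bigr)$. The hypothesis $b > |\Im\err(z)|$ ensures $\Im(z^2-4-4\err(z)) = 2ab - 4\Im\err(z) > 0$, while $a^2-4 > b^2+4|\Re\err(z)|$ ensures $\Re(z^2-4-4\err(z)) > 0$, so the principal square root is well-defined and lies in the open right half-plane, making the choice of branch unambiguous.

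For the third stage, set $w := z^2-4-4\err(z) = (a^2-4) + B + iC$ with $B = -(b^2+4\Re\err)$ and $C = 2ab - 4\Im\err$. Both $\sqrt{w}$ and $\sqrt{a^2-4}$ lie in the open right half-plane, so $|\sqrt{w}+\sqrt{a^2-4}| \ge |\sqrt{w}-\sqrt{a^2-4}|$; combined with the identity $(\sqrt{w}-\sqrt{a^2-4})(\sqrt{w}+\sqrt{a^2-4}) = w - (a^2-4)$, this yields the Hölder-$\tfrac12$ bound $|\sqrt{w}-\sqrt{a^2-4}|^2 \le |w-(a^2-4)| = |B+iC| \le |B|+|C|$. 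Since $\Re(z)=a$, we have $\Re(\Exp[S_\matW(z)]) - \stielt(a) = \tfrac12(\sqrt{a^2-4}-\Re\sqrt{w})$, and using $|\Re\sqrt{w}-\sqrt{a^2-4}| \le |\sqrt{w}-\sqrt{a^2-4}|$ gives the bound $\sqrt{|B|+|C|}$ as claimed (modulo sign conventions in the cross term).

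The main obstacle is the integration-by-parts step: correctly accounting for the GOE symmetry — diagonal entries of variance $2/d$ versus off-diagonal entries shared across $(i,j)$ and $(j,i)$ — requires careful bookkeeping to land on exactly the term $\tfrac{1}{d}\Exp[\tfrac{1}{d}\tr G^2]$ with the right constant, since any sign or factor error there propagates through the quadratic and changes the final bound. A secondary point is the justification that the perturbed quadratic has solutions in the right domain; here the hypotheses on $b$ and $\err$ play exactly the role of keeping $w$ in the right half-plane so the square-root comparison argument applies.
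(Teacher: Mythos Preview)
Your proposal is correct and follows the paper's approach closely for the first two stages: the paper also cites the self-consistent identity (as Equation (2.4.5) in Anderson--Guionnet--Zeitouni rather than re-deriving it via Stein), rearranges to the quadratic $\overline{S}_\matW(z)^2 - z\overline{S}_\matW(z) + 1 + \err(z) = 0$, and selects the minus branch via the same sign-of-imaginary-part argument you use.

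The one genuine difference is your Stage~3. The paper proves the perturbation bound by writing out the explicit formula $\Re\sqrt{u+iv} = \tfrac{1}{\sqrt{2}}\sqrt{\sqrt{u^2+v^2}+u}$ and sandwiching it between $\sqrt{u}$ and $\sqrt{u+|v|/2}$, whereas you use the identity $(\sqrt{w}-\sqrt{a^2-4})(\sqrt{w}+\sqrt{a^2-4}) = w-(a^2-4)$ together with $|\sqrt{w}+\sqrt{a^2-4}| \ge |\sqrt{w}-\sqrt{a^2-4}|$. Your route is cleaner and in fact yields an extra factor of $\tfrac12$. One small caveat: the inequality $|\alpha+\beta|\ge|\alpha-\beta|$ does not follow merely from both lying in the right half-plane; it holds here because $\beta=\sqrt{a^2-4}$ is real positive, so $|\alpha+\beta|^2-|\alpha-\beta|^2=4\beta\,\Re\alpha>0$.

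Your parenthetical about ``sign conventions in the cross term'' is warranted: the correct quantity is $|2ab-4\Im\err|$, and the $|2ab+\Im\err|$ appearing in the statement is a typo carried through the paper.
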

	The proposition, proved in Appendix~\ref{sec:exp_comp}, follows the standard arguments given in Section 2.4 of \cite{anderson2010introduction}. At a high level, we show that $\Exp[S_\matW(z)]$ satisfies a quadratic equation whose roots are approximately $\frac{a \pm \sqrt{a^2 -4 }}{2}$. We need to take care that we choose the correct root, which imposes the conditions $a^2 - 4 > b^2 + 4|\Re(\err)|$ and $|b > |\Im(\err(z))|$. In particular, the requirement $b > |\Im(\err(z))|$ will force us to take special care to show that $\Im(\err(z))$ is dominated by $b$.

	The remaining part of the proof requires us to establish two results: first, that $\Re(\err(z))$ and $\Im(\err(z))$ are sufficiently small, and second, that $S_\matW(z)$ concentrates around its expectation. Since $\matW$ has Gaussian entries, one may be tempted to argue both result by using the Lipschitz property of the map $\matW \mapsto S_\matW(z)$. Unfortunately, the Lipschitz constant of this map scales with $1/b$, which will become quite large if we take $b$ to be too small.

	Instead, we define a modified matrix $\widetilde{\matW}$ such that, $\widetilde{\matW} = \matW$ on $\calA(z^*)$, and the composition of maps $\matW \mapsto \widetilde{\matW} \mapsto S_{\widetilde{\matW}}(z)$ has a suitably large Lipschitz constant, even when $b$ is vanishly small function in $d$ (e.g. $d^{-10}$). Specifically, given the eigendecomposition $\matW = \matO\mathbf{\Lambda} \matO^\top$, define the matrix $\widetilde{\matW} := \matO^\top \diag(\min\{\mathbf{\Lambda}_{ii},z^*\}) \matO$ to be the matrix obtained by truncating the eigenvalues of $\matW$ to lie in $(-\infty,z^*]$. Observe that under $\calA(z^*)$, one has $\widetilde{\matW} = \matW$. Moreover, $\lambda_1(\widetilde{\mathbf{W}}) \le z^*$ almost surely.

	This latter observation is crucial in establilshing the following lemma, which allows us to approximate the real and imaginary parts of $\err(z)$ - a quantity defined in terms of the raw Wigner matrix $\matW$ - by variance-like quanities involving the Stieltjes transform of the modified matrix $\matWtil:$

	\begin{lem}\label{lemma:tildeW_err_bound} Suppose that $0 < b < \min\{1,a - z^*\}$, and define $p_{z^*} := \Pr[\calA(z^*)]$. Then,
	\begin{equation*}
	\begin{aligned}
	\left|\Re(\err(z)) - \Re(\Exp[(S_{\matWtil}(z) - \Exp[S_{\matWtil}(z)])^2)\right| &\le \frac{1}{d(a-z^*)^2} + \frac{4p_{z^*}}{b^2}\\
	\left|\Im(\err(z)) - \Im((\Exp[(S_{\matWtil}(z) - \Exp[S_{\matWtil}(z)])^2\right| &\le \frac{2b}{d(a-z^*)^2} + \frac{4p_{z^*}}{b^2}
	\end{aligned}
	\end{equation*}
	Moreover, on $\mathcal{A}(z^*)$, 
	\begin{eqnarray*}\label{eq:expectation_distance}
	|\Re(S_\matW(z)) - \Re(\Exp[S_\matW(z)])| \le |\Re(S_{\matWtil}(z)) - \Re(\Exp[S_{\matWtil}(z)])| + \frac{p_{z^*}}{b} 
	\end{eqnarray*}
	\end{lem}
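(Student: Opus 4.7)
The plan is to exploit the defining property $\matWtil = \matW$ on $\calA(z^*)$ together with the deterministic modulus bound $|S_\matW(z)|, |S_{\matWtil}(z)| \le 1/b$, which holds because each summand $(z-\lambda_i)^{-1}$ has modulus at most $1/\Im z = 1/b$. Consequently, any functional of $S_\matW$ and the corresponding functional of $S_{\matWtil}$ agree on $\calA(z^*)$ and differ only on its complement, where each is controlled by an absolute power of $1/b$; so the discrepancy in expectation is of order $p_{z^*}/b^{\mathrm{power}}$. I will apply this principle throughout.

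For the first two inequalities I decompose
\begin{eqnarray*}
\err(z) &=& \Big(\Exp[S_\matW(z)^2] - \Exp[S_\matW(z)]^2\Big) + T(z),
\end{eqnarray*}
where $T(z) := \tfrac{1}{d^2}\Exp[\tr((zI-\matW)^{-2})]$ is the trace correction appearing in the definition of $\err$; the target $\Exp[(S_{\matWtil}(z) - \Exp[S_{\matWtil}(z)])^2]$ is obtained by substituting $\matWtil$ for $\matW$ in the parenthesized expression. To handle the variance-like discrepancy I write $\Exp[S_\matW^2] - \Exp[S_{\matWtil}^2] = \Exp[(S_\matW^2 - S_{\matWtil}^2)\I(\calA(z^*)^c)]$ and invoke $|S_\cdot(z)^2| \le 1/b^2$ to bound this by $2p_{z^*}/b^2$; an identical argument controls $(\Exp[S_\matW])^2 - (\Exp[S_{\matWtil}])^2$, and together these contribute the $4p_{z^*}/b^2$ term in both the real and the imaginary bounds. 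For $T(z)$ I plug in the explicit identity
\begin{eqnarray*}
(z-\lambda)^{-2} &=& \frac{(a-\lambda)^2 - b^2 - 2ib(a-\lambda)}{\big((a-\lambda)^2+b^2\big)^2}
\end{eqnarray*}
and split the expectation into $\calA(z^*)$ and its complement. On $\calA(z^*)$, $|a-\lambda| \ge a-z^*$ yields the pointwise estimates $|\Re((z-\lambda)^{-2})| \le (a-z^*)^{-2}$ and $|\Im((z-\lambda)^{-2})| \le 2b/(a-z^*)^3$; after averaging and using the hypothesis $b \le a-z^*$ to trade a factor of $(a-z^*)^{-1}$ as appropriate, one recovers the stated rates $1/(d(a-z^*)^2)$ and $2b/(d(a-z^*)^2)$. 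On $\calA(z^*)^c$ the trivial bound $|(z-\lambda)^{-2}| \le 1/b^2$ is absorbed into the $p_{z^*}/b^2$ tail.

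The third inequality is simpler, since on $\calA(z^*)$ we have $S_\matW(z) = S_{\matWtil}(z)$ outright. Writing
\begin{eqnarray*}
\Re(S_\matW(z)) - \Re(\Exp[S_\matW(z)]) &=& \Re(S_{\matWtil}(z)) - \Re(\Exp[S_{\matWtil}(z)]) + \Re\big(\Exp[S_{\matWtil}(z) - S_\matW(z)]\big),
\end{eqnarray*}
the final summand is bounded by $\Exp[|S_{\matWtil}(z) - S_\matW(z)|\,\I(\calA(z^*)^c)] \le (2/b)p_{z^*}$, yielding the claimed $p_{z^*}/b$ error up to an overall constant.

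The main obstacle will be extracting the linear-in-$b$ factor in the imaginary part of $T(z)$: the naive modulus bound $|\Im((z-\lambda)^{-2})| \le |(z-\lambda)^{-2}| \le (a-z^*)^{-2}$ forfeits the factor of $b$, and one must use the explicit formula above together with the hypothesis $b \le a-z^*$ to trade factors of $(a-z^*)^{-1}$ against $b$. All remaining steps are routine bookkeeping via the uniform $1/b$ modulus bound on the Stieltjes transforms.
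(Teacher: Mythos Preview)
Your approach is correct and is essentially the paper's: split $\err(z)$ into the trace correction $T(z)$ and the variance-like part, bound $T(z)$ via the explicit formula for $(z-\lambda)^{-2}$ on $\calA(z^*)$ plus the trivial $1/b^2$ bound off it, and compare the variance terms for $\matW$ versus $\matWtil$ using $\matW=\matWtil$ on $\calA(z^*)$ together with the uniform modulus bounds.

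Two small remarks. First, to hit the exact constant $4p_{z^*}/b^2$, the paper does \emph{not} use $|S_{\matWtil}(z)|\le 1/b$ throughout: since $\lambda_i(\matWtil)\le z^*$ almost surely, one has the sharper estimate $|S_{\matWtil}(z)|\le 1/(a-z^*)$, and the paper feeds this into the factorization $(\Exp S_\matW)^2-(\Exp S_{\matWtil})^2=(\Exp S_\matW+\Exp S_{\matWtil})(\Exp S_\matW-\Exp S_{\matWtil})$. Your cruder $1/b$ bound on the ``sum'' factor yields a constant like $6$ rather than $4$; this is harmless but worth noting. Second, your ``trading'' step for the imaginary part of $T(z)$ is not quite right: the pointwise bound you correctly derive is $|\Im((z-\lambda)^{-2})|\le 2b/(a-z^*)^3$, and the hypothesis $b\le a-z^*$ only gives $2b/(a-z^*)^3\le 2/(a-z^*)^2$, which \emph{loses} the factor of $b$ rather than producing $2b/(a-z^*)^2$. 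In fact the paper's own computation also lands at $2b/(a-z^*)^3$, so the exponent $2$ in the lemma statement appears to be a typo; your intermediate bound is the right one and suffices for the downstream use in the control of $\Im(\err(z))$.
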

	This leaves us with the two tasks before we can finally apply Proposition~\ref{prop:exp_comp} to get a high probability bound on $S_{\matW}(z)$. 
	First, we need to control $\err(z)$ by bounding the size of the variance-like terms $\Re(\Exp[(S_{\matWtil}(z) - \Exp[S_{\matWtil}(z)])^2)$ and $\Im(\Exp[(S_{\matWtil}(z) - \Exp[S_{\matWtil}(z)])^2)$. 
	Secondly, we shall need to argue that $S_{\widetilde{\matW}}(z)$ concentrates around $\Exp[S_{\matWtil}(z)]$. Both tasks amount to controlling the deviations of $S_{\matWtil}(z)$, which we can achieve by leveraging the fact that $S_{\matWtil}(z)$ is a Lipschitz function of the underlying standard gaussian matrix $\matX$ (recall that $\matX_{ij} \overset{i.i.d.}{\sim} \calN(0,1)$:, and that $\matW = \frac{1}{\sqrt{2d}}(\matX+\matX^\top)$.)
	\begin{lem}\label{lem:Lipschitz Computation}
	Let $z = a + b\fraki$, and define the map $\Psi: \matX \to S_{\matWtil}(z)$. Then if $a - z^* > |b|$,
	\begin{eqnarray*}
	\Lip(\Re(\Psi))^2 \le \frac{\sqrt{2}}{d^2(a-z^*)^4} & \text{and} & \Lip(\Im(\Psi))^2 \le \frac{4\sqrt{2}b^2}{d^2(a-z^*)^6} ~.
	\end{eqnarray*}
	\end{lem}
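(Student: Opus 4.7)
The plan is to decompose $\Psi$ as the chain of maps $\matX \xrightarrow{\phi_1} \matW \xrightarrow{\phi_2} \matWtil \xrightarrow{\phi_3} S_{\matWtil}(z)$, compute a Frobenius Lipschitz bound for each link, multiply via the chain rule, and handle the real and imaginary parts of $\phi_3$ separately so as to obtain the two displayed inequalities.

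For $\phi_1(\matX) = (\matX + \matX^\top)/\sqrt{2d}$, the elementary inequality $\|V + V^\top\|_F \le 2\|V\|_F$ yields $\Lip(\phi_1) \le \sqrt{2/d}$. For $\phi_2: \matW \mapsto \matWtil$, I would write $\matWtil = z^* I - (z^* I - \matW)_+$, where $M \mapsto M_+$ zeroes out the negative eigenvalues of a symmetric matrix $M$. This is exactly the orthogonal projection onto the closed convex cone of positive semidefinite matrices in the Hilbert space $(\Symd, \langle \cdot, \cdot\rangle_F)$, and orthogonal projections onto closed convex subsets of a Hilbert space are non-expansive; hence $\Lip(\phi_2) \le 1$. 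For $\phi_3: \matWtil \mapsto \frac{1}{d}\tr((zI - \matWtil)^{-1})$, differentiating in a direction $V$ produces gradient $\frac{1}{d}(zI - \matWtil)^{-2}$ in the Frobenius inner product. Since $(zI - \matWtil)^{-2}$ is a polynomial in the symmetric matrix $\matWtil$, it is a normal matrix simultaneously diagonalized by the eigenbasis of $\matWtil$ with complex eigenvalues $(z - \lamtil_i)^{-2}$. Splitting into real and imaginary parts,
\[
\Re (z - \lamtil_i)^{-2} = \frac{(a - \lamtil_i)^2 - b^2}{((a - \lamtil_i)^2 + b^2)^2}, \quad \Im (z - \lamtil_i)^{-2} = -\frac{2b(a - \lamtil_i)}{((a - \lamtil_i)^2 + b^2)^2},
\]
and the hypotheses $\lamtil_i \le z^*$ together with $a - z^* > |b|$ bound the modulus of the first expression by $(a - z^*)^{-2}$ and of the second by $2|b|(a - z^*)^{-3}$. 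Squaring, summing over $i \in [d]$, rescaling by $1/d$, and composing with $\Lip(\phi_1)\Lip(\phi_2)$ then produces the two stated bounds up to numerical constants.

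The main obstacle is the truncation step $\phi_2$: a generic $1$-Lipschitz scalar function applied via the functional calculus to a symmetric matrix is not automatically $1$-Lipschitz in Frobenius norm, so one cannot simply quote scalar Lipschitzness of $\min(\cdot, z^*)$. It is the specific algebraic form $\min(x, z^*) = z^* - (z^* - x)_+$ that lets us recast the truncation as a metric projection and invoke the Hilbert-space projection theorem with constant $1$. A secondary subtlety is that the real and imaginary parts of $\nabla \phi_3$ must be estimated separately to capture the extra factor of $b$ in the imaginary-part bound; normality of $(zI - \matWtil)^{-2}$ is precisely what allows the per-eigenvalue real/imaginary decomposition to be extracted cleanly, rather than incurring additional constants from a generic bound like $\|\Re(A)\|_F \le \|A\|_F$.
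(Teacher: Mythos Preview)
Your argument is correct and reaches the same bounds as the paper's proof (both computations actually yield $2/d^2(a-z^*)^4$ and $8b^2/d^2(a-z^*)^6$; the $\sqrt 2$ and $4\sqrt 2$ in the statement appear to be typos there as well). The route differs in one interesting place: the truncation step. The paper factors through the eigenvalue vector, $\matX\to\matW\to(\lambda_1,\dots,\lambda_d)\to(\min\{\lambda_i,z^*\})_i\to S$, invoking Hoffman--Wielandt to make the second arrow $1$-Lipschitz in $\ell_2$, after which the truncation is a coordinatewise scalar $\min$ and trivially non-expansive. You instead stay in matrix space and recognize $\matW\mapsto\matWtil=z^*I-(z^*I-\matW)_+$ as an affine shift of the metric projection onto the PSD cone, which is non-expansive in any Hilbert space. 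Both give Lipschitz constant $1$ for this link; the paper's route then makes the final gradient a direct scalar computation on $\R^d$, whereas yours must differentiate the matrix functional and use normality of $(zI-\matWtil)^{-2}$ to split real and imaginary parts eigenvalue-by-eigenvalue---but the two calculations are in bijection and produce identical estimates.

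One small correction to your commentary: the obstacle you flag is not real. A scalar $1$-Lipschitz function applied via the functional calculus to symmetric matrices \emph{is} automatically $1$-Lipschitz in Frobenius norm: expand $\|f(A)-f(B)\|_F^2=\sum_{i,j}|f(\lambda_i)-f(\mu_j)|^2\tr(P_iQ_j)$ in the respective rank-one eigenprojectors and use $\tr(P_iQ_j)\ge 0$. (The failure you may be recalling is for the operator norm.) So your projection reformulation, while elegant, was not strictly necessary; a direct appeal to this fact---or the paper's detour through eigenvalues via Hoffman--Wielandt---works just as well.
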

	To control the variance terms, we use the Gaussian Poincare inequality, which states that if $f: \R^{D} \to \R$ is an $L$ Lipschitz function, and let $\matx \in \R^{D}$ is a standard gaussian vector, then $\Var[f(\matx)] \le L^2$. This lets us control $\Re(\err(z))$ and $\Im(\err(z))$
	\begin{lem}\label{lem:err_control}
	Suppose that $b \le (a-z^*)/2$ and $d \ge (a-z^*)^{-2}$. Then,
	\begin{eqnarray*}
	|\Im(\err(z))| &\le&  \frac{8\sqrt{2} b}{d}\cdot \max\left\{\frac{1}{(a-z^*)^{2}},\frac{1}{d(a-z^*)^{5}}\right\} + \frac{4p_{z^*}}{b^2} \\
	|\Re(\err(z))| &\le& 4 \left( \frac{1}{d(a-z^*)^2} + p_{z^*}/b^2\right)
	\end{eqnarray*} 
	\end{lem}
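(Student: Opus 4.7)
The plan is to bound $|\Re(\err(z))|$ and $|\Im(\err(z))|$ by first reducing to the truncated Wigner matrix $\matWtil$ via Lemma~\ref{lemma:tildeW_err_bound}, then controlling the ``variance-like'' quantities $\Re(\Exp[(S_{\matWtil}(z) - \Exp[S_{\matWtil}(z)])^2])$ and $\Im(\Exp[(S_{\matWtil}(z) - \Exp[S_{\matWtil}(z)])^2])$ using the Gaussian Poincar\'e inequality together with the Lipschitz bounds from Lemma~\ref{lem:Lipschitz Computation}. Writing $X := \Re(S_{\matWtil}(z))$ and $Y := \Im(S_{\matWtil}(z))$, a direct expansion of the square $(S_{\matWtil}(z) - \Exp[S_{\matWtil}(z)])^2$ yields
\begin{equation*}
\Re\Exp\bigl[(S_{\matWtil}(z) - \Exp S_{\matWtil}(z))^2\bigr] = \Var(X) - \Var(Y), \qquad \Im\Exp\bigl[(S_{\matWtil}(z) - \Exp S_{\matWtil}(z))^2\bigr] = 2\Cov(X,Y).
\end{equation*}

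Since $\matWtil$ is a deterministic function of the standard Gaussian matrix $\matX$, I can apply the Gaussian Poincar\'e inequality to the maps $\matX \mapsto \Re(\Psi)$ and $\matX \mapsto \Im(\Psi)$ to obtain $\Var(X) \le \Lip(\Re(\Psi))^2 \le \tfrac{\sqrt{2}}{d^2(a-z^*)^4}$ and $\Var(Y) \le \Lip(\Im(\Psi))^2 \le \tfrac{4\sqrt{2} b^2}{d^2(a-z^*)^6}$ by Lemma~\ref{lem:Lipschitz Computation}. For the real part of the err-term, I bound $|\Var(X) - \Var(Y)| \le \Var(X)+\Var(Y)$; using $b \le (a-z^*)/2$ to see that $\tfrac{4\sqrt{2}b^2}{d^2(a-z^*)^6} \le \tfrac{\sqrt{2}}{d^2(a-z^*)^4}$, and then using $d \ge (a-z^*)^{-2}$ to absorb one factor of $d(a-z^*)^2 \ge 1$ and reduce $\tfrac{1}{d^2(a-z^*)^4}$ to $\tfrac{1}{d(a-z^*)^2}$. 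Combined with Lemma~\ref{lemma:tildeW_err_bound}, this gives $|\Re(\err(z))| \le \tfrac{2\sqrt{2}+1}{d(a-z^*)^2} + \tfrac{4p_{z^*}}{b^2}$, and since $2\sqrt{2}+1 < 4$ the claimed bound follows.

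For the imaginary part, I apply Cauchy--Schwarz to get $|\Cov(X,Y)| \le \sqrt{\Var(X)\Var(Y)}$, which when combined with the Poincar\'e bounds gives $|\Im\Exp[(S_{\matWtil}(z) - \Exp S_{\matWtil}(z))^2]| \le \tfrac{4\sqrt{2}\, b}{d^2(a-z^*)^5}$. Lemma~\ref{lemma:tildeW_err_bound} then gives
\begin{equation*}
|\Im(\err(z))| \le \frac{4\sqrt{2}\, b}{d^2(a-z^*)^5} + \frac{2b}{d(a-z^*)^2} + \frac{4p_{z^*}}{b^2} = \frac{b}{d}\left(\frac{4\sqrt{2}}{d(a-z^*)^5} + \frac{2}{(a-z^*)^2}\right) + \frac{4p_{z^*}}{b^2}.
\end{equation*}
Pulling out the maximum and using $4\sqrt{2} + 2 \le 8\sqrt{2}$ yields exactly the stated bound. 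The proof is really just bookkeeping once Lemmas~\ref{lemma:tildeW_err_bound} and~\ref{lem:Lipschitz Computation} are in hand; no step is a genuine obstacle. The only subtle point is that one must not try to apply Poincar\'e directly to $S_{\matW}(z)$ (whose Lipschitz constant scales as $1/b$ and blows up for tiny $b$), which is precisely why the truncation trick defining $\matWtil$ was introduced earlier and why Lemma~\ref{lemma:tildeW_err_bound} is needed to pay only a $p_{z^*}/b^2$ price for swapping $\err(z)$ against its $\matWtil$-analogue.
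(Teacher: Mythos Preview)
Your proof is correct and essentially identical to the paper's. Both arguments reduce via Lemma~\ref{lemma:tildeW_err_bound} to the centered quantity $\matZ := S_{\matWtil}(z) - \Exp[S_{\matWtil}(z)]$, bound $\Exp[\Re(\matZ)^2]$ and $\Exp[\Im(\matZ)^2]$ by the Gaussian Poincar\'e inequality together with Lemma~\ref{lem:Lipschitz Computation}, apply Cauchy--Schwarz for the imaginary part, and then simplify using $b \le (a-z^*)/2$ and $d \ge (a-z^*)^{-2}$; your variance/covariance notation is a cleaner repackaging of exactly the same computation.
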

	Appendix~\ref{sec:stiel_thm_proof}, we finally put together all these pieces to prove Theorem~\ref{thm:stiel_thm}. Proofs of all the supporting claims can be found in the following subsections of Appendix~\ref{sec:Stieltjes_sec}.

\bibliographystyle{plain}
\bibliography{PCA}
\clearpage

\appendix

\tableofcontents
\break

\part{Supporting Material for the Information-Theoretic Lower Bound}

%!TEX root = main.tex

\section{Proof of Theorem~\ref{thm:main_tech_thm} and Further Results \label{sec:further_results}}
In this section, we prove Theorem~\ref{thm:main_tech_thm}, and state additional results which follow as easy consequences of our framework.
\subsection{Batch-Queries and Improved $\boldgap$ dependence for $r = 1$}
We will begin by presenting an improved $\boldgap$ dependence in the $r = 1$ case. We shall actually prove a lower bound in a more general setting, where the actual is allowed to make $\itT$-adaptive rounds of batches of $B$ queries. When $d$ is a sufficiently large polynomial in $\gap$ and $B$, we show that we still need $\itT \gtrsim \sqrt{\boldgap}$
\begin{thm}\label{thm:main_tech_thm_rk1} Fix a $\boldgap \in (0,1)$ and any $d \ge \frakq(\boldgap^{-1},(1-\boldgap)^{-1},2,1,\log(2))$ where $\frakq$ is as in Proposition~\ref{prop:egood_rkr}, and  let $\boldlam = \frac{1 + \sqrt{\boldgap(1-\boldgap)}}{1-\boldgap}$ be the solution to Equation~\ref{eq:boldgapeq}. Then $\matM = \matW + \lambda \matU\matU^\top$ where $\matU \unifsim \Stief(d,r)$ and $\matW \sim \GOE(d)$,  Then for any $\Alg$ make $\itT$ adaptive rounds of batches of $B$ queries,
\begin{eqnarray}\label{eq:rk_one_result}
\Pr[\langle \vhat, \matMabs \vhat \rangle \ge \frac{1}{6}\boldgap] \le e\cdot\exp\left( - \frac{d \boldgap^{5/2} }{128B}\cdot \left(\gapexp\right)^{-4(\itT+1)}\right)
\end{eqnarray}
\end{thm}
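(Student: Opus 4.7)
The strategy is to run the rank-one estimation framework of Section~\ref{sec:body_rk_one_thm} at the batch level, and then combine with the rank-one case of Lemma~\ref{lem:estimation_reduction_good} together with the reductions of Section~\ref{main_redux_sec}.

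First, conditional on $\Egood(1/2)$, Lemma~\ref{lem:estimation_reduction_good} applied with $r=r'=1$ reduces the event that $\vhat$ is near-optimal to the event $\langle\vhat,\matu\rangle^2 \gtrsim \boldgap$. Observations~\ref{Observation_2} and~\ref{obs:second_obs} extend to the batched setting by orthogonalizing queries within and across batches, and by extending $\vhat$ into the span of one additional batch of $B$ queries. Thus it suffices to bound $\Exp_\matu \Prit_\matu[\Phi(\itV_{B(\itT+1)};\matu) \ge c\boldgap]$ for a small absolute constant $c$, where the batch index ranges over $k \in \{0,1,\ldots,\itT\}$.

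The core step is a batched analogue of Proposition~\ref{prop:recur_prop_rkone}. The key observation is that the $B$ queries in batch $k$ are all chosen from the same history $\itZ_{Bk}$, so the data-processing inequality of Proposition~\ref{prop:info_th_rkone} applies with action space $\Stief(d,B(k+1))$ in place of $\Stief(d,k+1)$. The small-ball bound of Lemma~\ref{lem:small_ball_prob} then picks up a factor of $B$ under the square root, yielding $\exp(-\tfrac12(\sqrt{\tau_{k+1}}-\sqrt{2B(k+1)})^2)$ for any fixed $V\in\Stief(d,B(k+1))$. The ``information term'' from Proposition~\ref{prop:likelihood_info} is unchanged, since it depends only on the truncation level $\tau_k$ and not on the total query count. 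Combining these ingredients gives, for any $\eta>0$,
\begin{equation*}
\Exp_\matu\Prit_\matu\!\left[\Phi(\itV_{Bk};\matu)\le\tfrac{\tau_k}{d},\;\Phi(\itV_{B(k+1)};\matu)>\tfrac{\tau_{k+1}}{d}\right]\le\exp\!\left\{\tfrac{\eta}{2}\!\left((1+\eta)\boldlam^2\tau_k-\big(\sqrt{\tau_{k+1}}-\sqrt{2B(k+1)}\big)^2\right)\right\}.
\end{equation*}

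I would then instantiate a geometric schedule $\tau_k = M\boldlam^{4k}$ with $M$ chosen so that $\tau_{\itT+1}=cd\boldgap$ matches the target threshold, and take $\eta=\boldlam-1\asymp\sqrt{\boldgap}$, which forces $(1+\eta)\boldlam^2=\boldlam^3$. Provided $\tau_k \gtrsim Bk/\boldgap$ throughout the schedule (so that the slack $\sqrt{2B(k+1)}$ is at most a constant fraction of $\sqrt{\tau_{k+1}}$), each per-step exponent becomes $-\Omega(\eta^2\boldlam^3\tau_k)=-\Omega(\boldgap\tau_k)$. Taking a union bound over $k=0,\ldots,\itT$ as in the end of Section~\ref{sec:body_rk_one_thm}, the total probability is dominated by the $k=0$ term $\exp(-\Omega(\boldgap M))=\exp(-\Omega(d\boldgap^2\boldlam^{-4(\itT+1)}))$, and a crude upper bound of this quantity by $\exp\!\big(-\tfrac{d\boldgap^{5/2}}{128B}\boldlam^{-4(\itT+1)}\big)$ (valid since $B\ge 1$ and $\boldgap\le 1$) yields the stated form. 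A final factor-of-two loss from conditioning on $\Egood(1/2)$, whose probability is at least $1/2$ by Proposition~\ref{prop:egood_rkr}, produces the leading constant $e$.

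The main obstacle is calibrating $M$ so that the $\sqrt{2B(k+1)}$ slack is uniformly absorbed: choose $M$ too small and the first step is unsalvageable; too large and the geometric schedule overshoots the target $cd\boldgap$ within $\itT+1$ batches. This trade-off is precisely what produces the $B^{-1}$ factor in the exponent (since larger batches inflate the threshold floor by $B$) and is what allows Theorem~\ref{thm:main_tech_thm_rk1} to retain the dependence $\boldlam^{-4(\itT+1)}$ of the unbatched case, rather than degrading to $\boldlam^{-4B(\itT+1)}$ which would result from naively applying Theorem~\ref{thm:rank_one_thm_est} at $k=B(\itT+1)$.
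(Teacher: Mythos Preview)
Your approach is essentially the paper's: both establish the batched recursion (your display is exactly Equation~\eqref{eq:recur_bound_rkone_B} in the paper), instantiate the geometric schedule $\tau_k\propto\boldlam^{4k}$ with $\eta=\boldlam-1$, and then match the threshold at batch $\itT+1$ to $\Theta(\boldgap)$ before invoking Lemma~\ref{lem:estimation_reduction_good} and the reductions of Section~\ref{main_redux_sec}. The only cosmetic difference is that the paper packages the estimation step as Theorem~\ref{thm:rank_one_thm_est_B} (obtained by rescaling $\widetilde\tau_k=B\tau_k$ in Appendix~\ref{sec:rk_one_thm_proof}) and then inverts in $\delta$, whereas you fix the target threshold and read off the probability directly; these are equivalent parametrizations of the same argument.
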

The above theorem makes use of the following generalization of Theorem~\ref{thm:rank_r_thm_est}, which is sketeched in Section~\ref{sec:batch_est_proof}:
\begin{thm}\label{thm:rank_one_thm_est_B}  Let $\matM = \matW + \boldlam \matU \matU^\top$, where $\matW \sim \GOE(d)$, and $\matu \unifsim \sphere$. Suppose $\Alg$ is allowed to make $B$ queries per round adaptivity. Then for all $\delta \in (0,1/e)$,
\begin{eqnarray*}
\Pr_{\matW,\matu,\Alg}\left[\exists k \ge 1:  \matu^\top \itV_{Bk} \itV_{Bk}^\top \matu \ge \boldlam^{4k}  \cdot \frac{32 B\boldgap^{-1}(\log \delta^{-1} + \boldgap^{-1/2})}{d}\right] \le  \delta
\end{eqnarray*}
\end{thm}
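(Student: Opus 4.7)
The plan is to re-run the telescoping argument underpinning Theorem~\ref{thm:rank_one_thm_est}, making only the bookkeeping changes forced by the presence of $B$ parallel queries per round of adaptivity. The key structural observation is that even though $\Alg$ now submits the $B$ queries of the $j$-th batch simultaneously, each such query $\vi$ is still a deterministic function of $\itZ_{(j-1)B}$ (plus a random seed), so the conditional law of $\wi$ given the full past $\itZ_{i-1}$---including same-batch queries $\mathsf{v}^{(i')}$ for $(j-1)B < i' < i$---is exactly the Gaussian distribution furnished by Lemma~\ref{ConditionalLemma}. Consequently, the information-theoretic inequality of Proposition~\ref{prop:info_th_rkone} and the truncated likelihood-moment bound of Proposition~\ref{prop:likelihood_info} apply verbatim to the (orthogonalized) sequence $\itV_{Bk}$ of $Bk$ queries; in particular the information term remains $\exp\bigl(\tfrac{\eta(1+\eta)}{2}\boldlam^{2}\tau_k\bigr)$ on the event $\{d\Phi(\itV_{Bk};u)\le \tau_k\}$.

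With this observation, the batch analogue of Proposition~\ref{prop:recur_prop_rkone} reads
\[\Exp_{\matu}\Prit_{\matu}\!\left[\{\Phi(\itV_{Bk};\matu)\le \tfrac{\tau_k}{d}\}\cap\{\Phi(\itV_{B(k+1)};\matu)>\tfrac{\tau_{k+1}}{d}\}\right]\le \exp\!\left\{\tfrac{\eta}{2}\bigl((1+\eta)\boldlam^{2}\tau_k-(\sqrt{\tau_{k+1}}-\sqrt{2B(k+1)})^{2}\bigr)\right\},\]
the only change from \eqref{eq:recur_bound_rkone} being that Lemma~\ref{lem:small_ball_prob}, applied now to an element of $\Stief(d,B(k+1))$, contributes $\sqrt{2B(k+1)}$ in place of $\sqrt{2(k+1)}$. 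Setting $\eta:=\boldlam-1$ and $\tau_{k+1}:=\boldlam^{4}\tau_{k}$ as in the base proof, the dominant contribution to the exponent becomes $-\tfrac12\boldlam^{3}(\boldlam-1)^{2}\tau_k\asymp -\boldgap\,\tau_k$, while the penalty coming from the cross term $2\sqrt{\tau_{k+1}\cdot 2B(k+1)}$ is of order $\sqrt{B k\,\tau_k}$ and hence negligible once $\tau_k\gtrsim B(k+1)/\boldgap$.

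Choosing $\tau_k:=C\,\boldlam^{4k}\cdot B\boldgap^{-1}\bigl(\log\delta^{-1}+\boldgap^{-1/2}\bigr)$ for a universal constant $C$ then achieves two things simultaneously: the cross-term correction above is absorbed uniformly over $k\le d$, and each summand in the telescoped union bound over $\{\Phi(\itV_{B(k+1)};\matu)>\tau_{k+1}/d\}$ is at most $\delta/2^{k+1}$. Summing yields the claimed tail bound. The one place where care is needed---and what I expect to be the main obstacle---is verifying that the $\sqrt{2B(k+1)}$ correction in the entropy term can be absorbed into the slack factor $\log\delta^{-1}+\boldgap^{-1/2}$ hidden in $\tau_{1}$; it is precisely this accounting that inflates the initial threshold from $1/d$ (Theorem~\ref{thm:rank_one_thm_est}) to $B/d$. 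Everything else mirrors the single-query proof line by line, with Lemma~\ref{ConditionalLemma} ensuring that batching leaves the Gaussian conditional structure on which Proposition~\ref{prop:likelihood_info} rests entirely undisturbed.
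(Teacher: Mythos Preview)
Your proposal is correct and follows essentially the same approach as the paper's proof sketch: observe that the only change in the recursion is that the entropy term from Lemma~\ref{lem:small_ball_prob} now involves $\sqrt{2B(k+1)}$ instead of $\sqrt{2(k+1)}$, and then rescale the threshold sequence to $\widetilde\tau_k = B\tau_k$ (with $\tau_k=\boldlam^{4k}\tau_0(\delta)$ as in the $B=1$ case) so that the algebra of Appendix~\ref{sec:rk_one_thm_proof} goes through verbatim. Your identification of the main obstacle---absorbing the $\sqrt{2B(k+1)}$ correction into the slack built into $\tau_1$---is exactly the point the paper handles by the uniform factor-of-$B$ inflation.
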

\subsection{Sharp Lower Bounds in the Large-$\boldgap$ regime}
When the eigengap is bounded away from zero, the complexity of PCA is better paramterized in terms of the eigenration $1 - \gap_r(\matM) = \lambda_{r+1}(\matM)/ \lambda_{r}(\matM)$. Indeed, one can show in this regime that both Lanczos and the power methods converge at a rate of $\log_{(1 - \gap_r(\matM))^{-1}}(d) = \log d / -\log( 1- \gap_r(\matM))$. More over, for any $c \in (0,1)$, we see that $-\log( 1- \gap_r(\matM))$ and $- \log ( \frac{1}{c}(1 - \gap_r(\matM))$ blow up as $\gap_r(\matM) \to 1$, bu  $-\log( 1- c\gap_r(\matM))$ stays bounded.

In the large gap case, we can simplify use the crude bounds $\sigma_r(\matM) = \lambda_{r}(\matM) \ge \lambda $ and $\sigma_{r+1}(\matM) \le \|\matW\|_{\op}$, which are consequences of eigenvalue interlacing. In particular, when $d$ is a sufficiently large constant, our norm bound on $\|\matW\|_{\op}$ in Proposition~\ref{prop:norm_upper_bound} implies that for all $d$ large enough that $21d^{-1/3}\log^{2/3}d \le 1/2$, then with probabilty at least $1 - e^{-d^{1/2}/16}$, the event $ \{\|\matW\|_{\op} \le 3\}$ hholds. Hence, for $\boldlam \ge 6$, we see that 
\begin{eqnarray*}
\{\|\matW\|_{\op} \le 3\}\supset \left\{\lambda_1(\matM) \le 3 + \boldlam \le \frac{3\boldlam}{2}, \quad \lambda_r(\matM) - \|\matW\| \ge \boldlam/2, \quad 1 - \gap_r(\matM) \le 2/\boldlam\right\}
\end{eqnarray*}
Noting that in this regime, we have that $\boldgap \ge \frac{(6-1)^2}{6^2 + 1} = \frac{25}{37} \ge 1/2$. Hence, replacing $\boldgap$ by this lower bound, and  replacing $\frac{1 + \sqrt{\boldgap(1-\boldgap)}}{1-\boldgap}$ with $\boldlam$, we can state the following ``big-gap'' analogoue of Theorem~\ref{thm:main_tech_thm}

\begin{thm}\label{thm:main_tech_thm_biggap} Let $d$ be sufficiently large that $\Pr[\|\matW\|_{\op} \le 3] \ge 1/2$, and fix $\boldlam \ge 6$. Then if $\matM = \matW + \lambda \matU\matU^\top$ where $\matU \unifsim \Stief(d,r)$ and $\matW \sim \GOE(d)$, then for any $\Alg$ satisfying Definition~\ref{def:Query_def}, we have
\begin{multline}
\Exp_{\matM}\left[\Prit_{\Alg}\left[ \langle \Vhat, \matMabs \Vhat \rangle \ge\frac{23}{24}\sum_{\ell =1}^{r} \sigma_{\ell}(\matM)  \right] \big{|} \{\|\matW\|_{\op} \le 3\} \right] \le \\
2e \cdot\exp\left( - \frac{d}{8 \cdot 78  \log(d)} \cdot \boldlam^{-18(\frac{\itT}{r}+2)} \right) 
\end{multline}
Where $\Prit_{\Alg}$ is the probability taken with respect to the randomness of the algorithm. Moreover, on $\{\|\matW\|_{\op} \le 3$, we have $1 - \gap_r(\matM) \le 2/\boldlam$. 
\end{thm}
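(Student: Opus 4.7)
The plan is to mirror the proof of Theorem~\ref{thm:main_tech_thm}, making two substitutions appropriate to the large-gap regime: the conditioning event $\Egood(1/2)$ is replaced by the simpler event $\{\|\matW\|_\op \le 3\}$, which by Proposition~\ref{prop:norm_upper_bound} has probability at least $1/2$ for $d$ sufficiently large, and the $\boldgap$-dependent approximation threshold $1 - \boldgap/12$ is replaced by the constant $23/24$. First I would unpack the spectral consequences of the conditioning event. Under $\{\|\matW\|_\op \le 3\}$ with $\boldlam \ge 6$, Weyl's inequality and eigenvalue interlacing yield $\lambda_r(\matM) \ge \boldlam - \|\matW\|_\op \ge 3$ and $|\lambda_\ell(\matM)| \le \|\matW\|_\op \le 3$ for $\ell > r$. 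The top $r$ singular values are therefore equal to the top $r$ eigenvalues of $\matM$ and lie in $[\boldlam-3, \boldlam+3]$, giving $\sum_{\ell=1}^r \sigma_\ell(\matM) \ge r(\boldlam-3)$ and the claimed bound $1 - \gap_r(\matM) \le 2/\boldlam$.

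Next I would establish a large-gap analogue of Lemma~\ref{lem:estimation_reduction_good}: on the conditioning event, the near-optimality hypothesis $\langle \Vhat, \matMabs \Vhat\rangle \ge (23/24)\sum_\ell \sigma_\ell(\matM)$ implies $\lambda_{r'}(\Vhat^\top \matU\matU^\top \Vhat) \ge c_0$ for some $r' \in [r]$ and a universal constant $c_0 > 0$. The argument mirrors the Lowner-order step from Lemma~\ref{lem:estimation_reduction_good}: since the top-$r$ eigenvalues of $\matM$ are positive and dominate, $\matMabs$ differs from $\matM$ only on the low-norm negative spectral tail, yielding an inequality of the form $\Vhat^\top \matMabs \Vhat \preceq C \cdot I_r + \boldlam\, \Vhat^\top \matU\matU^\top \Vhat$ for a constant $C$ depending only on $\|\matW\|_\op$. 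Summing the top $r'$ eigenvalues of both sides and combining with the near-optimality hypothesis produces the desired lower bound on $\lambda_{r'}(\Vhat^\top \matU\matU^\top \Vhat)$ after a pigeonhole/Ky~Fan argument, as in the original lemma.

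With the reduction in hand, the remainder runs in parallel with Section~\ref{sec:main_tech_thm_proof}. Observations~\ref{Observation_2} and~\ref{obs:second_obs} reduce the problem to upper-bounding $\lambda_{r'}(\matU^\top \itV_{\itT+r}\itV_{\itT+r}^\top \matU)$ under a deterministic adaptive algorithm, and Theorem~\ref{thm:rank_r_thm_est} controls this quantity. Because $\boldlam \ge 6$ corresponds to $\boldgap \ge (6-1)^2/(6^2+1) = 25/37 \ge 1/2$, the factor $\boldgap^{-2}$ in Theorem~\ref{thm:rank_r_thm_est} is absorbed into an absolute constant. Substituting $k = \itT + r$ together with the $r'$ chosen in Step~2 and rearranging the resulting bound into a failure-probability statement then yields the stated exponent $\boldlam^{-18(\itT/r + 2)}$.

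The main obstacle will be Step~2: verifying that the $23/24$ approximation threshold is tight enough to force nontrivial $\matU$-overlap in the worst case $\boldlam = 6$. Naive Lowner-order bounds become nearly vacuous once $\boldlam$ and $\|\matW\|_\op$ are of comparable magnitude, so one must choose $r'$ strictly less than $r$ and carefully track how much of the trace budget $\sum_\ell \sigma_\ell(\matM)$ can be absorbed by the negative spectral tail without touching $\matU$. This is essentially the same delicate constant-chasing that underlies Lemma~\ref{lem:estimation_reduction_good}, now carried out with the deterministic bound $\|\matW\|_\op \le 3$ replacing the gap-dependent guarantees furnished by $\Egood(1/2)$.
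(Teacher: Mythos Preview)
Your proposal is essentially the same approach the paper takes. The paper derives Theorem~\ref{thm:main_tech_thm_biggap} by noting that on $\{\|\matW\|_{\op}\le 3\}$ with $\boldlam\ge 6$ one has $\lambda_1(\matM)\le \tfrac{3}{2}\boldlam$, $\lambda_r(\matM)-\|\matW\|_{\op}\gtrsim \boldlam$, and $\boldgap\ge 25/37\ge 1/2$, and then literally substitutes the lower bound $\boldgap\ge 1/2$ into the proof of Theorem~\ref{thm:main_tech_thm} (so that $1-\boldgap/12$ becomes $23/24$ and $\boldgap^{-3}\le 8$ in the exponent). The reduction step you describe is exactly Lemma~\ref{lem:estimation_reduction} (the event-free version underlying Lemma~\ref{lem:estimation_reduction_good}) applied with the crude spectral bounds supplied by $\{\|\matW\|_{\op}\le 3\}$ rather than by $\Egood(1/2)$; there is no need to re-derive it from scratch. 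Your caution about the constants near $\boldlam=6$ is warranted---the paper's own inequality $\lambda_r(\matM)-\|\matW\|_{\op}\ge \boldlam/2$ is loose there---but this affects only the numerical prefactors, not the structure of the argument.
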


\subsection{Proof of Theorems~\ref{thm:main_tech_thm} and~\ref{thm:main_tech_thm_rk1}
\label{sec:main_tech_thm_proof}}
For $r' = \ceil{r/2}$, we have via Lemma~\ref{lem:estimation_reduction_good} and Observation~\ref{obs:second_obs} that 
\begin{eqnarray*}
\Pr[\langle \Vhat, \matMabs \Vhat \rangle \ge \left(1 - \frac{\boldgap}{12}\right)\sum_{\ell =1}^{r} \sigma_{\ell}(\matM)  | \Egood] \le \frac{1}{\Pr[\Egood]} \cdot \Pr_{\matW,\matU,\Alg}\left[\lambda_{r'}(\matU^\top \itV_{\itT+r}\itV_{\itT + r} \matU) \ge \boldgap/4\right] 
\end{eqnarray*}
We set $ \frac{26 r \boldlam^{9k/r'} \log(20d^{2})}{d\boldgap^{2}}\log(e\delta^{-1}) = \boldgap/4$. Setting $k = \itT + r$, we have for $d \ge 10$ that we can take
\begin{eqnarray*}
\log(e\delta^{-1}) = \frac{d}{12 \cdot 26 \log(20d^2)\boldgap^{3}} \boldlam^{-18(\itT/r+2)} \le \frac{d}{78  \log(d)\boldgap^{3}} \boldlam^{-18(\itT/r+2)}
\end{eqnarray*}
Using the formula $\boldlam = \gapexp$ concludes. In the rank one case, we can improve the dependence on the gap. Indeed,  setting $\boldlam^{4(\itT+1)}  \cdot \frac{32 B\boldgap^{-3/2}(\log e\delta^{-1})}{d} = \boldgap/4$ and solving for $\delta$, basic manipulations reveal   
\begin{eqnarray*}\label{eq:rk_one_result}
\Pr[\langle \vhat, \matMabs \vhat \rangle \ge \frac{1}{6}\boldgap] \le e\cdot\exp\left( - \frac{d \boldgap^{5/2} }{128B}\cdot \boldlam^{-4(\itT+1)}\right)
\end{eqnarray*}

\subsection{Proof of Theorem~\ref{thm:rank_one_thm_est_B}\label{sec:batch_est_proof}}
When we can make batches of $B$ queries, we consider the matrices $\itV_{Bk} \in \Stief(d,Bk)$ associated with the $k$-th round of adaptivity , and note $\Alg$ must decide on its $B$ queries $\itv^{(Bk+1)},\dots,\itv^{B(k+1)}$ at the end of round $k$.The analogoue of Proposition~\ref{prop:recur_prop_rkone} becomes that, for $\tau_{k+1} \ge \sqrt{2B(k+1)}$, 
\begin{multline}\label{eq:recur_bound_rkone_B}
\Exp_{\matu}\Prit_{\matu}[\{ \Phi(\itV_{Bk};\matu) \le \frac{\tau_k}{d}\} \cap \{ \Phi(\itV_{B(k+1)};\matu)> \frac{\tau_{k+1}}{d}\} ] \le \\
\exp\left\{ \frac{\eta}{2}  \left((1+\eta)\boldlam^2 \tau_k  - \left(\sqrt{\tau_{k+1}}-\sqrt{2B(k+1)}\right)^2\right) \right\}~,
\end{multline}
where the only change was that we had to inflate the entropy term from an upper bound on $\sup_{V \in \Stief(d,k+1)} \Pr_{u \sim \calD}[\Phi(V;u)> \tau_{k+1}/d]$ to an upper bound on $\sup_{V \in \Stief(d,B(k+1))} \Pr_{u \sim \calD}[\Phi(V;u)> \tau_{k+1}/d]$, which forces us to replace $\sqrt{2k + 2}$ with $\sqrt{2B(k+1)}$. We now examine Appendix~\ref{sec:rk_one_thm_proof}, which  proves Theorem~\ref{thm:rank_one_thm_est} from Proposition~\ref{prop:recur_prop_rkone}. There it is shown that, if $\tau_0(\delta) = 32\boldgap^{-1}(\log \delta^{-1} + \boldgap^{-1/2})$, then the sequence $\tau_k := \lambda^{4k} \tau_0(\delta)$ satisfies
	\begin{eqnarray*}
	\Pr_{\matu \sim \calS^{d-1}}\Prit_{\matu}\left[\exists k \ge 1:  d\matu^\top \itV_k \itV_k^\top \matu \ge \tau_k \right] \le  \delta \quad (B = 1)~.
	\end{eqnarray*}
	Following the algebra in that section, one can check that if we define the sequence $\widetilde{\tau}_k = B \tau_k$, and  replace Proposition~\ref{prop:recur_prop_rkone} with Equation~\eqref{eq:recur_bound_rkone_B}, we get 
	\begin{eqnarray*}
	\Pr_{\matu \sim \calS^{d-1}}\Prit_{\matu}\left[\exists k \ge 1:  d\matu^\top \itV_{Bk} \itV_{Bk}^\top \matu \ge \widetilde{\tau}_k \right] \le  \delta \quad (B \ge 1) ~.
	\end{eqnarray*}
	Concluding then follows from pulling in the definition$\widetilde{\tau}_k = B \tau_k = B \lambda^{4k} \tau_0 =  32B \lambda^{4k}\boldgap^{-1}(\log \delta^{-1} + \boldgap^{-1/2})$, and rearranging.

\begin{comment}
\subsection{Proof of Proposition~\ref{truncChi_detec}}
	For ease of notation, let $\overline{\Prit}_A =  \overline{\Prit}[\cdot;\{A_{\theta}\}]$. Note that $\overline{\Prit} - \overline{\Prit}_A  \ge 0$, which impies that
	\begin{eqnarray}
	\int |\rmd\overline{\Prit} - \rmd\overline{\Prit}_A| = \int \rmd\overline{\Prit} - \rmd\overline{\Prit}_A = 1 - \overline{\Prit}_A(\calX) := 1-p,
	\end{eqnarray}
	so by the triangle inequality
	\begin{eqnarray*}
	&& \|\Qit - \overline{\Prit}\|_{TV} =\frac{1}{2}\int |\rmd\Qit(x) - \rmd\overline{\Prit}(x)|\\
	&\le& \frac{1}{2}\int |\rmd\Qit(x) - \rmd\Prit_A(x)| + \frac{1}{2} \int |\rmd\overline{\Prit} - \rmd\overline{\Prit}_A| = \frac{1}{2}\int |\rmd \Qit(x) - \rmd\Prit_A(x)| + \frac{1-p}{2},
	\end{eqnarray*}
	Next, since $\Qit$ is a probability measure,
	\begin{eqnarray*}
	\int |\rmd\Qit(x) - \rmd\Prit_A(x)| &=& \Exp_{\Qit} |\frac{\rmd\overline{\Prit}_A}{\rmd\Qit} - 1| \le \sqrt{\Exp_{\Qit} |\frac{\rmd\overline{\Prit}_A}{\rmd\Qit} - 1|^2}\\
	&=& \sqrt{\Exp_{\Qit} |\frac{\rmd\overline{\Prit}_A}{\rmd\Qit}|^2  + 1 - 2\overline{\Prit}_{A}(\calX)} = \sqrt{\Exp_{\Qit} |\frac{d\overline{\Pr}_A}{\rmd\Q}|^2  + 1 - 2p}\\
	&=& \sqrt{\Exp_{\Qit} |\frac{\rmd\overline{\Prit}_A}{\rmd\Qit}|^2  - 1 + 2(1-p) } \le  \sqrt{\Exp_{\Qit} |\frac{\rmd\overline{\Prit}_A}{\rmd\Qit}|^2  - 1} + \sqrt{2(1-p)}.
	\end{eqnarray*}
	Putting pieces together yields the proof.
\end{comment}

\section{Reduction from Optimization to Estimation of $\matu$}
%!TEX root = main.tex

\subsection{Proof of Lemma~\ref{lem:estimation_reduction_good} \label{sec:estimation_red_proof}}
We begin with a technical lemma that holds for a general $\matMabs$, and then proceed to simplify using the definition of $\Egood$:
\begin{lem}\label{lem:estimation_reduction} Suppose that $\matU \in \Stief(d,r)$, and set $\gaptil_r(\matMabs) := \lambda_r(\matMabs)  - \|\matW\|_{\op} > 0$. Then for any $\Vhat \in \Stief(d,r)$, one has 
\begin{eqnarray*}
\langle \Vhat,\matMabs \Vhat \rangle  &\ge& \left(1 - \frac{r'\gaptil_r(\matMabs) }{2r\lambda_1(\matMabs)}\right) \cdot \sum_{i=1}^r \lambda_i(\matMabs) \quad \text{implies} \\
\sigma_{r + 1- r'}(\Vhat^\top \matU\matU^\top \Vhat) &\ge& \frac{\gaptil_r(\matMabs)}{2\boldlam} 
\end{eqnarray*}
\end{lem}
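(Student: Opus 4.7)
The proof extends the rank-one sketch in the body --- which controlled a near-optimal $\vhat$ via $\vhat^\top \matMabs \vhat \lesssim \|\matW\|_{\op} + \boldlam \langle \vhat, \matu\rangle^2$ --- to control all $r$ eigenvalues of $\Vhat^\top \matMabs \Vhat$ in terms of the corresponding eigenvalues of $\Vhat^\top \matU\matU^\top \Vhat$ simultaneously, and then derive the conclusion by contrapositive.

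The key ingredient will be the L\"owner-order bound
\[
\Vhat^\top \matMabs \Vhat \;\preceq\; \|\matW\|_{\op}\, I_r \;+\; \boldlam\, \Vhat^\top \matU\matU^\top \Vhat.
\]
Applying Weyl's inequality $\lambda_i(A+B) \le \lambda_1(A) + \lambda_i(B)$ to this decomposition then yields the eigenvalue-wise bound
\[
\lambda_i(\Vhat^\top \matMabs \Vhat) \;\le\; \|\matW\|_{\op} + \boldlam\, \sigma_i(\Vhat^\top \matU\matU^\top \Vhat), \qquad i = 1, \dots, r.
\]

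I will then argue by contrapositive. Suppose $\sigma_{r+1-r'}(\Vhat^\top \matU\matU^\top \Vhat) < \gaptil_r(\matMabs)/(2\boldlam)$. For each of the $r'$ smallest indices $i \in \{r+1-r', \dots, r\}$, the eigenvalue bound above gives
\[
\lambda_i(\Vhat^\top \matMabs \Vhat) \;<\; \|\matW\|_{\op} + \tfrac{1}{2} \gaptil_r(\matMabs) \;=\; \lambda_r(\matMabs) - \tfrac{1}{2} \gaptil_r(\matMabs),
\]
while for the remaining $r - r'$ indices $i \le r - r'$, Cauchy--Poincar\'e interlacing gives the trivial bound $\lambda_i(\Vhat^\top \matMabs \Vhat) \le \lambda_i(\matMabs)$. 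Summing and using $\lambda_i(\matMabs) \ge \lambda_r(\matMabs)$ for $i \le r - r'$ will yield
\[
\langle \Vhat, \matMabs \Vhat \rangle \;<\; \sum_{i=1}^{r} \lambda_i(\matMabs) - \tfrac{r'}{2}\gaptil_r(\matMabs),
\]
and dividing by $\sum_{i=1}^r \lambda_i(\matMabs) \le r\, \lambda_1(\matMabs)$ gives the contrapositive of the lemma.

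The hard part will be establishing the L\"owner inequality in the first step. Because $\matM$ is not PSD in general, the naive identity $\Vhat^\top \matMabs \Vhat = \Vhat^\top \matW \Vhat + \boldlam \Vhat^\top \matU\matU^\top \Vhat$ does \emph{not} hold; one only has $\matMabs = \matM + 2 \matM_-$, where $\matM_-$ is the negative part of $\matM$. Bounding $\|\matM_-\|_{\op} \le \|\matW\|_{\op}$ crudely (which follows under $\gaptil_r(\matMabs) > 0$ from Weyl's inequality for singular values of the rank-$r$ perturbation $\boldlam \matU\matU^\top$, and which furthermore forces the top $r$ eigenvalues of $\matM$ to be positive and to agree with the top $r$ eigenvalues of $\matMabs$) costs an extra constant factor in the $\|\matW\|_{\op}$ term, which is too crude since $\gaptil_r \sim \boldgap$ is small while $\|\matW\|_{\op} \sim 2$ is of constant order. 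The tight bound will instead be obtained by splitting $\Vhat = A + B$ along the top-$r$ eigenspace of $\matMabs$ (on which $\matMabs = \matM$) and its orthogonal complement (on which $\matMabs \preceq \|\matW\|_{\op} I$), applying $\matM \preceq \|\matW\|_{\op} I_d + \boldlam \matU\matU^\top$ to the first piece, and combining via $A^\top A + B^\top B = \Vhat^\top \Vhat = I_r$.
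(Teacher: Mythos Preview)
Your contrapositive eigenvalue argument (steps 2--4) is sound and runs parallel to the paper's: the paper instead splits $\Vhat\Vhat^\top=\itVtil\itVtil^\top+\itVbar\itVbar^\top$ along the bottom $r'$ eigendirections of $\Vhat^\top\matU\matU^\top\Vhat$ and applies Ky Fan on the $\itVbar$ piece, but the two routes are equivalent once the key L\"owner bound is in hand.

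The genuine gap is in step 1. Carrying your split through with $A=P\Vhat$, $B=P^\perp\Vhat$ yields (correctly, since the cross terms vanish)
\[
\Vhat^\top\matMabs\Vhat = A^\top\matMabs A + B^\top\matMabs B \preceq \|\matW\|_{\op}(A^\top A+B^\top B) + \boldlam\,A^\top\matU\matU^\top A = \|\matW\|_{\op}I_r + \boldlam\,\Vhat^\top P\matU\matU^\top P\Vhat.
\]
But this leaves $\Vhat^\top P\matU\matU^\top P\Vhat$, not $\Vhat^\top\matU\matU^\top\Vhat$, and the two are \emph{not} L\"owner-ordered: $P\matU\matU^\top P\preceq\matU\matU^\top$ fails for generic rank-$r$ projections $P$. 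Worse, the target inequality $\Vhat^\top\matMabs\Vhat\preceq\|\matW\|_{\op}I_r+\boldlam\,\Vhat^\top\matU\matU^\top\Vhat$ is itself false under the lemma's hypotheses. Take $d=2$, $r=1$, $\matW=\diag(-1,1)$, $\matU=(1,1)^\top/\sqrt2$, $\boldlam=2$. Then $\matM=\begin{pmatrix}0&1\\1&2\end{pmatrix}$ with eigenvalues $1\pm\sqrt2$, so $\gaptil_1(\matMabs)=\sqrt2>0$; yet with $\Vhat=e_2$ one gets $\Vhat^\top\matMabs\Vhat=3/\sqrt2\approx2.12$ while $\|\matW\|_{\op}+\boldlam\,\Vhat^\top\matU\matU^\top\Vhat=1+2\cdot\tfrac12=2$. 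You were right to flag the $\matMabs$-versus-$\matM$ subtlety; the paper's own proof glosses over it by asserting ``$\matMabs\preceq\matWabs+\boldlam\matU\matU^\top$'', which is the operator-absolute-value triangle inequality with a PSD summand and fails for the very same $e_2$ in this example. So your diagnosis is correct, but the proposed eigenspace split does not close the gap, and neither does the paper's stated justification.
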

The above lemma is proved in the subsection below. To conclude, note that $\Egood(\gamma)$, we have that $\lambda_{\ell}(\matMabs) = \sigma_\ell(\matMabs) = \lambda_\ell(\matM)$ for $\ell \in [r]$, Moreover,  $\gaptil_r(\matMabs) := \lambda_r(\matM)  - \|\matW\|_{\op} \ge \frac{1}{2}(\boldlam + \lambdainv - 2)$, and $\lambda_1(\matMabs) = \lambda_1(\matM) \le \frac{3}{2}(\boldlam + \lambdainv)$. Hence, by the above lemma, we have that as long as
\begin{eqnarray*}
\langle \Vhat,\matMabs \Vhat \rangle \ge \left(1 - \frac{r' \boldgap}{6r}\right) \cdot \sum_{i=1}^r \lambda_i(\matMabs) \ge \left(1 - \frac{r'\gaptil_r(\matMabs) }{2r\lambda_1(\matMabs)}\right) \cdot \sum_{i=1}^r \lambda_i(\matMabs)
\end{eqnarray*}
Then,
\begin{eqnarray*}
\lambda_{r + 1- r'}(\Vhat^\top \matU\matU^\top \Vhat) &\ge& \frac{\gaptil_r(\matMabs)}{2\boldlam} \ge \boldgap/4
\end{eqnarray*}
Finally, we change the variables via $r' \leftarrow r + 1 - r'$. 

\subsection{Proof of Lemma~\ref{lem:estimation_reduction}\label{sec:est_lem_proof}}
Recall the definition 
\begin{eqnarray*}
\Phi(\itV;\matU) := \langle \Vhat, \matU \matU^\top \Vhat \rangle = \tr{\Vhat^\top \matU \matU^\top \Vhat}
\end{eqnarray*}
Given $r' \in [r]$, define the matrix $\itVtil \in \Stief(d,r')$ by
\begin{eqnarray*}
\itVtil := \Vhat \cdot \itOtil, \text{ where } \itOtil := \arg\inf \{ \Phi(\Vhat \cdot \itO;\matU) : \itO \in \Stief(r',r')\}
\end{eqnarray*}
Note then that $\itOtil$ corresponds to the eigenspace of the bottom $r'$ eigenvectors of the matrix $\Vhat^\top \matU \matU^\top \Vhat \succeq 0$ , and thus, we see that
\begin{eqnarray*}
\Phi(\itVtil;\matU) = \tr( \itOtil^\top\Vhat^\top \matU \matU^\top \Vhat\itOtil ) \le r'\lambda_{r + 1 - r'}(\Vhat^\top \matU \matU^\top \Vhat)
\end{eqnarray*}
We will now establish a lower bound on $\Phi(\itVtil;\matU)$. First, we observe that there exists an $\itVbar \in \Stief(d,r-r')$ such that $\Vhat\Vhat^\top = \itVtil\itVtil^\top + \itVbar\itVbar^\top$. Since $\matMabs \preceq \matWabs + \matU \matU^\top$, we have
\begin{eqnarray*}
\langle \Vhat, \matMabs \Vhat \rangle &=& \tr(\matMabs \Vhat\Vhat^\top)\\
&=&\tr(\matMabs\itVbar\itVbar^\top ) +  \tr(\matMabs\itVtil\itVtil^\top) + \\
&\le& \sum_{i=1}^{r - r'} \lambda_i(\matMabs) + \tr(\matMabs\itVtil\itVtil^\top ) \\
&\le& \sum_{i=1}^{r - r'} \lambda_i(\matMabs) + \tr(\matWabs\itVtil\itVtil^\top )  + \tr(\boldlam \matU \matU^\top \itVtil\itVtil^\top )\\
&\le& \sum_{i=1}^{r - r'} \lambda_i(\matMabs) + r'\|\matW\|_{\op}  + \boldlam  \Phi(\itVtil;\matU)\\
\end{eqnarray*}
In particular, if $\langle \Vhat, \matMabs \Vhat \rangle \ge (1 - \eta)\sum_{i=1}^{r} \lambda_i(\matMabs) $, then we must have that
\begin{eqnarray*}
\boldlam  \Phi(\itVtil;\matU) &\ge& (1 - \eta)\sum_{i=1}^{r} \lambda_i(\matMabs) - \sum_{i=1}^{r - r'} \lambda_i(\matMabs) + r'\|\matW\|_{\op}\\
&\ge& - \eta\sum_{i=1}^{r} \lambda_i(\matMabs) + \sum_{r=r-r'+1}^r \{\lambda_i(\matMabs)  - \|\matW\|_{\op}\}\\
&\ge& - \eta\sum_{i=1}^{r} \lambda_i(\matMabs) + r' \{\lambda_r(\matMabs)  - \|\matW\|_{\op}\}\\
&\ge& - \eta r\lambda_1(\matMabs) + r'  \gaptil_r(\matMabs)\\
&=& r' \cdot\left(\gap_r(\matMabs) -\eta  \cdot \frac{r\lambda_1(\matMabs)}{r'}\right)
\end{eqnarray*}
where $\gaptil_r(\matMabs) := \lambda_r(\matMabs)  - \|\matW\|_{\op}$. In particular, if we select $\eta = \gaptil_r(\matMabs)\cdot \frac{r'}{2r\lambda_1(\matMabs)}$, then we have that
\begin{eqnarray*}
 r' \boldlam \lambda_{r - r' + 1}(\Vhat^\top \matU \matU^\top \Vhat))  &\ge& r' \cdot\left(\gaptil_r(\matMabs) -\eta  \cdot \frac{r\lambda_1(\matMabs)}{r'}\right)\\
 &=& r' \cdot\left(\gaptil_r(\matMabs)\right)/2
\end{eqnarray*}
Rearranging proves the lemma.

\subsection{Proof of Lemma~\ref{lem:estimation_reduction}\label{sec:est_lem_proof}}
Recall the definition 
\begin{eqnarray*}
\Phi(\itV;\matU) := \langle \Vhat, \matU \matU^\top \Vhat \rangle = \tr{\Vhat^\top \matU \matU^\top \Vhat}
\end{eqnarray*}
Given $r' \in [r]$, define the matrix $\itVtil \in \Stief(d,r')$ by
\begin{eqnarray*}
\itVtil := \Vhat \cdot \itOtil, \text{ where } \itOtil := \arg\inf \{ \Phi(\Vhat \cdot \itO;\matU) : \itO \in \Stief(r',r')\}
\end{eqnarray*}
Note then that $\itOtil$ corresponds to the eigenspace of the bottom $r'$ eigenvectors of the matrix $\Vhat^\top \matU \matU^\top \Vhat \succeq 0$ , and thus, we see that
\begin{eqnarray*}
\Phi(\itVtil;\matU) = \tr( \itOtil^\top\Vhat^\top \matU \matU^\top \Vhat\itOtil ) \le r'\lambda_{r + 1 - r'}(\Vhat^\top \matU \matU^\top \Vhat)
\end{eqnarray*}
We will now establish a lower bound on $\Phi(\itVtil;\matU)$. First, we observe that there exists an $\itVbar \in \Stief(d,r-r')$ such that $\Vhat\Vhat^\top = \itVtil\itVtil^\top + \itVbar\itVbar^\top$. Since $\matMabs \preceq \matWabs + \matU \matU^\top$, we have
\begin{eqnarray*}
\langle \Vhat, \matMabs \Vhat \rangle &=& \tr(\matMabs \Vhat\Vhat^\top)\\
&=&\tr(\matMabs\itVbar\itVbar^\top ) +  \tr(\matMabs\itVtil\itVtil^\top) + \\
&\le& \sum_{i=1}^{r - r'} \lambda_i(\matMabs) + \tr(\matMabs\itVtil\itVtil^\top ) \\
&\le& \sum_{i=1}^{r - r'} \lambda_i(\matMabs) + \tr(\matWabs\itVtil\itVtil^\top )  + \tr(\boldlam \matU \matU^\top \itVtil\itVtil^\top )\\
&\le& \sum_{i=1}^{r - r'} \lambda_i(\matMabs) + r'\|\matW\|_{\op}  + \boldlam  \Phi(\itVtil;\matU)~.
\end{eqnarray*}
In particular, if $\langle \Vhat, \matMabs \Vhat \rangle \ge (1 - \eta)\sum_{i=1}^{r} \lambda_i(\matMabs) $, then we must have that
\begin{eqnarray*}
\boldlam  \Phi(\itVtil;\matU) &\ge& (1 - \eta)\sum_{i=1}^{r} \lambda_i(\matMabs) - \sum_{i=1}^{r - r'} \lambda_i(\matMabs) + r'\|\matW\|_{\op}\\
&\ge& - \eta\sum_{i=1}^{r} \lambda_i(\matMabs) + \sum_{r=r-r'+1}^r \{\lambda_i(\matMabs)  - \|\matW\|_{\op}\}\\
&\ge& - \eta\sum_{i=1}^{r} \lambda_i(\matMabs) + r' \{\lambda_r(\matMabs)  - \|\matW\|_{\op}\}\\
&\ge& - \eta r\lambda_1(\matMabs) + r'  \gaptil_r(\matMabs)\\
&=& r' \cdot\left(\gap_r(\matMabs) -\eta  \cdot \frac{r\lambda_1(\matMabs)}{r'}\right)~,
\end{eqnarray*}
where $\gaptil_r(\matMabs) := \lambda_r(\matMabs)  - \|\matW\|_{\op}$. In particular, if we select $\eta = \gaptil_r(\matMabs)\cdot \frac{r'}{2r\lambda_1(\matMabs)}$, then we have that
\begin{eqnarray*}
 r' \boldlam \cdot \lambda_{r - r' + 1}(\Vhat^\top \matU \matU^\top \Vhat))  &\ge& r' \cdot\left(\gaptil_r(\matMabs) -\eta  \cdot \frac{r\lambda_1(\matMabs)}{r'}\right)\\
 &=& r' \cdot\left(\gaptil_r(\matMabs)\right)/2~.
\end{eqnarray*}
Rearranging proves the lemma.

%!TEX root = main.tex

\section{Controlling the ``Entropy'' and ``Information'' Terms} 

\subsection{Proof of Lemma~\ref{lem:small_ball_prob} (``Entropy Term'')\label{sec:small_ball}}

Note that$\|V\|_{\op} = 1$, and hence the map $\matu \mapsto \|V\matu\|_2$ is $1$-Lipschitz. By spherical isoperimetry, this implies 
\begin{align*}
\Pr_{\matu \sim \calS^{d-1}}[\sqrt{d}\|V\matu\|_2 \ge \median(\sqrt{d}\|V\matu\|_2) + t] \le e^{-\frac{1}{2}t^2}.
\end{align*}

Note that by rotational invariance of $\matu$, we have $\median(\sqrt{d}\|V\matu\|_2) = \median_{\matu \sim \calS^{d-1}}\left[\sqrt{d\sum_{i=1}^k \matu_i^2 }\right] $. Hence by inequality that
\begin{align*}
\Pr(\|V\matu\|_2 \ge \sqrt{2k}) = \Pr(\|V\matu\|_2^2 \ge 2k) \le \frac{\Exp[\|V\matu\|_2^2}{2(k+1)} = \frac{1}{2}.
\end{align*}
Thus, $\median(\sqrt{d}\|V\matu\|_2) \le \sqrt{2(k+1)}$. Hence, by Markov's inequality and the fact that $\median(\sqrt{d}\|V\matu\|_2) \le \sqrt{2(k+1)}\le \tau_{k+1}$,
\begin{align*}
\Pr_{\matu \sim \sphere}[d\|V\matu\|_2^2 \ge \tau_{k+1}] &\le& \Pr_{\matu \sim \sphere}[\sqrt{d}\|V\matu\|_2 \ge \sqrt{\tau_{k+1}} ]\\
&\le& \Pr_{\theta \sim \sphere}[\sqrt{d}\|V\matu\|_2 \ge \median(\sqrt{d}\|V\matu\|_2) + (\sqrt{\tau_{k+1}} - \median(\sqrt{d}\|V\matu\|_2) ] \\
&\le& \exp( - \frac{1}{2}(\sqrt{\tau_{k+1}} - \median(\sqrt{d}\|V\matu\|_2)^2)\\
&\le& \exp( - \frac{1}{2}(\sqrt{\tau_{k+1}} - \sqrt{2(k+1)})^2).
\end{align*}

\subsection{Proof Lemma~\ref{ConditionalLemma}\label{CondLemProof}}
	Recall the definition $\Sigma_i := P_{i-1}(I_d+v^{(i)}v^{(i)\top})P_{i-1}$, and that $\mathcal{F}_{i-1}$ is the $\sigma$-algebra generated by $\vone,\wonetil,\dots,\viminustil,\wiminustil$. Since our algorithm is deterministic, $\vi$ is $\calF_{i-1}$ measurable. It then suffices to show that 
	\begin{align}
	\wjtil = P_{j-1}W\vj \big{|}\calF_{i} \sim \mathcal{N}(0,\Sigma_i).
	\end{align}
	Recall that $\Sigma_i$ is degenerate, so we understand $\mathcal{N}(0,\frac{1}{d}\Sigma_i)$ as a normal distribution absolutely continuous with respect to the Lebesque measure supported on $(\ker P_{i-1})^{\perp}$. Note that $\widetilde{w}^{(i)}$ is conditionally independent of $\wonetil,\dots,\wiminustil$ given $\vone,\dots,\viminus,\vi$. Consequently, the conditional distribution of $\witil$ given $\mathcal{F}_i$ can be computed as if the queries $\vone,\dots,\vi$ were fixed in advanced. 

	Hence, throughout, we shall assume that $\vone,\dots,\vi$ are deterministic, and consider the joint distribution of $\wonetil,\dots,\wiminustil,\witil$. We will show that $\witil$ is independent of $\wonetil,\dots,\wiminustil$, and that its marginal is $\mathcal{N}(0,\frac{1}{d}\Sigma_i)$. Since the map $W \mapsto P_{j-1}W\vj$ is linear maps, $\wonetil,\dots,\witil$ are jointly Gaussian with mean zero. Thus, it suffices to show that 1) the (marginal) covariance of $\witil$ is $\Sigma_i$ and, 2) the covariance between $\witil$ and $\wjtil$ for $j \ne i$ is $0$. The covariances are computed as
	\begin{align}\label{CovEq}
	\Exp\left[\widetilde{w}^{(j)} \widetilde{w}^{(j)\top}\right] = \Exp\left[(P_{i-1}Wv^{(i)})(P_{j-1}Wv^{(j)})^{\top}\right] = P_{i-1} \Exp\left[Wv^{(i)}v^{(j)\top}W\right] P_{j-1},
	\end{align}
	and we compute the inner term with the following lemma.
	\begin{lem}\label{GaussComp}
	For any $v^{(i)},v^{(j)}$, one has
	\begin{align}\label{CovEq2}
	\Exp\left[Wv^{(i)}v^{(j)\top}W\right] = v^{(j)}v^{(i)\top} + \langle v^{(i)},v^{(j)} \rangle I .
	\end{align}
	\end{lem}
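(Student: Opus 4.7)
The plan is to reduce the identity to a direct computation of the two-point function of the GOE entries. Fix indices $k, l \in [d]$ and write $a = v^{(i)}$, $b = v^{(j)}$, so that
$$[W a b^\top W]_{kl} = \sum_{m,n=1}^{d} W_{km}\, a_m b_n\, W_{nl},$$
and hence $\Exp\big[[W a b^\top W]_{kl}\big] = \sum_{m,n} a_m b_n\, \Exp[W_{km} W_{nl}]$. All of the content of the lemma is therefore in the two-point moment $\Exp[W_{km} W_{nl}]$, after which matching a sum of Kronecker deltas back to matrix form is routine.

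From the definition of $\GOE(d)$---equivalently, $\matW = (\matX + \matX^\top)/\sqrt{2d}$ with $\matX$ having i.i.d.\ standard Gaussian entries---one verifies the uniform identity
$$\Exp[W_{km} W_{nl}] \;=\; \frac{1}{d}\bigl(\delta_{kn}\delta_{ml} + \delta_{kl}\delta_{mn}\bigr).$$
There are three cases to check: when $\{k,m\}$ and $\{n,l\}$ are distinct unordered pairs, the two Gaussian entries are independent, so the expectation vanishes and both Kronecker terms are zero; when $\{k,m\}=\{n,l\}$ with $k\neq m$, exactly one of the Kronecker terms fires, matching the off-diagonal variance $1/d$; and when $k=l=m=n$, both Kronecker terms fire, matching the diagonal variance $2/d$. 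Substituting into the sum and using $\sum_{m,n} a_m b_n \delta_{mn} = \langle a, b\rangle$ yields $\Exp\bigl[[W a b^\top W]_{kl}\bigr] = \tfrac{1}{d}\bigl(b_k a_l + \delta_{kl}\langle a, b\rangle\bigr)$, which in matrix form is $\Exp[W a b^\top W] = \tfrac{1}{d}\bigl(b a^\top + \langle a, b\rangle I\bigr)$, establishing the claim (with the $1/d$ normalization carried naturally through the GOE scaling).

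There is no substantive obstacle here---the computation is elementary once the two-point function is written down. The only delicate point is the uniform validity of the Kronecker-delta formula across the diagonal and off-diagonal cases, which is precisely why the GOE convention doubles the diagonal variance to $2/d$: this makes the second-moment kernel coincide with the symmetric tensor $\delta_{kn}\delta_{ml} + \delta_{kl}\delta_{mn}$ rather than requiring a case split. As a coordinate-free alternative, one could apply Gaussian integration by parts to the bilinear functional $W \mapsto W a b^\top W$ and obtain the same identity in one line; but the entry-wise route already yields the claim without any auxiliary machinery.
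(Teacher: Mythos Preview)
Your proof is correct and follows essentially the same entry-wise computation as the paper's proof: both expand $\Exp[(Wab^\top W)_{kl}]$ as $\sum_{m,n} a_m b_n\,\Exp[W_{km}W_{nl}]$ and evaluate the GOE second moments case by case. Your organization via the uniform two-point formula $\Exp[W_{km}W_{nl}] = \tfrac{1}{d}(\delta_{kn}\delta_{ml} + \delta_{kl}\delta_{mn})$ is slightly cleaner than the paper's separate diagonal/off-diagonal case split, and you correctly flag the $1/d$ factor that the paper's stated lemma drops (the paper's own proof treats $\Exp[W_{aa}^2]=2$ rather than $2/d$, so the missing $1/d$ is an internal normalization inconsistency in the paper, not an error on your part).
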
 
	For $\vi = \vj$, Equations~\eqref{CovEq} and~\eqref{CovEq2} immediately imply $\witil$ has covariance $\Sigma_i$. Moreover, for $j < i$, we have
	\begin{align}
	P_{i-1} \Exp\left[Wv^{(i)}v^{(j)\top}W\right] P_{j-1} \overset{(i)}{=} P_{i-1}v^{(i)}v^{(j)\top}P_{j-1} \overset{(ii)}{=} 0
	\end{align}
	where $(i)$ holds from Lemma~\ref{GaussComp} and the fact that $\langle \vi,\vj \rangle = 0$ (since $\vone,\dots,\vi$ are assumed to be orthogonal), and $(ii)$ holds since $\vj \in \ker(P_{i-1})$, as $P_{i-1}$ projects onto the complement of $\{\vone,\dots,\viminus\}$.

	\begin{proof}[Proof of Lemma~\ref{GaussComp}] 
	For $a \in \{1,\cdots,d\}$,
	\begin{align*}
	& \Exp[Wv^{(i)}v^{(j)\top}W]_{aa} = \Exp[W_{aa}^2]v_a^{(i)}v^{(j)}_a + \sum_{p \ne a}\Exp[W_{ap}^2]v_a^{(i)}v^{(j)}_a \\
	&= 2v_a^{(i)}v^{(j)}_a + \sum_{p \ne a}v_p^{(i)}v^{(j)}_p = v_a^{(i)}v^{(j)}_a + \sum_{p}v_p^{(i)}v^{(j)}_p = v_a^{(i)}v^{(j)}_a + \langle v^{(i)}, v^{(j)}\rangle .
	\end{align*}
	Whereas for $a \ne b$, 
	\begin{align}
	\Exp[Wv^{(i)}v^{(j)\top}W]_{ab} &= \Exp\left[\sum_{p,q}W_{ap}W_{bq}v_p^{(i)}v^{(j)}_q\right] = \sum_{p,q}\Exp[W_{ap}W_{bq}]v_p^{(i)}v^{(j)}_q.
	\end{align}
	$W_{ap}$ and $W_{bq}$ are independent unless $(a,p) = (b,q)$ or $(a,p) = (q,b)$. If $a \ne b$,  then this means the only term in the above sum which is non zero is $p = b$ and $q = a$, which yields $v_a^{(j)}v_b^{(i)}$.
	\end{proof}

\subsection{Proof of Lemma~\ref{Generic_UB_LL}\label{sec:gen_ub_llProof}}
	To make the argument, we will need a bit of notation. First $\itV_{i:j} \in \Stief(j-i,d)$ to be the matrix whose columns are the vector $\vi,\dots,\vj$.  In particular, $\itV_k = \itV_{1:k}$.We will let $\itVtil_{j+1:k}$ be a stand-in for an arbitrary matrix whose columns are $\vjplustil,\dots,\vktil$. Given two matrices $\itV_{1:j},\itV_{j+1:k}$, let $\itV_{1:j} \oplus \itV_{j+1:k}$ denote the matrix whose columns are such the concatenation of $\itV_{1:j}$ and $\itVtil_{j+1:k}$. We will introduce the ``head set'' of all sequences of orthogonal matrix $\itV_{1:j}$ which can be extended to matrices in $\calV_k$
	\begin{align}
	\calVhead^{1:j} := \{\itV_{1:j} : \exists  \itVtil_{j+1:i} \text{ such that } \itV_{1:j} \oplus \itVtil_{j+1:i} \in \calV_k\}.
	\end{align}
	and the ``tail'' set of all possible ways to complete the matrix $\itV_{1:j}$ such that 
	\begin{align}
	\calVtail^{1:j}(\itV_{1:j}) := \{\itVtil_{j+1:k} : \itV_{1:j} \oplus \itVtil_{j+1:k} \in \calV_k\}.
	\end{align}
	Observe that $\calVtail^{j+1:k}(\itV_{1:j]})$ depends only on $\itZ_{j-1}$, since the vectors $\itV_{1:j}$ are $\itZ_{j-1}$-measurable (recall that the $j$-th measurement is decided upon at the end of the $i$-th round. 

	Recall the definition:
	\begin{align*}
	g_i(\itVtil_i) &:= \Exp_{\Prit_0}\left[\left(\frac{\rmd\Prit_u(\itZ_i |\itZ_{i-1})}{\rmd\Prit_0(\itZ_i | \itZ_{i-1})}\right)^r \I(\itV_i = \itVtil_i) \right].
	\end{align*}

	 We now define partial supremum over the products of the terms $g_i(\itV_{1:j}\oplus \itVtil_{j+1:i})$ for $i \ge j$ as follows 
	 \begin{align*}
	 G_{j}(\itV_{1:j}) = \sup_{\itVtil_{j+1:k} \in \calVtail^{j+1:k}(\itV_{1:j]})} \prod_{i=j+1}^k g_i( \itV_{1:j} \oplus \itVtil_{j+1:i}).
	 \end{align*}
	 adopting the convention $G_{k}(v^{(1:k)}) = 1$ (since it's an empty product). We observe also that
	 \begin{align*}
	  G_{0} =  \sup_{\itVtil_k \in  \calV^k} \prod_{i=1}^k g_i(\itV_k)
	 \end{align*}
	 does not take any arguments. Finally, define the shorthand for the likelihood ratio terms
	 \begin{align*}
	 P_j(\itZ_j) := \left(\frac{\rmd\Prit_u(\itZ_j)}{ \rmd \Prit_0(\itZ_{j}) }\right)^r \quad \text{and } \quad P_j(\itZ_j | \itZ_{j-1}) = \left(\frac{\rmd\Prit_u(\itZ_j | \itZ_{j-1})}{ \rmd \Prit_0(\itZ_j | \itZ_{j-1}) }\right)^r.
	 \end{align*}
	 adopting the convention that $P_0(Z_0) = 1$. We observe then that, for all $j \in \{1,\dots,k\}$, 
	 \begin{align}\label{gj_alt_eq}
	 P_j(\itZ_j) = P_{j}(\itZ_j|\itZ_{j-1})P_{j-1}(\itZ_{j-1}) \quad\text{and}\quad \Exp_{\Prit_0}[P_{j}(\itZ_j|\itZ_{j-1}) | \itZ_{j-1}] = g_j(\itV_{1:j}).
	 \end{align}
	 Hence, with out notation, at $i = k$ we have
	 \begin{align*}
	 \Exp_{\Prit_0}\left[\left(\frac{\rmd\Prit_u(\itZ_k)}{ \rmd \Prit_0(\itZ_k) }\right)^r \I(\itV_{k} \in \calV_k)\right] = \Exp_{\Prit_0}\left[G_{k}(\itV_{1:k}) \cdot P_k(\itZ_k) \cdot \I(\itV_{1:k} \in \calVhead^{1:k}))\right],  
	 \end{align*}
	 since we took  $G_{k}(\itV_{1:k})  := 1$. Moreover, since $P_0(\itZ_0) = 1$, we have
	 \begin{align} 	 \Exp_{\Prit_0}\left[G_{0} \cdot P_0(\itZ_0)\right] = \sup_{\itV_k'} \prod_{i=1}^k g_i(\itV_k').
	 \end{align}
	 Hence, it suffices to show that 
	 \begin{align*}
	  \Exp_{\Prit_0}\left[G_{k}(\itV_{1:k}) \cdot P_k(\itZ_k) \cdot \I(\itV_{1:k} \in \calVhead^{1:k}))\right] \le \Exp_{\Prit_0}\left[G_0 \cdot P_0(\itZ_0)\right].
	 \end{align*}
	  The above display is a direct consequence of applying the following claim inductively:
	 \begin{claim}\label{llik}
	 It holds that
	 \begin{multline*}
		\Exp_{\Prit_0}\left[G_{j}(\itV_{1:j}) \cdot P_j(\itZ_j) \cdot \I( \itV_{1:j} \in \calVhead^{1:j}))\right] \le  \Exp_{\Prit_0}\left[\I( \itV_{1:j-1} \in \calVhead^{1:j-1}) G_{j-1}(\itV_{1:j-1})  P_{j-1}(\itZ_{j-1}) \right].
		\end{multline*}
	 \end{claim}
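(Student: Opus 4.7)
\textbf{Proof proposal for Claim~\ref{llik}.} The plan is to use the tower property of conditional expectation with respect to $\itZ_{j-1}$, exploit the fact that $\vj$ (and hence $\itV_{1:j}$) is $\itZ_{j-1}$-measurable because the $j$-th query is committed at the end of round $j-1$, and then convert the pointwise supremum defining $G_{j-1}$ into a bound on $g_j(\itV_{1:j}) \cdot G_{j}(\itV_{1:j})$. Concretely, I would start from the identity $P_j(\itZ_j) = P_j(\itZ_j \mid \itZ_{j-1}) \, P_{j-1}(\itZ_{j-1})$ in~\eqref{gj_alt_eq}, and then condition:
\begin{align*}
&\Exp_{\Prit_0}\!\left[G_{j}(\itV_{1:j}) \, P_j(\itZ_j) \, \I(\itV_{1:j} \in \calVhead^{1:j})\right] \\
&\quad=\Exp_{\Prit_0}\!\left[G_{j}(\itV_{1:j}) \, P_{j-1}(\itZ_{j-1})\, \I(\itV_{1:j} \in \calVhead^{1:j}) \, \Exp_{\Prit_0}[P_j(\itZ_j \mid \itZ_{j-1}) \mid \itZ_{j-1}]\right] \\
&\quad=\Exp_{\Prit_0}\!\left[G_{j}(\itV_{1:j}) \, g_j(\itV_{1:j})\, P_{j-1}(\itZ_{j-1})\, \I(\itV_{1:j} \in \calVhead^{1:j})\right],
\end{align*}
using the second identity in~\eqref{gj_alt_eq} and the $\itZ_{j-1}$-measurability of $\itV_{1:j}$, $G_{j}(\itV_{1:j})$, $P_{j-1}(\itZ_{j-1})$, and the head-set indicator.

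The key deterministic inequality to establish next is
\begin{equation*}
G_{j}(\itV_{1:j}) \, g_j(\itV_{1:j}) \, \I(\itV_{1:j} \in \calVhead^{1:j}) \;\le\; G_{j-1}(\itV_{1:j-1}) \, \I(\itV_{1:j-1} \in \calVhead^{1:j-1}).
\end{equation*}
This follows from unfolding one layer of the supremum in the definition of $G_{j-1}$: on the event $\{\itV_{1:j} \in \calVhead^{1:j}\}$, by definition there exists some $\itVtil_{j+1:k}$ with $\itV_{1:j}\oplus \itVtil_{j+1:k}\in \calV_k$, so in particular $\vj \in \calVtail^{j:j}(\itV_{1:j-1})$ and $\itV_{1:j-1} \in \calVhead^{1:j-1}$. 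Noting that $\itVtil_{j+1:k}\in\calVtail^{j+1:k}(\itV_{1:j})$ if and only if $\vj \oplus \itVtil_{j+1:k} \in \calVtail^{j:k}(\itV_{1:j-1})$, and that all the $g_i$'s are non-negative (they are expectations of non-negative quantities),
\begin{align*}
G_{j-1}(\itV_{1:j-1}) \;&=\; \sup_{\itVtil_{j:k} \,\in\, \calVtail^{j:k}(\itV_{1:j-1})}\; \prod_{i=j}^k g_i(\itV_{1:j-1} \oplus \itVtil_{j:i}) \\
&\ge\; g_j(\itV_{1:j})\, \sup_{\itVtil_{j+1:k}\,\in\,\calVtail^{j+1:k}(\itV_{1:j})} \prod_{i=j+1}^k g_i(\itV_{1:j} \oplus \itVtil_{j+1:i}) \;=\; g_j(\itV_{1:j})\, G_{j}(\itV_{1:j}),
\end{align*}
where the inequality comes from restricting the outer supremum to extensions whose first block is the particular $\vj$ chosen by the algorithm.

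Plugging this bound into the conditioned expression completes the induction step and yields the claim. The main obstacle, which is more bookkeeping than substance, is keeping the head/tail decomposition consistent and verifying that the indicator $\I(\itV_{1:j} \in \calVhead^{1:j})$ can be weakened to $\I(\itV_{1:j-1} \in \calVhead^{1:j-1})$ without loss; this is handled by the observation above that membership in $\calVhead^{1:j}$ forces $\vj$ to lie in $\calVtail^{j:j}(\itV_{1:j-1})$, so the weaker head-set event still retains the particular extension needed to invoke the supremum defining $G_{j-1}$. The non-negativity of the $g_i$'s (so that dropping the supremum in intermediate steps only decreases the product) and the $\itZ_{j-1}$-measurability of $\vj$ (so that conditional expectations factor cleanly) are the two structural ingredients that make the argument go through.
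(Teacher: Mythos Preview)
Your proposal is correct and follows essentially the same approach as the paper: condition on $\itZ_{j-1}$ using the tower property and the $\itZ_{j-1}$-measurability of $\itV_{1:j}$ to reduce the inner expectation to $g_j(\itV_{1:j})$, then establish the deterministic inequality $G_{j}(\itV_{1:j})\,g_j(\itV_{1:j})\,\I(\itV_{1:j}\in\calVhead^{1:j}) \le G_{j-1}(\itV_{1:j-1})\,\I(\itV_{1:j-1}\in\calVhead^{1:j-1})$ by observing that fixing the first block of the tail extension to be $\vj$ only shrinks the supremum defining $G_{j-1}$. Your explicit mention of the non-negativity of the $g_i$'s is a nice touch that the paper leaves implicit.
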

	To prove the above claim, we have that for any $j \in [k]$
		\begin{align*}%\label{Main_Inductive_Step_Chi}
		&\Exp_{\Prit_0}\left[G_{j}(\itV_{1:j}) \cdot P_j(\itZ_j) \cdot  \I(\itV_{1:j} \in \calVhead^{1:j}))\right] \\
		&=\Exp_{\Prit_0}\left[\Exp\left[G_{j}(\itV_{1:j}) \cdot P_j(\itZ_j) \cdot \I( \itV_{1:j} \in \calVhead^{1:j})) \big{|} \itZ_{j-1} \right]\right] \\
		&=\Exp_{\Prit_0}\left[\Exp\left[G_{j}(\itV_{1:j}) \cdot P_{j-1}(\itZ_{j-1}) \cdot P_{j}(\itZ_j|\itZ_{j-1}) \cdot   \I( \itV_{1:j} \in \calVhead^{1:j})) \big{|} \itZ_{j-1} \right]\right] \\
		&\overset{(i)}{=}\Exp_{\Prit_0}\left[P_{j-1}(\itZ_{j-1}) \cdot  \Exp\left[G_{j}(\itV_{1:j}) P_{j}(\itZ_j|\itZ_{j-1})  \I(\itV_{1:j}) \in \calVhead^{1:j})) \big{|} Z_{j-1} \right]\right] \\
		&\overset{(ii)}{=}\Exp_{\Prit_0}\left[P_{j-1}(\itZ_{j-1}) \cdot G_{j}(\itV_{1:j}) \I( \itV_{1:j} \in \calVhead^{1:j}))  \Exp\left[ P_{j}(\itZ_j|\itZ_{j-1})   \big{|} \itZ_{j-1} \right]\right] \\ \\
		&\overset{(iii)}{=}\Exp_{\Prit_0}\left[ P_{j-1}(\itZ_{j-1}) \cdot G_{j}(\itV_{1:j}) \I( \itV_{1:j} \in \calVhead^{1:j}))  \cdot g_j(\itV_{1:j})   \right] .
		\end{align*}
		Here, $(i)$ follows since $P_{j-1}(\itZ_{j-1})$ is $\itZ_{j-1}$-measurable, $(ii)$ follows since $ G_{j}(\itV_{j}) $ and $\I(\itV_j \in \calVhead^{1:j})$ are deterministic functions of $\itV_j$, which by assumption, are deterministic functions of $\itZ_{j-1}$. Finally, $(iii)$ follows from the definition of $g_j$, as noted in Equation~\eqref{gj_alt_eq}. To conclude, we now need only show that
		\begin{align}\label{InnerLikelihoodClaimEq}
		G_{j}(\itV_{1:j}) \I( \itV_{1:j} \in \calVhead^{1:j}))  \cdot g_j(\itV_{1:j})  \le \I( \itV_{1:j-1} \in \calVhead^{1:j-1})) G_{j-1}(\itV_{1:j-1}).
		\end{align}
		Note that, since $\itV_{1:j} = [\vone|\dots|\vj] \in \calVhead^{1:j}$, then for for any $\itVtil_{j+1:k} = [\vjplustil|\dots|\vktil] \in \calVtail^{j+1:k}(\itV_{1:j})$, we have that $\vone,\dots,\vj,\widetilde{\mathsf{v}}^{(j+1)},\dots,\vktil \in \calV^k$. In particular, this implies that 
		\begin{align}
		\vj \oplus \itVtil_{j+1:k} = [\vj|\widetilde{\mathsf{v}}^{(j+1)}|\dots|\vktil] \in \calVtail^{j:k}(\itV_{j-2}).
		\end{align}
		Hence, we have
		\begin{align*}
		G_{j}(\itV_{1:j}) \I( \itV_{1:j}) \in \calVhead^{1:j}) g_j(\itV)  &= \I( \itV_{1:j}) \in \calVhead^{1:j}) \cdot g_j(\itV_{1:j})\sup_{\itVtil_{j+1:k} \in \calVtail^{j+1:k}(\itV_{1:j})} \prod_{i = j+1}^k g_i(\itV_{1:j} \oplus \itVtil_{j+1:i})\\
		 &\le \I( \itV_{1:j-1} \in \calVhead^{1:j-1}) \cdot \sup_{\itVtil_{j:k} \in \calVtail^{j:k}(\itV_{j-1})} \prod_{i = j}^k g_i(\itV_{1:j-1} \oplus \itVtil_{j:i} )\\
		 &= \I( \itV_{1:j-1} \in \calVhead^{1:j-1})\cdot G_{j-1}(\itV_{1:j-1}).
		\end{align*}

\subsection{Proof of Lemma~\ref{lem:power_divergence_comp}\label{sec:power_div_proof}}

	 By a translation, we may assume without loss of generality that $\mu_1 = \mu$ and $\mu_2 =  0$.
	\begin{align*}
	\Exp_{\Q}\left[\left(\frac{\rmd \Pr}{\rmd \Q}\right)^{1+\eta}\right]&=  \Exp_{\matz \sim \mathcal{N}(0,\Sigma)}e^{- \frac{1+\eta}{2}\left\{(\matz-\mu)^{\top}\Sigma^{\dagger}( \matz - \mu) - \matz^{\top}\Sigma^{\dagger} \matz\right\}}\\
	&=  \Exp_{\matz \sim \mathcal{N}(0,\Sigma)}e^{- \frac{1+\eta}{2}\left\{(\mu^{\top}\Sigma^{\dagger} \mu - 2\mu^{\top}\Sigma^{\dagger}\matz\right\}}\\
	&=  e^{-\frac{1+\eta}{2}\mu^{\top}\Sigma \mu}\Exp_{\matz \sim \mathcal{N}(0,\Sigma)}e^{- 2\mu^{\top}\Sigma^{\dagger} \matz\}}\\
	&=  e^{-\frac{1+\eta}{2}\mu^{\top}\Sigma \mu}\Exp_{\matx \sim \mathcal{N}(0,I)}e^{(1+\eta)\mu^{\top}\Sigma^{\dagger} \Sigma^{1/2} \matx}\\
	&=  e^{-\frac{1+\eta}{2}\mu^{\top}\Sigma^{\dagger} \mu}\exp(\frac{(1+\eta)^2}{2} \mu^{\top} \Sigma^{\dagger} \Sigma \Sigma^{\dagger} \mu)\\
	&=  e^{-\frac{1+\eta}{2}\mu^{\top}\Sigma^{\dagger} \mu}\exp(\frac{(1+\eta)^2}{2} \mu^{\top} \Sigma^{\dagger} \mu)\\
	&= \exp(\frac{\eta(1 + \eta)}{2}\mu^{\top} \Sigma^{\dagger} \mu).
	\end{align*}
%!TEX root = main.tex
\section{Proof of Theorem~\ref{thm:rank_one_thm_est}\label{sec:rk_one_thm_proof} }

	Define $\tau_0(\delta) = 32\boldgap^{-1}(\log \delta^{-1} + \boldgap^{-1/2})$, and set $\tau_k = \lambda^{4k} \tau_0(\delta)$. It then suffices to show that, with probability at least $1 - \delta$, 
	\begin{eqnarray*}
	\Pr_{\matu \sim \calS^{d-1}}\Prit_{\matu}\left[\exists k \ge 1:  d\matu^\top \itV_k \itV_k^\top \matu \ge \tau_k \right] \le  \delta.
	\end{eqnarray*}
	Throughout, we will use the following technical lemmas to simplify our expressions:
	\begin{lem}\label{lem:boldgapsize} $\log(\boldlam) \ge \boldgap^{1/2}$, and $\boldlam^2/(\boldlam-1)^2 \le \boldgap^{-1}$
	\end{lem}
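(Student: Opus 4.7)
The plan is to recall that $\boldgap = (\boldlam-1)^2/(\boldlam^2+1)$ by definition, so $\boldgap^{1/2} = (\boldlam-1)/\sqrt{\boldlam^2+1}$, and then tackle the two claims separately.

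For the second inequality $\boldlam^2/(\boldlam-1)^2 \le \boldgap^{-1}$, I would simply invert to reduce it to $(\boldlam-1)^2/\boldlam^2 \ge \boldgap = (\boldlam-1)^2/(\boldlam^2+1)$, which follows from the trivial bound $\boldlam^2 \le \boldlam^2 + 1$. This piece is immediate once one writes out the definition of $\boldgap$ in closed form.

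For the first inequality $\log(\boldlam) \ge \boldgap^{1/2}$, my approach is to chain two well-known elementary bounds. Since $\sqrt{\boldlam^2+1} \ge \boldlam$, we get
\begin{equation*}
\boldgap^{1/2} \;=\; \frac{\boldlam-1}{\sqrt{\boldlam^2+1}} \;\le\; \frac{\boldlam-1}{\boldlam} \;=\; 1-\frac{1}{\boldlam}.
\end{equation*}
Then I would invoke the standard logarithm inequality $\log x \ge 1 - 1/x$ for $x \ge 1$ (equivalent, after substituting $x \mapsto 1/x$, to the ubiquitous $\log x \le x-1$). Combining these two steps yields $\log \boldlam \ge 1 - 1/\boldlam \ge \boldgap^{1/2}$.

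There is essentially no obstacle here — both bounds are purely algebraic consequences of the closed-form expression $\boldgap = (\boldlam-1)^2/(\boldlam^2+1)$ from Equation~\eqref{eq:boldgapeq}. The only subtlety is ensuring $\boldlam \ge 1$, which is guaranteed since $\boldlam = \gapexp \ge 1$ whenever $\boldgap \in (0,1/2]$; this is immediate from the formula defining $\boldlam$ (the numerator is at least $1$ and the denominator lies in $[1/2,1)$). No calculus or case analysis is needed.
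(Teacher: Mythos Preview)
Your proof is correct and follows essentially the same route as the paper's: both use the elementary bound $\log x \ge (x-1)/x = 1 - 1/x$ together with $\boldlam^2 \le \boldlam^2 + 1$ to handle the first inequality, and the same trivial comparison for the second. The only difference is cosmetic ordering of the two steps.
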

	\begin{proof}
	Since $\log(x) \ge \frac{x-1}{x}$ for $x \ge 0$, we have $\log(\boldlam) \ge \frac{\boldlam - 1}{\boldlam} = \sqrt{\frac{(\boldlam-1)^2}{\boldlam^2}} > \sqrt{\frac{(\boldlam-1)^2}{\boldlam^2}} = \sqrt{\boldgap}$. The second point follows since $\boldlam^2/(\boldlam-1)^2 \le (\boldlam^2+1)/(\boldlam-1)^2 = \boldgap^{-1}$.
	\end{proof}
	 
	\begin{lem}\label{lem:technical_recursion_max}
	\begin{eqnarray*}
	\max_{k \ge 0}  \boldlam^{-4k}(k+1) \le 1 + \frac{1}{4e \log (\boldlam)} & \text{and} &  \max_{k \ge 1} \boldlam^{-4k}\log(1+k)  \le \frac{1}{4e\log \boldlam}.
	\end{eqnarray*}
	\end{lem}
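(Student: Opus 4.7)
The plan is to reduce both inequalities to a single elementary calculus exercise on the function $h(x) := x\,e^{-cx}$ with $c := 4\log\boldlam > 0$ (positive since $\boldlam > 1$). Differentiation gives $h'(x) = e^{-cx}(1-cx)$, so the unique maximum of $h$ on $[0,\infty)$ is attained at $x^\star = 1/c$ with value $h(x^\star) = 1/(ce) = \frac{1}{4e\log\boldlam}$. In particular, for every integer $k\ge 0$,
\[
k\,\boldlam^{-4k} \;=\; h(k) \;\le\; \frac{1}{4e\log\boldlam}.
\]

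For the first inequality, I would use the trivial decomposition $(k+1) = 1 + k$ to split
\[
(k+1)\boldlam^{-4k} \;=\; \boldlam^{-4k} + k\,\boldlam^{-4k},
\]
and bound the two summands separately. Since $\boldlam > 1$ and $k\ge 0$, the first summand is at most $1$; the second is at most $\frac{1}{4e\log\boldlam}$ by the calculus bound above. Summing yields $(k+1)\boldlam^{-4k} \le 1 + \frac{1}{4e\log\boldlam}$ uniformly in $k$. The key subtlety to flag is that the two pointwise bounds are attained at different values of $k$ (the first at $k=0$, the second near $k \approx 1/(4\log\boldlam)$), but this does not matter, since we only need each summand to be pointwise at most its own supremum.

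For the second inequality, I would invoke the standard estimate $\log(1+k) \le k$ valid for $k \ge 0$ and chain:
\[
\boldlam^{-4k}\log(1+k) \;\le\; k\,\boldlam^{-4k} \;\le\; \frac{1}{4e\log\boldlam}.
\]
This is immediate from the calculus claim and requires no additional work.

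I do not anticipate any real obstacle: the entire argument is a single optimization of $h(x)=xe^{-cx}$ combined with the elementary bounds $\boldlam^{-4k} \le 1$ and $\log(1+k)\le k$. One could instead attempt to maximize $(k+1)\boldlam^{-4k}$ directly over real $k$, obtaining the sharper-looking continuous bound $\boldlam^{4}/(4e\log\boldlam)$ in the regime $\log\boldlam < 1/4$; reconciling this with the stated right-hand side $1+\frac{1}{4e\log\boldlam}$ would require the auxiliary inequality $\boldlam^4 - 1 \le 4e\log\boldlam$ for $\boldlam>1$. The split $1+k$ bypasses that inequality entirely, which is why I would prefer it.
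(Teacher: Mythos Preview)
Your proposal is correct and follows essentially the same route as the paper: both arguments split $(k+1)=1+k$, bound $\boldlam^{-4k}\le 1$ and $k\,\boldlam^{-4k}\le \frac{1}{4e\log\boldlam}$ separately for the first inequality, and use $\log(1+k)\le k$ followed by the same optimization for the second. The only cosmetic difference is that the paper performs the change of variables $k\mapsto k/(4\log\boldlam)$ before optimizing $xe^{-x}$, whereas you optimize $xe^{-cx}$ directly with $c=4\log\boldlam$; these are equivalent.
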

	Lemma~\ref{lem:technical_recursion_max} is proved in the following subsection. Observe then that, for $\delta \in (0,1/e)$, we have
	\begin{eqnarray}
	\tau_0(\delta) \ge  &\overset{\text{Lemma~\ref{lem:boldgapsize}}}{\ge}& \frac{32\boldlam^2}{(\boldlam-1)^2}(\log(1/\delta) + \log^{-1} (\boldlam)) \label{tau_bound} \\
	&\ge& \frac{32\boldlam^2}{(\boldlam-1)^2}(1 + \frac{1}{4e\log (\boldlam)}) \nonumber.
	\end{eqnarray}
	Hence, by  first inequality in the above Lemma~\ref{lem:technical_recursion_max}, we have
	\begin{eqnarray*}
	2\sqrt{(2k + 2)/\tau_{k+1}} &=& 2^{3/2}\sqrt{\boldlam^{-4k}(k+1)/\tau_{1}} \\
	&\le& 2^{3/2}\sqrt{ (1 + \frac{1}{4e \log (\boldlam)})/\tau_1}  \\
	&\le& 2^{3/2}\sqrt{(\boldlam-1)^2/32}   = \frac{1}{2}(1-1/\boldlam).
	\end{eqnarray*}
	Taking $\eta = \boldlam - 1$, the above inequality $(i)$ in the following display:
	\begin{eqnarray*}
	\exp\left\{ \frac{\eta}{2}  \left(\boldlam^2 \tau_k  - \frac{\left(\sqrt{\tau_{k+1}}-\sqrt{2k+2}\right)^2}{1+\eta}\right) \right\} &=& \exp\left\{ \frac{\boldlam - 1}{2\boldlam}  \left(\boldlam^3 \tau_k  - \left(\sqrt{\tau_{k+1}}-\sqrt{2k+2}\right)^2\right) \right\} \\
	&\ge& \exp\left\{ \frac{\boldlam - 1}{2\boldlam}  ((\boldlam^3 \tau_k - \tau_{k+1} + 2\sqrt{(2k + 2)/\tau_{k+1}})\tau_{k+1} )  \right\} \\
	&\overset{(i)}{\ge}& \exp\left\{ \frac{\boldlam - 1}{2\boldlam}  ((\boldlam^3 \tau_k - \tau_{k+1} + \frac{1}{2}(1 - 1/\boldlam)\tau_{k+1} )  \right\} \\
	&\overset{}{=}& \exp\left\{ -\frac{\boldlam - 1}{2\boldlam}  (\tau_{k+1} - \tau_{k+1}/\boldlam   + \frac{1}{2}(1 - 1/\boldlam)\tau_{k+1}  \right\} \\
	&=& \exp\left\{ -\frac{(\boldlam - 1)^2}{4\boldlam^2} \tau_{k+1}  \right\} \\
	&=& \exp\left\{ -\frac{ \tau_{1}\boldlam^{4k}(\boldlam - 1)^2}{4\boldlam^2}\right\}.
	\end{eqnarray*}
	Moreover, we have
	\begin{eqnarray}
	\frac{\tau_{1}\boldlam^{4k}(\boldlam - 1)^2}{4\boldlam^2} &\overset{\text{Equation~\eqref{tau_bound}}}{\ge}&  8\lambda^{4k}(\log(1/\delta) + \log^{-1} (\boldlam)) \nonumber\\
	&\overset{\text{Equation~\eqref{tau_bound}}}{\ge}&  8\lambda^{4k}(\log(1/\delta) + \log^{-1} (\boldlam))\nonumber\\
	&\ge&  8\log(1/\delta) + \log(k+1) \cdot \frac{8 \lambda^{4k}}{\log(1/\lambda)\log(k+1)}\nonumber\\
	&\ge&  8\log(1/\delta) + \log(k+1) \cdot \frac{8 \lambda^{4k}}{\log(1/\lambda)\log(k+1)} \nonumber\\
	&\overset{\text{Lemma~\ref{lem:technical_recursion_max}}}{\ge}&  8\log(1/\delta) + \log(k+1) \cdot 64 e \ge 2\log(1/\delta) + 2\log(k+1) \label{eq:last_line_rnk_one}.
	\end{eqnarray}
	Recalling the bound 
	\begin{eqnarray}\label{eq:one_stage_growth_bound2}
	\Pr[\{ \Phi(\itV_k; \matU) \ge \tau_k\} \cap  \{ \Phi(\itV_{k-1}; \matU) \le \tau_{k-1}\}]  \nonumber \le \exp\left\{ \frac{\eta}{2}  \left(\boldlam^2 \tau_k  - \frac{\left(\sqrt{\tau_{k+1}}-\sqrt{2k+2}\right)^2}{1+\eta}\right)\right\}~,
	\end{eqnarray}
	and putting thing together, we conclude
	\begin{eqnarray*}
	 \Pr[\exists k \ge 1: \Phi(\itV_k; \matU) \ge \tau_k] &\le& \sum_{k \ge 1} \Pr[\{ \Phi(\itV_k; \matU) \ge \tau_k\} \cap  \{ \Phi(\itV_{k-1}; \matU) \le \tau_{k-1}\}] \\
	 &\le& \sum_{k \ge 0} \exp( - \frac{\tau_{1}\boldlam^{4k}(\boldlam - 1)^2}{4\boldlam^2}) \\
	 &\overset{\text{Equation~\eqref{eq:last_line_rnk_one}}}{\le}& \sum_{k \ge 0} \frac{\delta^2}{(k+1)^2} = \delta^2 \pi^2/6 \le 2\delta^2 \overset{\delta \le e^{-1}}{\le} \delta.
	\end{eqnarray*}

	\subsection{Proof of Lemma~\ref{lem:technical_recursion_max}}

	For the first inequality, we have 
	\begin{eqnarray*}
	\max_{k \ge 0} (k+1)/ \boldlam^{4k} &=& \max_{k \ge 0} (k+1) \exp(- 4k\log(1/\boldlam))\\
	&=& \max_{k \ge 0} (\frac{k}{4\log(\boldlam)}+1) \exp(- k) \\
	&\le& 1 + (4\log(\boldlam))^{-1}\max_{k \ge 0} (k \exp(- k)) = 1 + \frac{1}{4e \log (\boldlam)} .
	\end{eqnarray*}
	For the second inequality, we have that 
	\begin{eqnarray*}
	\max_{k \ge 0} \log(1+k)/ \boldlam^{4k} &\,e& \max_{k \ge 0} k\exp(- 4k\log(1/\boldlam))\\
	&=&  \max_{k \ge 0} \frac{k}{4 \log \boldlam} \exp(- k))\\
	&=&\frac{1}{4e\log \boldlam}.
	\end{eqnarray*}
%!TEX root = main.tex
\section{Supporting Results for the Rank-K Case}
	\subsection{Details for Proof of Theorem~\ref{thm:rank_r_thm_est}\label{sec:rank_r_thm_est_app_proof}}
	 	Recall that we choose $\rho \ge \boldlam^3 c_{d,r}$,  $\Delta \ge \frac{\rho(2k_{\max}+2)}{(\rho - 1)^3}$, and define the event:
		 \begin{eqnarray}\label{eq:calErho_def}
		\calE(\lamtil,\Delta,k_{\max}) := \left\{\forall e \in \R^r, k \in [1,\dots,k_{\max}], \quad d\Phi(\itV_k;\matU e) + \Delta \le \lamtil(d\Phi(\itV_{k-1};\matU e) + \Delta)\right\}.
		\end{eqnarray}

		 On $\calE(\rho^2,\Delta, k_{\max})$, we have that for all $k \in [k_{\max}]$,
		 \begin{eqnarray*}
		\rho^{2k} \Delta^r \overset{\text{Lemma~\ref{lem:Determinant_Growth_Lemma}}}{\ge} \det(d\matU^\top \itV_k\itV_k \matU + \Delta I_r) &=& \prod_{i=1}^r \lambda_{i}(d\matU^\top \itV_k\itV_k \matU + \Delta I_r) \\
		&\ge&  (\lambda_{r'}(d\matU^\top \itV_k\itV_k \matU + \Delta I_r))^{r'} \cdot \lambda_{\min}(\matU^\top \itV_k\itV_k \matU + \Delta I_r)^{r - r'} \\
		&\ge&  (\lambda_{r'}(\matU^\top \itV_k\itV_k \matU))^{r'} \cdot \Delta^{r-r'}. 
		\end{eqnarray*}
		where the last step uses $\matU^\top \itV_k\itV_k \matU \succeq \Delta I_r, \matU^\top \itV_k\itV_k \matU \succeq 0 $. Rearranging, we have that
		\begin{eqnarray}
		d\lambda_{r'}(\matU^\top \itV_k\itV_k \matU) = \lambda_{r'}(d\matU^\top \itV_k\itV_k \matU) \le \Delta \cdot \rho^{\frac{2k}{r'}}.
		\end{eqnarray}
		We now apply Proposition~\ref{prop:CalE_prob} with the following $\rho$ and $\Delta$. Because we assume $c_{d,r} = \frac{d - (r-1)}{d}  \le \boldlam$, we can take $\rho = \boldlam^4$. If we choose
		\begin{eqnarray*}
		\Delta := \frac{2}{\boldlam^4\sqrt{\boldgap}}\left(  \frac{k_{\max}}{\boldgap} + \log(1/\delta) + (r+2)\log(20d/\boldlam^4\gap^{1/2})\right).
		\end{eqnarray*}
		Then, using the fact $\rho -1 = (\boldlam - 1)(\boldlam + 1)(\boldlam^2 + 1) = \boldgap(\boldlam + 1)(\boldlam^2 + 1)^{3/2} \ge \max\{4,\boldlam^4\} \boldgap^{1/2} $, we find that 
		\begin{eqnarray*}
		\frac{\rho(2k_{\max}+2)}{(\rho - 1)^3} \le   \frac{4k_{\max}\boldlam^4 }{\boldgap^{3/2}\boldlam^{4}4} \le \Delta,
		\end{eqnarray*}
		and hence statisifies the conditions of Proposition~\ref{prop:CalE_prob}.  Moreover, 
		\begin{eqnarray*}
		(20 d/(\rho - 1))^{r+2}\exp\left\{ \frac{-\boldlam^3(\boldlam - 1)\Delta}{2}  \right\} &\le& \exp\left\{ \frac{-\boldlam^4 \boldgap^{-1/2}\Delta}{2} + (r+2)\log(\frac{20d}{\boldlam^4 \boldgap^{1/2}} )\right\} \\
		&\le& \exp\{-\log(1/\delta)\} = \delta.
		\end{eqnarray*}
		Putting things together, we see that for $k_{\max} \ge 1$, the following is at least $1-\delta$:
		\begin{eqnarray*}
		\Pr\left[\forall k \in k_{\max}: d\lambda_{r'}(\matU^\top \itV_k\itV_k \matU) \le \frac{2\boldlam^{8k/r'}}{\boldlam^4\sqrt{\boldgap}}\left(  \frac{k_{\max}}{\boldgap} + \log \delta^{-1} + (r+2)\log(20d/\boldlam^4\gap^{1/2})\right)  \right].
		\end{eqnarray*}
		By union bounding over all $k_{\max} \in [d]$ and some elementary manipulations (noting $\boldlam^4 \le 1$),
		\begin{eqnarray*}
		\Pr\left[\forall k \in [d]: \lambda_{r'}(\matU^\top \itV_k\itV_k \matU) \le \frac{2\cdot \boldlam^{8k/r'}}{d\sqrt{\boldgap}}\left(  \frac{k}{\boldgap} + \log \delta^{-1} + 4r\log(20d/\boldlam^4\gap^{1/2})\right)  \right] \ge 1 - \delta.
		\end{eqnarray*}
		Finally, we simplify by noting that, from Lemma~\ref{lem:boldgapsize} and \ref{lem:technical_recursion_max}, we have $k/\boldgap = r' \cdot k/r'/\boldgap \le 2r'\boldgap^{-3/2}\lambda^{k/r'}$, and by assumption that $d \ge \gap^{-1/2}$, we have  $4r\log(20d/\boldlam^4\gap^{1/2}) \le 8r \log (20d)$. Hence, $\left(  \frac{k}{\boldgap} + \log \delta^{-1} + 4r\log(20d/\boldlam^4\gap^{1/2})\right) \le (8r \log(20d) + 2r'\boldgap^{-3/2}\lambda^{k/r'} + \log(e\delta^{-1}) \le \log(e\delta^{-1})( 1 + 8r \log(20d) + 2r'\boldgap^{-3/2}\lambda^{k/r'}) \le \log(e\delta^{-1})( 13r \log(20d)\boldgap^{-3/2}\lambda^{k/r'})$.

\subsection{Details for Proof of Propostion~\ref{prop:CalE_prob} \label{sec:prop:CalEprob_details}}
 	To prove Proposition~\ref{prop:CalE_prob}, we invoke a covering argument:
	\begin{claim}\label{claim:rk_k_covering claim}
	Let $\calN$ is an $\epsilon$-net of $\calS^{r-1}$, and let $e \in \calS^{r-1}$. Then there exists an $e' \in \calN$ such that $|d\Phi(\itV_k;\matU e) - d\Phi(\itV_k;\matU e')| \le  2d\epsilon$.
	\end{claim}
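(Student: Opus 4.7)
The plan is a short operator-norm estimate combined with the standard identity $x^\top A x - y^\top A y = x^\top A(x-y) + (x-y)^\top A y$. First I would abbreviate $A := \matU^\top \itV_k \itV_k^\top \matU \in \R^{r\times r}$, and observe that since $\matU \in \Stief(d,r)$ and $\itV_k \in \Stief(d,k)$ both have operator norm $1$, one has $0 \preceq A \preceq I_r$; in particular $\|A\|_{\op} \le 1$. Then the function $e \mapsto \Phi(\itV_k;\matU e) = e^\top A e$ is the associated quadratic form.

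By the defining property of an $\epsilon$-net, I would pick $e' \in \calN$ with $\|e - e'\| \le \epsilon$, and write
\begin{equation*}
e^\top A e - e'^\top A e' \;=\; e^\top A (e - e') \;+\; (e - e')^\top A\, e'.
\end{equation*}
Applying Cauchy--Schwarz to each term, together with $\|A\|_{\op} \le 1$ and $\|e\| = \|e'\| = 1$, gives
\begin{equation*}
|\Phi(\itV_k;\matU e) - \Phi(\itV_k;\matU e')| \;\le\; \|A\|_{\op}\,(\|e\| + \|e'\|)\,\|e - e'\| \;\le\; 2\epsilon.
\end{equation*}
Multiplying through by $d$ yields the claimed bound $|d\Phi(\itV_k;\matU e) - d\Phi(\itV_k;\matU e')| \le 2d\epsilon$.

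There is essentially no obstacle here; the only thing to be careful about is to verify the operator-norm bound $\|A\|_{\op} \le 1$, which follows immediately because $A$ is the Gram matrix of the columns of $\itV_k^\top \matU$ and both $\matU$ and $\itV_k$ have orthonormal columns, so $\matU^\top \itV_k \itV_k^\top \matU \preceq \matU^\top \matU = I_r$. This estimate is then fed into the overall union-bound step in Proposition~\ref{prop:CalE_prob}: by choosing $\epsilon$ proportional to $(\rho - 1)/d$ times the slack $\Delta$ appearing in $\calE(\rho^2,\Delta,k_{\max})$, a $2d\epsilon$ additive error in $d\Phi(\itV_k;\matU e)$ can be absorbed into the translation by $\Delta$, so that the pointwise event controlled by Lemma~\ref{lem:one_fixed_e} on the net $\calN$ implies the corresponding event uniformly over $\calS^{r-1}$.
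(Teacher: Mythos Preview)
Your proof is correct and essentially identical to the paper's: the paper writes the difference as the trace inner product $\langle A,\, ee^\top - e'e'^\top\rangle$ and bounds it via matrix H\"older using $\|ee^\top - e'e'^\top\|_* \le 2\epsilon$ and $\|A\|_{\op}\le 1$, which is exactly your polarization-plus-Cauchy--Schwarz argument in a different guise. Your remark about how the $2d\epsilon$ slack is absorbed in Proposition~\ref{prop:CalE_prob} also matches the paper's use of the claim.
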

	\begin{proof}
		 Since $\calN$ is a $\epsilon$ -net we may choose an $e' \in \calN$ satisfies $\| e' -e\|_2 \le \epsilon$; and hence, the nuclear norm difference of the outer products satisfy $ \|ee^\top - ee^{'\top}\|_{*} \le \|e(e-e')^\top\|_{*} + \|(e-e')e^{'\top}\|_* = 2\epsilon$. Thus, 

		\begin{eqnarray*}
		 |d\Phi(\itV_k; \matU e') - d\Phi(\itV_k; \matU e')| 
		&\le& d| \langle \matU^{\top} \itV_k \itV_k^\top \matU, ee^\top - ee^{'\top}|\\
		&\overset{\text{matrix Holder}}{\le}&  d \|\matU^{\top} \itV_k \itV_k^\top \matU\|_{\op}  \|ee^\top - ee^{'\top}\|_{*}\\
		&\le& 2d \epsilon  \|\matU^{\top} \itV_k \itV_k^\top \matU\|_{\op}  \le 2 d\epsilon,
		\end{eqnarray*}
		where the last step uses  $\|\matU^{\top} \itV_k \itV_k^\top \matU\|_{\op}  \le 1$.
		\end{proof}
	This implies that if $\calN$ is an $ \epsilon = \frac{\Delta (\rho^2 -\rho)}{2d(1+\rho)}$-net of $\calS^{r-1}$, then for all $e \in \calS^{r}$, there is an $e'$ for which
	\begin{eqnarray*}
	d\Phi(\itV_k;\matU e) + \Delta \ge \rho^2 (d\Phi(\itV_k; \matU e) + \Delta) &\implies& d\Phi(\itV_k;\matU e') + 2d\epsilon + \Delta \ge \rho^2 (d\Phi(\itV_k; \matU e') + \Delta) - 2d\rho \rho \epsilon\\
	&\implies& d\Phi(\itV_k;\matU e')  + \Delta \ge \rho^2 (d\Phi(\itV_k; \matU e') + \Delta) - (\rho +1) 2d\epsilon\\
	&\implies& d\Phi(\itV_k;\matU e')  + \Delta \ge \rho^2 (d\Phi(\itV_k; \matU e') + \Delta) - (\rho^2 - \rho)\Delta \\
	&\implies& d\Phi(\itV_k;\matU e')  + \Delta \ge \rho (d\Phi(\itV_k; \matU e') + \Delta)~.
	\end{eqnarray*}
	Hence, using the estimate $|\calN| \le (10d(1+\rho)/\Delta(\rho^2 - \rho))^r \le (20 d/(\rho - 1))^r$ for $\Delta, \rho \ge 1$, we have
	\begin{eqnarray*}
	\Pr[ \calE(  \rho^2, \Delta,k_{\max})^c] &\le& |\calN|\sup_{e \in \calN} \Pr[\exists k \in [k_{\max}]: d\Phi(\itV_k;\matU e) + \Delta \ge \rho (d\Phi(\itV_k; \matU e) + \Delta) ]\\
	&\overset{\text{Lemma~\ref{lem:one_fixed_e}}}{\le}& |\calN| \frac{d^2 }{\rho - 1}\exp\left\{ \frac{-\boldlam^3(\boldlam - 1)\Delta}{2}  \right\}\\
	&\le& (20 d/(\rho - 1))^{r+2}\exp\left\{ \frac{-\boldlam^3(\boldlam - 1)\Delta}{2}  \right\},
	\end{eqnarray*}
	as needed.

\subsection{Proof of Lemma~\ref{lem:one_fixed_e}}

	Fix $\rho \ge \boldlam^3 c_{d,r}$, and set $\tau_0 = 0$, and let $\tau_{i+1} = \rho \tau_{i} + (\rho - 1) \Delta$, so that $\tau_{i+1} + \Delta \le \rho(\tau_{i} + \Delta)$. Finally, let $M = \inf \{i : \tau_i \ge d\}$, and observe that we can bound $M \le \frac{d}{(\rho - 1)\Delta} \le \frac{d}{\rho -1}$. Moreover, note that $\Delta \ge \frac{\rho(2k_{\max}+2)}{(\rho - 1)^3}$ and $\rho \ge 1$ implies that
	\begin{eqnarray}\label{eq:Delta_eq}
	\Delta \ge \frac{\rho^2(2k_{\max}+2)}{(\rho^2 - 1)(\rho - 1)^2} \iff \frac{(\rho-1)^2}{\rho^2} \ge \frac{2k_{\max}+2}{\Delta(\rho - 1)^2} \iff 1-\sqrt{(2k_{\max}+2)/\Delta(\rho^2-1)}.
	\end{eqnarray}
	\begin{eqnarray*}
	&&\Pr[ \Phi(\itV_k;\matU e) + \Delta \ge \rho (\Phi(\itV_k; \matU e) + \Delta)] \\
	&\le& \sum_{i=1}^{M} \Pr[ \Phi(\itV_k;\matU e) + \Delta \ge \rho (\tau_{i-1} + \Delta) \cap  \Phi(\itV_k;\matU e) + \Delta \in [\tau_{i-1},\tau_i]\})]\\
	&\overset{\tau_{i} + \Delta \le \rho(\tau_{i-1} + \Delta)}{\le}& \sum_{i=1}^{M} \Pr[ \Phi(\itV_k;\matU e) + \Delta \ge \rho^2 (\tau_{i} + \Delta) \cap  \Phi(\itV_k;\matU e) + \Delta \in [\tau_{i-1},\tau_i]\})]\\
	&\le& \sum_{i=1}^{M} \Pr[ \Phi(\itV_k;\matU e) + \Delta \ge \rho^2 (\tau_i + \Delta) \cap  \Phi(\itV_k;\matU e) \le \tau_i - \Delta\})]\\
	&\le & M\max_{1\le i \le M}\exp\left\{ \frac{\boldlam - 1}{2}  \left( \boldlam^3 (\tau_i - \Delta)  - \frac{\left(\sqrt{ \frac{\rho^2}{c_{d,r}}(\tau_i + \Delta) - \Delta)}-\sqrt{2k+2}\right)^2}{1+\eta}\right) \right\}\\
	&\le& M\max_{1\le i \le M}\exp\left\{ \frac{\boldlam - 1}{2}  \left( \boldlam^3(\tau_i - \Delta)  -  (\frac{\rho^2}{c_{d,r}}(\tau_i + \Delta) - \Delta)(1-\sqrt{(2k+2)/(\rho^2(\tau_i + \Delta) - \Delta)}\right)\right\}\\
	&\le& M\max_{1\le i \le M}\exp\left\{ \frac{\boldlam - 1}{2}  \left( \boldlam^3(\tau_i - \Delta)  -  (\frac{\rho^2}{c_{d,r}}(\tau_i + \Delta) - \Delta)(1-\sqrt{(2k+2)/\Delta(\rho^2-1)}\right)\right\}\\
	&\overset{\text{Equation}~\eqref{eq:Delta_eq}}{\le}& M\max_{1\le i \le M}\exp\left\{ \frac{\boldlam - 1}{2}  \left( \boldlam^3(\tau_i - \Delta)  -  \frac{\rho}{c_{d,r}}(\tau_i + \Delta) - \Delta)\right)\right\}\\
	&\overset{\rho \ge \boldlam^{3}c_{d,r}}{\le}& M\max_{1\le i \le M}\exp\left\{ \frac{\boldlam - 1}{2}  \left( \boldlam^3(\tau_i - \Delta)  -  \boldlam^3((\tau_i + \Delta) - \Delta)\right)\right\}\\
	&=& M\exp\left\{ \frac{\boldlam - 1}{2}  \left( (1-2\boldlam^3)\Delta)\right)\right\} \le M\exp\left\{ \frac{(\boldlam - 1)\boldlam^3}{2}  \left( (1-2\boldlam^3)\Delta)\right)\right\}.
	\end{eqnarray*}
	Union bounding over $k_{\max} \le d$ proves the lemma.

\subsection{Proof of Proposition~\ref{prop:rk_k_info_th_prop}\label{sec:rk_k_info_th_prop}}
Let $P_e := I - ee^\top$ be the projection onto the orthongal complement of $e$. It suffices to show that for a fixed $e \in \R^r$, and any fixed $U \in \Stief(d,r)$, then the conditional probability  $\Exp_{\matU}[\Prit^k_{\matU}[\{ \Phi(\itV_k;\matU e) \le \tau_k/d\} \cap \{ \Phi(\itV_{k+1};\matU e)> \tau_{k+1}/d\} \big{|} \{(\matU-U)P_e = 0\}] $ is also upper bounded by the right hand side of the display in Propostion~\ref{prop:rk_k_info_th_prop}. Observe that on the event $\{(\matU-U)P_e = 0\}$, $\matU(I-ee^\top)$ is fixed, but $\matU e$ is distributed uniformly on the of unit vectors orthogonal to the image of $\matU(I - ee^\top)$; let's denote this set $\calS_{\matU,e}$. Hence, on $\{(\matU-U)P_e = 0\}$, $\matM - \boldlam \matU(I-ee^\top)$ has the same distribution as $\matW + \boldlam\widetilde{\matu}$, where $\matu \overset{unif}{\sim} \calS_{\matU,e}$. Hence, an algorithm which achieves $\Exp_{\matU}[\Prit^k_{\matU}[\{ \Phi(\itV_k;\matU e) \le \tau_k/d\} \cap \{ \Phi(\itV_{k+1};\matU e)> \tau_{k+1}/d\} \big{|} \{(\matU-U)P_e = 0\}] \ge p$ implies the existence of an algorithm which achieves $\Exp_{\matu \sim \calS_{\matU,e}}[\Prit^k_{\matu}[\{ \Phi(\itV_k;\matu) \le \tau_k/d\} \cap \{ \Phi(\itV_{k+1};\matu)> \tau_{k+1}/d\} ] \ge p$, so it suffices to bound this latter probability. By Proposition~\ref{prop:info_th_rkone} with $\calD$ being the uniform distribution on $\calS_{\matU,e}$, we have 
\begin{multline}\label{eq:calSmatue_eq}
\Exp_{\matu \sim \calS_{\matU,e}}[\Prit^k_{\matu}[\{ \Phi(\itV_k;\matu) \le \tau_k/d\} \cap \{ \Phi(\itV_{k+1};\matu)> \tau_{k+1}/d\}] \le\\
 \left(\Exp_{u \sim \calS_{\matU,e}}\Expit_{0}\left[\left(\frac{\rmd\Prit_{u}^k}{\rmd \Prit_0^k}\right)^{1+\eta} \I(\{ \Phi(\itV_k;u) \le \tau_k/d\}\})\right] \cdot \sup_{V \in \Stief(d,k+1)} \Pr_{u \sim \calS_{\matU,e}}[\Phi(V;u)> \tau_{k+1}/d]^\eta   \right)^{\frac{1}{1+\eta}}.
\end{multline}
By Proposition~\ref{prop:likelihood_info}, the information term $\Exp_{u \sim \calS_{\matU,e}}\Expit_{0}\left[\left(\frac{\rmd\Prit_{u}^k}{\rmd \Prit_0^k}\right)^{1+\eta} \I(\{ \Phi(\itV_k;u) \le \tau_k/d\}\})\right] $ is at most $ \exp(\frac{\eta(1+\eta)}{2}\boldlam^2 \tau_k$. On the other hand, since $\calS_{\matU,e}$ is isomorphic to the $d-r-1$ sphere, Lemma~\ref{lem:small_ball_prob} implies
\begin{eqnarray*}
\sup_{V \in \Stief(d,k+1)} \Pr_{u \sim \calS_{\matU,e}\}}[ u^\top V^\top V u \ge \tau_{k+1}/(d-r-1)] \le \exp\left\{-\frac{1}{2}\left(\sqrt{\tau_{k+1}}-\sqrt{2(k+1)}\right)^2\right\},
\end{eqnarray*}
where we have had to replace $\tau_{k+1}/d$ by $\tau_{k+1}/(d-r-1) = c_{d,r} \cdot \tau_{k+1}/d$ to account for the change in dimension. Putting together these two estimates into Equation~\eqref{eq:calSmatue_eq} gives us the first display in Propostion~\ref{prop:rk_k_info_th_prop}.

\subsection{Proof of Lemma~\ref{lem:Determinant_Growth_Lemma}}

	Note that $\calE(\lamtil,\Delta,k_{\max})$ can be rexpressed as saying that, for all $k \in [1,\dots,k_{\max}]$, one has the
	\begin{eqnarray*}
	\matU^\top \itV_k \itV_k^\top \matU  + \Delta I_r &\preceq& \lamtil (\matU^\top \itV_{k-1} \itV_{k-1}^\top \matU  + \Delta I_r) \iff \\
	\matU^\top \itV_{k-1} \itV_{k-1}^\top \matU  + \Delta I_r  + \matU^\top \vk \vkT \matU &\preceq& \lamtil (\matU^\top \itV_{k-1} \itV_{k-1}^\top \matU  + \Delta I_r) \iff \\
	\matU^\top \vk \vkT \matU &\preceq& (\lamtil -1 )(\matU^\top \itV_{k-1} \itV_{k-1}^\top \matU  + \Delta I_r). 
	\end{eqnarray*}
	We now invoke a claim from linear algebra:
	\begin{claim}\label{claim:Lower_Claim} Let $u \in \R^r$, $M \succ 0$, and $t > 0$. Then $uu^\top \preceq tM \iff u^\top M^{-1} u \le t$. 
	\end{claim}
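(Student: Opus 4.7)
The plan is to reduce the matrix inequality $uu^\top \preceq tM$ to the scalar inequality $u^\top M^{-1}u \le t$ by an appropriate congruence transformation. Since $M \succ 0$, the positive square root $M^{1/2}$ exists and is invertible, and the map $X \mapsto M^{-1/2} X M^{-1/2}$ preserves the Loewner order. Applying it to both sides of $uu^\top \preceq tM$ gives the equivalence
\begin{eqnarray*}
uu^\top \preceq tM \;\;\iff\;\; vv^\top \preceq tI_r, \quad \text{where } v := M^{-1/2}u.
\end{eqnarray*}

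Next I would observe that $vv^\top$ is a rank-one PSD matrix whose only nonzero eigenvalue is $\|v\|_2^2$. Hence $tI_r - vv^\top \succeq 0$ if and only if the minimum eigenvalue of $tI_r - vv^\top$ is nonnegative, which is exactly the condition $t \ge \|v\|_2^2$. Substituting back $\|v\|_2^2 = u^\top M^{-1/2} M^{-1/2} u = u^\top M^{-1} u$ yields the claim.

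There is essentially no obstacle here; the only point requiring (trivial) care is verifying that congruence by an invertible matrix preserves semidefiniteness, which follows immediately from the definition $x^\top (A^\top X A) x = (Ax)^\top X (Ax)$ and the fact that $A = M^{-1/2}$ is a bijection on $\R^r$. An alternative, equally short route would be via the Schur complement identity applied to the block matrix $\bigl(\begin{smallmatrix} t & u^\top \\ u & M \end{smallmatrix}\bigr)$, which is PSD iff either Schur complement is, giving the two equivalent conditions $t \ge u^\top M^{-1}u$ and $tM \succeq uu^\top$ simultaneously.
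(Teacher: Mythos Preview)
Your proof is correct and follows essentially the same approach as the paper: both conjugate by $M^{-1/2}$ to reduce $uu^\top \preceq tM$ to $vv^\top \preceq tI_r$ with $v = M^{-1/2}u$, and then observe that the rank-one matrix $vv^\top$ has operator norm $\|v\|_2^2 = u^\top M^{-1}u$. Your exposition is a bit more careful about justifying the congruence step and additionally notes the Schur-complement alternative, but the argument is the same.
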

	\begin{proof}
	$uu^\top \prec tM \iff M^{-1/2}uu^\top M^{-1/2} \le tI \iff \|M^{-1/2}uu^\top M^{-1/2}M\|_2 \le t$. Since $\|M^{-1/2}uu^\top M^{-1/2}\|_2 = u^\top M^{-1} u$ as $u$ is a vector, the claim follows.
	\end{proof}
	Applying the above claim under $\calE(\lamtil,\Delta,k_{\max})$ with $u = \matU^\top \vk$, $M = \matU^\top \itV_{k-1} \itV_{k-1}^\top \matU  + \Delta I_r$, and $t = \lamtil - 1$, we have that 
	\begin{eqnarray*}
	(\matU^\top \vk)^\top (\matU^\top \itV_{k-1} \itV_{k-1}^\top \matU  + \Delta I_r)^{-1} (\matU^\top \vk) \le \lamtil - 1.
	\end{eqnarray*}
	Hence,
	\begin{eqnarray*}
	&&\det( \matU^\top \itV_k \itV_k^\top \matU  + \Delta I_r)\\
	 &=& \det( \matU^\top \itV_{k-1} \itV_{k-1}^\top \matU  + \Delta I_r + \matU^\top \vk \vkT \matU )\\
	 &=& \det( \matU^\top \itV_{k-1} \itV_{k-1}^\top \matU  + \Delta I_r) \\
	 &=& \det( \matU^\top \itV_{k-1} \itV_{k-1}^\top \matU  + \Delta I_r) \cdot \det( 1 + (\matU^\top \vk)^\top (\matU^\top \itV_{k-1} \itV_{k-1}^\top \matU  + \Delta I_r)^{-1} (\matU^\top \vk) )\\
	 &\le& \det( \matU^\top \itV_{k-1} \itV_{k-1}^\top \matU  + \Delta I_r) \cdot \det( 1 + (\lamtil - 1)) = \lamtil\det( \matU^\top \itV_{k-1} \itV_{k-1}^\top \matU  + \Delta I_r).
	\end{eqnarray*}

\begin{comment}

Hence, if we pick $\delta_{k+1} \in (0,1)$, $\eta = 1 - \boldlam$, and have $\tau_{k+1}/c_{d,r} \ge \boldlam^4 \tau_{k} + \frac{2}{\boldlam - 1} \left(\boldlam \log(1/\delta_{k+1}) + \sqrt{2k+2}\right)$, then the probability in the above display is bounded by at most $\delta$. 
A straightforward modification of the computations in the proof of theorem [] to accounrt for the constrant $c_{d,r}$ yields that, if $\tau1_(\delta) =  :=\frac{\boldlam^2}{(\boldlam - 1)^2}(32\log (1/\delta) + \frac{8}{e} \max\{1,\log(\boldlam)^{-1}\}$, then for any $e \in \R^r$,
\begin{eqnarray}\label{eq:one_e_equation}
\Pr[\exists k \ge 1:  d\Phi(\itV_k; \matU e) \ge c_{d,r}^k\boldlam^{4k}\tau_1(\delta_*)] \le \frac{\delta\pi^2}{6}
\end{eqnarray}
\end{comment}

%!TEX root = main.tex

\section{Information-Theoretic Tools\label{sec:info_th_tools}}

In this section, we prove Theorem~\ref{Fano_sub_distr}, a generalization of the data-processing style lower bounds for statistical estimation (e.g. Fano's inequality). Our techniques differ from the existing art in consideing un-normalized, finite measures, rather than normalized probability distributions. 

Theorem~\ref{Fano_sub_distr} is derived as a special case of Theorem~\ref{thm:gen_fano}, which extends generalized Data-Processing style Lower Bound (Theorem 2 in~\cite{chen2016bayes}) to the case where the measures of interest are not necessarily normalized. Along the way, we generalize the notion of $f$-divergences (~\cite{csiszar1972class} to the non-normalized setting, and establish that many key properties - notably the Data Processing inequality - still hold. 

%!TEX root = main.tex

\subsection{General Data-Processing Lower Bounds for Unnormalized Measures\label{sec:gen_data_proc}}

	In this section, we will prove a more general result, Theorem~\ref{thm:gen_fano}, from which we will derived Theorem~\ref{Fano_sub_distr} as a consequence.
	In order to state and prove our more general theorem, and then specialize to our case of interest, we need to introduce the object of ``f-divergences'' to measure the similarity between two measures. f-divergences between \emph{probability distributions} have a long history in information theory, coding theory~\cite{csiszar1972class,guntuboyina2011lower,liese2012phi}, and statistical lower bounds, but we define them here is a slightly more general fashion so as to be ammenable to describe distances between non-normalized measures\footnote {Typically one requires the divergence function $f$ to satisfy $f(1) = 0$, but we shall not need this normalization}:

\begin{defn}\label{f_div_def} For a finite, non-negative measure $\mu$ and finite positive measure $\nu$ over the class $(\calX,\calF)$, and a convex $f:(0,\infty) \to \R$, we define the (generalized) $f$-divergence between $\mu$ and $\nu$ as
\begin{eqnarray}
D_{f}(\mu,\nu) := \int_{x \in \calX: \rmd\nu(x) > 0} f\left(\frac{\rmd\mu}{\rmd\nu}\right)\rmd\nu + \mu\left(\{\rmd\nu = 0\}\right)\cdot f'(\infty)
\end{eqnarray}
with the notation $f'(\infty):= \lim_{t \to \infty} f(t)/t$, and $0 \cdot f'(\infty) = 0$.
\end{defn}
Here, $\rmd\nu$ and $\rmd\mu$ are understood as Radon-Nikodyn derivates (see, e.g.~\cite{kallenberg2006foundations}). Note that the case whenre $\mu$ is absolutely continuous with respect to $\nu$ (written $\mu \ll \nu$), we can disregard the term $\mu\left(\{\rmd\nu = 0\}\right)\cdot f'(\infty)$. Critically, this generalization of $f$-divergences preserves the ``Data-Processing Inequality'' which holds in the normalized case:
\begin{thm}[Generalized Data Processing Inequality]\label{thm:data_proc} Let $\mu,\nu$ be non-negative measures on a space $(\calX,\calF)$, and let $f:(0,\infty) \to \R$ be a convex function. Then, given a measure space $(\calY,\calF_{\calY})$ and a measurable map~\footnote{more generally, a Markov Transition Kernel} $\Gamma: \calX \to \calY$, 
	\begin{eqnarray}
	D_f(\mu,\nu) \ge D_f(\mu \Gamma^{-1},\nu \Gamma^{-1}),
	\end{eqnarray}
	where $\mu \Gamma^{-1}$ denotes the pull back measure $\forall B \in \calF_{\calY}: (\mu \Gamma^{-1})(B) = \mu (\Gamma^{-1}(B))$. 
\end{thm}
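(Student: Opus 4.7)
The plan is to follow the classical proof of the data-processing inequality for $f$-divergences between probability measures, but rephrase the argument via the \emph{perspective function} of $f$ so that the case where $\mu$ has mass on $\{\mathrm{d}\nu = 0\}$ is handled uniformly with the absolutely continuous part. Without loss of generality, we may assume $\Gamma$ is a Markov kernel, since deterministic maps are a special case.

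First, I would introduce the dominating measure $\lambda := \mu + \nu$ and write $p := \mathrm{d}\mu/\mathrm{d}\lambda$ and $q := \mathrm{d}\nu/\mathrm{d}\lambda$, both taking values in $[0,1]$. I would then define the perspective function $\widetilde{f}: [0,\infty)^2 \to \R$ by $\widetilde{f}(a,b) := b \, f(a/b)$ when $b > 0$ and $\widetilde{f}(a,0) := a \cdot f'(\infty)$ (with the convention $\widetilde{f}(0,0) = 0$). A standard fact is that $\widetilde{f}$ is jointly convex on $[0,\infty)^2$ because $f$ is convex on $(0,\infty)$; this can be verified from the definition together with the monotone limit characterization of $f'(\infty)$. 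With this notation, Definition~\ref{f_div_def} rewrites cleanly as
\begin{eqnarray*}
D_f(\mu,\nu) = \int_\calX \widetilde{f}(p(x),q(x)) \, \mathrm{d}\lambda(x),
\end{eqnarray*}
where the term $\mu(\{\mathrm{d}\nu = 0\})\, f'(\infty)$ is absorbed into the integrand at points where $q = 0$.

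Next, I would push forward: let $\lambda' := \lambda \Gamma^{-1}$, $\mu' := \mu \Gamma^{-1}$, $\nu' := \nu \Gamma^{-1}$, and let $p', q'$ be the $\lambda'$-densities of $\mu', \nu'$. The key structural step is to identify $p'$ and $q'$ with conditional expectations under $\lambda$: writing $\calG := \Gamma^{-1}(\calF_\calY)$ for the sub-$\sigma$-algebra of $\calF$ pulled back by $\Gamma$, one has, $\lambda$-a.s., $p' \circ \Gamma = \Exp_\lambda[p \mid \calG]$ and $q' \circ \Gamma = \Exp_\lambda[q \mid \calG]$. This identification is standard for probability measures and extends verbatim to the finite-measure setting because $\lambda$ is a finite measure (one can normalize $\lambda$, apply the probability version, and un-normalize). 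In the case where $\Gamma$ is a Markov kernel rather than a deterministic map, the same identity holds on the enlarged product space $\calX \times \calY$ equipped with the joint law built from $\lambda$ and the kernel.

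Finally I would apply Jensen's inequality to the jointly convex function $\widetilde{f}$ with respect to the conditional expectation $\Exp_\lambda[\,\cdot \mid \calG]$:
\begin{eqnarray*}
\widetilde{f}\bigl(\Exp_\lambda[p\mid \calG], \, \Exp_\lambda[q\mid \calG]\bigr) \;\le\; \Exp_\lambda\bigl[\widetilde{f}(p,q) \,\big|\, \calG\bigr],
\end{eqnarray*}
integrate both sides against $\lambda$, and use the tower property together with the change-of-variables formula for pushforwards to conclude
\begin{eqnarray*}
D_f(\mu',\nu') = \int_\calY \widetilde{f}(p',q') \, \mathrm{d}\lambda' \;\le\; \int_\calX \widetilde{f}(p,q) \, \mathrm{d}\lambda = D_f(\mu,\nu).
\end{eqnarray*}
The main obstacle is a technical one: verifying that the perspective function $\widetilde{f}$ is genuinely jointly convex at the boundary $\{b = 0\}$ and that Jensen's inequality applies to it even when the conditional expectations $\Exp_\lambda[p \mid \calG]$ or $\Exp_\lambda[q \mid \calG]$ hit zero on sets of positive $\lambda'$-measure; this is exactly where the $f'(\infty)$ term in Definition~\ref{f_div_def} becomes essential and is the only place the proof genuinely departs from the probability-measure case.
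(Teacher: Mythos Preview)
Your argument is correct and self-contained, but it differs from the paper's route. The paper does \emph{not} redo the Jensen/perspective-function argument; instead it \emph{reduces} to the classical data-processing inequality for probability measures (citing Liese) via two algebraic identities from Lemma~\ref{lem:fDivProperties}: first the linearity property $D_{f+\alpha}(\mu,\nu) = \alpha|\nu| + D_f(\mu,\nu)$ removes the restriction $f(1)=0$, and then the normalization identity $D_f(\mu,\nu) = D_{f(\cdot;|\mu|,|\nu|)}(\mu/|\mu|,\nu/|\nu|)$ (with $f(x;p,q) := qf(px/q)$) reduces arbitrary finite measures to probability measures, after which the classical result applies directly and both identities are unwound using that $\Gamma$ preserves total mass. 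Your direct approach via the dominating measure $\lambda = \mu+\nu$ and conditional Jensen has the advantage of being self-contained and handling the singular part $\mu(\{\mathrm{d}\nu=0\})\,f'(\infty)$ uniformly through the perspective extension, whereas the paper's reduction is shorter but outsources the analytic core to the cited classical theorem and requires the auxiliary Lemma~\ref{lem:fDivProperties}. Both are valid; the paper's version is more of a bootstrap, yours is the underlying engine.
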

We will be particularly interested in the case where $\Gamma$ is just the indicator function of an event $\calY = \{0,1\}$ is just a binary space, and $\Gamma(\matx) = \I(\matx \in A)$ is an indicator function. In this case, the above data processing inequality immediately yields the following corollary:
\begin{cor}[Binary Data-Processing]\label{phi_cor} Let $\mu,\nu$ be non-negative measures on a space $(\calX,\calF)$. Then for all $A \in \calF$,
\begin{eqnarray}
D_f(\mu,\nu) \ge \phi_f(\mu(A),\nu(A);\mu(\calX),\nu(\calX)),
\end{eqnarray}
where, for $a \in [0,p]$, $b \in [0,q]$,
\begin{eqnarray}
\phi_f(a,b;p,q) := bf\left(\frac{a}{b}\right) + (q-b)\cdot f\left(\frac{p-a}{q-b}\right), 
\end{eqnarray}
ss the $f$ divergence between the meansure measures on $\{0,1\}$ which place mass $a$ (resp.\ $b$) on $1$, and $p-a$ (resp.\ $q-b$) on $0$, and $b = 0$ or $b = q$ is understood by taking the limits $b \to 0^+$ and $b \to q^-$.
\end{cor}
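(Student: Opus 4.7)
The plan is to derive Corollary~\ref{phi_cor} as an immediate application of Theorem~\ref{thm:data_proc} to the indicator map $\Gamma_A: \calX \to \{0,1\}$ defined by $\Gamma_A(x) := \I(x \in A)$. The map is measurable with respect to the sigma-algebra $2^{\{0,1\}}$ on $\{0,1\}$, so Theorem~\ref{thm:data_proc} applies and yields $D_f(\mu,\nu) \ge D_f(\mu\Gamma_A^{-1},\nu\Gamma_A^{-1})$. The task then reduces to identifying the right-hand side with $\phi_f(\mu(A),\nu(A);\mu(\calX),\nu(\calX))$.

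The pullback measures are easily computed: writing $a := \mu(A)$, $b := \nu(A)$, $p := \mu(\calX)$, $q := \nu(\calX)$, we have $(\mu\Gamma_A^{-1})(\{1\}) = a$, $(\mu\Gamma_A^{-1})(\{0\}) = p-a$, and similarly $(\nu\Gamma_A^{-1})(\{1\}) = b$, $(\nu\Gamma_A^{-1})(\{0\}) = q-b$. In the generic case $0 < b < q$, the measure $\nu\Gamma_A^{-1}$ has full support on $\{0,1\}$, so Definition~\ref{f_div_def} gives exactly $D_f(\mu\Gamma_A^{-1},\nu\Gamma_A^{-1}) = b\, f(a/b) + (q-b)\, f((p-a)/(q-b))$, which is the definition of $\phi_f(a,b;p,q)$.

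The only subtlety, and likely the only place requiring care, is the handling of the boundary cases $b = 0$ or $b = q$, where one of the denominators vanishes. In these cases Definition~\ref{f_div_def} picks up a contribution from the term $\mu(\{\rmd\nu = 0\})\cdot f'(\infty)$. For instance, when $b = 0$, the point $1$ satisfies $\rmd\nu\Gamma_A^{-1}(\{1\}) = 0$, so the integral part of the divergence equals $(q-b)f((p-a)/(q-b)) = q f(p/q)$, while the singular contribution is $a \cdot f'(\infty)$. This matches the stated convention that $b \to 0^+$ in $b f(a/b)$ is interpreted via $\lim_{t \to \infty} f(t)/t = f'(\infty)$, giving the limit $a \cdot f'(\infty)$. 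The case $b = q$ is symmetric, with the singular term now coming from the point $0$. Thus under either convention the identity $D_f(\mu\Gamma_A^{-1},\nu\Gamma_A^{-1}) = \phi_f(a,b;p,q)$ holds, and combining with the data-processing inequality completes the proof.

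No step looks like a genuine obstacle: the entire content of the corollary is packaged into Theorem~\ref{thm:data_proc}, and the only work is bookkeeping to check that the two-point $f$-divergence matches $\phi_f$ at the boundary of its domain. The real effort was in setting up the generalized data-processing inequality for unnormalized measures in Theorem~\ref{thm:data_proc}; once that is in hand, the corollary is essentially tautological.
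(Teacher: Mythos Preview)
Your proposal is correct and follows exactly the approach the paper intends: the text immediately preceding the corollary states that taking $\Gamma(\matx) = \I(\matx \in A)$ in Theorem~\ref{thm:data_proc} ``immediately yields'' the result, and you have simply filled in the bookkeeping (including the boundary cases) that the paper leaves implicit.
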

	We are now in a position to state our main theorem:
	
	\begin{thm}\label{thm:gen_fano}Consider the setting of Theorem~\ref{Fano_sub_distr}, but where 
	 $f$ is an arbitrary convex function on $(0,\infty)$, and both $\nu$ and $\{\mu_\theta\}$ are an arbitrary value of finite measure satisfying $\Exp_{\theta \sim \calP}\mu_{\theta}(\calX) > 0$. Then, either one of the two hold
	\begin{eqnarray}\label{eq:Gen_Fano_Eq}
	 V_{\opt} < \Exp_{\theta \sim \calP}\mu_{\theta}(\calX) \cdot V_0 & \text{or} & \Exp_{\theta \sim \calP} D_f(\mu_{\theta},\nu) \ge \phi_f(V_{\opt},\nu(\calX)\cdot V_0;(\Exp_{\theta \sim \calP}\mu_{\theta}(\calX)),\nu(\calX)).
	\end{eqnarray}
	\end{thm}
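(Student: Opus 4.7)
The plan is to prove Theorem~\ref{thm:gen_fano} by combining the binary form of the generalized data-processing inequality (Corollary~\ref{phi_cor}) with Jensen's inequality applied to joint convexity of $\phi_f$, and then closing the proof with a monotonicity argument that is precisely what forces the dichotomy into the theorem statement.

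Concretely, I would fix any decision rule $\fraka \in \frakA$ and define the events $A_{\fraka,\theta} := \{x \in \calX : \calI(\fraka(x), \theta) = 1\} \in \calF$. Corollary~\ref{phi_cor} applied to $\mu_\theta$ and $\nu$ with event $A_{\fraka,\theta}$ yields
\begin{equation*}
D_f(\mu_\theta, \nu) \;\ge\; \phi_f\bigl(\mu_\theta(A_{\fraka,\theta}),\, \nu(A_{\fraka,\theta});\, \mu_\theta(\calX),\, \nu(\calX)\bigr).
\end{equation*}
The map $(a,b,p,q) \mapsto \phi_f(a,b;p,q) = b f(a/b) + (q-b) f((p-a)/(q-b))$ is jointly convex in all four arguments, since each summand is the perspective of the convex function $f$ evaluated at a linear function of $(a,b,p,q)$. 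Taking expectations over $\theta \sim \calP$ and applying Jensen therefore yields
\begin{equation*}
\Exp_\theta D_f(\mu_\theta, \nu) \;\ge\; \phi_f\bigl(\Exp_\theta \mu_\theta(A_{\fraka,\theta}),\, \Exp_\theta \nu(A_{\fraka,\theta});\, \Exp_\theta \mu_\theta(\calX),\, \nu(\calX)\bigr).
\end{equation*}
Since $\nu$ is independent of $\theta$, Fubini gives $\Exp_\theta \nu(A_{\fraka,\theta}) = \int \Pr_\theta[\calI(\fraka(x),\theta) = 1]\,\rmd\nu(x) \le V_0 \cdot \nu(\calX)$ directly from the definition of $V_0$. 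Taking the supremum over $\fraka$ on the left (with a standard limiting argument if the supremum is not attained) sends $\Exp_\theta \mu_\theta(A_{\fraka,\theta}) \to V_\opt$, leaving only the task of replacing $\Exp_\theta \nu(A_{\fraka,\theta})$ by its upper bound $V_0\,\nu(\calX)$ inside $\phi_f$.

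That final substitution requires $\phi_f$ to be non-increasing in its second argument over the relevant range, and this is where the dichotomy enters. A direct calculation gives $\partial_b \phi_f(a,b;p,q) = g((p-a)/(q-b)) - g(a/b)$, where $g(x) := x f'(x) - f(x)$ satisfies $g'(x) = x f''(x) \ge 0$ and is therefore non-decreasing on $(0,\infty)$. Hence $\phi_f(a,\cdot;p,q)$ is non-increasing precisely when $a/p \ge b/q$. In the nontrivial case $V_\opt \ge \Exp_\theta \mu_\theta(\calX) \cdot V_0 =: P V_0$, we have $V_\opt/P \ge V_0 \ge \Exp_\theta \nu(A_{\fraka,\theta})/\nu(\calX)$, so the monotonicity condition holds throughout $[\Exp_\theta \nu(A_{\fraka,\theta}),\, V_0\, \nu(\calX)]$ and the replacement goes through; if instead $V_\opt < P V_0$, the first clause of the dichotomy already holds and there is nothing further to prove.

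The main obstacle will be making the joint convexity and monotonicity calculations robust to non-smooth convex $f$: the derivative identities above are only literally valid when $f$ is $C^2$, so to cover general convex $f$ one instead appeals to convexity of $b \mapsto \phi_f(a,b;p,q)$ together with a subdifferential characterization of $g$ being non-decreasing, and one must verify the boundary conventions of Definition~\ref{f_div_def} (with $f'(\infty)$) at $b = 0$ or $b = q$. A secondary but routine issue is continuity of $\phi_f$ in its first argument, needed to pass to the supremum $V_\opt$.
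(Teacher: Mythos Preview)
Your proposal is correct and reaches the same endpoint as the paper, but arrives there by a slightly different route. The paper first lifts everything to the joint space $\Theta\times\calX$: it identifies $\Exp_{\theta\sim\calP} D_f(\mu_\theta,\nu)$ with the single $f$-divergence $D_f(\calP*\{\mu_\theta\},\,\calP\otimes\nu)$, applies binary data processing (Corollary~\ref{phi_cor}) once on that joint space to the event $A_\fraka=\{(\theta,x):\calI(\fraka(x),\theta)=1\}$, and lands directly on $\phi_f\bigl((\calP*\{\mu_\theta\})(A_\fraka),\,(\calP\otimes\nu)(A_\fraka);\,p,\,q\bigr)$ with $p=\Exp_\theta\mu_\theta(\calX)$, $q=\nu(\calX)$. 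You instead apply Corollary~\ref{phi_cor} pointwise in $\theta$ and then push the $\theta$-expectation inside $\phi_f$ via Jensen, which requires the slightly stronger (but equally elementary) observation that $\phi_f$ is jointly convex in \emph{all four} arguments, not just in $(a,b)$. Both arguments produce the identical intermediate inequality, and from there your monotonicity/continuity steps coincide with the paper's use of Lemma~\ref{lem:_phi_regularity}. The paper's lift-to-joint-space maneuver is the standard one (it is how Chen et~al.\ argue as well) and avoids tracking convexity in $(p,q)$; your route is equally valid and arguably more transparent about where the averaging over the prior enters.
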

	Before proving the above theorem, we can derive Theorem~\ref{Fano_sub_distr} as a special case. For ease of notation, we introduce the shorthand $|\mu| = \mu(\calX)$.
	\begin{proof}[Proof of Theorem~\ref{Fano_sub_distr}]
	Since $f$ is convex, $D_f$ consitutes a valid $f$-divergence in the sense of Definition~\ref{f_div_def}. In Theorem~\ref{Fano_sub_distr}, we have that $\Exp_{\theta \sim \calP} |\mu_\theta| \le 1$, so that $\Exp_{\theta \sim \calP}|\mu_{\theta}| \cdot V_0 \le V_0 \le V_\opt$, so we have 
	\begin{eqnarray*}
	\Exp_{\theta \sim \calP} D_f(\mu_{\theta},\nu) \ge \phi_f(V_{\opt},|\nu|\cdot V_0;\Exp_{\theta \sim \calP}\mu_{\theta}|,|\nu|).
	\end{eqnarray*}
	Since $\mu_{\theta} \ll \nu$, we have
	\begin{eqnarray*}
	\Exp_{\theta \sim \calP} D_f(\mu_{\theta},\nu) = \Exp_{\theta \sim \calP} \Exp_{\nu} f(\frac{\rmd \mu_\theta}{\rmd\nu}).
	\end{eqnarray*}
	moreover, recalling that $f(x)$ is nonegative
	\begin{eqnarray*}
	\phi_f(V_{\opt},|\nu|\cdot V_0;\Exp_{\theta \sim \calP}\mu_{\theta}|,|\nu|) &=& |\nu| V_0 f(\frac{V_\opt}{|\nu| V_0}) + (1 - |\nu |V_0)\cdot f(\frac{|\Exp_{\theta \sim \calP}\mu_{\theta}| - V_\opt}{|\nu|(1 - V_0}\\
	&\ge& |\nu| V_0 f(\frac{V_\opt}{|\nu| V_0}).
	\end{eqnarray*}
	If $|\nu| = 1$, then the above is $|\nu|V_0 f(\frac{V_\opt}{|\nu| V_0})$. If, on the other hand, $x \mapsto xf(1/x)$ is non-increasing on $(0,\infty)$, so is $x \mapsto x f(p/x)$ for any fixed $p > 0$. Hence, if $|\nu| \le 1$, then $|\nu|V_0 f(\frac{V_\opt}{|\nu| V_0}) \ge V_0 f(\frac{V_\opt}{ V_0})$. In either case, we conclude that
	\begin{eqnarray*}
	\Exp_{\theta \sim \calP} \Exp_{\nu} f(\frac{\rmd \mu_\theta}{\rmd\nu}) \ge V_0 f(\frac{V_\opt}{ V_0})~,
	\end{eqnarray*}
	as needed.
	\end{proof}{}
	Before proving Theorem~\ref{thm:gen_fano}, we need one last regularity lemma, proved in Section~\ref{sec:phi_reg} regarding the function $\phi(a,b;p,q)$; this mirrors Lemma [] in~\cite{chen2016bayes}:
	\begin{lem}\label{lem:_phi_regularity} For $a \in [0,p]$ and $b \in [0,q]$, the mapping $(a,b) \mapsto \phi(a,b;p,q)$ is convex, and thus continuous on $(a,b)$. Hence, for a fixed $b \in [0,q]$, $a \mapsto \phi_f(a,b;p,q)$ is minimized when $a = (p/q)b$, and is therefore nondecreasing for $(q/p)a \ge b$. As a function of $b$ for fixed $a \in [0,p]$, $b \mapsto \phi_f(a,b;p,q)$ is minimized when $b = a(q/p)$ and is therefore nonincreasing for $(p/q)b \le a$
	\end{lem}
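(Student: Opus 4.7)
The plan is to prove Lemma~\ref{lem:_phi_regularity} in three stages: first establish joint convexity of $\phi_f$ in $(a,b)$, second identify the minimizer along each ``slice'' via a Jensen-type argument, and third deduce the monotonicity claims directly from convexity.

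For convexity, I will invoke the classical fact that for any convex $f:(0,\infty)\to\mathbb{R}$, the perspective map $(a,b)\mapsto bf(a/b)$ (extended at $b=0$ via $0\cdot f'(\infty)=0$) is jointly convex on $\{b\ge 0\}$. Applying this twice yields joint convexity of $(a,b)\mapsto bf(a/b)$ and, via the affine substitution $(a,b)\mapsto(p-a,q-b)$, of $(a,b)\mapsto(q-b)f((p-a)/(q-b))$. Summing, $\phi_f(\cdot,\cdot;p,q)$ is jointly convex on $[0,p]\times[0,q]$. Continuity on the interior is then automatic from joint convexity on an open convex set; the boundary cases $b\in\{0,q\}$ are handled by the limiting conventions already stipulated in Corollary~\ref{phi_cor}.

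Next, I will locate the minimizer along each slice without appealing to differentiability of $f$. For $b\in(0,q)$, rewrite
\[
\frac{\phi_f(a,b;p,q)}{q} \;=\; \frac{b}{q}\,f\!\left(\frac{a}{b}\right) + \left(1-\frac{b}{q}\right) f\!\left(\frac{p-a}{q-b}\right).
\]
Since $(b/q)(a/b)+(1-b/q)(p-a)/(q-b)=p/q$, Jensen's inequality gives $\phi_f(a,b;p,q)\ge q\cdot f(p/q)$, with equality iff $a/b=(p-a)/(q-b)=p/q$, i.e.\ $a=(p/q)b$. Holding $b$ fixed, this pins the one-dimensional minimizer at $a^\star(b)=(p/q)b$. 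Since $a\mapsto\phi_f(a,b;p,q)$ inherits convexity as a restriction of a jointly convex function, it is nondecreasing on $[a^\star(b),p]$, which is exactly the region $(q/p)a\ge b$. The symmetric argument, swapping the roles of $(a,b)$ and $(p-a,q-b)$, gives that $b\mapsto\phi_f(a,b;p,q)$ is minimized at $b^\star(a)=(q/p)a$ and is therefore nonincreasing on $[0,b^\star(a)]$, i.e.\ when $(p/q)b\le a$.

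The main subtlety I expect is that we only assume $f$ convex on $(0,\infty)$, so $f'$ may fail to exist or blow up at the boundary, and standard first-order calculus cannot be applied blindly. I deliberately avoid this by basing the identification of the minimum entirely on Jensen's inequality, which uses only convexity; the monotonicity claims then follow purely from the convexity of the one-dimensional sections, again with no differentiability assumed. The other mild subtlety is handling $a\in\{0,p\}$ or $b\in\{0,q\}$, but the limiting conventions $0\cdot f'(\infty)=0$ and the left/right limits $b\to 0^+,\,b\to q^-$ extend both the joint convexity and the Jensen bound continuously to the closed rectangle, so the conclusions persist on the entire domain $[0,p]\times[0,q]$.
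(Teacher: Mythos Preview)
Your proof is correct and follows essentially the same approach as the paper. The paper establishes convexity via the perspective map (citing Boyd--Vandenberghe), then obtains the minimizer by invoking the ``Distance-Like'' property of $f$-divergences (Lemma~\ref{lem:fDivProperties}, part 2), which is itself proved by exactly the Jensen argument you wrote out directly; the monotonicity conclusion is then drawn from the same fact that a one-dimensional convex function is nondecreasing to the right (and nonincreasing to the left) of any minimizer.
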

	We are now in place to prove Theorem~\ref{thm:gen_fano}:

	\begin{proof}[Proof of Theorem~\ref{thm:gen_fano}]
	We follow along the lines of the proofs of Lemma 3 and Theorem 2~\cite{chen2016bayes}, but first we introduce some notation.  Further $\calP \otimes \nu$ denote the product measure between $\calP$ and $\nu$, and let $\calP * \{\mu_{\boldtheta}\}$ denote the coupled measure with density $\rm(\calP * \{\mu_{\boldtheta})\}(\theta,x) = \rmd\calP(\theta)\cdot \rm\mu_{\theta}(x)$. Also, given a measure $\eta$ on $(\Theta,\calG) \times (\calX, \calF)$, define the event 
	\begin{eqnarray}
	A_{\fraka} := \{(\theta,x): \calI(\fraka(x),\theta) = 0\}.
	\end{eqnarray}
	Defining  the total masses $p := \Exp_{\boldtheta \sim \calP}|\mu_{\boldtheta}| $ and $q := |\nu| $, 
	\begin{eqnarray*}
	\Exp_{\theta \sim \calP} D_f(\mu_{\boldtheta},\nu) &=& \int f\left(\frac{\rmd\mu_{\boldtheta}}{\rmd\nu}\right)d(\calP \otimes \nu)\\
	&=& \int f\left(\frac{\rmd\calP \cdot \rmd\mu_{\boldtheta}}{d\calP \cdot \rmd\nu}\right)\rmd(\calP \otimes \nu)\\
	&=& D_f\left(\calP * \{\mu_{\boldtheta}\},\calP \otimes \nu\right)\\
	&\overset{(i)}{\ge}& \phi_f\left((\calP * \{\mu_{\boldtheta}\})(A_\fraka),(\calP \otimes \nu)(A_\fraka);|\calP * \{\mu_{\boldtheta}\}|,|\calP \otimes \nu|\right)\\
	&\overset{(i)}{=}& \phi_f\left((\calP * \{\mu_{\boldtheta}\})(A_\fraka),(\calP \otimes \nu)(A_\fraka);p,q\right),
	\end{eqnarray*}
	where $(i)$ follows from the Binary Data-Processing Inequality (Corollary~\ref{phi_cor}), and for $(ii)$ used definitions $p = |\calP * \{\mu_{\boldtheta}\}|$ and $q = |\calP \otimes \nu| = |\calP||\nu| = |\nu|$. To wrap up, suppose that $V_{\opt} > p V_0$. We first note that $(\calP \otimes \nu)(A_\fraka) \le  |(\calP \otimes \nu)|\cdot V_0= q V_0$, since $\matx$ and $\boldtheta$ are independent under $\calP \otimes \nu$. Moreover, for any $\epsilon > 0$, there exists a decision rule $\fraka$ for which
	\begin{eqnarray*}
	p = |\calP * \{\mu_{\boldtheta}\}| \ge V_{\fraka}(\calP * \{\mu_{\boldtheta}\}) > V_{\opt} - \epsilon.
	\end{eqnarray*}
	Taking $\epsilon$ small enough $V_\opt - \epsilon > pV_0$, we have that 
	\begin{eqnarray*}
	(\calP * \{\mu_{\boldtheta}\})(A_\fraka) > V_\opt - \epsilon > pV_0 = \frac{p}{q}(qV_0) \ge \frac{p}{q}\cdot (\calP \otimes \nu)(A_\fraka).
	\end{eqnarray*}
	By the second part of Lemma~\ref{lem:_phi_regularity}, applied first to the '$b$' argument and then to the `$a$' argument, we have 
	\begin{eqnarray*}
	\phi_f((\calP * \{\mu_{\theta}\})(A_\fraka);(\{\calP \otimes \nu\})(A_\fraka);p,q) &\ge& \phi_f((\calP * \{\mu_{\theta}\})(A_\fraka),qV_0;p,q)\\
	 &\ge& \phi_f(V^\opt - \epsilon,qV_0;p,q).
	\end{eqnarray*} 
	Since $\phi_f(a,b;p,q)$ is convex (Lemma~\ref{lem:_phi_regularity}), and therefore continuous, in its `$a$' argument for $a \in [0,p]$, and since $V^* \le p$, taking $\epsilon \to 0$ concludes. 
	\end{proof}

\subsection{Proofs of Data Processing Inequalities and Associated Lemmas }

In this subsection, we prove the Data-Processing Inequality (Theorem~\ref{thm:data_proc}), Binary Data-Processing Inequality (Corollary~\ref{phi_cor}), and the regularity lemma regarding $\phi$ (Lemma~\ref{lem:_phi_regularity}). Before we begin, we first argue that our generalization of $f$ divergences to satisfies two useful regularity properties which hold in the normalized case, and two useful properties which relate an un-normalized divergence to a normalized one:
\begin{lem}\label{lem:fDivProperties} Let $\mu,\nu$ be two finite positive measures on a space $(\calX,\calF)$, with $|\mu|  = \mu(\calX)$ and $|\nu| = \nu(\calX)$. Then $D_f(\mu,\nu)$ satisfies the following properties:
\begin{enumerate} 
\item \textbf{Convexity:} $D_{f}(\mu,\nu)$ is jointly convex in $\mu$ and $\nu$ over the convex set $(\mu,\nu): \mu \ll \nu$
\item \textbf{Distance-Like:}  $D_{f}(\mu,\nu) \ge |\nu|f(|\mu|/|\nu|)$, which is attained when $\rmd\mu(x)/ \rmd\nu(x) = |\mu|/|\nu|$
\item \textbf{Normalization:} Define $f(x;p,q) = qf(\frac{p}{q}x)$. Then,
\begin{eqnarray}
D_{f}(\mu,\nu) = D_{f(x;|\mu|,|\nu|)}(\mu/|\mu|,\nu/|\nu|).
\end{eqnarray}
\item \textbf{Linearity} $D_{\beta f+\alpha}(\mu,\nu) = \alpha|\nu| + \beta D_{f}(\mu,\nu)$
\end{enumerate}
\end{lem}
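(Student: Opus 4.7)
The plan is to handle each of the four properties separately, relying on the perspective-function formulation of the integrand and standard Radon--Nikodym manipulations, taking some care to account for the boundary term $\mu(\{\rmd\nu=0\})\cdot f'(\infty)$ which is new compared to the classical normalized setting.

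For \textbf{Convexity}, I would introduce the perspective function $g(s,t) := t\, f(s/t)$ on $(0,\infty)\times(0,\infty)$, which is well-known to be jointly convex whenever $f$ is convex (this is the standard perspective-of-a-convex-function result). On the convex set $\{(\mu,\nu):\mu\ll\nu\}$, the boundary term vanishes and we may pick a common dominating $\sigma$-finite reference measure $\lambda$ (e.g.\ $\lambda=\nu_1+\nu_2$ when comparing two pairs $(\mu_i,\nu_i)$) and write $D_f(\mu,\nu)=\int g(\rmd\mu/\rmd\lambda,\rmd\nu/\rmd\lambda)\,\rmd\lambda$. Joint convexity then follows from pointwise convexity of $g$ together with linearity of Radon--Nikodym derivatives in $\mu$ and $\nu$ with respect to the fixed $\lambda$.

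For the \textbf{Distance-Like} bound, I would apply Jensen's inequality with respect to the probability measure $\nu/|\nu|$. Writing $D_f(\mu,\nu)=|\nu|\int f(\rmd\mu/\rmd\nu)\,\rmd(\nu/|\nu|)+\mu(\{\rmd\nu=0\})f'(\infty)$ and using convexity of $f$,
\begin{align*}
|\nu|\int f\!\left(\tfrac{\rmd\mu}{\rmd\nu}\right)\rmd(\nu/|\nu|)
\ge |\nu|\,f\!\left(\int \tfrac{\rmd\mu}{\rmd\nu}\rmd(\nu/|\nu|)\right)
=|\nu|\,f\!\left(\tfrac{\mu(\{\rmd\nu>0\})}{|\nu|}\right).
\end{align*}
Writing $|\mu|=\mu(\{\rmd\nu>0\})+\mu(\{\rmd\nu=0\})$ and using convexity of $f$ together with the definition $f'(\infty)=\lim_{t\to\infty}f(t)/t$ (which dominates the affine slope through any chord extended to infinity) then yields $D_f(\mu,\nu)\ge|\nu|\,f(|\mu|/|\nu|)$, with equality when $\rmd\mu/\rmd\nu$ is a.s.\ constant equal to $|\mu|/|\nu|$ and $\mu(\{\rmd\nu=0\})=0$.

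For \textbf{Normalization}, the argument is a direct substitution. Letting $\widetilde\mu=\mu/|\mu|$, $\widetilde\nu=\nu/|\nu|$, we have $\rmd\widetilde\mu/\rmd\widetilde\nu=(|\nu|/|\mu|)\,\rmd\mu/\rmd\nu$, so with $f(x;p,q)=qf(px/q)$,
\begin{align*}
D_{f(\cdot;|\mu|,|\nu|)}(\widetilde\mu,\widetilde\nu)
&=\int |\nu|\,f\!\left(\tfrac{|\mu|}{|\nu|}\cdot\tfrac{\rmd\widetilde\mu}{\rmd\widetilde\nu}\right)\rmd\widetilde\nu+\text{boundary}\\
&=\int f(\rmd\mu/\rmd\nu)\,\rmd\nu+\text{boundary},
\end{align*}
where the boundary term matches because $(f(\cdot;|\mu|,|\nu|))'(\infty)=|\nu|\cdot(|\mu|/|\nu|)f'(\infty)=|\mu|\,f'(\infty)$ and $\widetilde\mu(\{\rmd\widetilde\nu=0\})=\mu(\{\rmd\nu=0\})/|\mu|$. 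Finally, \textbf{Linearity} is immediate from Definition~\ref{f_div_def}: since $(\beta f+\alpha)'(\infty)=\beta f'(\infty)$ (the $\alpha/t$ contribution vanishes in the limit), we get $D_{\beta f+\alpha}(\mu,\nu)=\beta\int f(\rmd\mu/\rmd\nu)\rmd\nu+\alpha|\nu|+\beta\mu(\{\rmd\nu=0\})f'(\infty)=\alpha|\nu|+\beta D_f(\mu,\nu)$. The only subtle point throughout is the consistent treatment of the $f'(\infty)$ term when $\mu\not\ll\nu$; I expect this bookkeeping, rather than any deep inequality, to be the only place requiring care.
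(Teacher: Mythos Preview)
Your proposal is correct and follows essentially the same approach as the paper: perspective-function convexity for Part 1, Jensen's inequality against $\nu/|\nu|$ for Part 2, and direct substitution/computation for Parts 3 and 4. In fact, your treatment of the Distance-Like property is slightly more complete than the paper's: the paper's proof applies Jensen and then simply remarks ``so the result holds as long as $f'(\infty)\ge 0$,'' whereas you correctly observe that the chord-slope bound $f(b)\le f(a)+(b-a)f'(\infty)$ (valid for any convex $f$ with $b>a$) combines the Jensen term $|\nu|f(\mu(\{\rmd\nu>0\})/|\nu|)$ with the boundary term $\mu(\{\rmd\nu=0\})f'(\infty)$ to yield $|\nu|f(|\mu|/|\nu|)$ without any sign assumption on $f'(\infty)$.
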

This lemma is proved in subsection~\ref{Sec:fDivProp}. We can now prove the generalized Data-Processing inequality (Theorem~\ref{thm:data_proc}):
\begin{proof}[Proof of Theorem~\ref{thm:data_proc}] In the case when $f$ is convex and $f(1) = 0$ and $\mu,\nu$ are both probability distributions, Theorem~\ref{thm:data_proc} just re-iterates the classica data-processing inequality (Theorem 3.1 in Liese~\cite{liese2012phi}). The inequality can be extended to an uncentered, convex $f$ where $f(1)$ is not necessarily zero, and normalized $\mu,\nu$, invoking Part 4 of Lemma~\ref{lem:fDivProperties}:
	\begin{eqnarray*}
	D_{f}(\mu,\nu) &\overset{\text{Lemma}~\ref{lem:fDivProperties}}{=}& D_{f - f(1)}(\mu,\nu) + f(1)|\nu|\\
	&\ge& D_{f - f(1)}(\mu\Gamma^{-1},\nu\Gamma^{-1}) + f(1)|\nu| \quad \text{ classical data processing} \\
	&=& D_{f - f(1)}(\mu\Gamma^{-1},\nu\Gamma^{-1}) + f(1)|\nu\Gamma^{-1}| \quad \text{ ($\Gamma$ preserves total mass)} \\
	&\overset{\text{Lemma}~\ref{lem:fDivProperties}}{=}& D_{f}(\mu\Gamma^{-1},\nu\Gamma^{-1}). 
	\end{eqnarray*}
	To generalize to arbitrary finite, positive measures, we note that the function $f(t;|\mu|,|\nu|)$ as defined in part $3$ of Lemma~\ref{lem:fDivProperties} is convex, and thus 
	\begin{eqnarray*}
	D_{f}(\mu,\nu) &\overset{\text{Lemma}~\ref{lem:fDivProperties}}{=}& D_{f(;|\mu|,\nu)}(\mu/|\mu|,\nu/|\nu|)\\
	&=& D_{f(;|\mu|,\nu)}(\frac{\mu}{|\mu|}\Gamma^{-1},\frac{\nu}{|\nu|}\Gamma^{-1}) \text{  uncentered data processing (above)} \\
	&\overset{\text{Lemma}~\ref{lem:fDivProperties}}{=}& D_{f}(\mu\Gamma^{-1},\nu\Gamma^{-1}).
	\end{eqnarray*}
	\end{proof}

\subsubsection{Proof of Lemma~\ref{lem:_phi_regularity} \label{sec:phi_reg}}
	The first point follows since, if $f$ is a convex function, the perspective map $(a,b) \mapsto bf(a/b)$ is convex (see \cite{boyd2004convex}). The second point follows from applying the second point of Lemma~\ref{lem:fDivProperties}, and noting that $1$-d convex functions are non-increasing to the left (resp.\ non-decreasing to the right) of their minimizers.

\subsubsection{Proof of Lemma~\ref{lem:fDivProperties}\label{Sec:fDivProp}}
The set $\{(\mu,\nu): \mu \ll \nu\}$ is convex, since if $\alpha \nu_1(A) + (1-\alpha)\nu_2(A) = 0$, then $\nu_1(A) = \nu_2(A) = 0$, and thus if $\mu_1 \ll \nu_1$ nad $\mu_2 \ll \nu_2$, then $\alpha \mu_1(A) + (1-\alpha)\mu_2(A) = 0$. Moreover, the perspective map $(x,y) \to yf(x/y)$ is jointly for convex $f$~\cite{boyd2004convex}, so that $\int f(\frac{\rmd\mu}{\rmd\nu})\rmd\nu$ is jointly convex in each argument. 

	For the second point , we see that that, by Jensen's inequality:
	\begin{eqnarray*}
	\int f(\frac{\rmd\mu}{\rmd\nu})\rmd\nu &=& |\nu|\int f(\frac{\rmd\mu}{\rmd\nu})\frac{\rmd\nu}{|\nu|} \ge |\nu|f( \int \frac{\rmd\mu}{\rmd\nu}\frac{\rmd\nu}{|\nu|} ) \\
	&=& |\nu|f( \frac{1}{|\nu|}\int \rmd\mu ) =  |\nu|f(\frac{|\mu|}{|\nu|}), 
	\end{eqnarray*}
	so the result holds as long as $f'(\infty) \ge 0$.

	Third, let $g(t) = f(t;p,q) = |\nu|f(t\frac{|\mu|}{|\nu|})$.Then $g'(\infty) = f'(\infty) \cdot |\mu|/|\nu| \cdot |\nu| = |\mu|f'(\infty)$. Thus,
	\begin{eqnarray*}
	D_{f}(\mu,\nu) &=& \int f(\frac{\rmd\mu}{\rmd\nu})d\nu + \mu(\{\rmd\nu = 0\})f'(\infty) \\
	&=& \int |\nu|f(\frac{|\mu|}{|\nu|}\cdot\frac{\rmd(\mu/|\mu|)}{\rmd(\nu/|\nu|)})\cdot \rmd\nu/|\nu| + (\frac{\mu}{|\mu|})(\{d\nu = 0\})\cdot |\mu|f'(\infty)\\
	&=& \int g(\frac{\rmd(\mu/|\mu|)}{\rmd(\nu/|\nu|)})\rmd\nu/|\nu| + \frac{\mu}{|\mu|}(\{\rmd\nu = 0\})g'(\infty)\\
	&=& D_{g}(\mu/|\mu|,\nu/|\nu|),
	\end{eqnarray*}
	as needed. For the fourth point point, note that for any constant $\alpha, (f+\alpha)'(\infty) = f(\infty)$. Thus,
	\begin{eqnarray*}
	D_{f+\alpha}(\mu,\nu) &=& \int \{f(\frac{\rmd\mu}{\rmd\nu})+\alpha\}d\nu + \mu(\{\rmd\nu = 0\})(f+\alpha)'(\infty)\\
	&=& \alpha |\nu| + \int f(\frac{\rmd\mu}{\rmd\nu})\rmd\nu + \mu(\{\rmd\nu = 0\})(f)'(\infty) = \alpha |\nu| + D_{f}(\mu,\nu).
	\end{eqnarray*}
	Similarly, since $(\beta f)'(\infty) = \beta f'(\infty)$, one has
	\begin{eqnarray*}
	D_{\beta f}(\mu,\nu) &=& \int \{\beta f(\frac{\rmd\mu}{\rmd\nu})+\alpha\}\rmd\nu + \mu(\{\rmd\nu = 0\})(\beta f)'(\infty)\\
	&=& \beta \int f(\frac{\rmd\mu}{\rmd\nu})\rmd\nu + \beta \mu(\{\rmd\nu = 0\})(f)'(\infty) = \beta D_{f}(\mu,\nu).
	\end{eqnarray*}

\part{Random Matrix Theory}
Before continuing, we introduce some additional notation. Given a map $\Psi: \R^{n_1 \times m_1} \to \R^{n_2 \times m_2}$, we let $\Lip(\Psi)$ denote its Lipschtitz constant as a map between Euclidean spaces endowed with the Euclidean (Frobenius) norm. In particular, if $\Psi: \R^{n_1 \times m_1} \to \R^{n_2 \times m_2}$, then $\|\Psi(X)-\Psi(Y)\|_{\F} \le \Lip(\Psi)\|X-Y\|_{\F}$.  We let $\fraki^2 = -1$, and given $a,b \in \R$, we let $\Re(a+b\fraki) = a$, $\Im(a+b\fraki) = b$, $|a+b\fraki| = \sqrt{a^2 + b^2}$, and $\overline{a+b\fraki} = a - b\fraki$. We say that  $\matX \sim \SG(d)$ if the entries $\{\matX_{ij}\}_{1 \le i,j \le d}$ are independent, $\calN(0,1)$ random variables. Observe that if $\matW \sim \GOE(d)$ and $\matX \sim \SG(d)$, then $\matW$ and $\frac{1}{\sqrt{2d}}(\matX + \matX^\top)$ have the same distribution. Finally, for a given $\lambda > 0$, we will let $\matM = \matW + \lambda \matU\matU^{\top}$, where $\matW \sim \GOE(d)$ and $\matU \sim \calO(d,k)$ are independent. 

%!TEX root = main.tex
\section{Concluding the Proof of Theorem~\ref{thm:main_spec_thm}}

\subsection{Proof of Theorem~\ref{thm:main_spec_thm}\label{sec:main_spec_thm_proof}}
We begin by establishing the following lemma, proved in Section~\ref{sec:stiel_lem_proof}, to lower bound $\stielt(\alow)$ and upper bound $\stielt(\aup)$. 
\begin{lem}\label{lem:stielt_lem} Suppose that $\epsilon \le \boldgap \cdot \min\{1/2, \frac{1}{\boldlam^2 - 1}\}$. Then, 
\begin{eqnarray*}
\stielt((\boldlam+\boldlam^{-1})(1-\epsilon )) \ge \frac{1}{\boldlam}\left(1 + \frac{\epsilon }{2\sqrt{2\boldgap}}\right) & \text{and} & \stielt((\boldlam + \boldlam)^{-1}(1 + \epsilon)) \le \frac{1}{\boldlam}\left(1 - \frac{\epsilon}{4\sqrt{\boldgap}})\right)
\end{eqnarray*}
\end{lem}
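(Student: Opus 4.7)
The plan is to Taylor-expand the explicit Stieltjes transform $\stielt(z) = \tfrac{1}{2}(z - \sqrt{z^2-4})$ around the reference point $z_0 := \boldlam + \lambdainv$, which satisfies $z_0^2 - 4 = (\boldlam - \lambdainv)^2$ and $\stielt(z_0) = \lambdainv$. Direct differentiation gives
\begin{equation*}
\stielt'(z) \;=\; \tfrac{1}{2}\Bigl(1 - \tfrac{z}{\sqrt{z^2-4}}\Bigr), \qquad \stielt''(z) \;=\; \tfrac{2}{(z^2-4)^{3/2}} \;>\; 0 ,
\end{equation*}
so $\stielt$ is convex and decreasing on $(2,\infty)$, and $\stielt'(z_0) = -\tfrac{1}{\boldlam^2-1}$ at the reference point.

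First I would handle the lower bound on $\stielt(z_0(1-\epsilon))$. Since $\stielt$ is convex, the first-order Taylor expansion at $z_0$ yields the clean inequality
\begin{equation*}
\stielt(z_0(1-\epsilon)) \;\ge\; \stielt(z_0) - \stielt'(z_0)\,(\epsilon z_0) \;=\; \lambdainv \,+\, \frac{\epsilon\,(\boldlam+\lambdainv)}{\boldlam^2 - 1} \;=\; \lambdainv\!\left(1 + \frac{\epsilon(\boldlam^2+1)}{\boldlam^2-1}\right).
\end{equation*}
The algebraic step is then to bound $\frac{\boldlam^2+1}{\boldlam^2-1} \ge \frac{1}{2\sqrt{2\boldgap}}$ using $\sqrt{\boldgap} = \tfrac{\boldlam-1}{\sqrt{\boldlam^2+1}}$; after clearing denominators, this reduces to the polynomial inequality $8(\boldlam^2+1) \ge (\boldlam+1)^2$, which holds for all $\boldlam \ge 1$.

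For the upper bound on $\stielt(z_0(1+\epsilon))$, convexity works against us, so I would use Taylor's theorem with a quadratic remainder: there is $\xi \in [z_0, z_0(1+\epsilon)]$ with
\begin{equation*}
\stielt(z_0(1+\epsilon)) \;=\; \lambdainv - \frac{\epsilon z_0}{\boldlam^2-1} + \frac{1}{2}\stielt''(\xi)\,(\epsilon z_0)^2.
\end{equation*}
Since $\stielt''$ is decreasing, $\stielt''(\xi) \le \stielt''(z_0) = \tfrac{2}{(\boldlam-\lambdainv)^3}$. The smallness assumption $\epsilon \le \boldgap\cdot\min\{1/2,(\boldlam^2-1)^{-1}\}$ will be used here to show that the remainder is at most half of the linear term, i.e.\ $\tfrac{\epsilon z_0^2}{(\boldlam-\lambdainv)^3} \le \tfrac{1}{2}\cdot\tfrac{z_0}{\boldlam^2-1}$; unpacking this reduces to $\epsilon \le \tfrac{(\boldlam-\lambdainv)^2}{2 z_0(\boldlam-\lambdainv)} \cdot (\boldlam^2-1)$, which follows from $\epsilon \le \tfrac{\boldgap}{\boldlam^2-1}$ together with the identity $\boldgap(\boldlam+\lambdainv) = \boldlam+\lambdainv-2$. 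This yields
\begin{equation*}
\stielt(z_0(1+\epsilon)) \;\le\; \lambdainv - \frac{\epsilon z_0}{2(\boldlam^2-1)} \;=\; \lambdainv\!\left(1 - \frac{\epsilon(\boldlam^2+1)}{2(\boldlam^2-1)}\right),
\end{equation*}
and the same polynomial manipulation as above gives $\frac{\boldlam^2+1}{2(\boldlam^2-1)} \ge \frac{1}{4\sqrt{\boldgap}}$, completing the bound.

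The main obstacle is the bookkeeping in the upper-bound case: one must verify that the smallness of $\epsilon$ relative to $\boldgap$ and $(\boldlam^2-1)^{-1}$ is exactly strong enough to absorb the quadratic Taylor remainder without degrading the leading linear term by more than a constant factor. Everything else (convexity, the closed-form derivatives, and the $\boldgap \leftrightarrow \boldlam$ identities) reduces to one-variable algebra.
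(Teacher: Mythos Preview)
Your proof is correct and takes a genuinely different route from the paper's. The paper argues by first reparameterizing to $a_\sigma = (\boldlam+\lambdainv) + \sigma(\boldlam-\lambdainv)\epsilon$, computing $a_\sigma^2-4 = (\boldlam-\lambdainv)^2\bigl(1+\epsilon^2+2\sigma\epsilon\tfrac{\boldlam+\lambdainv}{\boldlam-\lambdainv}\bigr)$ exactly, and then handling $\sqrt{1\pm x}$ by hand (concavity for the lower bound, the inequality $\sqrt{1+x}\ge 1+\tfrac{x}{2\sqrt{1+x}}$ for the upper bound), before undoing the reparameterization via $\epsilon' = \tfrac{\boldlam^2-1}{\boldlam^2+1}\epsilon$. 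Your approach instead works directly at $z_0 = \boldlam+\lambdainv$ using the convexity of $\stielt$: the tangent-line inequality delivers the lower bound in one step, and the second-order Taylor remainder (with $\stielt''$ decreasing) controls the upper bound. This is cleaner and more conceptual; in fact your remainder estimate only needs $\epsilon\le\boldgap/2$ (since the condition reduces to $\epsilon \le \boldgap\cdot\tfrac{(\boldlam+1)^2}{2\boldlam^2}$, which is implied by $\epsilon\le\boldgap/2$ alone), so the extra hypothesis $\epsilon\le\boldgap/(\boldlam^2-1)$ is not even used in your argument. The paper's explicit-algebra route, on the other hand, makes the dependence on $\boldlam$ in the constants more transparent and avoids differentiating, at the cost of the extra change of variables and somewhat heavier manipulation.
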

In light of the above lemma, we pick
	\begin{eqnarray}
	\alow = (\boldlam + 1/\boldlam)(1-\epsilon) & \text{and} & \aup  = (\boldlam + 1/\boldlam)(1-\epsilon)~,
	\end{eqnarray}
	where $\epsilon \le \boldgap \cdot \min\{1/2, \frac{1}{\boldlam^2 - 1}\}$. We then show that for the appropriate choice of $d$, we can combine Theorem~\ref{thm:stiel_thm} and Proposition~\ref{prop:Hanson_wright_proposition} to show that with high probability
\begin{eqnarray*}
\max_{a \in \{\alow,\aup\}}\|\matU^\top (aI - \matW)^{-1} \matU - \stielt(a)I\|_{\op} < \frac{\epsilon }{4\lambda\sqrt{\gap}}~ 
\end{eqnarray*}
Since then, for $a = \alow$, 
\begin{multline}\label{eq:specWTS}
 \matU^\top (\alow I - \matW)^{-1} \matU \succ  \stielt(\alow)I - \frac{\epsilon }{4\lambda \sqrt{\gap}}I \succeq I~\text{and}  \\ \matU^\top (\aup I - \matW)^{-1} \matU \prec \stielt(\alow)I + \frac{\epsilon }{4\lambda\sqrt{\gap}}I \preceq I
\end{multline}
and similarly for $a = \aup$. Hence, the events $\Eup(\aup)$ and $\Elow(\alow)$ will hold. To this end,  we set $1/e \ge \delta \ge p = e^{-d^{1/3}}$, so that the event $\calA(z^*)$ holds with probability at least $1 - p$, where $z_* = 23 d^{-1/3}\log^{2/3}(d)$. Define the un-normalized gap $\boldDelta = (\boldlam + \boldlam^{-1} - 2)$. We shall further assume that
	\begin{eqnarray}
	\boldDelta \ge \frac{23}{\kappa} d^{-1/3}\log^{2/3} d
	\end{eqnarray} 
	 In order to apply the machinery of Section~\ref{sec:Wig_Spec}, we begin with the following lemma, which lower bounds $(\alow - z^*)$:

	\begin{claim}\label{claim:spec1} Pick $\epsilon \le \boldgap/2$. Then $\alow - z^* \ge (1 - \frac{2\boldlam \epsilon}{\boldDelta})(1 - \kappa) (\lambda + \lambdainv)\boldgap \ge  \frac{1}{4}(\lambda + \lambdainv)\boldgap$.
	\end{claim}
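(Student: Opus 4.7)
The claim is an elementary algebraic fact, so the plan is essentially to unwind the definitions and apply two crude inequalities. In the first step, I would use Proposition~\ref{prop:norm_upper_bound} with $p = e^{-d^{1/3}}$ to get $z^* \le 2 + 23 d^{-1/3}\log^{2/3}(d)$, and then the standing hypothesis $\boldDelta \ge (23/\kappa) d^{-1/3}\log^{2/3}(d)$ gives $z^* \le 2 + \kappa\boldDelta$. Combined with $\alow = (\boldlam + \lambdainv)(1-\epsilon)$ and the identity $\boldDelta = \boldlam + \lambdainv - 2$, this yields the core equality
\begin{equation*}
\alow - z^* \;\ge\; (\boldlam + \lambdainv)(1-\epsilon) - 2 - \kappa\boldDelta \;=\; (1-\kappa)\boldDelta - \epsilon(\boldlam + \lambdainv),
\end{equation*}
which is the only nontrivial manipulation in the proof.

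The second step is to apply the estimate $\boldlam + \lambdainv \le 2\boldlam$ (valid since $\boldlam \ge 1$) and factor the resulting expression as $(1-\kappa)\boldDelta\bigl(1 - \tfrac{2\boldlam\epsilon}{(1-\kappa)\boldDelta}\bigr)$. Using the identity $\boldDelta = (\boldlam + \lambdainv)\boldgap$ rewrites this, after absorbing the harmless $(1-\kappa) \in [1/2,1]$ rescaling into the outer $(1-\kappa)$ factor, as $\bigl(1 - \tfrac{2\boldlam\epsilon}{\boldDelta}\bigr)(1-\kappa)(\boldlam + \lambdainv)\boldgap$, which is the first inequality of the claim.

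For the numerical lower bound $\ge \tfrac{1}{4}(\boldlam + \lambdainv)\boldgap = \boldDelta/4$, it suffices to show $(1 - \tfrac{2\boldlam\epsilon}{\boldDelta})(1-\kappa) \ge 1/4$. The hypothesis $\kappa \le 1/2$ gives $(1-\kappa) \ge 1/2$, so the task reduces to verifying $2\boldlam\epsilon/\boldDelta \le 1/2$. This is the only delicate step and the main obstacle in the plan: the local hypothesis $\epsilon \le \boldgap/2$ alone only yields $2\boldlam\epsilon/\boldDelta \le \boldlam^2/(\boldlam^2+1)$, which fails for $\boldlam > 1$. I would therefore invoke the full constraint $\epsilon \le \boldgap \cdot \min\{1/2,\, 1/(\boldlam^2-1)\}$ from Theorem~\ref{thm:main_spec_thm}. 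Using $\boldDelta = (\boldlam-1)^2/\boldlam$, the second term of the minimum produces $2\boldlam\epsilon/\boldDelta \le 2\boldlam^2/(\boldlam^4-1)$, which one checks is $\le 1/2$ for $\boldlam \ge \sqrt{3}$; the complementary range $\boldlam \in (1,\sqrt{3})$ is handled separately by the first term $\boldgap/2$, since there $2\boldlam/(\boldlam+\lambdainv) \le 1$ by a direct calculation. Putting the two regimes together gives the required factor of $1/4$ and completes the proof.
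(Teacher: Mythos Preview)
Your additive decomposition in the first step is fine, but the ``absorption'' in the second step goes the wrong way. After factoring you have
\[
(1-\kappa)\boldDelta\Bigl(1 - \tfrac{2\boldlam\epsilon}{(1-\kappa)\boldDelta}\Bigr),
\]
and since $(1-\kappa) < 1$, the parenthetical factor is \emph{smaller} than $1 - 2\boldlam\epsilon/\boldDelta$, not larger; so the claimed first inequality is not established. More fundamentally, your additive route yields only $\alow - z^* \ge \boldDelta\bigl[(1-\kappa) - \epsilon/\boldgap\bigr]$, and under the hypotheses $\kappa \le 1/2$, $\epsilon \le \boldgap/2$ this is merely $\ge 0$, not $\ge \boldDelta/4$. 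Your rescue via the extra constraint $\epsilon \le \boldgap/(\boldlam^2-1)$ also fails numerically: at $\boldlam = \sqrt{3}$ one gets $2\boldlam\epsilon/\boldDelta \le 2\boldlam^2/(\boldlam^4-1) = 3/4$, not $\le 1/2$, and as $\boldlam \downarrow 1$ the bound diverges. (The assertion $2\boldlam/(\boldlam+\lambdainv) \le 1$ on $(1,\sqrt{3})$ is likewise false: the quantity equals $2\boldlam^2/(\boldlam^2+1) > 1$ for all $\boldlam > 1$.)

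The paper avoids all of this by a \emph{multiplicative} decomposition rather than an additive one: it writes
\[
\frac{\alow - z^*}{\boldDelta} \;=\; \frac{\alow - z^*}{\alow - 2}\cdot\frac{\alow - 2}{\boldDelta},
\]
computes the second ratio exactly as $1 - \epsilon/\boldgap$, and bounds the first ratio below by $1-\kappa$ (comparing $z^*-2$ to $\alow-2$ rather than to $\boldDelta$). Since each factor is $\ge 1/2$, the product is $\ge 1/4$ immediately, with no case analysis on $\boldlam$ and no appeal to the stronger constraint from Theorem~\ref{thm:main_spec_thm}. The stated form $(1 - 2\boldlam\epsilon/\boldDelta)$ in the claim is then just a weakening of $1 - \epsilon/\boldgap$, via $\boldlam + \lambdainv \le 2\boldlam$.
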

	The above claim is proved in the subsection below.We now use the following claim, which combines Proposition~\ref{prop:Hanson_wright_proposition} and Theorem~\ref{thm:stiel_thm} to shows that if $\epsilon$ is chosen as in Claim~\ref{claim:spec1}, then we can bound $\|\matU^\top(aI - \matW)^{-1}\matU - \stielt(a)I\|_{\op}$ with high probability: 
	\begin{claim}\label{claim:spec2} Let $\epsilon$ be as in Claim~\ref{claim:spec1}, and define $\boldeps_0 := \frac{1}{d^{1/2}\boldDelta}$. Then, if there exists constants $K,K',C$ sufficiently large such that the conditions $\boldeps_0^2 < \frac{1}{K}\min\{1,\boldDelta\}$ and $d \ge K$, then with probability at least $1 - 9\delta$, 
	\begin{eqnarray*}
	\max_{a \in \{\alow,\aup\}} \|\matU^\top(aI - \matW)^{-1}\matU - \stielt(a)I\|_{\op} &\le& C(\sqrt{r + \log 1/\delta})\boldeps_0~,
	\end{eqnarray*}
	\end{claim}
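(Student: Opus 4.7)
The plan is to combine the triangle inequality with the two main estimates already in hand, namely Proposition~\ref{prop:Hanson_wright_proposition} (Stieffel Hanson-Wright) and Theorem~\ref{thm:stiel_thm} (Stieltjes concentration). Write
\begin{eqnarray*}
\|\matU^\top(aI - \matW)^{-1}\matU - \stielt(a)I\|_{\op} \le \|\matU^\top(aI - \matW)^{-1}\matU - S_\matW(a) I_r\|_{\op} + |S_\matW(a) - \stielt(a)|,
\end{eqnarray*}
and handle each term separately, after conditioning on the event $\calA(z^*)$ from Proposition~\ref{prop:norm_upper_bound} with $p = e^{-d^{1/3}}$. Claim~\ref{claim:spec1} gives $a - z^* \ge \boldDelta/4$ for both $a \in \{\alow,\aup\}$, hence on $\calA(z^*)$ the matrix $A := (aI - \matW)^{-1}$ satisfies $\|A\|_{\op} \le 4/\boldDelta$ and $\|A\|_{\F} \le 4\sqrt{d}/\boldDelta$.

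For the first term, I would apply Proposition~\ref{prop:Hanson_wright_proposition} with this $A$ and $t = 2.2r + \log(3/\delta)$, so that $3e^{-t+2.2r} \le \delta$ and $\sqrt{t} \asymp \sqrt{r + \log(1/\delta)}$. The resulting bound is
\begin{eqnarray*}
\frac{8(\sqrt{t}\|A\|_{\F} + t\|A\|_{\op})}{d(1 - 2\sqrt{t/d})} \lesssim \frac{\sqrt{t}}{\sqrt{d}\,\boldDelta} + \frac{t}{d\,\boldDelta} = \sqrt{t}\,\boldeps_0\bigl(1 + \sqrt{t/d}\bigr),
\end{eqnarray*}
which is at most $C\sqrt{r + \log(1/\delta)}\,\boldeps_0$, provided $d$ is large enough that $t \le d$ (which follows from the hypothesis $d \ge K$ for suitably large $K$ together with $\log(1/\delta)$ being absorbed into the universal constant). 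For the second term, I would apply Theorem~\ref{thm:stiel_thm} with the same $a$ and $p$, yielding on an event of probability $\ge 1 - \delta$ that
\begin{eqnarray*}
|S_\matW(a) - \stielt(a)| \le c_\delta\,\boldeps^2 + 8 d^{3/2} p^{1/6} \lesssim \sqrt{\log(1/\delta)}\,\boldeps_0^2 + d^{3/2} e^{-d^{1/3}/6},
\end{eqnarray*}
using $\boldeps := (d(a-z^*)^2)^{-1/2} \le 4\boldeps_0$. The last term decays super-polynomially in $d$, so for $d \ge K$ it is dominated by $\boldeps_0$; and $\sqrt{\log(1/\delta)}\boldeps_0^2 \le \sqrt{r+\log(1/\delta)}\boldeps_0$ since $\boldeps_0 \le 1$ by the hypothesis $\boldeps_0^2 \le 1/K$.

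To finish, I would verify all the technical preconditions of Theorem~\ref{thm:stiel_thm} using the hypotheses $\boldeps_0^2 < \frac{1}{K}\min\{1,\boldDelta\}$ and $d \ge K$: the condition $\boldeps^2 < \min\{\tfrac{1}{16\sqrt{2}}, \tfrac{a-2}{32}\}$ becomes $\boldeps_0^2 \lesssim \min\{1,\boldDelta\}$ (noting $a - 2 \ge \boldDelta(1 + o(1))$), and $p^{1/3} \le e^{-d^{1/3}/3} < \boldeps/8$ for $K$ large enough. Then a union bound over the two values $a \in \{\alow, \aup\}$ (two Hanson-Wright events and two Stieltjes events) together with the event $\calA(z^*)$ costs at most $2\delta + 2\delta + e^{-d^{1/3}} \le 9\delta$ once $d$ is large enough that $e^{-d^{1/3}} \le 5\delta$; the remaining slack in the factor 9 absorbs the probabilities of the auxiliary events inside Theorem~\ref{thm:stiel_thm}.

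The main technical obstacle is bookkeeping: matching the hypotheses of Theorem~\ref{thm:stiel_thm} (phrased in terms of $\boldeps$ and $a-2$) with the cleaner parameters $\boldeps_0$ and $\boldDelta$ used in Claim~\ref{claim:spec2}, and absorbing the quadratic-in-$\boldeps_0$ Stieltjes error into the linear-in-$\boldeps_0$ Hanson-Wright term using $\boldeps_0 \le 1$. Once these translations are carried out, the two random-matrix inputs combine cleanly via the triangle inequality.
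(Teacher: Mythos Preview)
Your proposal is correct and follows essentially the same approach as the paper: decompose via the triangle inequality into the Hanson--Wright term $\|\matU^\top(aI-\matW)^{-1}\matU - S_\matW(a)I\|_{\op}$ and the Stieltjes term $|S_\matW(a)-\stielt(a)|$, bound each using Proposition~\ref{prop:Hanson_wright_proposition} and Theorem~\ref{thm:stiel_thm} respectively (with $p=e^{-d^{1/3}}$ and $t\asymp r+\log(1/\delta)$), verify the preconditions via $\boldeps\le 4\boldeps_0$ and the hypothesis $\boldeps_0^2<\tfrac{1}{K}\min\{1,\boldDelta\}$, and union bound over the two values of $a$ together with $\calA(z^*)$. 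The paper's probability accounting is $1-8\delta-e^{-d^{1/3}}\ge 1-9\delta$ (two Stieltjes events at $\delta$ each plus two Hanson--Wright events at $3\delta$ each), but your slightly different choice of $t$ and bookkeeping reaches the same conclusion.
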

	Again, the above claim is proved in the subsection below. We now conclude the proof of Theorem~\ref{thm:main_spec_thm}. Suppose that $d$ satisfies $\boldDelta \ge \frac{23}{\kappa} d^{-1/3}\log^{2/3} d$.

	Given our $\epsilon \le \boldgap/2$, we then see that as long as 
	\begin{eqnarray*}\label{eps0_condition}
	\boldeps_0 < \min\left\{\sqrt{\frac{1}{K}\min\{1,\boldDelta\}}, \frac{\epsilon }{ 4 \lambda C(\sqrt{r + \log \delta^{-1}} \cdot \sqrt{\boldgap} }\right\}~,
	\end{eqnarray*}
	then on the event of Claim~\ref{claim:spec2}, we see that Equation~\ref{eq:specWTS} holds. Noting that $\min\{1,\boldDelta\} \le \frac{\boldDelta}{\lambda^2 + 1} = \boldgap$, and that $\epsilon/\sqrt{\boldgap} \lesssim \boldgap$, Equation~\eqref{eps0_condition} is satisfies as long as, for some universal large constant $K'$
	\begin{eqnarray*}
	&& \frac{1}{d^{1/2}\boldDelta} = \boldeps_0 \le \frac{1}{K'}\frac{\epsilon }{ (\lambda + \lambdainv) (\sqrt{r + \log \delta^{-1}} \cdot \sqrt{\boldgap} } \quad \iff  \\
	&& \frac{\lambda + \lambdainv}{d^{1/2}\boldDelta}  \le \frac{1}{K'}\frac{\epsilon }{ (\sqrt{r + \log \delta^{-1}} \cdot \sqrt{\boldgap} } \quad \iff \\
	&&\frac{1}{d^{1/2}\boldgap} \le \frac{1}{K'}\frac{\epsilon }{ (\sqrt{r + \log \delta^{-1}} \cdot \sqrt{\boldgap}} \quad \iff
	d \ge (K')^2 (r + \log(\delta^{-1}))/\epsilon^2\boldgap
	\end{eqnarray*}
	 
	Finally, Squaring, and combining with the fact that it suffices $\boldDelta \ge \frac{23}{\kappa} d^{-1/3}\log^{2/3} d$, which holds as long as 
	\begin{eqnarray*}
	d \ge C''' (\kappa\boldDelta)^{-3} \log^2(1/\kappa\boldDelta) \ge  C''' (\kappa\boldgap)^{-3} \log^2(1/\kappa\boldgap)
	\end{eqnarray*}
	for a sufficiently large constant $C$~, we conclude that it suffices
	\begin{eqnarray}
	d \ge  C \left(\frac{(r+\log(1/\delta)) }{ \boldgap \epsilon^2} + (\kappa\boldgap)^{-3} \log(1/\kappa\boldgap) \right)~. 
	\end{eqnarray}
	For the second point of the theorem - that $\lambda_{r}(\matM) - \|\matW\|_{\op} \ge \frac{1}{4}(\boldlam + \lambdainv)$ - we recall that $\lambda_r(\matM) - \|\matW\|_{\op} \ge (\alow - z^*) \ge (1 - \frac{2\boldlam \epsilon}{\boldDelta})(1 - \kappa) \boldDelta \ge \boldDelta/4$, by Claim~\ref{claim:spec1}.

\subsubsection{Proof of Supporting Claims}

	\begin{proof}[Proof of Claim~\ref{claim:spec1}]
	Observe that if $\alow \ge 2 + \frac{23}{\kappa}d^{-1/3}\log^{2/3}d$, then
	\begin{eqnarray*}
	\frac{\alow - z^*}{\alow - 2} \ge \frac{1/\kappa - 1}{1/\kappa} = (1-\kappa)
	\end{eqnarray*}
	Moreover, we have
	\begin{eqnarray*}
	\frac{\alow - 2}{\boldDelta} = \frac{(\boldlam + \lambdainv)(1-\epsilon) - 2}{\boldlam + \lambdainv - 2} = 1 - \frac{\epsilon(\boldlam + \lambdainv)}{\boldlam + \lambdainv - 2} = 1 - \epsilon/\boldgap
	\end{eqnarray*}
	Combining, and using the fact that $\kappa \le 1/2$ and $\epsilon/\boldgap \le 1/2$, we have
	\begin{eqnarray*}
	\frac{\alow - z^*}{\Delta} \ge 1/4~, \quad \text{ as needed.}
	\end{eqnarray*}
	\end{proof}
	\begin{proof}[Proof of Claim~\ref{claim:spec2}]
	Recall the definition $\boldeps := (d(a-z^*)^2)^{-1/2}$ as in Theorem~\ref{thm:stiel_thm}. Since $\aup - z^* \ge \alow - z^* \ge \boldDelta/4$, we have that $\boldeps \le 4\boldeps_0$, and hence if $K$ is a sufficiently large constant, $\boldeps_0^2 < \frac{1}{K}\min\{1,\boldDelta\}$ will imply $\boldeps^2 < \min\{\frac{1}{16\sqrt{2}},\frac{a-2}{32}\} $. 

	Moreover, as $a \le d$, we must have that $\boldeps \ge d^{-3/2}$. Since $p = e^{-d^{1/3}}$, it holds that as long as $d$ is a sufficiently large constant, the condition  $p^{1/3} < \boldeps/8$ of Theorem~\ref{thm:stiel_thm}. Hence, we see that the events $\calE_S(\alow,\delta)\cap \calE_S(\aup,\delta)$ from Theorem~\ref{thm:stiel_thm} occurs with probability at least $1 - 2\delta$. Moreover, as long as $d$ is sufficiently large constant that $d^{3} \cdot 8d^{3/2}p_{z^*}^{1/6} = 8d^{9/2}e^{-d^{1/3}} < 1$, there exists a numerical constant constant $c_1$ such that, on $\calE_S(\alow,\delta)\cap \calE_S(\aup,\delta) \cap \calA(z^*)$,
	\begin{eqnarray*}
	\left|S_\matW(a) - \stielt(a)\right| &\le&  (4\sqrt{2} + 2\sqrt{\log(2/\delta)})\boldeps^2 + 8d^{3/2} p^{1/6}  \\
	&\le&  (4\sqrt{2} + 2\sqrt{\log(2/\delta)})\boldeps^2 + 8d^{3/2} p^{1/6}  \\
	&\le&  c_1(\log(1/\delta))\boldeps^2
	\end{eqnarray*}
	Next, we apply Proposition~\ref{prop:Hanson_wright_proposition} with $\delta \le e^{-d^{1/3}}$. Since we can choose the constant $C$ large enough that $r \le d/10$, then if $d$ is sufficiently large so $ \log(1/\delta) \le d^{1/3} \le d/4 - 2.2 r$, then we have $t := 2.2r + \log(1/\delta_2)  \le d/4$, and thus on $\calA(z^*)$, we have
	\begin{eqnarray*}
	3 \delta &\ge& \Pr\left[\left\|\matU^{\top}(aI - \matW)^{-1}\matU - S_{\matW}(a) I\right\|_{\op} > \frac{8}{d(1 - 2\sqrt{t/d})} \left(t^{1/2}\|(aI - \matW)^{-1}\|_{\F} + t\|(aI - \matW)^{-1}\|_{\op}\right)\right] \\
	 &\ge& \Pr\left[\left\|\matU^{\top}(aI - \matW)^{-1}\matU - S_{\matW}(a) I\right\|_{\op} > \frac{16}{d} \left((dt)^{1/2}\|(aI - \matW)^{-1}\|_{\op} + t\|(aI - \matW)^{-1}\|_{\op}\right)\right]\\
	&\ge& \Pr\left[\left\|U\matU^{\top}(aI - \matW)^{-1}\matU - S_{\matW}(a) I\right\|_{\op} > \frac{32 \sqrt{2.2r + \log (\delta)}}{d^{1/2}} \|(aI - \matW)^{-1}\|_{\op}\right]\\
	&\ge&  \Pr\left[\left\|\matU^{\top}(aI - \matW)^{-1}\matU - S_{\matW}(a) I\right\|_{\op} > \boldeps(c_2 (\sqrt{r + \log (\delta)})\right]~,
	\end{eqnarray*}
	for some constant $c_2$. Altogether, we have that there exists a constant $c_3 > 0$ such that, with probability $1 - 8\delta - e^{-d^{1/3}} \ge 1 - 9\delta$, 
	\begin{eqnarray*}
	\|\matU^\top(aI - \matW)^{-1}\matU - \stielt(a)I\|_{\op} &\le& |S_{\matW}(a) - \stielt(a)| + \left\|\matU^{\top}(aI - \matW)^{-1}\matU - S_{\matW}(a) I\right\|_{\op}\\
	&\le& c_1(1+\log(1/\delta))\boldeps^2 + \boldeps(c_2 (\sqrt{r \log (\delta)} )\\
	&\le& c_3(\sqrt{r + \log (\delta)})\boldeps\\
	&\le& c_3'(\sqrt{r \log  (\delta)})\boldeps_0~.
	\end{eqnarray*}
	\end{proof}
\subsection{Proof of Lemma~\ref{lem:stielt_lem}\label{sec:stiel_lem_proof}}

For the first point, we begin by establishing the lemma with a modified parameterization; we show that
\begin{eqnarray}\label{stielt:desired}
\stielt(\boldlam + \boldlam^{-1} -  (\boldlam - \boldlam^{-1}) \epsilon) \le \frac{1}{\boldlam} + \frac{\epsilon } {2\boldlam}  & \text{and} & \stielt(\boldlam + \boldlam^{-1} + (\boldlam - \boldlam^{-1}) \epsilon) \ge \frac{1}{\boldlam} - \frac{\epsilon}{2\sqrt{2}\boldlam}
\end{eqnarray}
for $\epsilon \le \frac{\min\{1,(\boldlam^2 - 1) /2\} }{\boldlam^2 + 1}$. Hence if we let $\epsilon' = \frac{\boldlam -  \boldlam^{-1}}{\boldlam +  \boldlam^{-1}} = \frac{\boldlam^2 - 1}{\boldlam^2 + 1} \epsilon $, then we have we have
\begin{eqnarray*}
\stielt((\boldlam+\boldlam^{-1})(1-\epsilon' )) \ge \frac{1}{\boldlam}(1 + \frac{\epsilon'(\boldlam^2 + 1)}{2(\boldlam^2 - 1)}) & \text{and} & \stielt((\boldlam + \boldlam)^{-1}(1 + \epsilon')) \le \frac{1}{\boldlam}(1 - \frac{\epsilon'(\boldlam^2 + 1)}{2\sqrt{2}(\boldlam^2 - 1)})
\end{eqnarray*}
provided that $\epsilon' \le \frac{\boldlam^2 - 1}{(\boldlam^2 + 1)(\boldlam + 1)^2} \cdot \min\{1,(\boldlam^2 - 1) /2\} \le \frac{\boldlam^2 - 1}{(\boldlam^2 + 1)^2} \cdot  $. Finally, we may simplify
\begin{eqnarray*}
\frac{\boldlam^2 - 1}{(\boldlam^2 + 1)(\boldlam + 1)^2}\min\{1,(\boldlam^2 - 1) /2\} = \frac{(\boldlam^2 - 1)^2}{(\boldlam^2 + 1)(\boldlam + 1)^2} \min\{1/2, \frac{1}{\boldlam^2 - 1}\} = \boldgap \cdot \min\{1/2, \frac{1}{\boldlam^2 - 1}\}
\end{eqnarray*}
and $\frac{\boldlam^2 + 1}{\boldlam^2 - 1} = \frac{\sqrt{\boldlam^2 + 1}}{\boldlam - 1} \cdot  \frac{\sqrt{\boldlam^2 + 1}}{\boldlam + 1} = \boldgap^{-1/2} \cdot \sqrt{1 - \frac{2\boldlam}{(\boldlam+1)^2}} \ge (\boldgap/2)^{-1/2}$.

Now to prove Equation~\eqref{stielt:desired}. For $\sigma \in \{-1,1\}$, let $a_\sigma = \boldlam + 1/\boldlam + \sigma (\boldlam - \boldlam^{-1}) \epsilon$. We then have that
\begin{eqnarray*}
(\boldlam + \boldlam^{-1} + \sigma (\boldlam - \boldlam^{-1})\epsilon)^2  - 4 &=& (\boldlam + \boldlam^{-1})^2 - 4 + \epsilon^2 (\boldlam - \boldlam^{-1}) +  2 \sigma \epsilon (\boldlam + \boldlam^{-1})(\boldlam - \boldlam^{-1}))\\
&=& (\boldlam - \boldlam^{-1})^2  + \epsilon^2 (\boldlam - \boldlam^{-1})^2 +  2 \sigma \epsilon (\boldlam + \boldlam^{-1})(\boldlam - \boldlam^{-1}))\\
&=& (\boldlam - \boldlam^{-1})^2\left( 1 + \epsilon^2 + 2\sigma \epsilon \frac{\boldlam + \boldlam^{-1}}{\boldlam - \boldlam^{-1}} \right)
\end{eqnarray*}
Hence, 
\begin{eqnarray*}
\stielt(a_\sigma) &=& \frac{\boldlam + \boldlam^{-1} + \epsilon \sigma (\boldlam - \boldlam^{-1}) - (\boldlam - \boldlam^{-1}) +  (\boldlam - \boldlam^{-1}) (1 - \sqrt{1 + \epsilon^2 + 2\sigma \epsilon \frac{\boldlam + \boldlam^{-1}}{\boldlam - \boldlam^{-1}}})}{2}\\
&=& \frac{1}{\lambda}  +  (\boldlam - \boldlam^{-1}) \frac{\sigma \epsilon + (1 - \sqrt{1 + \epsilon^2 + 2\sigma \epsilon \frac{\boldlam + \boldlam^{-1}}{\boldlam - \boldlam^{-1}}})}{2}
\end{eqnarray*}

For $\sigma = -1$, we have that, as long as $(*) ~2\epsilon (\boldlam + \boldlam^{-1})(\boldlam - \boldlam^{-1})^{-1}  \le 1$ and $(**) ~\epsilon (\boldlam - \boldlam^{-1})/2 \le \boldlam^{-1}$ then we can bound lower bound above using concavity of $x\mapsto \sqrt{1 -x}$ as 
\begin{eqnarray*}
\stielt(a_-)  &\overset{(*)}{\ge}& \frac{1}{\boldlam} + (\boldlam - \boldlam^{-1}) \cdot \frac{- \epsilon +   \epsilon (\boldlam + \boldlam^{-1})^{-1}(\boldlam - \boldlam^{-1}) - \epsilon^2/2  ) }{2} \\
&=& \frac{1}{\boldlam} + \epsilon \frac{  (\boldlam + \boldlam^{-1}) - (\boldlam - \boldlam^{-1}) -  (\boldlam - \boldlam^{-1})\epsilon/2 }{2} \\
&=& \frac{1}{\boldlam} + \epsilon \frac{ 2\boldlam^{-1} -  (\boldlam - \boldlam^{-1})\epsilon/2  ) }{2} \overset{(**)}{\ge} \frac{1}{\boldlam} + \frac{\epsilon } {2\boldlam} 
\end{eqnarray*}
 On the other hand, we note that by Taylor's theorem
\begin{eqnarray}
\sqrt{1 + x} \ge 1 + x \cdot \frac{1}{2\sqrt{1+x}}
\end{eqnarray}
Hence, if we set $ \epsilon (\boldlam + \boldlam^{-1})(\boldlam - \boldlam^{-1})^{-1} = t$, then we can upper bound
\begin{eqnarray*}
\stielt(a_+) &=&  \frac{1}{\boldlam} + (\boldlam - \boldlam^{-1}) \cdot \frac{ \epsilon - (1-\sqrt{1  + 2  \epsilon (\boldlam + \boldlam^{-1})(\boldlam - \boldlam^{-1})^{-1} } ) }{2} \\
&=&  \frac{1}{\boldlam} + (\boldlam - \boldlam^{-1}) \cdot \frac{ \epsilon - (1-\epsilon (\boldlam + \boldlam^{-1})(\boldlam - \boldlam^{-1})^{-1}/\sqrt{1+2t}  } ) {2}  \\
&=&  \frac{1}{\boldlam} - \frac{(\boldlam + \boldlam^{-1}) \epsilon -  (\boldlam - \boldlam^{-1}) \sqrt{1+2t}\epsilon   }{2\sqrt{1+t}}  \\
&=&  \frac{1}{\boldlam} - \epsilon \frac{2 \boldlam^{-1}  -  (\boldlam - \boldlam^{-1}) (\sqrt{1+2t} - 1)   }{2\sqrt{1+t}}  \\
&\le&  \frac{1}{\boldlam} - \epsilon \frac{2 \boldlam^{-1} -  t(\boldlam - \boldlam^{-1})   }{2\sqrt{1+t}}  \\
\end{eqnarray*}
Hence if we have $t =  \epsilon (\boldlam + \boldlam^{-1})(\boldlam - \boldlam^{-1})^{-1} \le 1/2 < 1$, and 
\begin{eqnarray}
(\boldlam - \boldlam^{-1})t  = \epsilon (\boldlam + \boldlam^{-1}) \le \boldlam^{-1} \iff \epsilon \le \frac{1}{\boldlam (\boldlam + \boldlam^{-1})}
\end{eqnarray}, then the previous display is at most $\frac{1}{\boldlam} - \frac{\epsilon}{2\sqrt{2}\boldlam}$. Collecting the conditions we needed, we required $\epsilon \le \frac{\boldlam - \boldlam^{-1}}{2(\boldlam + \boldlam^{-1})} = \frac{\boldlam^2 - 1}{2(\boldlam^2 + 1)}$ and $\epsilon \le  \frac{1}{\boldlam (\boldlam - \boldlam^{-1})} = \frac{1}{\boldlam^2 + 1}$.

%Proof of Convergence Results of the Stiejtes Transform
%!TEX root = main.tex
	\section{Concentration of $S_{\matW}(z)$}\label{sec:Stieltjes_sec}
	
	\subsection{Proof of Theorem~\ref{thm:stiel_thm}\label{sec:stiel_thm_proof}}
	For a given $a$ and $z^*$, set $b = 8p_{z^*}^{1/3} \ge \sqrt{p_{z^*}d(a-z^*)^2}$. First, we claim that the following hold:
	\begin{claim}
	\begin{eqnarray}
	\max\left\{\frac{1}{d(a-z^*)^{2}}, \frac{1}{d^2(a-z^*)^{5}} \right\} \le 1/16\sqrt{2}~ \quad \text{and} \quad b < (a^2 - 4 - \frac{32}{d(a-z^*)^2})^{1/2}
	\end{eqnarray}
	\end{claim}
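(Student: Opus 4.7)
The plan is to verify each of the two asserted inequalities directly from the hypotheses $\boldeps^2 < \min\{1/(16\sqrt{2}),\,(a-2)/32\}$, $p^{1/3} < \boldeps/8$, $a \in (2 + (z^*-2)/31,\,d)$, together with the definition $b = 8 p_{z^*}^{1/3}$ and the explicit form of $z^*$ from Proposition~\ref{prop:norm_upper_bound}.

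For the first inequality, the quantity $\frac{1}{d(a-z^*)^2}$ is by definition $\boldeps^2$, and so it is already bounded by $1/(16\sqrt{2})$ by hypothesis. For the second term I would factor
\[
\frac{1}{d^2(a-z^*)^5} \;=\; \frac{1}{d(a-z^*)^2}\cdot\frac{1}{d(a-z^*)^3} \;=\; \frac{\boldeps^4}{a-z^*},
\]
and bound $\boldeps^4 \le \boldeps^2 / (16\sqrt{2}) \le 1/(16\sqrt{2})^2$ via the first $\boldeps$-hypothesis. The remaining step is to convert the second $\boldeps$-hypothesis $\boldeps^2 \le (a-2)/32$ into a lower bound on $a-z^*$ of order a constant: rearranging gives $d(a-z^*)^2(a-2)\ge 32$, which I combine with the lower bound $a-2 > (z^*-2)/31$ and the explicit estimate $z^* - 2 = O(d^{-1/3}\log^{2/3}d)$ from Proposition~\ref{prop:norm_upper_bound} to guarantee $a - z^* \ge \sqrt{2}/32$ in the relevant regime. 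Plugging this in completes the bound.

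For the second inequality, I would square both sides and reduce to verifying $b^2 < a^2-4-32\boldeps^2$. Using the defintion $b = 8p_{z^*}^{1/3}$ and the hypothesis $p^{1/3} < \boldeps/8$ gives $b < \boldeps$, hence $b^2 < \boldeps^2$. It therefore suffices to show $33\boldeps^2 < a^2-4$. Since $a > 2 + (z^*-2)/31 > 2$ strictly, we have $a^2 - 4 = (a-2)(a+2) > 4(a-2)$, and the hypothesis $\boldeps^2 \le (a-2)/32$ then yields
\[
33\boldeps^2 \;\le\; \frac{33(a-2)}{32} \;<\; 4(a-2) \;\le\; a^2 - 4,
\]
since $33/32 < 4$.

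I expect the main obstacle to be the second-term bound in part~(1): the $(a-z^*)^5$ in the denominator is quite sensitive, and the raw hypothesis $\boldeps^2 \le 1/(16\sqrt{2})$ alone is insufficient because it does not preclude $a$ from being very close to $z^*$. Thus, I will need to carefully invoke the interaction between the two hypotheses on $\boldeps^2$ together with the explicit bound on $z^*-2$, in order to extract an absolute-constant lower bound on $a - z^*$.
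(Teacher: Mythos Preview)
Your treatment of the second inequality ($b < (a^2-4-32\boldeps^2)^{1/2}$) and of the first term $\frac{1}{d(a-z^*)^2} = \boldeps^2 \le 1/(16\sqrt{2})$ is correct and essentially matches the paper.

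The gap is exactly where you flagged it: the second term $\frac{1}{d^2(a-z^*)^5}$. Your plan is to bound $\boldeps^4 \le 1/(16\sqrt{2})^2$ and then extract an \emph{absolute-constant} lower bound $a-z^* \ge \sqrt{2}/32$. This last step cannot be carried out from the hypotheses. Nothing prevents $a-z^*$ from being, say, of order $d^{-1/4}$: then $\boldeps^2 = 1/(d(a-z^*)^2) = d^{-1/2}$, which satisfies both $\boldeps$-conditions for large $d$, yet $a-z^* \to 0$. The chain you propose---combining $d(a-z^*)^2(a-2)\ge 32$ with $a-2 > (z^*-2)/31$ and $z^*-2 = O(d^{-1/3}\log^{2/3}d)$---only yields $a-z^* \gtrsim d^{-1/3}$ at best, not an absolute constant.

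The paper's fix is to avoid bounding $a-z^*$ on its own and instead control the \emph{ratio} $(a-2)/(a-z^*)$. Concretely, spend the two $\boldeps$-hypotheses on different copies of $\boldeps^2$:
\[
\frac{1}{d^2(a-z^*)^5} \;=\; \boldeps^2 \cdot \boldeps^2 \cdot \frac{1}{a-z^*}
\;\le\; \frac{1}{16\sqrt{2}} \cdot \frac{a-2}{32} \cdot \frac{1}{a-z^*}
\;=\; \frac{1}{32\cdot 16\sqrt{2}}\cdot\frac{a-2}{a-z^*}.
\]
Then the lower-bound assumption on $a$ is precisely what bounds $\frac{a-2}{a-z^*}$ by a constant (the paper uses $\le 30$), finishing the argument. (A caveat: the hypothesis as literally printed, $a > 2 + \tfrac{1}{31}(z^*-2)$, appears to be a typo---one actually needs something like $a - z^* \ge \tfrac{1}{31}(z^*-2)$, equivalently $a \ge 2 + \tfrac{32}{31}(z^*-2)$, to obtain the ratio bound---but the structure of the argument is the correct one, and your approach of bounding both factors of $\boldeps^2$ by the same constant throws away exactly the $(a-2)$ factor needed to cancel the $1/(a-z^*)$.)
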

	\begin{proof}
	By assumption, we have that $\frac{32}{d(a-z^*)^2} \le a-2$. 
	\begin{eqnarray}
	(a^2 - 4 - \frac{32}{d(a-z^*)^2})^{1/2} &=& ((a+2)(a-2) - \frac{32}{d(a-z^*)^2})^{1/2}\\
	&\ge& (2(a-2) - \frac{32}{d(a-z^*)^2})^{1/2}\\
	&\ge&  (\frac{32}{d(a-z^*)^2})^{1/2} \ge \sqrt{\frac{1}{d(a-z^*)^2}} ~,
	\end{eqnarray}
	the last expression which at least $b$ since $b = 8p_{z^*}^{1/3} < \sqrt{\frac{1}{d(a-z^*)^2}}$ by assumption. This verifies the condition after the ``and''. For the first condition, we Theorem~\ref{thm:stiel_thm} directly assume that. $\frac{1}{d(a-z^*)^{2}} < 1/16\sqrt{2}$, 
	\begin{eqnarray}\frac{1}{d^2(a-z^*)^{5}} &\le& \frac{1}{(16\sqrt{2}) d(a-z^*)^2} \cdot (a-z^*)^{-1}\\
	&\le& \frac{1}{32 \cdot 16\sqrt{2}} \cdot \frac{a-2}{a-z^*} \\
	&\le& \frac{30}{32 \cdot 16\sqrt{2}} < 1/16\sqrt{2} \\
	\end{eqnarray}
	\end{proof}

		Our goal will be to apply Proposition~\ref{prop:exp_comp}. In order to do so, we need to check first that $a^2 - 4 > b^2 + 4|\Re(\err)|$ and $|b| > |\Im(\err(z))|$. Observe that
	\begin{eqnarray}
	\max\left\{\frac{1}{d(a-z^*)^{2}}, \frac{1}{d^2(a-z^*)^{5}} \right\} \le 1/16\sqrt{2}~,
	\end{eqnarray}
	suffices for the first condition of Proposition~\ref{prop:exp_comp}, since then 
	\begin{eqnarray*}
	|\Im(\err(z))| &<& \frac{8\sqrt{2} b}{d}\max\left\{\frac{1}{d(a-z^*)^{2}}, \frac{1}{d^2(a-z^*)^{5}} \right\} + 4p_{z^*}/b^2 \\
	&\le& \frac{8\sqrt{2} b}{d}\max\left\{\frac{1}{d(a-z^*)^{2}}, \frac{1}{d^2(a-z^*)^{5}} \right\} + b/2 \le b/2 + b/2  = b
	\end{eqnarray*}
	For the second condition, it suffices that 
	\begin{eqnarray*}
	b < (a^2 - 4 - \frac{32}{d(a-z^*)^2})^{1/2}
	\end{eqnarray*}
	Since, using the bound $p_{z^*}/b^2 \le \frac{1}{d(a-z^*)^2}$, we have
	\begin{eqnarray*}
	|\Re(\err(z))| &\overset{b^2 \ge (a-z^*)^2d p_{z^*}}{\le}& 4 \left( \frac{1}{d(a-z^*)^2} + p_{z^*}/b^2\right) \le \frac{8}{d(a-z^*)^2}\\
	\implies b^2 + 4|\Re(\err(z))| &<& a^2 - 4 - \frac{32}{d(a-z^*)^2} + \frac{32}{d(a-z^*)^2} < a^2 - 4
	\end{eqnarray*}
	In summary, Proposition~\ref{prop:exp_comp} will hold for our choise of $b$ long as $\max\{(8p_{z^*})^{1/3},\sqrt{p_{z^*}d(a-z^*)^2}\}\} < (a^2 - 4 - \frac{32}{d(a-z^*)^2})^{1/2}$ and $\max\{d(a-z^*)^{-2}, d^2(a-z^*)^{-5}\} \le 1/16\sqrt{2}$. 
	Under these conditions, we have
	\begin{eqnarray*}
	\left|\Re(\Exp[S_W(z)]) - \frac{a - \sqrt{a^2 -4 }}{2}\right|
	 &\overset{(i)}{\le}&  \sqrt{ |b^2 + 4\Re(\err(z))| + |(2ab + \Im(\err(z))|}\\ 
	  &\overset{(ii)}{\le}&  \sqrt{ |b^2 + 4\Re(\err(z))| + |(2a+1)b|}\\ 
	&\overset{(iii)}{\le}&  |b| + 4\sqrt{2}d^{-1/2}\max\{1,(a-z^*)^{-2}\} + \sqrt{(2a+1)b}\\
	&\overset{(vi)}{\le}&  (1+\sqrt{2a+1})b^{1/2} + \frac{4\sqrt{2}}{d(a-z^*)^{2}}
	\end{eqnarray*}
	where $(i)$ is from Proposition~\ref{prop:exp_comp}, $(ii)$ is the fact that $|\Im(\err(z))| \le |b|$, and $(iii)$ combines concavity of $\sqrt{\cdot}$ with our estimate for $\Re(\err(z))$, and $(vi)$ uses the fact that $b < 1$. Hence, on $\calA(z^*)$, we have
	\begin{eqnarray*}
	&& \left|\Re(S_W(z)) - \frac{a - \sqrt{a^2 -4 }}{2}\right| \\
	&\overset{\text{triange ineq.}}{\le}& (1+\sqrt{2a+1})b^{1/2} + \frac{4\sqrt{2}}{d(a-z^*)^{2}}  + \left|\Re(\Exp[S_W(z)]) - \Re(S_W(z))\right|\\
	&\overset{\text{Equation~\ref{eq:expectation_distance}}}{\le}& (1+\sqrt{2a+1})b^{1/2} + \frac{4\sqrt{2}}{d(a-z^*)^{2}}  + p_{z^*}/b + \left|\Re(\Exp[S_{\widetilde{W}}(z)]) - \Re(S_{\widetilde{W}}(z))|\right|\\
	\end{eqnarray*}
	Finally, if we let $\mathcal{E}_S(\delta):= \{\left|\Re(\Exp[S_{\widetilde{W}}(z)]) - \Re(S_{\widetilde{W}}(z))\right| \le \frac{2\sqrt{\log(2/\delta)}}{d(a-z^*)^2}\}$, then by combining TIS Inequality in Lemma~\ref{lem:TIS} with the Lipschitz estimates in Lemma~\ref{lem:Lipschitz Lemma}, we have
	 then $\Pr[\widetilde{\mathcal{E}}_S(\delta)] \ge 1-\delta$. Finally, on $\widetilde{\mathcal{E}}(\delta)$, we have that $\left|\Re(S_W(z)) - \frac{a - \sqrt{a^2 -4 }}{2}\right|$ is at most
	\begin{eqnarray*}
	&& (1+\sqrt{2a+1})b^{1/2} + \frac{4\sqrt{2}}{d(a-z^*)^{2}}  + p_{z^*}/b + \frac{2\sqrt{\log(2/\delta)}}{d(a-z^*)^2} \\
	&=&\frac{4\sqrt{2} + 2\sqrt{\log(2/\delta)}}{d(a-z^*)^2} + (1+\sqrt{2a+1})b^{1/2} + p_{z^*}/b
	\end{eqnarray*}
	Finally, using the estimate $\left|\Re(S_W(ai+b)) - S_W(a)\right| \le  b^2/(a-z^*)^3$ from Lemma~\ref{lem:imag_closeness}, we conclude that $\left|S_W(a) - \frac{a - \sqrt{a^2 -4 }}{2}\right|$ is at most 
	\begin{eqnarray*}
	\frac{4\sqrt{2} + 2\sqrt{\log(2/\delta)}}{d(a-z^*)^2} + (1+\sqrt{2a+1})b^{1/2} + p_{z^*}/b + b^2/(a-z^*)^3
	\end{eqnarray*}
	Subtituting in $b = 8p_{z^*}^{1/3}$, we have that the above is at most
	\begin{eqnarray*}
	\frac{4\sqrt{2} + 2\sqrt{\log(2/\delta)}}{d(a-z^*)^2} + 2\sqrt{2}(1+\sqrt{2a+1})\cdot p_{z^*}^{1/6} + p_{z^*}^{2/3}\cdot \left(\frac{1}{8}  + 1/(a-z^*)^3\right)
	\end{eqnarray*}
	Further, substituting in $\boldeps = 1/(a-z^*)d^{1/2}$, and using that fact that $\boldeps \le 1/4$ by assumption, we have that the above is at most
	\begin{eqnarray*}
	&& (4\sqrt{2} + 2\sqrt{\log(2/\delta)})\boldeps^2 + 2\sqrt{2}(1+\sqrt{2a+1})\cdot p_{z^*}^{1/6} + p_{z^*}^{2/3}\cdot \left(\frac{1}{8}  + d^{3/2}\boldeps^3\right) \\
	&& (4\sqrt{2} + 2\sqrt{\log(2/\delta)})\boldeps^2 + (4 \sqrt{2} + 4a)\cdot p_{z^*}^{1/6} + \frac{d^{3/2}}{4}p_{z^*}^{2/3}
	\end{eqnarray*}
	Lastly, we can bound $(4 \sqrt{2} + 4a)\cdot p_{z^*}^{1/6} + \frac{d^{3/2}}{4}p_{z^*}^{2/3} \le 8d^{3/2}p_{z^*}^{1/6}$, as needed.  

	\subsection{Proof of Lemma~\ref{lem:err_control}}
			Define the random variable $\matZ:= (S_{\widetilde{\matW}}(z) - \Exp[S_{\widetilde{\matW}}(z)]$. Then, we have 
		\begin{eqnarray*}
		|\Im(\Exp[\matZ^2])| &=& |\Exp[\Im(\matZ^2)]| \overset{(i)}{\le} \Exp[|\Im(\matZ^2)|] = 2\Exp[|\Re(\matZ)||\Im(\matZ)|] \\
		&\overset{(ii)}{\le}&  2\sqrt{|\Exp[\Re(\matZ)|^2]\Exp[|\Im(\matZ)^2|]} \overset{(iii)}{\le} 2\sqrt{\frac{\sqrt{2}}{d^2(a-z^*)^4}\cdot\frac{4\sqrt{2}b^2}{d^2(a-z^*)^6}}  = \frac{4\sqrt{2}b}{d^2(a-z^*)^5} 
		\end{eqnarray*}
		where $(i)$ is Jensen's inequality, $(ii)$ is Cauchy Schwartz, and $(iii)$ uses the estimates from Lemma~\ref{lem:Lipschitz Lemma}. Hence, combing with Proposition~\ref{lemma:tildeW_err_bound}, we have
		\begin{eqnarray*}
		|\Im(\err(z))| &\le& |\Im(\Exp[\matZ^2])| + \frac{2}{d(a-z^*)^2} + 4p_{z^*}/b^2 \le \frac{b}{d}\cdot\left(\frac{4\sqrt{2}}{d(a-z^*)^5} + \frac{2}{(a-z^*)^2}\right) + 4p_{z^*}/b^2 \\
		&\le& \frac{8\sqrt{2} b}{d}\cdot \max\{(a-z^*)^{-2},(a-z^*)^{-5}/d\} + 4p_{z^*}/b^2
		\end{eqnarray*}
		By the same token, one has 
		\begin{eqnarray*}
		|\Re(\Exp[\matZ^2])| &\le& \Exp[|\Re(\matZ^2)|] = \Exp[(\Im(\matZ))^2 ] + \Exp[\Re(\matZ)^2] \\ 
		&\le& \frac{\sqrt{2}}{d^2(a-z^*)^4} + \frac{4\sqrt{2}b^2}{d^2(a-z^*)^6}\\
		\end{eqnarray*}
		whence, by Lemma~\ref{lemma:tildeW_err_bound}, we can conclude the following as long as  $b < (a-z^*)/2$ and $d \ge (a-z^*)^2$ that
		\begin{eqnarray*}
		|\Re(\err(z))| &\le& \frac{\sqrt{2}}{d^2(a-z^*)^4} + \frac{4\sqrt{2}b^2}{d^2(a-z^*)^6} + \frac{1}{d(a-z^*)^2} + 4p_{z^*}/b^2\\
		&\overset{d \ge (a-z^*)^2}{\le}& \frac{\sqrt{2} + 1 + 4\sqrt{2}b^2\cdot(a-z^*)^{-2}}{d(a-z^*)^2} + 4p_{z^*}/b^2\\
		&\overset{b < (a-z^*)/2}{\le}& (\sqrt{2}+1/\sqrt{2}+1)(a-z^*)^4 + 4p_{z^*}/b^2 \le 4 \left( \frac{1}{d(a-z^*)^2} + p_{z^*}/b^2\right)
		\end{eqnarray*}

	\subsection{Proof of Lemma~\ref{lemma:tildeW_err_bound}}
		Recall the definition $\err(z) := \Exp[S_\matW(z)^2] + \frac{1}{d}\Exp[\tr(zI - \matW)^2] - \Exp[S_\matW(z)]^2$ and $p_{z^*} := \Pr[\calA(z^*)]$. We start off by bounding the term $\frac{1}{d}\Exp[\tr(zI - \matW)^2]$. We begin be observing that
		\begin{eqnarray*}
		\Exp[\tr((zI - \matW)^2)] &=& \frac{1}{d}\sum_{i=1}^d\left(\frac{1}{z - \lambda_i(\matWtil)}\right)^2 \I(\matW = \matWtil) + \left(\frac{1}{z - \lambda_i(\matW)}\right)^2\I(\matW \ne \matWtil)
		\end{eqnarray*}
		Observe that since $|z-\lambda_i(\matW)| \ge |b|$ by assumption, we have 
		\begin{eqnarray}
		|\Exp[\frac{1}{(z - \lambda_i(\matW))}\I(\matW \ne \matWtil)]| \le \Pr[\matW \ne \matWtil]/b^2 =  p_{z^*}/b^2
		\end{eqnarray}
		Furthermore, we have
		\begin{eqnarray*}
		\frac{1}{d}\sum_{i=1}^d\left(\frac{1}{z - \lambda_i(\matWtil)}\right)^2 &=& \frac{1}{d}\sum_{i=1}^n \frac{1}{(a - \min\{\lambda_i(\matW),z^*\}) + b\fraki)^2}\\
		&=& \frac{1}{d}\sum_{i=1}^n \frac{1}{(a - \min\{\lambda_i(\matW),z^*\}) + b\fraki)^2}\\
		\end{eqnarray*}
		We can then upper bound
		\begin{eqnarray*}
		|\Re(\frac{1}{d}\sum_{i=1}^d\left(\frac{1}{z - \lambda_i(\matWtil)}\right)^2)| &\le& |\frac{1}{d}\sum_{i=1}^n \frac{1}{(a - \min\{\lambda_i(\matW),z^*\}) + b\fraki)^2}|\\
		&\le& \max_{i \in [d]}|\frac{1}{|(a - \min\{\lambda_i(\matW),z^*\}) + b\fraki|)^2}| \le \frac{1}{(a-z^*)^2}
		\end{eqnarray*}
		Altogether, we conclude
		\begin{eqnarray}
		\frac{1}{d}|\Re(\Exp[\tr((zI - \matW)^2)])| \le \frac{1}{d}(1/(a-z^*)^2 + p^*/b^2)
		\end{eqnarray}
		For the more precise estimate of the impaginary component of $\Exp[\tr((zI - \matW)^2)$, we have 
		\begin{eqnarray*}
		\left|\Im\left(\frac{1}{d}\sum_{i=1}^d\left(\frac{1}{z - \lambda_i(\widetilde{\matW})}\right)^2\right)\right| &\le&  \max_{i \in [d]}|\Im\left(\frac{1}{(a - \min\{\lambda_i(\matW),z^*\}) + b\fraki}\right)^2|\\
		&\le&  \max_{i \in [d]}\left|\frac{(a - \min\{\lambda_i(\matW),z^*\}) - b\fraki)^2}{|a - \min\{\lambda_i(\matW),z^*\})^2 + b\fraki|^4}\right|\\
		&\le&  \max_{i \in [n]}\frac{2|b||a - \min\{\lambda_i(\matW),z^*\})|}{|(a-z^*\min\{\lambda_i(\matW),z^*\})|^4} \le 2b/(a-z^*)^2
		\end{eqnarray*}
		Altogether, we have
		\begin{eqnarray}
		\frac{1}{d}(\left|\Im(\Exp[\tr((zI - \matW)^2)])\right| \le \frac{1}{d}(2b/(a-z^*)^2 + p_{z^*}/b^2)
		\end{eqnarray}
		We now argue that $\Exp[S_{\matW}(z)^2] - \Exp[S_{\matW}(z)^2]$ is close to $\Exp[S_{\widetilde{\matW}}(z)^2] - \Exp[S_{\widetilde{\matW}}(z)^2]$. Repeating the arguments from above, we have the bounds that
		\begin{eqnarray*}
		|\Exp[S_{\widetilde{\matW}}(z)^2] - \Exp[S_{\matW}(z)^2]| \le \frac{1}{b^2}\Pr(\widetilde{\matW} \ne \matW)) & \text{and} & |\Exp[S_{\widetilde{\matW}}(z)] - \Exp[S_{\matW}(z)^2]| \le \frac{1}{|b|}p_{z^*}
		\end{eqnarray*}
		The later implies that
		\begin{eqnarray}
		|\Exp[S_{W}(z)]| \le |\Exp[S_{\widetilde{W}}(z)]| + \frac{1}{|b|}\Pr(\widetilde{W} \ne W)) \le \frac{1}{a-z^*} + \frac{1}{|b|}p_{z^*}
		\end{eqnarray}
		which entails that 
		\begin{eqnarray*}
		\Exp[S_{\matW}(z)]^2 - \Exp[S_{\widetilde{\matW}}(z)]^2 &=& |\Exp[S_\matW(z)] + \Exp[S_{\widetilde{\matW}}(z)]||\Exp[S_\matW(z)] - \Exp[S_{\widetilde{\matW}}(z)]|\\
		&\le& \frac{1}{(a-z)b}p_{z^*} + p_{z^*}/b^2
		\end{eqnarray*}
		All in all one has, for $b < a-z$ that
		\begin{eqnarray*}
		|\Exp[S_{\widetilde{W}}(z)^2] - \Exp[S_{\widetilde{\matW}}(z)]^2 -  (\Exp[S_{\matW}(z)^2] - \Exp[S_{\matW}(z)]^2)| \le  \frac{1}{(a-z)|b|}p_{z^*} + p_{z^*}/b^2 +  \frac{1}{|b|}p_{z^*} \le \frac{3p_{z^*}}{|b|^2}
		\end{eqnarray*}
		Putting together this estimate with the ones for $\frac{1}{d}|\Re(\Exp[\tr((zI - \matW)^2)])|$ and $\frac{1}{d}|\Im(\Exp[\tr((zI - W)^2)])|$ conclude the proof of the first display. The proof of the second display follows from essentially the same argument that we used to bound $|\Exp[\tr((zI - \matW)^2)]-\Exp[\tr((zI - \widetilde{\matW})^2)]|$, namely, that
		\begin{eqnarray}
		\|\Exp[S_\matW(z)] - \Exp[S_{\widetilde{\matW}}(z)]\| \le p_{z^*}\|S_\matW(z) - S_{\widetilde{\matW}}(z)\|_{\infty} \le p_{z^*}/b
		\end{eqnarray}
		Since $\matW = \widetilde{\matW}$ on $\calA(z^*)$, we conclude that
		\begin{eqnarray*}
		|S_\matW(z) - \Exp[S_\matW(z)]| &\overset{(i)}{=}& |S_{\widetilde{\matW}}(z) - \Exp[S_\matW(z)]| \\
		&\le& |S_{\widetilde{\matW}}(z) - \Exp[S_{\widetilde{\matW}}(z)]| + |\Exp[S_\matW(z)]-\Exp[S_{\widetilde{\matW}}(z)]| \\
		&\overset{(ii)}{\le}& |S_{\widetilde{\matW}}(z) - \Exp[S_{\widetilde{\matW}}(z)]| + \frac{p_{z^*}}{b} 
		\end{eqnarray*}

	\subsection{Proof of Lemma~\ref{lem:Lipschitz Computation}}
		We can write $\Psi$ as a composition of maps $\Psi_4 \circ \Psi_3 \circ \Psi_2 \circ \Psi_1$, where $\Psi_1(X) = \frac{1}{\sqrt{2d}}X + X^\top$ maps the underlying entries of $\matX$ to $\matW$, $\Psi_2(W) := (\lambda_1(W),\dots,\lambda_d(W)$, $\Psi_3$ maps $(\lambda_1,\dots,\lambda_d)$ to $(\min\{z^*,\lambda_1\},  \dots, \min\{z^*,\lambda_d\})$, and $\Psi_4$ maps $(\lambda_1,\dots,\lambda_d) \to \frac{1}{d} \sum_{i=1}^d \frac{1}{z - \lambda_i}$. Observe then that
		\begin{eqnarray*}
		\Psi_4 \circ \Psi_3 \circ \Psi_2 \circ \Psi_1(\matX) = \frac{1}{d} \sum_{i=1}^d \frac{1}{z - \min\{z^*,\lambda_i\}} = S_{\widetilde{\matW}}(z)
		\end{eqnarray*}
		Recalling here that $\Lip$ denotes the Lipschitz constant as a map between vector spaces endowed with the Euclidean norm, and htat $\Psi_1,\Psi_2,\Psi_3$ are all maps between real vector spaces, we have $\Lip(\Re(\Psi)) = \Lip(\Re(\Psi_4 \circ \Psi_3 \circ \Psi_2 \circ \Psi_1)) \le \Lip(\Re(\Psi_4)) \cdot \Lip(\Psi_3) \cdot \Lip(\Psi_2) \cdot \Lip(\Psi_2)$,  and analogously for $\Lip(\Im(\Psi))$. For $\Psi_1$,
		\begin{eqnarray*}
		\|\Psi_1(X) - \Psi_1(X')\|_F^2 &=& \frac{1}{2d}\|X + X^{\top} - X' + X^{'\top}\|_F^2\\
		&\le& \frac{1}{2d}(2\|X - X'\|_F^2 + 2\|X - X^{'\top}\|_F^2) \\
		&=& \frac{2\|X - X'\|_F^2}{d}
		\end{eqnarray*}
		so $\Lip(\Psi_1)^2 \le 2/d$. By the Hoffman-Weilandt Theorem (Theorem 6.3.5 in Horn and Johnson~\cite{horn2012matrix})
		\begin{eqnarray*}
		\|\Psi_2(W) - \Psi_2(W')\|_2^2 = \sum_{i=1}^d (\lambda_i(W) - \lambda_i(W'))^2 \le \sum_{i=1}^d (\lambda_i(W-W'))^2 = \|W-W'\|_F^2
		\end{eqnarray*}
		so $\Lip(\Psi_2) \le 1$. $\Lip(\Psi_3) \le 1$ as well, since
		\begin{eqnarray*}
		\sum_{i=1}^d (\max\{z^*,\lambda_i(W)\} - \max\{z^*,\lambda_i(W')\})^2 \le \sum_{i=1}^d (\lambda_i(W) - \lambda_i(W'))^2 
		\end{eqnarray*}
		It remains to compute $\Lip(\Re(\Psi_4))$ and $\Lip(\Im(\Psi_4))$ on the domain $(-\infty,z^*]^{d} \supseteq \im(\Psi_3 \circ \Psi_2 \circ \Psi_1)$. Since $\Psi_4$ is smooth (in fact analytic) on this domain for any $z > z^*$, it suffices to bound $\|\Im(\nabla \Psi_4)\|_2$ and $\|\Re(\nabla \Psi 4)\|_2$. We compute
		\begin{eqnarray*}
		\nabla \Psi_4 &=& \frac{1}{d}\left(\frac{-1}{(a - \lambda_i + b\fraki)^2}\right)_{1 \le i \le d}\\
		&=& \frac{1}{d}\left(\frac{-(a - \lambda_i - b\fraki)^2}{|a - \lambda_i + b\fraki|^4}\right)_{1 \le i \le d}\\
		&=& \frac{1}{d}\left(\frac{\{-(a - \lambda_i)^2 + b^2\} +  \{2(a-\lambda_i) b\}\fraki}{|a - \lambda_i + b\fraki|^4}\right)_{1 \le i \le d}
		\end{eqnarray*}
		Note then $\lambda_i < z^*$,and when  $a,b: a - z^*  > b$, $-(a - \lambda_i)^2 + b^2 \le |a - \lambda_i + bi|^2$, and thus
		\begin{eqnarray*}
		\|\Re(\nabla \Psi_4)\|_2^2 &\le& \frac{1}{d^2}\sum_{i=1}^n\left|\frac{\{-(a - \lambda_i)^2 + b^2\}} {|a - \lambda_i + b\fraki|^4}\right|^2 \\
		&\le&\frac{1}{d^2}\sum_{i=1}^d \frac{1}{|a - \lambda_i + b\fraki|^4} \le \frac{1}{d(a- \lambda_i)^4}
		\end{eqnarray*}
		Similarly,
		\begin{eqnarray*}
		\|\Im(\nabla \Psi_4))\|_2^2 \le \frac{1}{d^2}\sum_{i=1}^n \frac{4^2b^2(a-\lambda_i)^2}{|a - \lambda_i + b\fraki|^8} \le \frac{b^2}{d(a-z^*)^6}
		\end{eqnarray*}

	\subsection{Proof of Propostion~\ref{prop:exp_comp}\label{sec:exp_comp}}
		From Equation (2.45) in Anderson et al.~\cite{anderson2010introduction}, we have that for any $z \in \C - \R$ that
		\begin{eqnarray}\label{ExpectRelation}
		\Exp[S_\matW(z)] =  \frac{1}{z}\left( 1 + \Exp[S_\matW(z)^2] + \frac{1}{d}\Exp[\tr(zI - \matW)^2]\right)
		\end{eqnarray}
		Define $\overline{S}_\matW(z) := \Exp[S_\matW(z)]$, and rearranging Equation~\ref{ExpectRelation2} with the definition of $\err(z)$, we have
		\begin{eqnarray*}\label{ExpectRelation2}
		\overline{S}_\matW(z)^2 - z \overline{S}_\matW(z) +1+ \err(z) &=& 0 
		\end{eqnarray*}

		It follows from the quadratic formula that that 
		\begin{eqnarray*}
		\overline{S}_\matW(z) = \frac{z + \sigma \sqrt{z^2 - 4 - 4\err(z)}}{2} \quad \text{for} \quad \sigma \in \{-1,1\}
		\end{eqnarray*}
		Our first goal is to determine the sign $\sigma$. Observe that $S_\matW(a+b\fraki) = \sum_{j=1}^n \frac{1}{(a+b\fraki) - \lambda_j(\matW)} = \frac{1}{n}\sum_{j=1}^n \frac{a- \lambda_j(\matW) - bi}{|a-\lambda_i(\matW) - b\fraki|^2}$, so that $\sign(\Im(S_\matW(a+b\fraki)) = -\sign(b)$. Thus, $\sign(\Im(\overline{S}_\matW(a+b\fraki)) = -\sign(b)$ as well, which means that for $z = a+bi$,
		\begin{eqnarray*}
		-\sign(b) = \sign(\Im(z + \sigma \sqrt{z^2 - 4 - 4\err(z)})) = \sign(b + \sigma\Im(\sqrt{z^2 - 4 - 4\err(z)})))~,
		\end{eqnarray*}
		which implies that $\sigma = - \sign(b)\cdot \sign\Im(\sqrt{z^2 - 4 - 4\err(z)})))$. From the definition of the complex square root, 
		\begin{eqnarray*}
		\sign\Im(\sqrt{z^2 - 4 - 4\err(z)})) &=& \sign(\Im(z^2 - 4 - 4\err(z))) =  \sign( 2ab - 4\Im(\err(z)))
		\end{eqnarray*}
		Hence, for $a > 2$, then as long as $b > |\Im(\err(z))|$, it holds that the above display has the same sign as $b$, whence $\sigma = - \sign(b)^2 = - 1$. Hence, we have established that 
		\begin{eqnarray}
		\overline{S}_W(z) = \frac{z -  \sqrt{z^2 - 4 - 4\err(z)}}{2}
		\end{eqnarray}
		The proposition now follows from the following lemma:
		\begin{lem}[Perturbation Bound for Quadratics]
		\begin{eqnarray}\label{Quadratic_Perturb}
		|\Re(\sqrt{z^2 - 4 - 4\err}) - \sqrt{a^2 - 4}| \le \sqrt{ |b^2 + 4\Re(\err)| + |(2ab + \Im(\err)|}
		\end{eqnarray}
		\end{lem}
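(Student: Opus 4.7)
The plan is to write $z^2 - 4 - 4\err = X + Y\fraki$ where $X := (a^2-4) - (b^2 + 4\Re(\err))$ and $Y := 2ab - 4\Im(\err)$, so that the quantity to bound becomes $|\Re(\sqrt{X + Y\fraki}) - \sqrt{a^2-4}|$. Throughout, the hypotheses carried over from Proposition~\ref{prop:exp_comp}---specifically $a^2 - 4 > b^2 + 4|\Re(\err)|$ (which forces $X > 0$) and $b > |\Im(\err)|$ (which pinned down $\sigma = -1$ in the preceding proposition)---will be in force, and I will use the principal branch of $\sqrt{\cdot}$, so that writing $\sqrt{X + Y\fraki} = u + v\fraki$ yields $u \ge 0$ and this choice is consistent with the derivation in Proposition~\ref{prop:exp_comp}.

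Squaring $(u+v\fraki)^2 = X + Y\fraki$ and matching real and imaginary parts produces the two algebraic identities $u^2 - v^2 = X$ and $2uv = Y$. From the first, $u^2 = (a^2-4) + \bigl(v^2 - (b^2 + 4\Re(\err))\bigr)$, and since $a^2 - 4 > 0$ the elementary subadditivity inequality $|\sqrt{\alpha} - \sqrt{\beta}| \le \sqrt{|\alpha - \beta|}$ (valid for $\alpha, \beta \ge 0$) will immediately give
\begin{equation*}
|u - \sqrt{a^2-4}| \;\le\; \sqrt{|v^2 - (b^2 + 4\Re(\err))|} \;\le\; \sqrt{v^2 + |b^2 + 4\Re(\err)|}.
\end{equation*}
The problem thus reduces to controlling $v^2$.

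To bound $v^2$, I would combine $u^2 - v^2 = X$ with the modulus identity $u^2 + v^2 = |X + Y\fraki| = \sqrt{X^2 + Y^2}$, which together give $2v^2 = \sqrt{X^2+Y^2} - X$. Since $X > 0$, I can rationalize this by multiplying numerator and denominator by $\sqrt{X^2+Y^2} + X > 0$, obtaining $2v^2 = Y^2 / (\sqrt{X^2+Y^2} + X) \le Y^2/|Y| = |Y|$. Plugging $v^2 \le |Y|/2 \le |2ab - 4\Im(\err)|$ back into the previous display yields the claimed bound, with constants absorbed into the second term inside the square root.

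The step I expect to require the most care is not the algebra but verifying the branch-of-square-root conventions: the principal branch is the one on which $u \ge 0$, and that positivity is exactly what the subadditivity inequality used above requires; likewise, the hypothesis $X > 0$ was what allowed the rationalization step that gave $2v^2 \le |Y|$. Both conventions must be shown to align with the choice $\sigma = -1$ fixed in Proposition~\ref{prop:exp_comp}, so the one nontrivial bookkeeping task is to make this consistency explicit at the top of the proof rather than treat it implicitly.
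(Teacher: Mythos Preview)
Your proof is correct and follows essentially the same approach as the paper: both arguments write $z^2-4-4\err = X + Y\fraki$, invoke the explicit formula $\Re(\sqrt{X+Y\fraki})^2 = \tfrac{1}{2}(\sqrt{X^2+Y^2}+X)$, bound $\sqrt{X^2+Y^2}-X \le |Y|$ using $X>0$, and finish with square-root subadditivity. Your organization---controlling $v^2$ first and then applying a single two-sided subadditivity step---is slightly cleaner than the paper's separate upper and lower bounds, but the ingredients are identical.
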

		\begin{proof} 
		Adopting the shorthand $u = a^2 - b^2 - 4 - 4\Re(\err)$ and $w = (2ab + \Im(\err)$, then as long as $a^2 - 4 > b^2 + 4\Re(\err)$, we have
		\begin{eqnarray*}
		\Re(\sqrt{z^2 - 4 - 4\err}) &=& \Re(\sqrt{a^2 - b^2 - 4 + \Re(\err) + \fraki(2ab + \Im(\err))})\\
		&=& \Re(\sqrt{u + \fraki w}) = \frac{1}{\sqrt{2}}\sqrt{ \sqrt{u^2 + w^2} + u }
		\end{eqnarray*}
		Hence, we can bound $\Re(\sqrt{z^2 - 4 + -4 \err}) \ge  \frac{1}{\sqrt{2}}\sqrt{ \sqrt{u^2} + u } = \sqrt{u}$. Moreover, one has $\sqrt{u} \ge \sqrt{a^2 - 4} - \sqrt{|b^2 + 4\Re(\err)|}$. On the other hand, $\Re(\sqrt{z^2 - 4 + \err}) \le \frac{1}{\sqrt{2}}\sqrt{ 2u + w} = \sqrt{u + w/2}$, and one can bound $\sqrt{u + w/2} \le \sqrt{a^2 - 4} + \sqrt{ |b^2 + 4\Re(\err)| + |(2ab + \Im(\err)|} $. Putting things together proves equation
		\end{proof}
	\subsection{Proof of Lemma~\ref{lem:imag_closeness}\label{sec:imag_closeness_proof}}

	 On $\calA(z^*)$, one has that $|\lambda_i(\matW) - a| \ge |z^*-a|$. Hence
	\begin{eqnarray*}
	\left|\Re(S_{\matW}(z)) - S_{\matW}(a)\right| &=& \left|\frac{1}{d}\sum_{i=1}^d \Re(\frac{1}{\lambda_i(\matW) - a - b\fraki}) + \frac{1}{\lambda_i(\matW) - a}\right|\\
	&=& \left|\frac{1}{d}\sum_{i=1}^d \Re(\frac{\lambda_i(\matW)- a}{(\lambda_i(\matW) - a)^2 + b^2}) + \frac{1}{\lambda_i(\matW) - a}\right|\\
	&=& \left|\frac{1}{d}\sum_{i=1}^d (\lambda_i(\matW) - a) \frac{1}{(\lambda_i(\matW) - a)^2 + b^2} + \frac{1}{(\lambda_i(\matW) - a)^2}\right|\\
	&\le&  \max_i \left|(\lambda_i(\matW) - a)\right| \cdot \left|\frac{1}{(\lambda_i(\matW) - a)^2 + b^2} + \frac{1}{(\lambda_i(W) - a)^2}\right|\\
	&=&  \max_i\left|(\lambda_i(\matW) - a)\right| \cdot \left|\frac{b^2}{\left(\left(\lambda_i(\matW) - a\right)^2 + b^2\right)\cdot \left(\lambda_i(\matW) - a\right)^2} \right|\\
	&\le&\max_i b^2 /\max_i(\lambda_i(\matW) - a)^3  \le b^2/(a-z^*)^3
	\end{eqnarray*}

%Proof of the Uniqueness of the Top K Eigenvalues
%!TEX root = main.tex

\newpage

\section{Proof of Proposition~\ref{Distinct_Eig_Prop}\label{sec:distinct_eig_prop}}

Define the matrix $\matLambda(a):= I - \boldlam \matU^\top (aI-\matW)^{-1}\matU$. Observe then that the event $\Eup(\aup)\cap \Elow(\alow)$, we have that $\spec(\matLambda(\alow)) \subset (-\infty,0)$ and $\spec(\matLambda(\aup)) \subset (0,\infty)$. Further, since $aI - \matW$ is invertible for all $a \in [\alow,\aup]$ under $\calA(z^*)$, it follows that the functions $a \mapsto \lambda_i(\matLambda(z))$ for $i \in [k]$ are analytic on $[\alow,\aup]$. In follows that, under $\calA(z^*) \cap \Eup(\aup)\cap \Elow(\alow)$, there exists real numbers $\{a^{(i)}\}_{1\le i k}\subset [\alow,\aup]$ such that $\lambda_i(\matLambda(a^{(i)})) = 0$. We need to now show that, up to a null event, these $a^{(i)}$ are distinct. Specifically, we claim that the event $\calN$ defined below measure zero:
\begin{eqnarray*}
\calN:= \{\exists a \in [a_1,a_2], i < j \in [k]: \lambda_i(\matLambda(a)) = \lambda_{i+1}(\matLambda(a)) = 0\} \cap \calA(z^*)
\end{eqnarray*}

To do this, define  the map $\psi: a \mapsto (\lambda_1(\matLambda(a)),\dots,\lambda_k(\matLambda(a)) \in \R^{k}$, and define the subspaces $\calV_{i,j} := \{v \in \R^k: v_i = v_j = 0\}$. By a union bound, it suffices to show that, for all $i < j \in [k]$,
\begin{eqnarray*}
\Pr[\calA(z^*) \cap \{\exists a \in [a_1,a_2]: \psi(a) \in \calV_{i,j}] = 0
\end{eqnarray*}
To do so, we establish two regularity properties about $\psi(a)$. First, observe that, with probability $1$ under $\calA(z^*)$, the fact that $a \mapsto \psi(a)$ is analytic and $[a_1,a_2]$ is compact implies there exists some (random) Lipschitz constant $L = L(W,U)$ such that $\psi(a)$ is $L$-lipschitz on the interval $[a_1,a_2]$ (in fact, one can show that $\psi$ is uniformly Lipschitz, but we shall not need this). 

Next, we claim that, for all $a$, $\psi(a)$ has density which is absolutely continuous with respect to the Lebesgue measure for all $a \in [a_1,a_2]$. Indeed, we we that 
\begin{lem}\label{lem_m_z density} Condition on the event $\calA(z^*)$. Then for every $a > z^*$, the random matrix $\matLambda(a) := I - \boldlam \matU^\top (aI - \matW)^{-1} \matU$ has a density with respect to the Lebesgue measure on $\Sym^k$. 
\end{lem}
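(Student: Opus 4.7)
My plan is to condition on $\matW$ and show that, for almost every realization of $\matW$ lying in $\calA(z^*)$, the conditional law of $\matLambda(a) = I - \boldlam\,\matU^\top A\matU$ given $\matW$ is absolutely continuous on $\Sym^k$, where $A := (aI-\matW)^{-1}$. Since $\matW\sim\GOE(d)$ has distinct eigenvalues almost surely and $a > z^* \ge \|\matW\|_\op$ on $\calA(z^*)$, we may treat $A$ as a deterministic invertible symmetric $d\times d$ matrix with $d$ distinct positive eigenvalues. Because the map $S\mapsto I-\boldlam\,S$ is an affine bijection of $\Sym^k$, it suffices to show that the pushforward of the uniform measure on $\Stief(d,k)$ under the smooth map $\Phi:\matU\mapsto \matU^\top A\matU$ admits a density with respect to Lebesgue measure on $\Sym^k$; the lemma then follows by integrating against the marginal of $\matW$ (Fubini).

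The main strategy is to invoke the submersion theorem for $\Phi$. At $\matU\in\Stief(d,k)$, every tangent vector has the form $\matV = \matU H + \matU_\perp B$ with $H\in\R^{k\times k}$ skew-symmetric and $B\in\R^{(d-k)\times k}$, where $\matU_\perp$ denotes any smooth local choice of orthonormal completion. Differentiating $t\mapsto \matU(t)^\top A\matU(t)$ yields $d\Phi_\matU(\matV) = (PH-HP) + CB + B^\top C^\top$, where $P := \matU^\top A\matU$ and $C := \matU^\top A\matU_\perp \in \R^{k\times(d-k)}$. Whenever $C$ has full row rank $k$ — which requires $d\ge 2k$, always satisfied in the regime of this paper — the restriction of $d\Phi_\matU$ to $H=0$ is already surjective onto $\Sym^k$: for any $S\in\Sym^k$, setting $B = \tfrac12 C^+ S$ with $C^+ := C^\top(CC^\top)^{-1}$ the Moore--Penrose pseudoinverse gives $CB + B^\top C^\top = S$. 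Hence $\Phi$ is a submersion at every $\matU$ outside the set $\calZ := \{\matU\in\Stief(d,k) : \mathrm{rank}(\matU^\top A\matU_\perp) < k\}$.

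To see that $\calZ$ is Lebesgue-null on $\Stief(d,k)$, observe that it is cut out by the single polynomial equation $\det\!\bigl(\matU^\top A(I-\matU\matU^\top)A\matU\bigr) = 0$ in the entries of $\matU$, and hence is either all of $\Stief(d,k)$ or a proper Zariski-closed subvariety of it, which is null under the uniform (Haar) measure. To rule out the former alternative I exhibit a single $\matU^\star\notin\calZ$: working in the eigenbasis of $A$ with distinct eigenvalues $d_1<\dots<d_d$, take the columns of $\matU^\star$ to be the orthonormalizations of $e_i + e_{k+i}$ for $i=1,\dots,k$. A straightforward coefficient-matching computation shows $A(\spn\matU^\star)\cap\spn\matU^\star=\{0\}$ precisely because $d_i\ne d_{k+i}$ for every $i$, so $\matU^{\star\top} A\matU_\perp^\star$ has full rank $k$.

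The conclusion is then the standard consequence of the implicit-function theorem (equivalently, the coarea formula): a smooth map between smooth manifolds that is a submersion off a null set pushes any smooth positive measure (here, the normalized volume measure on $\Stief(d,k)$) forward to a measure absolutely continuous with respect to the Lebesgue measure on the target $\Sym^k$. The only real technical step is verifying that $\calZ$ is proper, i.e., producing the explicit $\matU^\star$ above; everything else is differential-geometric bookkeeping, and once surjectivity of $d\Phi$ is in hand the density statement is routine.
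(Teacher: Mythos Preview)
Your proof is correct, but it takes a genuinely different route from the paper's. You condition on $\matW$ and exploit the randomness in $\matU$, showing that $\matU\mapsto\matU^\top A\matU$ is a submersion off a null subvariety of $\Stief(d,k)$ and invoking the coarea formula. The paper instead conditions on $\matU$ and exploits the randomness in $\matW$: since $\matW$ has a Lebesgue density on $\Sym^d$ even after conditioning on the positive-probability event $\calA(z^*)$, so does $(aI-\matW)^{-1}$ (matrix inversion is a diffeomorphism on the open set of invertibles), and for any fixed full-rank $U\in\R^{d\times k}$ the \emph{linear} map $X\mapsto U^\top X U$ is surjective from $\Sym^d$ onto $\Sym^k$ (take $X=(U^+)^\top S\,U^+$ for $S\in\Sym^k$), so the pushforward retains a density; an affine shift finishes. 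This is a two-line argument using only elementary linear algebra. Your approach buys something the paper's does not---it works for any fixed $A$ with distinct eigenvalues, so it would go through even if $\matW$ were deterministic---but the price is the submersion machinery, the Zariski-genericity step, and the additional hypothesis $d\ge 2k$ (harmless here, but not needed in the paper's argument, which only requires $d\ge k$).
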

Note then that if $\matLambda(a) \in \Sym^k$ has a density with respect to the Lebesgue measure, then $\matLambda(a)$ has a density with respect to the Wigner law on $\Sym^k$, and thus by a change of variables, $\spec(\matLambda(a))$ has a density with respect to the law of the eigenvalues of a Wigner matrix on $\Sym^N$. It is well know that the later have a density with respect to the Lebesgue measure\cite{anderson2010introduction}, which implies that $\psi(a)$ has density which is absolutely continuous with respect to the Lebesgue measure, as needed. Hence, our desired result follows from the following, quite general lemma:
\begin{lem}\label{lem:Lipschitz Lemma} Let $I$ be a compact interval, $\calA$ an event, and let $\psi:I \to \R^k$ be a real valued random funtion such that a) for all $a \in I$, $\psi(a)$ has a density with respect to $\Leb(\R^k)$, and b) with probability $1$ under $\calA$, $a \mapsto \psi(a)$ is Lipschitz  for $a \in I$. Then for any $k-2$-dimensional subspace $\calW$, $\Pr[\calA \cap \{\exists a \in [a_1,a_2]: \psi(a) \in \calW\}] = 0$.
\end{lem}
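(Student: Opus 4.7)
The plan is to fix a Lipschitz constant $L$ and prove that the event
\[
\calE_L \;:=\; \calA \cap \{\Lip(\psi|_I) \le L\} \cap \{\exists a_0 \in I : \psi(a_0) \in \calW\}
\]
has probability zero; since hypothesis (b) guarantees $\Lip(\psi|_I) < \infty$ on $\calA$, taking a countable union over $L \in \mathbb{N}$ then yields the claim. The first step is a tube inclusion: on $\calE_L$, any $a \in I$ with $|a-a_0| \le r/L$ satisfies $\psi(a) \in \calW_r := \{v \in \R^k : \dist(v,\calW) \le r\}$, so
\[
\Leb\{a \in I : \psi(a) \in \calW_r\} \;\ge\; r/L
\qquad (r \le L|I|/2).
\]
Combining this with Markov's inequality and Fubini gives the master estimate
\[
\Pr[\calE_L] \;\le\; \Pr\!\left[\Leb\{a : \psi(a)\in\calW_r\} \ge r/L\right]
\;\le\; \frac{L}{r}\int_I \Pr\!\left[\psi(a) \in \calW_r\right]\,da.
\]
It therefore suffices to show the right-hand side tends to zero as $r \to 0$.

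This is where the codimension-$2$ hypothesis enters. Let $\Pi : \R^k \to \calW^\perp \cong \R^2$ be orthogonal projection; since $\Pi$ is a surjective linear map and $\psi(a)$ is assumed absolutely continuous on $\R^k$, the pushforward $\tilde\psi(a) := \Pi\psi(a)$ admits a density $q_a$ on $\R^2$, and the event $\psi(a) \in \calW_r$ coincides with $|\tilde\psi(a)| \le r$. Rewriting,
\[
\frac{1}{r}\int_I \Pr[\psi(a) \in \calW_r]\,da
\;=\; \frac{1}{r}\int_{B_r(0) \subset \R^2} h(u)\,du,
\qquad h(u) := \int_I q_a(u)\,da,
\]
where $h \in L^1(\R^2)$ with $\|h\|_{L^1} = |I|$. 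Passing to polar coordinates, $\frac{1}{r}\int_{B_r(0)} h\,du = \frac{2\pi}{r}\int_0^r s\,\bar h(s)\,ds$ with $\bar h(s)$ the spherical average. Whenever $h$ is locally bounded (or more generally satisfies a mild regularity such as $\bar h(s) = o(1/s)$ as $s \to 0^+$), this expression is $O(r \, \|h\|_{L^\infty(B_{r_0})})$ and therefore vanishes.

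The main obstacle is justifying this last regularity: absolute continuity alone does not in general rule out singular behavior of $q_a$ at the origin. In the intended application one has $\psi(a) = \spec(\matLambda(a))$ and $\matLambda(a)$ is absolutely continuous on $\Sym^k$ by Lemma~\ref{lem_m_z density}, so a change of variables expresses $p_a$ in terms of the Wigner joint eigenvalue density, whose explicit Vandermonde$\times$Gaussian form is bounded and continuous on compact sets away from the coincidence locus. The projection of this density onto the two coordinates $(v_i,v_j)$ used to define $\calV_{i,j}$ is therefore continuous near the origin, giving the needed $\frac{1}{r}\int_{B_r(0)} h\,du \to 0$. Plugging this back into the Markov bound yields $\Pr[\calE_L]=0$, completing the proof.
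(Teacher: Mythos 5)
Your occupation-measure argument is a genuinely different route from the paper's, and a cleaner one. The paper discretizes $I$ into a net of mesh $\sim \epsilon/C$, union-bounds over the $O(1/\epsilon)$ net points, and bounds each point's contribution by $O(\epsilon^2)$; your Markov--Fubini estimate $\Pr[\calE_L] \le \frac{L}{r}\int_I \Pr[\psi(a)\in\calW_r]\,\rmd a$ is the integrated version of the same $O(1/\epsilon)\times O(\epsilon^2)$ computation, with no net bookkeeping and no limit-exchange gymnastics. The tube inclusion, the Fubini step, and the reduction to the projected density $q_a$ on $\calW^\perp\cong\R^2$ are all correct.

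The obstacle you flag at the end is genuine, and you should know the paper does not resolve it either: its proof asserts that absolute continuity of $\psi(a)$ yields $\Pr[\calB_{2\epsilon}(a)] \le C'\cdot\mathrm{vol}(\{|w_1|,|w_2|\le 2\epsilon,\ \max_{j>2}|w_j|\le C\}) = O(\epsilon^2)$, which is exactly the inference you decline to make; it requires a locally bounded density, not mere absolute continuity. In fact the lemma as stated is false. Take $k=2$, $\calW=\{0\}$, and $\psi(a)=(a-T)\,\xi$ with $T$ uniform on $I$ and $\xi$ uniform on the unit circle, independent: each $\psi(a)$ has a rotationally symmetric Lebesgue density behaving like $|u|^{-1}$ near the origin (integrable, hence a genuine density), the path is $1$-Lipschitz surely, and $\psi(T)=0\in\calW$ almost surely; appending independent Gaussian coordinates extends this to any $k$ with $\calW=\{w_1=w_2=0\}$. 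So the correct repair is to strengthen hypothesis (a) to require the density of $\psi(a)$ to be bounded on compact sets, uniformly over $a\in I$, after which your polar-coordinate estimate gives $\frac{1}{r}\int_{B_r}h = O(r)$ and the proof closes. Your proposed patch via the application is the right instinct but is not yet rigorous as sketched: Lemma~\ref{lem_m_z density} only establishes \emph{existence} of a density for $\matLambda(a)$ on $\Sym^k$, not boundedness of its Radon--Nikodym derivative with respect to the Wigner law, so boundedness of $q_a$ near the origin does not follow from the boundedness of the Vandermonde$\times$Gaussian eigenvalue density alone; that verification would have to be supplied separately.
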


\subsection{Proof of Lemmas~\ref{lem_m_z density} and~\ref{lem:Lipschitz Lemma}}

\begin{proof}[Proof of Lemma~\ref{lem:Lipschitz Lemma}]
By a change of basis, we may assume that $\calW = \{w \in \R^k: w_1 = w_2 = 0\}$. Let $\calA_C$ denote the event that  $\sup_{a \in I}\max_i |\psi_i(a)| \le C$ and $\psi_i$ is $C$-Lipschitz on $I$. Observe that since $I$ is compact, and $\psi$ is lipschitz for some constant $L$ with probability $1$ on $\calA$, then with probability $1$ (on $\calA$) there exists a $C$ for which $\calA_C$ holds. Next, define the event $\calB_{\epsilon}(a) := \{|\psi_1(a)| < \epsilon\} \cap \{\psi_2(a)\} < \epsilon\}$. Then, $\{\exists a \in [a_1,a_2]: \psi(a) \in \calW = \bigcap_{\epsilon > 0} \bigcup_{a \in I} \calB_{\epsilon}(a)$. Using the above inclusions together with continuity from above and below of probability measures, 
\begin{eqnarray*}
\Pr[\calA \cap \{\exists a \in [a_1,a_2]: \psi(a) \in \calW\}]&=& \lim_{C \to \infty} \Pr[\calA \cap  \calA_{C} \cap \{\exists a \in [a_1,a_2]: \psi(a) \in \calW\}]\\
&=&\lim_{C \to \infty} \Pr[\calA \cap  \calA_{C} \cap \{\exists a \in [a_1,a_2]: \psi(a) \in \calW\}] \\
&=& \lim_{C \to \infty} \lim_{\epsilon \to 0 } \Pr[\calA \cap \calA_{C} \cap \bigcup_{a \in I} \calB_{\epsilon}(a)] 
\end{eqnarray*}
Now, let $\calN(C,\epsilon)$ be an $C/\epsilon$-net of $I$. Then, on $\calA \cap \calA_{C} \cap \bigcup_{a \in I} \calB_{\epsilon}(a)$, there exists an $a' \in \calN(C,\epsilon)$ such that $|\psi_i(a')| \le \epsilon + C|a-a'| \le 2\epsilon$ for $i \in \{1,2\}$. Hence $\calA \cap \calA_{C} \cap \bigcup_{a \in I} \calB_{\epsilon}(a) \subset \calA \cap \calA_{C} \cap \bigcup_{a \in \calN(C,\epsilon)} \calB_{2\epsilon}(a)$, and thus, 
\begin{eqnarray*}
\Pr[\calA \cap \calA_{C} \cap \bigcup_{a \in I} \calB_{\epsilon}(a)] \le |\calN(C,\epsilon)| \cdot \sup_{a \in I} \Pr[\calA \cap \calA_C \cap \calB_{\epsilon}(a)]
\end{eqnarray*}
Finally, since $\psi(a)$ is absolutely continuous with respect to the Lebesgue measure, there exists a constant $C'$ such that $\Pr[\calA \cap \calA_C \cap \calB_{2\epsilon}(a)] \le C'\mathrm{vol}(\{w: |w_1|\le 2\epsilon, |w_2|\le 2\epsilon, \max_{j > 2}|w_j| \le C\}) = C'(2C)^{k-2}(4\epsilon^2)$. Moreover, one as that $|\calN(C,\epsilon)|  \le 1+ 2C\mathrm{vol}(I)/\epsilon$. Hence, $|\calN(C,\epsilon)| \cdot \sup_{a \in I} \Pr[\calA \cap \calA_C \cap \calB_{\epsilon}(a)] \le O(1/\epsilon) \cdot O(\epsilon^2) = O(\epsilon)$. and hence $\lim_{\epsilon \to 0} \Pr[\calA \cap \calA_{C} \cap \bigcup_{a \in I} \calB_{\epsilon}(a)] = 0$, as needed.
\end{proof}

\begin{proof}[Proof of Lemma~\ref{lem_m_z density}] Under $\calA(z^*)$, $\Law(\matW)$ has a density with respect to $\Leb(\Sym^d)$. This implies that $(zI - \matW)$ has a density with respect to $\Leb(\Sym^d)$. On $\calA(z^*)$, $zI - \matW$ is invertible, and thus $(zI-W)^{-1}$  has density with respect to $\Leb(\Sym^d)$. Now, observe that if $U$ is a full rank matrix, then the map $X \mapsto U^{\top} X U$ is a surjective linear transformation from $\Sym^d$ to $\Sym^k$. It therefore follows that $\boldlam \matU^\top (zI - \boldlam \matW)^{-1} \boldlam \matU$ has a density with respect to $\Leb(\Sym^k)$, and hence $\matLambda(z)$ also has a density with respect to $\Leb(\Sym^k)$. 
\end{proof}

%High probability upper bound on ||W||
%!TEX root = main.tex

\section{Proof of Proposition~\ref{prop:norm_upper_bound}\label{sec:norm_upper_bound_proof}}

We begin by stating a useful concentration bound for functions of Gaussian random variables:
\begin{lem}[Tsirelson-Ibgragimov-Sudakov, Theorem 5.5 in~\cite{boucheron2013concentration}\label{lem:TIS}] Let $f$ be a $L$-Lipschitz function and let $X$ be a standard Gaussian vector. Then,
\begin{eqnarray*}
\Pr[f(X) \ge \Exp[f(X)] + t] \vee \Pr[f(X) - \Exp[f(X)] \le - t] \le e^{-t^2/2L^2}.
\end{eqnarray*}
\end{lem}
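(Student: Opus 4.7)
The plan is to reduce the two-sided tail bound to a sub-Gaussian moment generating function estimate of the form
\[
\Exp\bigl[\exp(\lambda(f(X) - \Exp f(X)))\bigr] \le \exp(\lambda^2 L^2/2) \qquad \forall \lambda \in \R,
\]
and then apply a Chernoff argument. Given such an MGF bound, $\Pr[f(X) - \Exp f(X) \ge t] \le \exp(-\lambda t + \lambda^2 L^2/2)$, and optimizing $\lambda = t/L^2$ yields the claimed $\exp(-t^2/(2L^2))$ tail. The lower tail is obtained by running the same argument on $-f$, which is also $L$-Lipschitz with the same mean (up to sign). So the task reduces to proving the MGF bound for $L$-Lipschitz $f$.

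As a first reduction, I would smooth $f$ by convolving with a narrow centered Gaussian kernel to obtain a sequence $f_\epsilon \in C^\infty$ satisfying $\|\nabla f_\epsilon\|_\infty \le L$ pointwise and converging to $f$ locally uniformly. The MGF bound then transfers to $f$ by Fatou's lemma, so it suffices to establish it for smooth $f$ with $\|\nabla f\|_\infty \le L$.

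The main engine is Herbst's argument combined with the sharp Gaussian logarithmic Sobolev inequality of Gross: for $\gamma_d = \calN(0,I_d)$ and any smooth $g$,
\[
\Ent_{\gamma_d}(g^2) \;:=\; \Exp\bigl[g^2 \log g^2\bigr] - \Exp[g^2]\log \Exp[g^2] \;\le\; 2\,\Exp\bigl[\|\nabla g\|^2\bigr].
\]
Specializing to $g = \exp(\lambda (f - \Exp f)/2)$ and setting $H(\lambda) := \Exp[\exp(\lambda (f - \Exp f))]$, the left side becomes $\lambda H'(\lambda) - H(\lambda)\log H(\lambda)$ while the gradient bound $\|\nabla f\|_\infty \le L$ controls the right side by $(\lambda^2 L^2/2)\,H(\lambda)$. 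Dividing through by $\lambda^2 H(\lambda)$, the function $K(\lambda) := \lambda^{-1}\log H(\lambda)$ satisfies $K'(\lambda) \le L^2/2$; since $\Exp[f - \Exp f] = 0$ forces $K(\lambda) \to 0$ as $\lambda \to 0^+$, integrating gives $\log H(\lambda) \le \lambda^2 L^2/2$, first for $\lambda > 0$ and then for $\lambda < 0$ by symmetry.

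The main obstacle is the sharp Gaussian log-Sobolev inequality itself, since it is precisely the constant $2$ appearing on the right side that produces the sharp exponent $t^2/(2L^2)$. I would derive it by tensorizing the two-point log-Sobolev inequality on $\{\pm 1\}$ with uniform measure (a finite, elementary inequality) to obtain the analogous inequality on the discrete cube $\{\pm 1\}^n$ with constant independent of $n$, then pass to the Gaussian limit by the central limit theorem applied to normalized Rademacher sums, using that both sides of the inequality are continuous under this weak limit. As an alternative that stays entirely in the continuous setting, one can use the Bakry–\'Emery semigroup interpolation along the Ornstein–Uhlenbeck flow $P_t$: differentiating $t \mapsto \Exp[(P_t g)^2 \log (P_t g)^2]$ and using the curvature-dimension condition $\Gamma_2 \ge \Gamma_1$ that holds with constant $1$ for OU gives the same sharp constant without any discretization.
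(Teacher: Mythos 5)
The paper does not prove this lemma at all --- it is imported verbatim as Theorem 5.5 of Boucheron--Lugosi--Massart, so there is no in-paper argument to compare against. Your proof is correct and is in fact the standard proof given in that cited reference: Herbst's argument applied to the sharp Gaussian logarithmic Sobolev inequality, with the routine smoothing reduction and the Chernoff step at the end; the differential inequality $K'(\lambda)\le L^2/2$ for $K(\lambda)=\lambda^{-1}\log H(\lambda)$ and the boundary condition $K(0^+)=0$ are exactly right, and either of your two routes to the LSI (tensorized two-point inequality plus CLT, or Ornstein--Uhlenbeck semigroup interpolation) is standard and sound.
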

As a consequence, we can establish the following concentration bound for $\|\matW\|$
\begin{lem}[Norm Concentration] \label{lem:norm_conc}
\begin{eqnarray*}
\Pr(\|\matW\| \le \Exp[\|\matW\|] + 2\sqrt{\log(1/\delta)/d}) \ge 1 - \delta
\end{eqnarray*}
\end{lem}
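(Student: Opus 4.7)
The plan is to express $\|\matW\|_{\op}$ as a Lipschitz function of an underlying i.i.d.\ standard Gaussian matrix and then apply the TIS inequality (Lemma~\ref{lem:TIS}) directly. Recall from the notation at the start of this part that if $\matX \sim \SG(d)$, then $\matW$ and $\frac{1}{\sqrt{2d}}(\matX + \matX^\top)$ have the same distribution. So define $\Psi : \R^{d \times d} \to \R$ by $\Psi(\matX) := \|\frac{1}{\sqrt{2d}}(\matX + \matX^\top)\|_{\op}$; we then have $\|\matW\|_{\op} \stackrel{d}{=} \Psi(\matX)$ for $\matX \sim \SG(d)$, and it suffices to show that $\Psi$ is $\sqrt{2/d}$-Lipschitz in Frobenius norm.

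The Lipschitz bound is routine. For any $\matX, \matX' \in \R^{d \times d}$, the triangle inequality for the operator norm together with $\|\cdot\|_{\op} \le \|\cdot\|_{\F}$ gives
\begin{eqnarray*}
|\Psi(\matX) - \Psi(\matX')| &\le& \frac{1}{\sqrt{2d}}\bigl\|(\matX - \matX') + (\matX - \matX')^\top\bigr\|_{\op} \\
&\le& \frac{1}{\sqrt{2d}}\bigl(\|\matX - \matX'\|_{\F} + \|(\matX - \matX')^\top\|_{\F}\bigr) = \sqrt{\tfrac{2}{d}} \|\matX - \matX'\|_{\F}.
\end{eqnarray*}

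Applying Lemma~\ref{lem:TIS} to the $L$-Lipschitz function $\Psi$ with $L = \sqrt{2/d}$ and the standard Gaussian vector obtained by flattening $\matX$, we conclude
\begin{eqnarray*}
\Pr\bigl[\|\matW\|_{\op} \ge \Exp[\|\matW\|_{\op}] + t\bigr] \le \exp\bigl(-\tfrac{t^2}{2 L^2}\bigr) = \exp\bigl(-\tfrac{t^2 d}{4}\bigr).
\end{eqnarray*}
Setting the right-hand side equal to $\delta$ and solving yields $t = 2\sqrt{\log(1/\delta)/d}$, which is exactly the claimed bound. There is no real obstacle here: the only non-trivial content is the factor-of-two symmetrization in the Lipschitz computation, which is what produces the $\sqrt{2/d}$ constant and hence the $2\sqrt{\log(1/\delta)/d}$ in the final deviation.
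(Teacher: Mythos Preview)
Your proof is correct and follows essentially the same approach as the paper: write $\|\matW\|_{\op}$ as a $\sqrt{2/d}$-Lipschitz function of the underlying i.i.d.\ Gaussian matrix $\matX$ and invoke the TIS inequality (Lemma~\ref{lem:TIS}). The only cosmetic difference is that the paper establishes the Lipschitz constant via the variational representation $\|\matW\|_{\op}=\sup_{v,w} v^\top \matW w$ (a supremum of linear functionals), whereas you use the reverse triangle inequality for $\|\cdot\|_{\op}$ directly; both routes yield the same constant and the same conclusion.
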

\begin{proof} 
	
	Recall that $\matW\sim \GOE(d)$ has the distribution $\frac{1}{\sqrt{2d}}(\matX + \matX^{\top})$, where $\matX \sim \SG(d)$. We now claim that the composition $\matX \mapsto \matW \mapsto \|\matW\|$ is $\sqrt{2/d}$-Lipschitz. Since the pointwise supremum of $L$-Lipschitz functions is $L$-Lipschitz, and since
	\begin{eqnarray*}
	\|\matW\| = \sup_{v,w}v^{\top}\frac{1}{\sqrt{2d}}(\matX + \matX^{\top})vw~,
	\end{eqnarray*}
	it suffices to show that $f_{v,w}(X) := v^{\top}\frac{1}{\sqrt{2d}}(X + X^{\top})w$ is $\sqrt{2/d}$-Lipschitz. Since $f_{v,w}(X)$ is linear in $X$, it suffices to bound the operator norm (from $\|\cdot\|_F \to |\cdot|$) of $f_{v,w}(\cdot)$ by $\sqrt{2/d}$. This follows since
	\begin{eqnarray*}
	|v^{\top}\frac{1}{\sqrt{2/d}}(X+X^{\top})w| = \sqrt{2/d}|v^{\top}Xw| \le \sqrt{2/d}\cdot\|X\|_{F}\|wv^{\top}\|_F = \sqrt{2/d}\|X\|_F.
	\end{eqnarray*}
	The bound now follows from putting $L = \sqrt{2/d}$ into Lemma~\ref{lem:TIS}. 

\end{proof}

Next, we compute an upper bound of $\Exp[\|\matW\|]$.  To the best of the author's knowledge, the only reasonably sharp, non-asymptotic guarantees on $\Exp[\|\matW\|_{op}]$ come from []. However, asymptotic bounds established in 

\begin{thm}[Specializtion of Theorem 1.1 in] Let $\matW$ be a standard Wigner matrix, $\sigma^2 = \max_{i}\sum_{j}\Exp[\matW_{ij}^2]$, and $\sigma_*^2 = \max_{i,j} \Exp[\matW_{ij}^2]$. Then, 
	\begin{eqnarray*}
	\Exp[\|\matW\|] \le \inf_{\epsilon \in (0,1/2)}(1+\epsilon)\{2\sigma + \sigma^*\frac{6\sqrt{\log d}}{\sqrt{\log(1+\epsilon)}} \}
\end{eqnarray*} 
\end{thm}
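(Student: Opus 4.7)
The final statement is flagged as a ``Specialization of Theorem 1.1'' of \cite{bandeira2016sharp}, so the intended proof is to invoke that result directly rather than reprove it from scratch. My plan is to execute three quick verifications and, for completeness, note what the underlying BvH machinery looks like so the reader can trust the invocation.

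First I would check hypotheses. BvH's Theorem 1.1 applies to any symmetric $n \times n$ matrix with independent, centered Gaussian entries above the diagonal, which is exactly the defining structure of $\matW \sim \GOE(d)$ (the diagonal has variance $2/d$ and the off-diagonal has variance $1/d$, all independent subject to symmetry). No additional tail, subgaussian, or moment conditions need to be verified because we are in the Gaussian case covered by their base theorem.

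Second I would identify parameters. The quantities $\sigma^2 = \max_i \sum_j \Exp[\matW_{ij}^2]$ and $\sigma_*^2 = \max_{ij} \Exp[\matW_{ij}^2]$ in the excerpt are exactly the ``row variance'' and ``max-entry variance'' parameters entering BvH's bound, so no translation is required. BvH's general estimate takes the form $(1+\epsilon)\bigl[2\sigma + C(\epsilon) \sigma_* \sqrt{\log d}\bigr]$ for a family of admissible $C(\epsilon)$; the particular choice $C(\epsilon) = 6/\sqrt{\log(1+\epsilon)}$ is one such valid instance, and restricting to $\epsilon \in (0,1/2)$ is merely a convenience that bounds $\log(1+\epsilon)$ away from $0$ and keeps the prefactor $(1+\epsilon)$ close to $1$. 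Since the downstream use in Proposition~\ref{prop:norm_upper_bound} only requires $\Exp\|\matW\|_{\op} \le 2 + O(d^{-1/3}\log^{2/3} d)$ (obtained by choosing $\epsilon$ of order $d^{-1/3}\log^{2/3} d$), there is no need to sharpen the $6$.

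The main obstacle, were one to forgo the citation and give a self-contained argument, is reproving BvH's moment bound: their proof rests on a delicate enumeration of closed walks on the entry-dependency graph to control $\Exp[\tr(\matW^{2p})]^{1/(2p)}$ with sharp leading constant $2\sigma$ and subleading term $\sigma_* \sqrt{p}$, followed by optimization in $p \sim \log d$ to convert a moment bound into an operator-norm expectation. Reproducing this combinatorics for the Gaussian case would essentially recapitulate the work of \cite{bandeira2016sharp}, so the proposal here is simply: verify the two bookkeeping items above and cite.
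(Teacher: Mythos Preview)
Your proposal is correct and matches the paper's approach exactly: the theorem is stated as a direct specialization of Theorem~1.1 in Bandeira--van Handel~\cite{bandeira2016sharp} and is used as a black box, with no proof given in the paper beyond the citation. Your hypothesis and parameter checks are precisely the right verifications to make before invoking the result.
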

	We optimize the above bund in the following corollary:
\begin{cor}\label{cor:norm_cor} For all $d \ge 250$, 
	\begin{eqnarray}
	\sqrt{d}\Exp[\|\matW\|] \le 2\sqrt{d} + 21d^{1/6}\log^{2/3}(d)
	\end{eqnarray}
\end{cor}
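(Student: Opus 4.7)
The plan is to specialize the Bandeira--van Handel bound to $\matW \sim \GOE(d)$ and then optimize over $\epsilon \in (0,1/2)$. First I would compute the variance parameters: since $\matW_{ij} \sim \calN(0,1/d)$ for $i \neq j$ and $\matW_{ii} \sim \calN(0,2/d)$, we have
\[
\sigma^2 = \max_{i}\sum_{j}\Exp[\matW_{ij}^2] = \frac{2}{d} + (d-1)\cdot \frac{1}{d} = 1 + \frac{1}{d}, \qquad \sigma_*^2 = \max_{i,j}\Exp[\matW_{ij}^2] = \frac{2}{d}.
\]
Plugging these into the theorem and multiplying by $\sqrt{d}$ gives
\[
\sqrt{d}\,\Exp[\|\matW\|] \le (1+\epsilon)\left\{2\sqrt{d+1} + \frac{6\sqrt{2\log d}}{\sqrt{\log(1+\epsilon)}}\right\}.
\]

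Next, I would use the elementary bounds $\sqrt{d+1} \le \sqrt{d} + \frac{1}{2\sqrt{d}}$ and, for $\epsilon \in (0,1/2)$, $\log(1+\epsilon) \ge \epsilon/2$, to rewrite the right-hand side as
\[
2\sqrt{d} + 2\epsilon\sqrt{d} + (1+\epsilon)\frac{12\sqrt{\log d}}{\sqrt{\epsilon}} + \frac{1+\epsilon}{\sqrt{d}}.
\]
I would then choose $\epsilon = (\log d / d)^{1/3}$. For $d \ge 250$ one checks $\epsilon \le 1/2$ so the choice is admissible. This choice balances the two dominant error terms: $2\epsilon\sqrt{d} = 2 d^{1/6}(\log d)^{1/3}$ and $(1+\epsilon)\cdot 12\sqrt{\log d}/\sqrt{\epsilon} \le 18\, d^{1/6}(\log d)^{1/3}$.

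Finally, summing the contributions, the excess over $2\sqrt{d}$ is at most $20\, d^{1/6}(\log d)^{1/3} + \frac{1+\epsilon}{\sqrt{d}}$. The residual term $\frac{1+\epsilon}{\sqrt{d}}$ is bounded by a small constant (less than $0.1$) for $d \ge 250$, which can be absorbed into $d^{1/6}(\log d)^{1/3}$ since the latter is at least $\approx 4.4$ in this regime. Using the trivial inequality $(\log d)^{1/3} \le (\log d)^{2/3}$ for $d \ge e$, we conclude
\[
\sqrt{d}\,\Exp[\|\matW\|] \le 2\sqrt{d} + 21\, d^{1/6}(\log d)^{2/3},
\]
as claimed. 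The only subtlety is the verification of the constant $21$ for the threshold $d \ge 250$, which is a straightforward numerical check once the balancing choice of $\epsilon$ is in place; no deep estimate is required.
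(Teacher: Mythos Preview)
Your proposal is correct and follows essentially the same route as the paper: specialize the Bandeira--van Handel bound with $\sigma^2 = 1+1/d$, $\sigma_*^2 = 2/d$, use $\sqrt{d+1} \le \sqrt{d} + \tfrac{1}{2\sqrt{d}}$ and $\log(1+\epsilon)\ge \epsilon/2$, and then optimize over $\epsilon$. The only difference is cosmetic: the paper chooses $\epsilon = d^{-1/3}(\log d)^{2/3}$ so that the two error terms come out directly as multiples of $d^{1/6}(\log d)^{2/3}$, whereas you choose the balancing value $\epsilon = (\log d/d)^{1/3}$, obtain the sharper exponent $(\log d)^{1/3}$, and then invoke $(\log d)^{1/3}\le (\log d)^{2/3}$ to match the stated corollary. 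Both arguments are valid and equally elementary; yours incidentally shows that the corollary holds with $(\log d)^{1/3}$ in place of $(\log d)^{2/3}$.
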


\begin{proof}
	Using the estimate $\log(1+\epsilon) \ge \epsilon/2$  for $\epsilon (0,1/2)$, the estimate $\sqrt{d+1} = \sqrt{d}(\sqrt{1 + 1/d}) \le \sqrt{d}(1 + \frac{1}{2\sqrt{d}})$,  and plugging in $d\sigma^2 = d+1$ and $d\sigma_*^2 = 2$, one has that
	\begin{eqnarray*}
	\sqrt{d}\Exp[\|\matW\|] &\le& \inf_{\epsilon \in (0,1/2)}(1+\epsilon)(2\sqrt{d+1} +  12\log d\epsilon^{-1/2} \} \\
	&\le& 2\sqrt{d} + \frac{3}{2\sqrt{d}} + \inf_{\epsilon \in (0,1/2)}2\epsilon \sqrt{d} + 18 \log d\epsilon^{-1/2} \\
	\end{eqnarray*}
	Setting $\epsilon = d^{-1/3}\log d^{2/3}$, we have that for $d \ge 4$
	\begin{eqnarray}
	\sqrt{d}\Exp[\|\matW\|] \le 2\sqrt{d} + \frac{3}{2\sqrt{d}}  + 20d^{1/6}\log^{2/3} d \le 2\sqrt{d} + 21d^{1/6}\log^{2/3} d
	\end{eqnarray}
	Note that $\epsilon < 1/2$ as long as $\frac{d}{\log^2 d} \ge 8$, which holds as long as $d \ge 250$. 

\end{proof}
\begin{proof}[Proof of Proposition~\ref{prop:norm_upper_bound}]
Combining Lemmas~\ref{lem:norm_conc} and then Corollary~\ref{cor:norm_cor}, one has that with probability at least $1-\delta$,
\begin{eqnarray*}
\|\matW\| &\le& \Exp[\|\matW\|] + \frac{2\log^{1/2}\delta}{d^{1/2}} \le  d^{-1/2}(2\sqrt{d} + 21d^{1/6}\log^{2/3}(d)) + \frac{2\log^{1/2}\delta}{d^{1/2}} \\
&\le& 2 + 21^{-1/3}\log^{2/3}(d) + 2\sqrt{d\log (1/\delta)} 
\end{eqnarray*} 
\end{proof}

%Proof of Hanson-Wright style bound

%!TEX root = main.tex

\section{Proof of Proposition~\ref{prop:Hanson_wright_proposition}\label{sec:hanson_wright_proof}}

\begin{lem}[Gaussian Hanson-Wright]\label{Gaussian_Hanson_Wright} Let $\matZ  \sim \mathcal{N}(0,I_d)$ be an isotropic Gaussian vector. Then
\begin{eqnarray*}
\Pr\left[\left|\matZ^{\top}A\matZ - \mathrm{tr}(A)\right| > 2 \left(t^{1/2}\|A\|_F + t\|A\|_{2}\right) \right] \le 2e^{-t}
\end{eqnarray*}
\end{lem}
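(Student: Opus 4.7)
My plan is to prove this Gaussian Hanson--Wright inequality through the classical route: diagonalize, reduce to independent chi-squared increments, compute the MGF, and apply a Chernoff/Bernstein argument. First I would observe that since $\matZ^\top A \matZ = \matZ^\top \bigl(\tfrac{A+A^\top}{2}\bigr)\matZ$ and the symmetrization only decreases both $\|A\|_F$ and $\|A\|_{\op}$, we may assume $A$ is symmetric. Writing the spectral decomposition $A = \matO^\top \mathbf{\Lambda} \matO$ with $\mathbf{\Lambda} = \diag(\lambda_1,\dots,\lambda_d)$, and using rotational invariance of $\matZ$ to replace $\matZ$ by $\matO\matZ$, the quantity of interest becomes
\begin{eqnarray*}
\matZ^\top A \matZ - \tr(A) \;=\; \sum_{i=1}^d \lambda_i\bigl(\matZ_i^2 - 1\bigr)~,
\end{eqnarray*}
a weighted sum of i.i.d.\ centered $\chi^2_1$ variables.

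Next I would compute the log-MGF of $\matZ_i^2 - 1$: using $\Exp[e^{s(\matZ_i^2-1)}] = e^{-s}/\sqrt{1-2s}$, Taylor-expanding $-\tfrac{1}{2}\log(1-2s) - s = \sum_{k\ge 2} (2s)^k/(2k)$, and bounding each coefficient $1/(2k) \le 1/4$ for $k\ge 2$, I obtain the sub-exponential-type MGF bound
\begin{eqnarray*}
\log \Exp\bigl[e^{s(\matZ_i^2 - 1)}\bigr] \;\le\; \frac{s^2}{1-2s}, \qquad s \in [0,1/2).
\end{eqnarray*}
Applied coordinatewise and summed, this gives, for $s \in (0, 1/(2\|A\|_{\op}))$,
\begin{eqnarray*}
\log \Exp\bigl[e^{s\sum_i\lambda_i(\matZ_i^2-1)}\bigr] \;\le\; \sum_i \frac{s^2 \lambda_i^2}{1 - 2s|\lambda_i|} \;\le\; \frac{s^2 \|A\|_F^2}{1 - 2s\|A\|_{\op}}~.
\end{eqnarray*}

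Then I would apply Chernoff's inequality with the choice $s = u / \bigl(2\|A\|_F^2 + 2u\|A\|_{\op}\bigr)$, for which an elementary calculation reduces the exponent to $-\tfrac{u^2}{4(\|A\|_F^2 + u\|A\|_{\op})}$. This is the standard Bernstein-style bound
\begin{eqnarray*}
\Pr\Bigl[\sum_i \lambda_i(\matZ_i^2 - 1) > u\Bigr] \;\le\; \exp\!\left( - \frac{u^2}{4\bigl(\|A\|_F^2 + u\|A\|_{\op}\bigr)} \right)~.
\end{eqnarray*}
Applying the same argument to $-A$ and union-bounding handles the lower tail and accounts for the leading $2$ in the claimed bound. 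Finally, to invert the tail bound into the stated form I would set $u = 2\bigl(t^{1/2}\|A\|_F + t\|A\|_{\op}\bigr)$ and verify that then $u^2 \ge 4t(\|A\|_F^2 + u\|A\|_{\op})$, so the exponent is at least $-t$.

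The routine part is the MGF computation and Chernoff optimization; the only real bookkeeping obstacle is tracking constants so that the $2$ in front of $(t^{1/2}\|A\|_F + t\|A\|_{\op})$ comes out correctly. If matching the stated constant exactly proves delicate, I would either sharpen the MGF bound by keeping the full expression $s^2/(1-2s)$ instead of $2s^2$ when choosing $s$, or simply absorb a slightly worse numerical constant into the application of the lemma (which is Proposition~\ref{prop:Hanson_wright_proposition}'s proof, where only universal constants matter).
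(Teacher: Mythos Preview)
Your approach --- diagonalize via rotational invariance, bound the chi-squared MGF, and apply Chernoff --- is exactly the Laurent--Massart argument the paper cites, so the strategy is the same. There are, however, two gaps in the execution.

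First (minor): the MGF bound $\log\Exp[e^{s(\matZ_i^2-1)}]\le s^2/(1-2s)$ is proved only for $s\in[0,1/2)$, but you then apply it at $s\lambda_i$, which may be negative. This is easily patched, e.g.\ by observing that the log-MGF $f(v)=-v-\tfrac12\log(1-2v)$ satisfies $f(-v)\le f(v)$ for $v\ge 0$, so your stated bound $\frac{s^2\lambda_i^2}{1-2s|\lambda_i|}$ is still valid; but it deserves a sentence.

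Second (the real gap): the claimed final inversion is false. With $u = 2(t^{1/2}\|A\|_F + t\|A\|_{\op})$ one has
\[
u^2 - 4t\bigl(\|A\|_F^2 + u\|A\|_{\op}\bigr) \;=\; -4t^2\|A\|_{\op}^2 \;<\; 0,
\]
so the inequality you assert does not hold. Going through your intermediate Bernstein-style bound $\exp\bigl(-u^2/4(\|A\|_F^2+u\|A\|_{\op})\bigr)$ only delivers the constant $4$ in front of $t\|A\|_{\op}$, not $2$. To recover the stated constant, skip the Bernstein bound and choose $s$ directly in terms of $t$: with
\[
s \;=\; \frac{\sqrt{t}}{\|A\|_F + 2\sqrt{t}\,\|A\|_{\op}},
\]
a short computation shows the Chernoff exponent $su - \dfrac{s^2\|A\|_F^2}{1-2s\|A\|_{\op}}$ equals exactly $t$ when $u = 2(\sqrt{t}\,\|A\|_F + t\,\|A\|_{\op})$. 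Your fallback of absorbing a worse constant would also suffice for Proposition~\ref{prop:Hanson_wright_proposition}, but the verification step as written needs to be replaced.
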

\begin{proof}
Since $Z$ has a rotation-invariant distribution, we may assume without loss of generality that $A = \mathrm{diag}(a)$ is a diagonal matrix where $a \in \R^d$. It then suffices to prove the following inequality, where $Y_i \overset{i.i.d}{\sim} \calN(0,1)$: 
\begin{eqnarray*}
\Pr[|\sum_{i=1}^n a_i(Y_i^2 - 1)| > 2 (t^{1/2}\|a\|_2 + t\|a\|_{\infty}) ] \le 2e^{-t}
\end{eqnarray*}
This statement is proved in Lemma 1 in Laurent and Massart '00 \cite{laurent2000adaptive} in the case $a_i \ge 0$. However, their proof goes through as-is for arbitrary $a_i$, with the modification that a sharper one-sided concentration bound they derive no longer holds in the more general case.
\end{proof}
\begin{cor}[Corollary on the sphere]\label{cor:sphere_hanson} Let $\matu \sim \calS^{d-1}$. Then
\begin{eqnarray*}
\Pr\left[|\matu^{\top}A\matu- \mathrm{tr}(A)/d| > \frac{4}{d(1 - 2\sqrt{t/d})} (t^{1/2}\|A\|_F + t\|A\|_{op})\right]  \le 3e^{-t}
\end{eqnarray*} 
\end{cor}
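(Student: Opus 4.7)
The plan is to realize the uniform distribution on the sphere as a normalized Gaussian and then reduce to the Gaussian Hanson--Wright inequality (Lemma~\ref{Gaussian_Hanson_Wright}). By rotational invariance, I would write $\matu = \matZ/\|\matZ\|$ for $\matZ \sim \mathcal{N}(0, I_d)$ and algebraically decompose
\[
\matu^\top A \matu - \frac{\tr(A)}{d} \;=\; \frac{\matZ^\top A \matZ - \tr(A)}{\|\matZ\|^2} + \tr(A)\left(\frac{1}{\|\matZ\|^2} - \frac{1}{d}\right).
\]
The strategy is to control the two pieces separately on a good event on which (i)~$|\matZ^\top A \matZ - \tr(A)| \le 2(\sqrt{t}\|A\|_\F + t\|A\|_{\op})$ via Lemma~\ref{Gaussian_Hanson_Wright}, and (ii)~$\|\matZ\|^2 \ge d - 2\sqrt{td} = d(1 - 2\sqrt{t/d})$ via the classical one-sided Laurent--Massart bound (which is also an immediate specialization of Lemma~\ref{Gaussian_Hanson_Wright} with $A=I_d$, but where only the lower tail is needed, costing $e^{-t}$ instead of $2e^{-t}$). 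A union bound then controls the failure probability by $2e^{-t} + e^{-t} = 3e^{-t}$, matching the bound in the statement.

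On this good event, the lower bound on $\|\matZ\|^2$ controls the denominator of the first term, yielding
\[
\frac{|\matZ^\top A \matZ - \tr(A)|}{\|\matZ\|^2} \;\le\; \frac{2(\sqrt{t}\|A\|_\F + t\|A\|_{\op})}{d(1-2\sqrt{t/d})}.
\]
For the second term, I would use $|\tr(A)| \le \sqrt{d}\,\|A\|_\F$ (Cauchy--Schwarz against $I_d$) together with the two-sided bound $|\|\matZ\|^2 - d| \le 2(\sqrt{td} + t)$ (which holds on the good event after applying Lemma~\ref{Gaussian_Hanson_Wright} to $A=I_d$) to obtain
\[
\left|\tr(A)\left(\frac{1}{\|\matZ\|^2} - \frac{1}{d}\right)\right| \;\le\; \frac{\sqrt{d}\,\|A\|_\F \cdot 2(\sqrt{td}+t)}{d^2(1-2\sqrt{t/d})} \;=\; \frac{2\sqrt{t}\,\|A\|_\F + 2t\,\|A\|_\F/\sqrt{d}}{d(1-2\sqrt{t/d})},
\]
and then bound $\|A\|_\F / \sqrt{d} \le \|A\|_{\op}$ in the $t$-linear piece. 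Summing the two pieces gives the claimed $\frac{4(\sqrt{t}\|A\|_\F + t\|A\|_{\op})}{d(1-2\sqrt{t/d})}$.

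There is no serious obstacle here beyond constant bookkeeping: the main judgment call is to use the sharper one-sided Laurent--Massart tail for $\|\matZ\|^2$ so that the total probability budget lands at $3e^{-t}$ rather than $4e^{-t}$, and to dispatch the cross term $\tr(A)(1/\|\matZ\|^2 - 1/d)$ by expressing $|\tr(A)|$ in whichever of $\sqrt{d}\|A\|_\F$ or $d\|A\|_{\op}$ makes each summand scale correctly. Everything else reduces to direct application of Lemma~\ref{Gaussian_Hanson_Wright}.
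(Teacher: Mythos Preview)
Your approach is sound in spirit but contains a bookkeeping inconsistency that prevents you from landing on $3e^{-t}$. You define the good event via (i) Hanson--Wright on $A$ (cost $2e^{-t}$) and (ii) the one-sided lower tail $\|\matZ\|^2 \ge d(1-2\sqrt{t/d})$ (cost $e^{-t}$). But to control your second term $\tr(A)(1/\|\matZ\|^2 - 1/d)$ you invoke the \emph{two-sided} bound $|\|\matZ\|^2 - d|\le 2(\sqrt{td}+t)$, which is not implied by (ii): the one-sided lower tail gives no upper bound on $\|\matZ\|^2$, so you cannot bound $\frac{1}{\|\matZ\|^2}-\frac{1}{d}$ from below. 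Including the upper tail as well costs an additional $e^{-t}$, and your budget becomes $4e^{-t}$, not $3e^{-t}$. The algebra after that is fine and does recover the deviation bound $\frac{4}{d(1-2\sqrt{t/d})}(t^{1/2}\|A\|_\F + t\|A\|_{\op})$, so your argument proves the corollary with $4e^{-t}$ in place of $3e^{-t}$.

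The paper avoids this extra cost with a cleaner decomposition: it centers the matrix first, setting $\widetilde A := A - \tfrac{\tr(A)}{d}I$, so that $\tr(\widetilde A)=0$ and hence
\[
\matu^\top A\matu - \frac{\tr(A)}{d} \;=\; \matu^\top \widetilde A\,\matu \;=\; \frac{\matZ^\top \widetilde A\,\matZ}{\|\matZ\|^2}.
\]
Your cross term disappears entirely. One then applies Gaussian Hanson--Wright to $\widetilde A$ (cost $2e^{-t}$), absorbing the factor-of-two inflation in $\|\widetilde A\|_\F$ and $\|\widetilde A\|_{\op}$ into the constant $4$, and uses only the one-sided lower tail on $\|\matZ\|^2$ (cost $e^{-t}$) for the denominator. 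This yields exactly $3e^{-t}$. The centering trick is the one idea you are missing; it is what lets the probability budget close at $3$ rather than $4$.
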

\begin{proof}
	Define the matrix $\widetilde{A} = A - \frac{1}{d}\mathrm{tr}(A)I$. Observe then that $\|\widetilde{A}\| \le \|A\|_2 + \|\frac{1}{d}\mathrm{tr}(A)I\|_2 \le 2\|A\|_2$ and that $\|\widetilde{A}\|_F \le \|A\|_F + \|\frac{1}{d}\mathrm{tr}(A)I\|_F  = \|A\|_F + |\mathrm{tr}(A)|/\sqrt{d} \le 2\|A\|_F$. Moreover, one has that if $U$ is distributed uniformly on the sphere, and $\mathbf{Z} = (Z_1,\dots,Z_d)$ is a standard normal vector then 
	\begin{eqnarray*}
	\matu^{\top}A\matu - \mathrm{tr}(A)/d = \matu^{\top}\widetilde{A}\matu = \frac{\mathbf{Z}^{\top}\widetilde{A}\mathbf{Z}}{\|Z\|_2^2}
	\end{eqnarray*}
	By Lemma~\ref{Gaussian_Hanson_Wright} and our estimates of $\widetilde{A}$, we have
	\begin{eqnarray*}
	\Pr[|\mathbf{Z}^{\top}\widetilde{A}\mathbf{Z}| > 4 (t^{1/2}\|A\|_F + t\|A\|_{\op}) ] \le 2e^{-t}
	\end{eqnarray*}
	Moreover, using the one-sided analogue of Lemma~\ref{Gaussian_Hanson_Wright} with $A = I_d$, one has $\Pr[\|\mathbf{Z}\|^2 \le d(1 - 2\sqrt{t/d})] \le e^{-t}$. Combining both estimates, we have
	\begin{eqnarray*}
	\Pr[|\matu^{\top}A\matu - \mathrm{tr}(A)/d| > \frac{4}{d(1 - 2\sqrt{t/d})} (t^{1/2}\|A\|_F + t\|A\|_{\op})  \le 3e^{-t}
	\end{eqnarray*}
\end{proof}

Our last step to conclude the proof is the following packing argument:
\begin{comment}
\begin{lem}\label{lem:matrix_packing} Let $A \in \R^{k\times k}$ be a symmetric symetric matrix, and let $\calN$ be a $1/4$-net of the unit ball in $\R^k$. Then, $\|A\|_2 \le 2\sup_{w \in \calN} |w^\top A w|$.  
\end{lem}
\begin{proof}
WLet $\calB_k$ denote the unit ball in $\R^k$. We will prove the claim that $|v^\top vA v - w^\top Aw | \le 2 \|A\|_2 \|v-w\|$. This implies that  if $v_* = \arg\max_{v \in \calB_k} |v^{\top} Av|$ and $w \in \calS^{d-1}$
\begin{eqnarray*}
\|A\|_2 = |v_*^\top A v_*| \le |w^{\top}Aw| + 2 \|A\|_2 \|v_*-w\|
\end{eqnarray*}
Hence, if $\calN$ is a $1/4$-net of $\R^d$, then $\|A\|_2 \le 2\sup_{w \in \calN} |w^\top A w|$. 
To prove our claim, let $\|\cdot\|_*$ denote the trace norm. Observe that that by the matrix Holder inequality, $v^\top A v - w^\top A w = \mathrm{tr}(A(vv^{\top}-ww^{\top})) \le \|A\|_\op \|vv^{\top}-ww^{\top}\|_*$. By the triangle inequality $\|vv^{\top}-ww^{\top}\|_* \le \|v(v-w)^{\top}\|_* + \|(w-v)w^\top\|_*$. Since the later two terms are rank one, the nuclear norms are equal to the spectral norms, which are precisely $\|w-v\|\|v\|$ and $\|w-v\|\|w\|$, respectively. As $\max\{\|w\|,\|v\|\} \le 1$, we have $\mathrm{tr}(A(vv^{\top}-ww^{\top})) \le 2\|A\|_\op\|w-v\|$, as needed. 
\end{proof}
\end{comment}
We are now in place to prove  Proposition~\ref{prop:Hanson_wright_proposition}:
\begin{proof}[Proof of Proposition~\ref{prop:Hanson_wright_proposition}]
For ease of notation, set $C(A,d,t) := \frac{4}{d(1 - 2\sqrt{t/d})} (t^{1/2}\|A\|_F + t\|A\|_{op})$ to be the error term in Lemma~\ref{cor:sphere_hanson}.  Observe that if $\matU \sim \Stief(d,r)$ for any fixed $v \in \calS^{r-1}$, $\matU v \sim \calS^{d-1}$,and hence by Corollary~\ref{cor:sphere_hanson},
\begin{eqnarray}\label{eq:pointwise_hanson}
\Pr\left[|v^{\top}(\matU^{\top}A\matU - I_r \cdot \mathrm{tr}(A)/d )v| > C(A,d,t)\right] \le 3e^{-t}
\end{eqnarray} 
Now let $\calN$ is an $1/4$-net of $\calS^{r-1}$. By Exercise 4.4.3 in~\cite{vershynin2016high}, any symmetric matrix $B \in \R^{r \times r}$ satisfies the inequality  $\|B\|_{\op} \le 2\sup_{v \in \calN} | v^\top B v|$. Hence, by Equation~\eqref{eq:pointwise_hanson} and a union bound,
\begin{eqnarray*}
&& \Pr\left[\left\|\matU^{\top}A\matU - I_r \cdot \mathrm{tr}(A)/d \right\| > 2C(A,d,t)\right]  \\
&\le& \Pr\left[\sup_{v \in \calN}\left|v^{\top}\left(\matU^{\top}A\matU - I_r \cdot \mathrm{tr}(A)/d\right)v\right| > C(A,d,t)\right]  \le 3|\calN|e^{-t}
\end{eqnarray*}
A standard covering number bound (e.g. Corollary 4.2.13 in~\cite{vershynin2016high}) lets us choose $|\calN| \le (1 + \frac{2}{1/4})^r= 9^r \le \exp(2.2 r)$, which concludes the proof.
\end{proof}

\end{document}